%% file: arxiv.tex
\documentclass[10pt]{article}

\usepackage[utf8]{inputenc}
\usepackage[T1]{fontenc}
\usepackage{fullpage}
\usepackage{newtxtext}
\usepackage{microtype}
\usepackage[authoryear,round]{natbib}
\usepackage{float}
\usepackage[table,dvipsnames]{xcolor}
\usepackage{hyperref}
\usepackage{url}
\usepackage{booktabs}
\usepackage{enumitem}
\usepackage{amsmath,amssymb,mathtools,amsthm}
\usepackage{graphicx}
\usepackage{subcaption}
\usepackage{multirow}
\usepackage{placeins}
\usepackage{etoc}
\usepackage{pifont}
\usepackage[normalem]{ulem}
\usepackage{array}

\makeatletter
\def\input@path{{./}}
\makeatother
\graphicspath{{./}}

\input{math_commands.tex}

\usepackage{newtxmath}

\usepackage[capitalize,noabbrev]{cleveref}
\input{macros.tex}

\theoremstyle{plain}
\newtheorem{theorem}{Theorem}[section]
\newtheorem{proposition}[theorem]{Proposition}
\newtheorem{lemma}[theorem]{Lemma}
\newtheorem{corollary}[theorem]{Corollary}
\theoremstyle{definition}

\newtheorem{assumption}[theorem]{Assumption}
\theoremstyle{remark}
\newtheorem{remark}[theorem]{Remark}

\definecolor{citationblue}{RGB}{0,0,109}
\hypersetup{
  colorlinks=true,
  citecolor=citationblue,
  linkcolor=blue,
  urlcolor=blue
}

\title{\textbf{Out of Support, Not Out of Reach: In-Context Learning with Unsupported Instructions and Composed Tasks}}

\author{
Xuhan Tong\thanks{Department of Computer Sciences, University of Wisconsin--Madison.}
\and
Yuchen Zeng\thanks{Microsoft Research.} $ $ \thanks{Co-last authors.}
\and
Jiawei Zhang\footnotemark[1] $ $ \footnotemark[3]
}

\date{}

\begin{document}

\maketitle

\input{abstract.tex}
\input{main_body.tex}

\section*{AI Use Statement}
Generative AI tools were used to assist with refining the theoretical
framework, formulating and checking mathematical claims and proofs,
designing experimental methodology, developing and editing experimental
code, constructing synthetic experimental pipelines, analyzing and
interpreting results, searching and summarizing relevant literature, and
drafting, editing, and formatting the manuscript. The authors reviewed all
AI-assisted material, checked the mathematical arguments against the stated
assumptions, inspected and tested the experimental implementations, and
validated reported results through repeated runs and explicit sanity
checks. The authors made all final research decisions and take
responsibility for the complete contents of the paper, including all text,
claims, proofs, code, experiments, and artifacts produced with AI
assistance.

\section*{Reproducibility Statement}
We state the probabilistic setting, assumptions, and model-level
distinctions used by the theory in
Sections~\ref{subsec:instruction_setup}--\ref{sec:cot} and
Appendix~\ref{app:probability_models}. Complete proofs of the instruction-
and CoT-level results, including the model--reference transport terms, are
provided in Appendices~\ref{app:instruction} and
\ref{app:cot_main_proof}. The main experimental protocols are summarized in
Section~\ref{sec:experiments}; Appendix~\ref{app:experiment} reports the
task constructions, model variants, training and evaluation procedures,
prompt formats, demonstration counts, random-seed aggregation, metrics, and
additional results. It also documents the controlled model--reference, CoT
error-propagation, continuation-reach, and Qwen3 soft-prompt interpolation
experiments, together with their numerical sanity checks.

\bibliographystyle{plainnat}
\bibliography{references}

\clearpage
\appendix
\tableofcontents

\clearpage
\section{List of Common Notations}
\input{NeurIPS2026/appendix/list_notation}

\section{Related Work}
\input{NeurIPS2026/appendix/related_work}

\section{Probability Models}
\input{appendix_entropy.tex}

\section{Proof of \cref{sec:instruction}}
\input{NeurIPS2026/appendix/minimal_modified_app_instruction}

\section{Proof of \cref{sec:cot}}
\input{NeurIPS2026/appendix/appendix_cot_modified}

\section{Formal Justification of the Padding Reduction}
\input{NeurIPS2026/appendix/appendix_padding}

\section{Experiments}
\input{NeurIPS2026/appendix/appendix_experiment}

\section{Limitations}
\input{NeurIPS2026/appendix/limitations}

\end{document}

%% file: math_commands.tex
\usepackage{amsmath, amsfonts, bm}

\def\eqref#1{(\ref{#1})}

  \def\1{\bm{1}}

   \def\vi{{\bm{i}}}

\def\vz{{\bm{z}}}

\DeclareMathAlphabet{\mathsfit}{\encodingdefault}{\sfdefault}{m}{sl}
\SetMathAlphabet{\mathsfit}{bold}{\encodingdefault}{\sfdefault}{bx}{n}

   \def\gT{{\mathcal{T}}}
 \def\gV{{\mathcal{V}}}

  \def\sP{{\mathbb{P}}}



%% file: macros.tex
\usepackage{enumitem}
\usepackage{amsmath}

\crefname{assumption}{Assumption}{Assumptions} \Crefname{assumption}{Assumption}{Assumptions}

 \def\dpre{D_0} \def\dicl{D}  
   
\def\xquery{x_{\text{query}}}  \def\yquery{y_{\text{query}}}
 \def\instruction{i} \def\iquery{i_{\text{query}}}
  
 \def\kl{\operatorname{KL}}

%% file: abstract.tex
\begin{abstract}

In-Context Learning (ICL) enables pretrained LLMs to adapt to downstream
tasks from a small set of input-output demonstrations, without parameter
updates. Yet existing theory primarily explains ICL as identifying a latent
task already represented in pretraining, leaving unclear whether ICL can
succeed when either the instruction or the target task falls outside the
pretraining support. 
We develop a theoretical framework under mild assumptions
that separates these two forms of distribution shift. For target tasks represented in pretraining,
we show that demonstrations can identify the task and reduce sensitivity to
pretrain-supported prompt templates as their number increases. For an
unsupported instruction, we show that under a fixed model-induced reference
extension, demonstration evidence can dominate an $N$-independent instruction
shift while preserving the exponential identification rate.
We then use the same ICL framework as an interpretive lens for long-horizon
target tasks that are not represented in pretraining but admit a plan that
decomposes them into pretraining-supported subtasks. Applying the supported-task
analysis step by step yields a standard accumulated-error bound and clarifies
how invoking familiar subtasks along an intermediate-state trajectory can
avoid directly selecting the unsupported complete task.
Together, our results characterize
when demonstrations, instructions, and task decomposition can support ICL
beyond the corresponding pretraining supports.
\end{abstract}

%% file: main_body.tex
\section{Introduction}
\label{sec:intro}

In-Context Learning (ICL)~\citep{Brown2020} is widely used to adapt Large
Language Models (LLMs) through an instruction and a small set of input--output
demonstrations, without updating model parameters. A prominent line of theory
explains ICL as implicit Bayesian inference over a latent
task~\citep{Xie2022,Wang2023(b),Panwar2024}:
the demonstrations provide evidence about which task generated the prompt, and
prediction improves as the posterior concentrates on that task. The standard
account relies on two support assumptions. First, the posterior ranges over
latent tasks or concepts already represented by the pretraining mixture, so
the target has positive prior probability under that
mixture~\citep{Xie2022,Wies2023(a),Wang2023(b)}. Second, the instruction is
typically omitted or absorbed into a fixed canonical prompt format, rather
than modeled as a task-conditional variable that may differ across
demonstrations~\citep{Akyurek2023,vonOswald2023,Ahn2023}.
In deployment, however, both assumptions can fail.

Consider first the instruction. Real prompts can depart from the pretraining
distribution in systematic ways. A user of Indian English may
write \emph{please revert with the numbers}, using \emph{revert} to mean
``reply,'' while the same word is associated predominantly with ``undo'' in
the model's coding data. The term \emph{bias} may denote a regression intercept
to one user and a fairness property to another. A company's internal service
may retain the name of an open-source project even after its behavior has
diverged. Other deviations are inconsistent rather than repeated — typos that 
differ from slot to slot, or demonstrations assembled by retrieval or by 
multiple annotators, each carrying its own rendering. What matters is the 
interaction of two axes: whether the instruction was seen paired with the 
intended task during pretraining, and whether one instruction governs the 
prompt or each demonstration carries its own. Table~\ref{tab:instruction_formats} formalizes these seven
structures. They lead to our first research question:
\emph{When can additional demonstrations overcome an instruction outside the
task-conditional pretraining support, and when do additional instructions
introduce enough new shift to offset the evidence supplied by those
demonstrations?}

The other support assumption fails at the level of the target task itself.
Language-model agents are increasingly asked to execute long-horizon objectives
that may never have appeared as complete tasks during pretraining. At the same
time, many such objectives admit plans composed of pretraining-supported skills: extracting
a field, transforming a record, invoking a known operation, checking an
intermediate state, and aggregating the results. A latent-task posterior cannot
directly select a target task with zero pretraining probability, but an agent
need not execute that task as one indivisible mapping. As an application of
the ICL perspective, we ask: \emph{When can an out-of-support target task be realized by
a plan that decomposes it into pretraining-supported subtasks, and how do
step-wise errors accumulate over a long execution horizon?} We study execution
under a given task decomposition, rather than the separate problem of how an
agent discovers that decomposition.

We address both questions in a common latent-task framework. For unsupported
instructions, task evidence from demonstrations competes with the shift induced
by the instruction structure. For long-horizon tasks, the same perspective is
applied along an intermediate-state trajectory of supported subtasks. A
pathwise Lipschitz analysis quantifies sensitivity along both types of paths.
\vspace{-0.3em}
\paragraph{Our Contributions.}
We distinguish instruction-level support from task-level support and
characterize how generalization at these two levels is shaped by the number and
provenance of demonstrations, instruction placement, and task decomposition.
Our main contributions are
summarized as follows.

\begin{itemize}[leftmargin=*, topsep=0pt, itemsep=1pt, parsep=2pt]
        \item \textbf{Characterizing ICL with Supported and
        Unsupported Instructions~(Section~\ref{sec:instruction}).}
        Holding the target task within the pretraining task support, we
        distinguish instructions that lie inside or outside the corresponding
        task-conditional instruction support and analyze the seven prompt
        structures in Table~\ref{tab:instruction_formats}. We show that
        informative demonstrations identify the target task and, under a fixed
        model-induced reference extension, preserve their exponential rate
        despite a fixed unsupported instruction whose continuation cost does
        not grow with the demonstration count. This is a conditional statement
        about the specified extension, rather than an identification of its
        off-support values from the pretraining distribution. For mixed or varying instructions,
        the instruction-shift cost can instead accumulate; convergence depends
        on whether task evidence grows faster than that cost.

    \item \textbf{Applying the ICL View to Long-Horizon Tasks through Supported Subtasks~(Section~\ref{sec:cot}).}
    We use the preceding ICL framework to interpret a target task with zero
    pretraining probability. Although
    such a task cannot be selected directly by the pretrained latent-task
    posterior, it may be realized by a plan that composes supported subtasks
    along an intermediate-state trajectory. We bound the final error by the accumulated errors of the execution
steps, revealing why a single poorly supported or unstable subtask can become the bottleneck.

    \item \textbf{Validating the Theoretical Statements through
    Experiments~(Section~\ref{sec:experiments}).} We test the predicted
    interaction between instruction structure and demonstration count, the
    pathwise characterization of prompt sensitivity, and the effect of
    decomposition quality on multi-step task execution.
\end{itemize}
Taken together, our instruction results offer a unified view of ICL beyond pretraining support. For a
specified off-support reference extension, demonstrations can
supply evidence for the intended latent task that dominates a fixed
instruction-shift cost; realizing this behavior in the actual model further
requires controlled model--reference error. Task decomposition, meanwhile,
brings a novel long-horizon objective within the model's repertoire of
pretrained skills. In both cases, success depends on whether the useful
evidence dominates the sensitivity accumulated along the path from the test
prompt to what the model has already learned.
\vspace{-0.5em}
\section{Problem Setup}
\label{subsec:instruction_setup}

This section formalizes the latent-task view of ICL and
defines the seven prompt formats used throughout our analysis by distinguishing
task-conditional instruction support and instruction placement.

\paragraph{Latent Task Model.}
Let $\gV$ be a finite vocabulary and let $\mathcal S$ be a finite space of
individual inputs, outputs, and intermediate states. Following
\citet{Xie2022}, we assume that each sample
$(x,y)\in\mathcal S\times\mathcal S$ is generated from a latent task $t$ in a
finite task set $\gT$. Conditioned on $t$,
$X \sim P_{X \mid t}$, $Y = f_{t}(X)$,
where $f_t:\mathcal S\to\mathcal S$ is the task-specific mapping. We write
$P(\cdot)$ for distributions and $p(\cdot)$ for their mass or density
functions, and assume $\pi(t)=\mathbb{P}(T=t)>0$ for every $t\in\gT$. Section~\ref{sec:cot}
will additionally consider downstream tasks $t^\dagger\notin\gT$, for which
we extend the pretraining prior by $\pi(t^\dagger)=0$.
An ICL prompt with $N$ demonstrations is written as
$\vz_N=\bigl(\vi,(x_1,y_1),\ldots,(x_N,y_N),\xquery\bigr)$; we omit the
subscript and write $\vz$ when $N$ is clear from context. Here $\vi$ collects
the instruction variables under the chosen prompt format
and $\xquery$ is the query input. Let $\dpre$
denote the support of the pretraining prompt distribution across all values of
$N$.
All almost-sure statements as $N\to\infty$ refer to the length-$N$ prefixes of
one infinite sequence of demonstrations generated by the stated
mechanism.
\vspace{-0.4em}
\paragraph{Instructions.}
A single task may admit multiple instructions and prompt formats. For a
pretraining-supported task $t\in\gT$, we model its instructions as draws from
the task-conditional distribution $P_{\instruction\mid t}$. Support is therefore
task-conditional: for a target task $t^*\in\gT$, an instruction $\instruction$
is \emph{supported} if $p(\instruction\mid t^*)>0$ and \emph{unsupported} if
$p(\instruction\mid t^*)=0$. This definition asks whether pretraining ever
paired the instruction with the intended task, not whether the words appeared
in pretraining or whether a user regards them as semantically correct. For
example, \emph{revert} may be common in coding data but unsupported for the
task of drafting a reply. The distinction matters because
$p(\instruction\mid t^*)$ enters the latent-task posterior directly: ordinary
Bayesian conditioning applies to supported instructions, whereas an
unsupported instruction requires the off-support analysis developed below.

Fix a maximum
instruction length $L_i$, let
$e:\gV\cup\{\langle\mathrm{pad}\rangle\}\to\mathbb R^d$ be the token
embedding map with $e(\langle\mathrm{pad}\rangle)=0$, and right-pad every
instruction to length $L_i$. Write
$\gV^{\le L_i}:=\bigcup_{m=0}^{L_i}\gV^m$. For
$\instruction=(v_1,\ldots,v_m)\in\gV^{\le L_i}$, define its padded embedding
matrix as $E(\instruction):=[e(v_1),\ldots,e(v_m),0,\ldots,0]^{\top}
\in\mathbb R^{L_i\times d}$.
Thus instructions with different token lengths lie in one common ambient
space; Appendix~\ref{app:padding} provides the corresponding padding-stability
justification. For each competing task $t\ne t^*$ and supported instruction
$\instruction$, define the task-relative instruction score
$
r_t(\instruction)
:=\log(p(\instruction\mid t) / p(\instruction\mid t^*))$.
The score compares how likely the instruction is under the competing task $t$
and the target task $t^*$.
For the continuous-path analysis, we use a fixed model-induced extension of
this score along the instruction-embedding path defined below.
We do not require the off-support extension to arise from separate normalized
densities on the continuous instruction-embedding space. Its values away from the
symbolic embeddings are additional structure made explicit in
the continuation condition below, rather than something identified by the
finite symbolic support. Appendix~\ref{app:probability_models} shows that an
analytic Transformer scoring map supplies one concrete such extension and
implies the required finite continuation chain.
\vspace{-0.4em}
\paragraph{Prompt Formats.}
The formats below model common ways in which prompts are assembled. A user or
system message supplies one global instruction; a structured dataset may copy
one field or operator into every record; and retrieval or multiple annotators
may attach a different rendering to each demonstration. These placement
patterns matter because a shared instruction contributes one piece of task
evidence, whereas independently varying instructions can contribute a new
likelihood factor with every demonstration. Crossing the three placement
patterns---single, repeated, and varying---with supported versus unsupported
instructions gives formats~1--6. Format~7 captures partial corruption, where a
supported template remains in most slots but typos or schema errors affect a
subset (Table~\ref{tab:instruction_formats}).
This distinction follows from the assumed generative factorization rather than from the Transformer architecture alone.
Appendix~\ref{app:effective_instruction_sample_size} gives a controlled
experiment showing that a Transformer can learn the corresponding effective
instruction sample sizes from cross-position correlation structure.

The support labels in the table follow directly from the sampling mechanism.
In formats~1, 3, and~5, every instruction is drawn from
$P_{\instruction\mid t^*}$ and is therefore supported. Formats~2, 4, and~6
instead select instructions outside its support, so
$p(\tilde{\instruction}\mid t^*)=0$. Format~7 contains both the shared
supported template and distinct unsupported corruptions. For notational convenience, we use $\vi$ to denote the collection of all
instructions appearing in a prompt, regardless of the specific prompt format.
Unless otherwise specified, input-output pairs $(X,Y)$ are drawn i.i.d.
conditional on the latent task.

\begin{table}[t]
\centering
\caption{Seven prompt formats classified by support and
instruction placement. Supported means that the instruction was paired with
the intended task with positive probability during pretraining; unsupported
means that this probability is zero. Appendix~\ref{app:prompt_formats} gives complete
examples.}
\label{tab:instruction_formats}
\setlength{\tabcolsep}{4pt}
\renewcommand{\arraystretch}{1.35}
\small
\definecolor{rowmaingreen}{HTML}{F3F8E8}
\resizebox{\textwidth}{!}{%
\begin{tabular}{@{}c c l l p{6.2cm}@{}}
\toprule
\textbf{\#} & \textbf{Support} & \textbf{Appearance} & \textbf{Prompt format} & \textbf{Example} \\
\midrule
\rowcolor{rowmaingreen}
1 & Supported & Single   & $\vz=(\instruction,(x_n,y_n)_{n=1}^{N},\xquery)$
  & A standard instruction stated once: ``Classify ticket priority.'' \\
\rowcolor{rowmaingreen}
2 & Unsupported & Single   & $\vz=(\tilde{\instruction},(x_n,y_n)_{n=1}^{N},\xquery)$
  & Indian-English \emph{revert} used once to mean ``reply.'' \\
\rowcolor{rowmaingreen}
3 & Supported & Repeated & $\vz=\bigl((\instruction,x_n,y_n)_{n=1}^{N},(\instruction,\xquery)\bigr)$
  & A standard schema key, e.g. \texttt{task: sentiment}, repeated in every record. \\
\rowcolor{rowmaingreen}
4 & Unsupported & Repeated & $\vz=\bigl((\tilde{\instruction},x_n,y_n)_{n=1}^{N},(\tilde{\instruction},\xquery)\bigr)$
  & One private API label, e.g. \texttt{operation: Atlas}, reused in every record. \\
\rowcolor{rowmaingreen}
5 & Supported & Varying  & $\vz=\bigl((\instruction_n,x_n,y_n)_{n=1}^{N},(\iquery,\xquery)\bigr)$
  & Retrieved examples using familiar paraphrases such as ``sentiment'' and ``tone.'' \\
\rowcolor{black!8}
6 & Unsupported & Varying  & $\vz=\bigl((\tilde{\instruction}_n,x_n,y_n)_{n=1}^{N},(\tilde{\iquery},\xquery)\bigr)$
  & Multi-tenant examples carrying distinct private labels such as \texttt{triage\_acme} and \texttt{route\_7G}. \\
\rowcolor{black!8}
7 & Mixed & Corrupted & $\vz=\bigl((\instruction_n,x_n,y_n)_{n=1}^{N},(\iquery,\xquery)\bigr)$,
$\instruction_n\in\{\instruction,\tilde{\instruction}_n\}$
  & A standard template with sparse typos such as \texttt{sentimnet} and \texttt{sentimemt}. \\
\bottomrule
\end{tabular}%
}
\end{table}

\vspace{-0.4em}
\section{ICL with Supported and Unsupported Instructions}
\label{sec:instruction}

Throughout this section, the target task is pretraining-supported:
$t^*\in\gT$ and $\pi(t^*)>0$. The distinction studied below concerns only
whether the instruction is supported conditional on that task.

Let $M(y\mid\vz)$ denote the output probability of the actual pretrained
model. The Bayesian predictor $p(y\mid\vz)$ on supported prompts, and its fixed
continuation $\bar p(y\mid\vz)$ on unsupported prompts, serve as theoretical
reference predictors, which need not coincide with $M$ exactly (see Appendix~\ref{app:model_reference_transport}).

\subsection{ICL with Pretraining-Supported Instructions}

We first consider prompt formats whose instructions lie in the support of the
pretraining distribution.

\begin{assumption}[Identifiability of task evidence]
\label{assp:identifiability}
For every pair of distinct tasks $t_1,t_2\in\gT$, the demonstration
distributions satisfy $\kl\!\left(P_{X,Y\mid t_1}\,\Vert\,
P_{X,Y\mid t_2}\right)>0$,
and the corresponding demonstration log-likelihood ratio is integrable under
$P_{X,Y\mid t_1}$. For format~5, the log-likelihood ratio of the joint
instruction--demonstration distribution is also integrable. Under the
format-specific generation mechanism described in Section~\ref{subsec:instruction_setup},
\begin{equation}
\label{eq:main_joint_identifiability}
\kl\!\left(
P_{\vi\mid t_1}\otimes P_{X,Y\mid t_1}
\,\Vert\,
P_{\vi\mid t_2}\otimes P_{X,Y\mid t_2}
\right)
=
\kl\!\left(P_{\vi\mid t_1}\,\Vert\,P_{\vi\mid t_2}\right)
+
\kl\!\left(P_{X,Y\mid t_1}\,\Vert\,
P_{X,Y\mid t_2}\right)>0.
\end{equation}
Thus the task evidence is identifiable in formats~1, 3, and~5, while the
demonstration KL condition above also supplies the gap used in the
unsupported-instruction results below.
\end{assumption}

Under this assumption, the demonstrations provide exponentially increasing
evidence for the target task. In format~5, the independently sampled
paraphrases form part of this task evidence rather than a single shared
instruction factor.

\begin{theorem}[Informal: model bound with supported instructions]
\label{thm:decay_1_5}
Consider formats~1, 3, and~5 under Assumption~\ref{assp:identifiability}.
Let $t^*$ be the target task. Then there exist constants $\alpha,\beta>0$ and
a finite model-approximation error $\varepsilon_M\ge0$
such that, almost surely, for
all sufficiently large $N$,
\begin{equation}
\label{eq:main_supported_convergence}
\left|M(\yquery\mid\vz)-p(\yquery\mid\xquery,t^*)\right|
\le \beta e^{-\alpha N}+\varepsilon_M.
\end{equation}
\end{theorem}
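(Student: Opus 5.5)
We split the error by the triangle inequality into a Bayesian-reference term and a model--reference term:
\[
\left|M(\yquery\mid\vz)-p(\yquery\mid\xquery,t^*)\right|
\le \left|M(\yquery\mid\vz)-p(\yquery\mid\vz)\right|
+\left|p(\yquery\mid\vz)-p(\yquery\mid\xquery,t^*)\right|.
\]
For the first term we take $\varepsilon_M:=\sup_{\vz\in\dpre}|M(\yquery\mid\vz)-p(\yquery\mid\vz)|$. This is finite, indeed at most $1$, since both are probabilities. It is well defined because in formats~1, 3 and~5 every prompt lies in $\dpre$, so $p(\cdot\mid\vz)$ is ordinary Bayesian conditioning. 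The substance of the proof is therefore the second term.

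For the reference term we use the latent-task posterior decomposition
\[
p(\yquery\mid\vz)=\sum_{t\in\gT}p(\yquery\mid\xquery,t)\,p(t\mid\vz).
\]
This requires that, given $t$, the query output depends only on $\xquery$, which follows from the generative model. It then gives
\[
\left|p(\yquery\mid\vz)-p(\yquery\mid\xquery,t^*)\right|\le 2\bigl(1-p(t^*\mid\vz)\bigr)\le 2\sum_{t\ne t^*}\frac{p(t\mid\vz)}{p(t^*\mid\vz)}.
\]
Each posterior ratio factorizes as
\[
\frac{p(t\mid\vz)}{p(t^*\mid\vz)}=\frac{\pi(t)}{\pi(t^*)}\cdot\frac{p(\vi\mid t)}{p(\vi\mid t^*)}\cdot\prod_{n=1}^N\frac{p(x_n,y_n\mid t)}{p(x_n,y_n\mid t^*)}\cdot\frac{p(\xquery\mid t)}{p(\xquery\mid t^*)}.
\]
The instruction factor depends on the format:
\begin{itemize}[leftmargin=*]
\item In format~1 it is the single constant $e^{r_t(\instruction)}$. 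This is finite because $p(\instruction\mid t^*)>0$; if $p(\instruction\mid t)=0$ the ratio vanishes and the claim is trivial.
\item In format~3 the repeated copy is deterministic given the first, so it again contributes one $N$-independent factor.
\item In format~5 the paraphrases are i.i.d.\ given $t$, so we absorb $p(\instruction_n\mid t)/p(\instruction_n\mid t^*)$ into the $n$-th demonstration factor and work with the joint pair law.
\end{itemize}
The query factor is likewise $N$-independent; we bound it on the finite space $\mathcal S$, treating query inputs with $p(\xquery\mid t^*)=0$ as null.

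For the product we apply the strong law of large numbers to the i.i.d.\ log-likelihood ratios, which are integrable by Assumption~\ref{assp:identifiability}. Almost surely,
\[
\frac1N\sum_{n}\log\frac{p(x_n,y_n\mid t)}{p(x_n,y_n\mid t^*)}\to-\kl(P_{X,Y\mid t^*}\Vert P_{X,Y\mid t}) .
\]
In format~5 the limit is instead minus the joint KL in \eqref{eq:main_joint_identifiability}. In either case the limit is strictly negative. Let $\kappa$ be the minimum of these KL gaps over the finitely many $t\ne t^*$, and set $\alpha:=\kappa/2$. Then almost surely, for all large $N$, each product is at most $e^{-2\alpha N}$, so each ratio is bounded by $C_t e^{-2\alpha N}\le C_t e^{-\alpha N}$. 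Taking the maximum threshold over the finite $\gT$ and setting $\beta:=2\sum_{t\ne t^*}C_t$ gives the bound \eqref{eq:main_supported_convergence}.

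The main obstacle is that the SLLN yields only an almost-sure eventual rate with a random threshold $N_0(\omega)$. We must also make the $N$-independent constants $C_t$, which come from the prior, instruction and query factors, deterministic rather than sample-dependent. To do this we use finiteness of $\gT$, of $\mathcal S$, and of the relevant instruction set. We halve the KL gap to absorb $o(N)$ fluctuations, and we state the result with ``for all sufficiently large $N$,'' which matches the informal theorem. Where $p(x_n,y_n\mid t)=0$ occurs, the ratio is zero and only helps.
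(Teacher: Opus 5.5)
Your proposal is correct and follows the paper's proof essentially step for step. Both use the triangle inequality with $\varepsilon_M$ taken as the supported-domain gap $r_0$, bound the predictive deviation by the competing posterior odds $A_t(\vi)B_t$, and apply the SLLN with the halved KL gap to $B_t$; in format~5 this uses the joint instruction--demonstration triples, with $\iquery$ left as the single $N$-independent instruction factor. Both then bound the prior, instruction and query factors uniformly by finiteness.

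There are two harmless slips. First, with $\alpha=\kappa/2$ the SLLN only gives each product $\le e^{-\alpha N}$ eventually; the stated $e^{-2\alpha N}=e^{-\kappa N}$ can fail for the least-distinguishable competitor, though you only use $e^{-\alpha N}$ afterwards. Second, the factor $2$ is unnecessary, since $|p(\yquery\mid\xquery,t)-p(\yquery\mid\xquery,t^*)|\le1$ already gives the bound $1-p(t^*\mid\vz)$.
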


The bound shows several key insights. The
exponent $\alpha$ is controlled by the least distinguishable competing task:
when every demonstration provides positive evidence for $t^*$, posterior mass
on the competitors decays exponentially with $N$. The prefactor $\beta$
collects effects that do not grow with the demonstration count, including the
task prior, the query distribution, and a shared instruction factor. Thus, in
formats~1 and~3, copying the same supported instruction into more positions
does not create additional independent evidence; the improvement with $N$
comes from the new input--output pairs. In format~5, by contrast, each
supported paraphrase is part of an
instruction--demonstration triple, so it can contribute additional evidence
for identifying the task. See Appendix~\ref{app:supported_instruction_proof}
for complete proof.

\subsection{ICL with Pretraining-Unsupported Instructions}

\subsubsection{Single or repeated unsupported instructions.}
\label{subsec:unsupported_shared}

We first consider formats~2 and~4 when the same unsupported instruction is used
throughout the prompt. Format~2 contains one displayed occurrence; format~4
contains $N+1$ copies of one shared realization. The repeated copies in
format~4 are not treated as $N+1$ independent task observations: the
instruction template is sampled once and copied deterministically into all
displayed slots. Thus, for every task $t$,
\begin{equation}
\label{eq:main_shared_factorization}
p(\vz\mid t)
=p(\vi\mid t)p(\xquery\mid t)
\prod_{n=1}^{N}p(x_n,y_n\mid t).
\end{equation}
Format~2 is the corresponding single-occurrence special case.

Let $\vi_0$ be a supported anchor and $\widetilde\vi$ the unsupported
endpoint. In the padded instruction-embedding space of
Section~\ref{subsec:instruction_setup}, define
\begin{equation}
\label{eq:main_instruction_embedding_path}
E_s:=(1-s)E(\vi_0)+sE(\widetilde\vi),\qquad s\in[0,1].
\end{equation}
For each competing task, we analytically continue its supported instruction
log-likelihood ratio along this path. We denote the corresponding continued
predictive probability by $\bar p(\yquery\mid\vz)$. This continuation is a
model-induced reference extension; because the endpoint has zero probability,
its value is not identified by the pretraining distribution alone.

Under this condition, we can give an exponential bound on the continued reference predictor and
a corresponding bound for the actual model.

\begin{theorem}[Informal: unsupported shared instruction]
\label{thm:repeated_unsupported}
Suppose Assumption~\ref{assp:identifiability} holds and, for every
$t\neq t^*$, the supported log-likelihood ratio admits a fixed finite analytic
continuation from the supported anchor to the unsupported endpoint. Then there
exist constants $\alpha,\beta>0$, independent of $N$, such that,
almost surely for all sufficiently large $N$, the continued reference
predictor satisfies
\begin{equation}
\label{eq:main_unsupported_shared_reference_bound}
\left|\bar p(\yquery\mid\vz)
-p(\yquery\mid\xquery,t^*)\right|
\le \beta e^{-\alpha N}.
\end{equation}
For the actual pretrained model,
\begin{equation}
\label{eq:main_unsupported_shared_bound}
\left|M(\yquery\mid\vz_N(\widetilde\vi))
-p(\yquery\mid\xquery,t^*)\right|
\le \beta e^{-\alpha N}+\varepsilon_{M,N}(\widetilde\vi),
\end{equation}
where $\varepsilon_{M,N}(\widetilde\vi)$ is the model--reference transport
error along the tied shared-template path, defined formally in
Appendix~\ref{app:model_reference_transport}. Assumption~\ref{assp:uniform_shared_tracking}
shows when the transport error is uniformly bounded, and when the actual model converges to the target-task predictor.
\end{theorem}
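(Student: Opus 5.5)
The plan is to write the continued reference predictor as a posterior mixture and then show that the competitor weights decay exponentially. Using the shared factorization \eqref{eq:main_shared_factorization}, the posterior on the supported side is
\[
p(t\mid\vz)\propto \pi(t)\,p(\vi\mid t)\,p(\xquery\mid t)\prod_{n=1}^N p(x_n,y_n\mid t).
\]
Dividing by the $t^*$ term expresses each competitor's relative weight through a log-odds
\[
\Lambda_t(N)=\log\frac{\pi(t)}{\pi(t^*)}+r_t(\vi)+\log\frac{p(\xquery\mid t)}{p(\xquery\mid t^*)}+\sum_{n=1}^N \ell_t(x_n,y_n),
\]
where $\ell_t$ is the demonstration log-likelihood ratio. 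For the unsupported endpoint, I define $\bar p$ by the same formula with $r_t(\vi)$ replaced by its continued value $\bar r_t(\widetilde\vi)$, obtained along the path $E_s$ of \eqref{eq:main_instruction_embedding_path}. By hypothesis this value is a fixed finite real number, independent of $N$. The reference predictor is then
\[
\bar p(\yquery\mid\vz)=\frac{\sum_t e^{\Lambda_t(N)}\,p(\yquery\mid\xquery,t)}{\sum_t e^{\Lambda_t(N)}},\qquad \Lambda_{t^*}\equiv0.
\]
Consequently,
\[
\bigl|\bar p-p(\yquery\mid\xquery,t^*)\bigr|\le \sum_{t\ne t^*} e^{\Lambda_t(N)}.
\]
This uses only that probabilities lie in $[0,1]$, and no normalization of the continued score is needed. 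Here I assume $p(\xquery\mid t^*)>0$ under the data-generating task. If some competitor has $p(\xquery\mid t)=0$, its term is simply $-\infty$ and it drops out.

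Next I would control the growing sum. Under $t^*$ the terms $\ell_t(x_n,y_n)$ are i.i.d. and integrable with mean $-\kl(P_{X,Y\mid t^*}\Vert P_{X,Y\mid t})<0$ by Assumption~\ref{assp:identifiability}. The strong law of large numbers then gives, almost surely, $\frac1N\sum_n\ell_t\to-\kl_t$. Hence for all large $N$ we have $\sum_n\ell_t\le -\tfrac12\kl_t N$, and since $\gT$ is finite this holds simultaneously for every $t\ne t^*$. Set $\alpha=\tfrac12\min_{t\ne t^*}\kl_t>0$. The remaining terms of $\Lambda_t$ are the prior ratio, $\bar r_t(\widetilde\vi)$, and the query ratio. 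The first two are $N$-independent constants, and the query ratio is bounded because $\mathcal S$ is finite. Let $\beta$ be $\sum_{t\ne t^*}$ of the exponentials of these bounded terms. This yields \eqref{eq:main_unsupported_shared_reference_bound}.

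One subtlety is that the phrase ``$\beta$ independent of $N$'' should be understood as holding pathwise. The query ratio is handled by a max over $\mathcal S$, and the threshold on $N$ is the random one supplied by the strong law. The same argument with $r_t(\vi)$ in place of $\bar r_t$ recovers the supported case. This shows that the only new ingredient is the finiteness of the continuation, which is the point where the hypothesis of the theorem is used; no growth in $N$ occurs because format~4 copies a single draw.

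For the model bound \eqref{eq:main_unsupported_shared_bound}, I would apply the triangle inequality:
\[
|M-p(\cdot\mid\xquery,t^*)|\le|M(\yquery\mid\vz_N(\widetilde\vi))-\bar p(\yquery\mid\vz)|+|\bar p-p(\cdot\mid\xquery,t^*)|.
\]
The first term I define, or identify, as $\varepsilon_{M,N}(\widetilde\vi)$, the transport error along the tied path. The main obstacle is making this term meaningful rather than tautological. I would express it as a telescoping or pathwise Lipschitz integral along $E_s$, namely the discrepancy at the supported anchor plus the integral of $\|\partial_s(M-\bar p)\|$. Bounding it uniformly in $N$ then requires the tracking condition of Assumption~\ref{assp:uniform_shared_tracking}, so here the theorem only asserts the decomposition. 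A further care point is verifying that the analytic continuation is well defined for each competitor, with a finite chain of disks avoiding singularities. I would take this from the hypothesis and from Appendix~\ref{app:probability_models} rather than prove it.
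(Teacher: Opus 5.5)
Your proposal is correct and follows the paper's argument: a single $N$-independent instruction factor in the posterior odds, the strong-law bound $B_t\le e^{-\alpha N}$ with $\alpha=\tfrac12\min_{t\ne t^*}\kappa_t$, the predictive deviation bounded by the total competing odds, and the triangle inequality with the tied-path transport error (your residual-integral certificate is at most the paper's Lipschitz certificate, so the stated model bound follows). The one difference is that the formal proof goes beyond finiteness of the continued endpoint and uses the Cauchy--Taylor chain to show that the endpoint log-odds are real and at most $\gamma_0+\Omega$, which gives the explicit constant $\beta=\beta_\circ e^{\gamma_0+\Omega}$; your argument is the paper's ``minimal endpoint'' version, which the paper itself notes suffices for the exponential rate.
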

Theorem~\ref{thm:repeated_unsupported} handles a
prompt outside the pretraining support in the repeated-instruction setting. In
particular, the same unsupported instruction $\widetilde\vi$ is repeated across
all demonstrations and satisfies $p(\widetilde\vi\mid t^*)=0$, so the ordinary
Bayes conditional at $\vz_N(\widetilde\vi)$ is not determined directly by the
pretraining distribution. Because all displayed occurrences are copies of one
shared instruction realization, its fixed finite continuation is performed
only once and introduces only an $N$-independent factor. At the same time,
evidence from the demonstrations that identifies the target task accumulates
with $N$. Therefore,
even though $p(\widetilde\vi\mid t^*)=0$, the theoretical continuation error
still decreases exponentially for this specified reference extension.
Appendix~\ref{app:uniform_local_transport} gives sufficient quantitative
conditions under which the transport error is small, controlled
study
in Appendix~\ref{app:empirical_transport_validation} complements this result.

\begin{remark}
This setting reflects a common mismatch between the notation represented during LLM
pretraining and the notation chosen by a user. A user may consistently express
a supported task using a symbol or style of language that never appeared in
pretraining. The theorem suggests that, provided the underlying latent task was
learned during pretraining and the new notation is used consistently across the
demonstrations, sufficiently many examples can still identify the target task
within the reference construction. When the model-transport error is also small,
the actual model can exhibit the corresponding behavior from the
input pattern rather than requiring that notation itself have been
observed during pretraining.
The continuation perspective is also relevant when
$p(\vi\mid t^*)>0$ but extremely small, since it may occur too rarely in a
finite pretraining corpus for the model to learn its association with $t^*$
reliably, making it effectively unsupported for the learned model.
\end{remark}

\subsubsection{Mixed and varying supported and unsupported instructions (format~6-7).}
\label{subsec:mixed_unsupported}
We next consider format~7, in which most demonstrations share a supported instruction while a
subset contain distinct unsupported corruptions. The supported occurrences act
as a common anchor, and each corrupted occurrence is connected to that anchor by
the same construction used in
Section~\ref{subsec:unsupported_shared}, which quantifies how strongly the
corruption can favor a competing task (see
Appendix~\ref{app:mixed_supported_unsupported}).

Intuitively, every informative demonstration supplies evidence that separates
the target task from its competitors, while each distinct unsupported
instruction can shift some of that evidence back toward a competing task.
Generalization therefore depends on whether the evidence accumulated from the
demonstrations grows faster than the total adverse effect of the corruptions.

Format~6 is the limiting case in which every demonstration has a
distinct unsupported instruction. The demonstrations provide accumulating
evidence for the target task, but the continuation costs of the unsupported
instructions also accumulate with $N$. Convergence remains possible when the
demonstration evidence grows faster than the total continuation cost. Thus the analysis gives no unconditional convergence
guarantee for format~6. This explains why increasing the number of
examples need not remove instruction sensitivity in this setting, consistent
with Figure~\ref{fig:instruction_inconsistent_main}.
Complete details are given in
Appendix~\ref{app:format6_varying_unsupported}.

\begin{remark}
These setting models typos in user-provided demonstrations. Some
demonstrations may use a supported instruction from the pretraining support,
whereas a typo may produce an instruction that the model never observed during
pretraining. Moreover, different typos need
not produce the same unsupported instruction, which distinguishes it from the repeated setting of
Section~\ref{subsec:unsupported_shared}. Their demonstration-dependent cost can still vanish when it grows
more slowly than the evidence supplied by the demonstrations, while the actual
model retains the displayed transport error.
\end{remark}
\vspace{-0.5em}
\section{ICL for task beyond the pretraining task support}
\label{sec:cot}
%

Language-model agents are often asked to carry out long-horizon tasks that may
not be represented as single latent tasks during pretraining. Such a target has
$\pi(t^\dagger)=0$ and therefore receives zero posterior mass for every prompt
and every $N$.
Consequently, the Bayesian latent-task reference for direct ICL can only form a
mixture of supported-task predictions, and increasing $N$ cannot make
this reference predictor select the unsupported target task. In this section,
we apply the preceding ICL analysis to an agent that follows a plan decomposing
such a task into supported subtasks, and our aim here is to
use the ICL framework to explain how support and model--reference errors behave
along agent execution, rather than to introduce a separate theory of agent
learnability.

\subsection{Executing Long-Horizon Tasks through Supported Subtasks}
\label{subsec:cot_main}
\label{subsec:cot_bound}

Agent execution circumvents the reference-level obstruction by never asking
the posterior to represent $t^\dagger$ as a single latent task. Consider a
$K$-step plan with supported subtasks $t_0,\ldots,t_{K-1}\in\gT$. Let
$x_n^{(k)}\in\mathcal S$ be the state of demonstration $n$ after step $k$,
where the state contains any execution history needed by the next step. We
assume the recursion $x_n^{(k+1)}=f_{t_k}(x_n^{(k)})$,
and define $f_{t^\dagger}(x)$ as the terminal state $x^{(K)}$ obtained by this
recursion from $x^{(0)}=x$. Thus
$f_{t^\dagger}=f_{t_{K-1}}\circ\cdots\circ f_{t_0}$. We reuse $\vz_N$ for the
full agent-execution prompt, whose $N$ demonstrations are the trajectories
$(x_n^{(0)},\ldots,x_n^{(K)})$. For step $k$ and a current state $x$, let
$\vz_N^{(k)}(x)$ denote the prompt formed by the supported instruction for
$t_k$, the $N$ transition pairs
$(x_n^{(k)},x_n^{(k+1)})$, and $x$. Starting from
$x^{(0)}=\xquery$, the agent produces $x^{(1)},\ldots,x^{(K)}$ sequentially,
with $\yquery=x^{(K)}$. The latent task of the step-$k$ prompt is therefore the
supported subtask $t_k\in\gT$. This is the decisive structural point: although
$\pi(t^\dagger)=0$, every step prompt represents a supported task with
$\pi(t_k)>0$.

\begin{assumption}[Step-wise supported identifiability]
\label{assp:cot_stepwise}
There are constants $\alpha_k,\beta_k>0$ independent of $N$ such that, almost
surely for all sufficiently large $N$ and every state $x$ reachable at step
$k$,
\begin{equation}
\label{eq:cot_step_bound}
1-p\bigl(f_{t_k}(x)\mid\vz_N^{(k)}(x)\bigr)
\le\beta_k e^{-\alpha_k N}.
\end{equation}
\end{assumption}
Note that Assumption~\ref{assp:cot_stepwise} is not an extra hypothesis about
$t^\dagger$; it is the component of
Theorem~\ref{thm:decay_1_5} applied $K$ times, together with
the requirement that the bound be uniform over the correct intermediate
states rather than at a single query.

\begin{theorem}[Informal: step-wise ICL bound for a decomposed long-horizon task]
\label{thm:cot_main}
\label{thm:cot_bound}
Let $f_{t^\dagger}=f_{t_{K-1}}\circ\cdots\circ f_{t_0}$ with
$t_0,\ldots,t_{K-1}\in\gT$, and let Assumption~\ref{assp:cot_stepwise} hold.
For each step $k$, let $\varepsilon_M^{(k)}(N)\ge0$ be its finite
model-transport error, defined formally in
Appendix~\ref{app:cot_main_proof}. Then,
almost surely for all sufficiently large $N$,
\begin{equation}
\label{eq:cot_main_bound}
\Bigl|M\bigl(\yquery\mid\vz_N\bigr)
-p(\yquery\mid\xquery,t^\dagger)\Bigr|
\le\sum_{k=0}^{K-1}\left[
\beta_k e^{-\alpha_k N}+\varepsilon_M^{(k)}(N)\right].
\end{equation}
\end{theorem}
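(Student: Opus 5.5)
The target $p(\yquery\mid\xquery,t^\dagger)$ is the point mass at $f_{t^\dagger}(\xquery)=x_\star^{(K)}$, since $f_{t^\dagger}$ is a deterministic composition. Write $x_\star^{(k)}:=f_{t_{k-1}}\circ\cdots\circ f_{t_0}(\xquery)$ for the correct intermediate states. The left-hand side then reduces to a bound on the failure probability of the agent's trajectory. For $\yquery=x_\star^{(K)}$ it equals $1-M(x_\star^{(K)}\mid\vz_N)$. For $\yquery\neq x_\star^{(K)}$ it equals $M(\yquery\mid\vz_N)$, which is at most $1-M(x_\star^{(K)}\mid\vz_N)$. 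It therefore suffices to bound the probability that the sequential execution ends anywhere other than the correct terminal state.

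\textbf{Trajectory factorization.} Model the agent's output as the marginal of the sequential process
\[
M(x^{(1)},\ldots,x^{(K)}\mid\vz_N)=\prod_{k=0}^{K-1}M\bigl(x^{(k+1)}\mid\vz_N^{(k)}(x^{(k)})\bigr),\qquad x^{(0)}=\xquery .
\]
This is where the formal definition of $\varepsilon_M^{(k)}(N)$ in Appendix~\ref{app:cot_main_proof} is needed. I take it to be the step-$k$ discrepancy $\bigl|M(f_{t_k}(x)\mid\vz_N^{(k)}(x))-p(f_{t_k}(x)\mid\vz_N^{(k)}(x))\bigr|$, taken at (or supremized over) the correct reachable state $x=x_\star^{(k)}$.

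The final output is correct whenever every step is correct. Define the events $A_k:=\{x^{(k+1)}=x_\star^{(k+1)}\}$. Then
\[
1-M\bigl(x_\star^{(K)}\mid\vz_N\bigr)\le\sum_{k=0}^{K-1}\Pr\bigl(A_k^c\cap A_0\cap\cdots\cap A_{k-1}\bigr).
\]
This is a union bound over the first failing step, or equivalently the telescoping identity $1-\prod_k a_k\le\sum_k(1-a_k)$ for $a_k\in[0,1]$. On $A_0\cap\cdots\cap A_{k-1}$ the step-$k$ input is exactly $x_\star^{(k)}$. Hence the $k$-th term is at most $1-M\bigl(x_\star^{(k+1)}\mid\vz_N^{(k)}(x_\star^{(k)})\bigr)$.

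\textbf{Per-step bound.} Apply the triangle inequality to each step:
\[
1-M\bigl(f_{t_k}(x_\star^{(k)})\mid\vz_N^{(k)}(x_\star^{(k)})\bigr)\le\Bigl[1-p\bigl(f_{t_k}(x_\star^{(k)})\mid\vz_N^{(k)}(x_\star^{(k)})\bigr)\Bigr]+\varepsilon_M^{(k)}(N).
\]
Assumption~\ref{assp:cot_stepwise} bounds the bracket by $\beta_k e^{-\alpha_k N}$, because $x_\star^{(k)}$ is reachable at step $k$. Each step's bound holds almost surely for all $N\ge N_k(\omega)$. Intersecting the $K$ almost-sure events, which is a finite intersection, and taking $N\ge\max_k N_k$ gives \eqref{eq:cot_main_bound}.

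\textbf{Main obstacle.} The delicate point is justifying the factorization. The step-$k$ prompt is built from the agent's own generated state, so the error must be charged only along the correct prefix. The union-over-first-failure argument handles this: once a step has erred, nothing further needs to be controlled. If the appendix defines $\varepsilon_M^{(k)}$ as a discrepancy over the full conditional distribution rather than at the correct state, I would instead use a total-variation hybrid argument. This replaces the model kernel by the reference kernel one step at a time, each swap costing at most $\varepsilon_M^{(k)}$, and it yields the same sum.
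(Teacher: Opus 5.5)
Your proof is correct and follows essentially the same route as the paper's. It uses the two-case reduction for the deterministic target, the first-failure (success-event) recursion that charges each step only on a correct prefix, and the per-step triangle inequality against the reference with the model-transport term, which matches how the paper defines $\varepsilon_M^{(k)}(N)$ as a uniform upper bound on that discrepancy. It then takes a finite intersection of almost-sure events, as the paper does.

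Your trajectory factorization is exactly the explicit step-call protocol of Corollary~\ref{cor:cot_explicit_step_execution}. Under it the rollout and padding corrections $\rho^+_{k,N}$ and $\delta_{k,\mathrm{pad}}(N)$ of the general Theorem~\ref{thm:cot_main_app} vanish. The part that would fail for a full-context autoregressive rollout is this factorization, not the first-failure argument you flag as the delicate point. That is why the paper carries those extra terms in the full-context case.
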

This bound follows by applying the supported-task ICL guarantee at each step
and accumulating the resulting errors across the fixed execution horizon under
the explicit step-call protocol above. It connects a standard composition argument to
the support distinctions developed above. For each
step, $\alpha_k$ measures how quickly the demonstrations distinguish the
intended subtask from competing supported tasks, while $\beta_k$ collects the
$N$-independent difficulty of that step, including its prior, instruction and
query factors and the uniformity over reachable intermediate states. The bound
is strongest when $K$ is moderate, every $\alpha_k$ is bounded away from zero,
and the $\beta_k$ remain controlled. Its decreasing component is determined by
the hardest step through $\min_k\alpha_k$, whereas every step also contributes
its model-transport error. Thus one poorly identified or poorly
approximated subtask can dominate the entire chain.

The main-text bound uses an explicit step-call protocol: at step $k$, the
executor calls the model on exactly the extracted prompt
$\vz_N^{(k)}(x^{(k)})$ and retains
no additional model-visible history or cache.
Full-context autoregressive rollout requires additional rollout and,
when applicable, padding corrections, treated in
Appendix~\ref{app:cot_main_proof} and \ref{app:padding}.
\begin{remark}
In practice, this interpretation favors agent plans built from supported,
well-specified operations with clear intermediate states. Decomposing a
difficult long-horizon task into simpler supported steps can make each step
easier to identify and therefore improve its exponential rate. However, an
unnecessarily long plan creates more opportunities for error, and an ambiguous
instruction, an unsupported intermediate representation, or a weakly supported
subtask can make one $\alpha_k$ small or one $\beta_k$ large. A concise,
well-chosen plan can therefore outperform direct prediction, whereas a long or
poorly chosen execution plan can be less reliable.
\end{remark}

\subsection{Factors Governing Generalization Beyond Pretraining}
\label{subsec:pathwise_lipschitz}

\paragraph{The pathwise Lipschitz sensitivity.}

Recall the instruction path $E_s$ from
\eqref{eq:main_instruction_embedding_path}, which connects the supported
anchor $\vi_0$ to the unsupported instruction $\widetilde\vi$. Define
$\mathcal L_{\mathrm{path}}$ as the smallest constant such that, for every
competing task $t$ and every $s,s'\in[0,1]$,
\begin{equation}
\label{eq:path_lipschitz_main}
\left|
r_t(E_s)-r_t(E_{s'})
\right|
\le
\mathcal L_{\mathrm{path}}\|E_s-E_{s'}\|_F,
\end{equation}
where the reference log-odds along unsupported parts of the path are understood
through the continuation in Section~\ref{subsec:unsupported_shared}. The
resulting bound can be written schematically as
\begin{equation}
\label{eq:anchor_path_error_main}
\mathrm{error}
\;\lesssim\;
\beta\exp\!\left(
-\alpha N
+\mathcal L_{\mathrm{path}}
\|E(\widetilde\vi)-E(\vi_0)\|_F
\right).
\end{equation}
Thus a good anchor is not merely a point with a small
local derivative: it should make the combined anchor and whole-path cost small.
The value of
$\mathcal L_{\mathrm{path}}$ is determined by the pretrained model, the chosen
anchor and path, and the competing tasks. It tends to be smaller when
pretraining has learned nearby instructions, representations, or computations
that change smoothly toward the test instruction, and can be large when the
path crosses a poorly learned or task-ambiguous region. In experiments
(Section~\ref{sec:experiments}), we use the discrete gradient of the
correct-output log probability as the prompt changes with the demonstration
count as an observable proxy for effective prompt sensitivity. In the agent
experiment, changes in output log probability across execution steps provide a
qualitative diagnostic of step-wise stability.
\vspace{-0.4em}
\paragraph{The demonstrations sensitivity.}
The demonstrations etermine the evidence rate $\alpha$ in
\eqref{eq:anchor_path_error_main}: $\alpha$ is large when each additional
demonstration supplies information that distinguishes the target task from its
plausible competitors. Such demonstrations are mutually consistent with one
latent rule and include inputs on which competing rules predict different
outputs. By contrast, repetitive examples on which several tasks agree leave
the task identity ambiguous and produce a small $\alpha$. Good demonstrations
therefore make the favorable term $\alpha N$ grow rapidly enough to dominate
the fixed anchor and path penalties, and they reduce the sensitivity of
the final prediction.

For example, consider the prompt
\[
\texttt{Japan}\!\to\!\texttt{Tokyo},\quad
\texttt{France}\!\to\!\texttt{Paris},\quad
\texttt{USA}\!\to\ ?
\]
The first two pairs do not distinguish ``country to capital'' from ``country
to largest city,'' because the two rules agree on both demonstrations. For the
query, the two interpretations instead predict
\texttt{Washington DC} and \texttt{New York}. The demonstrations thus provide
little evidence against either competing task, corresponding to a small
$\alpha$ and a prediction that can change sharply under a small modification of
the prompt. In contrast, consider
\[
\texttt{Australia}\!\to\!\texttt{Sydney},\qquad
\texttt{Turkey}\!\to\!\texttt{Istanbul},\qquad
\texttt{USA}\!\to\ ?
\]
The first two pairs rule out the capital interpretation and strongly favor
the largest-city interpretation. The resulting larger $\alpha$ makes the prediction for
$\texttt{USA}\!\to\ ?$ more stable and improves the bound.
\vspace{-0.4em}
\paragraph{Applying the sensitivity view to agent execution.}
For an unsupported target task, the Bayesian reference for direct
ICL cannot select $t^\dagger$ from its pretraining task posterior. A decomposed
agent plan avoids this reference-level obstruction by using supported subtasks
$t_0,\ldots,t_{K-1}$. Applying Theorem~\ref{thm:decay_1_5} yields
two contributions: $\beta_k e^{-\alpha_kN}$ measures subtask
identification error, and $\varepsilon_M^{(k)}(N)$ measures model--reference
error. Additional demonstrations reduce the first term, while the second must
be controlled separately. The zero prior of the complete task is therefore no
longer decisive: the agent only needs to identify and execute the supported
steps. However, the errors add across the $K$ steps, so one weakly identified
or poorly approximated subtask can dominate the final result.


\begin{figure*}[t]
    \centering
    \begin{subfigure}
        [t]{0.32\textwidth}
        \centering
        \includegraphics[width=\textwidth]{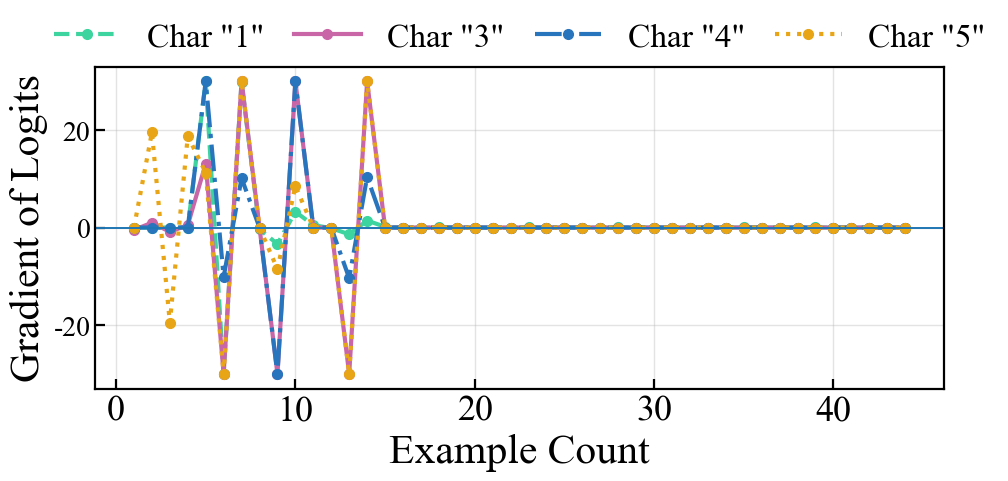}
        \caption{\textbf{Consistent mismatched instructions.} The discrete
        gradients of the correct-digit log probabilities diminish as the
        repeated demonstrations identify the intended task.}
        \label{fig:instruction_consistent_main}
    \end{subfigure}
    \hfill
    \begin{subfigure}
        [t]{0.32\textwidth}
        \centering
        \includegraphics[width=\textwidth]{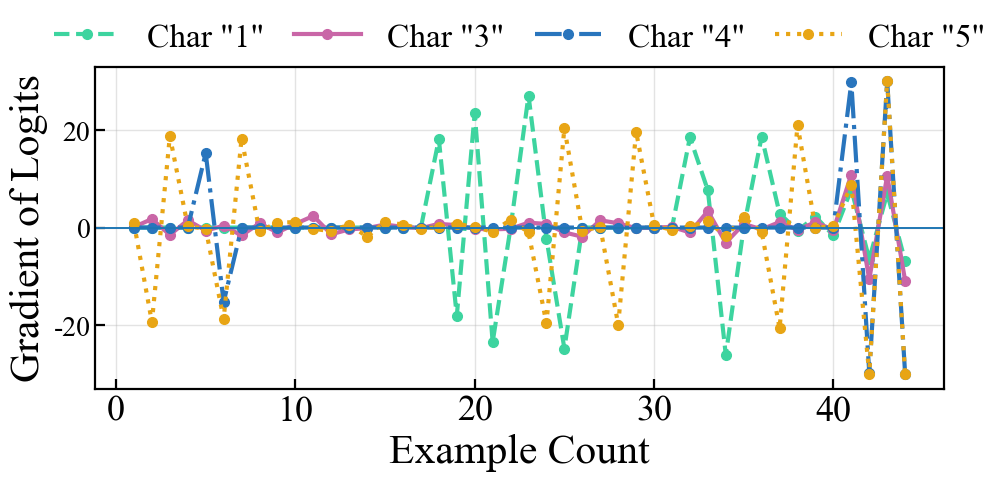}
        \caption{\textbf{Varying mismatched instructions.} The discrete
        gradients of the correct-digit log probabilities remain irregular when
        different instruction mismatches accumulate across demonstrations.}
        \label{fig:instruction_inconsistent_main}
    \end{subfigure}
    \hfill
    \begin{subfigure}
        [t]{0.32\textwidth}
        \centering
        \includegraphics[width=\textwidth]{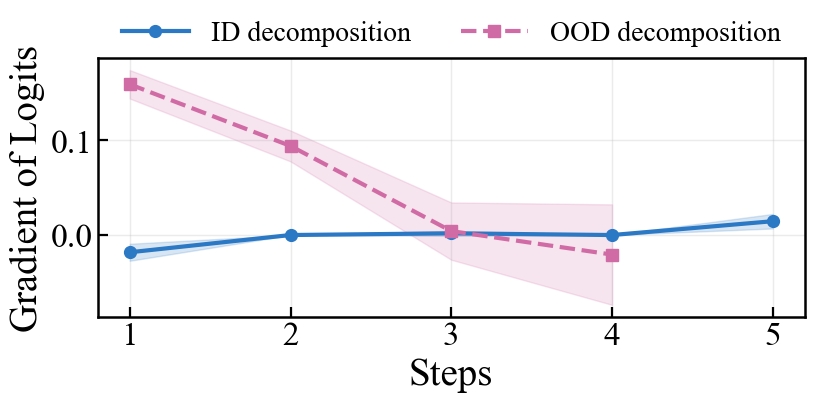}
        \caption{\textbf{Step-wise sensitivity under agent task decomposition.}
        The in-distribution execution plan has a smoother gradient profile,
        whereas the out-of-distribution plan exhibits larger fluctuations.}
        \label{fig:chain}
    \end{subfigure}
    \caption{Empirical analogues of (a) consistent and (b) varying instruction
    mismatch, together with (c) the role of supported, stable
    subtasks in Theorem~\ref{thm:cot_main} and
    Section~\ref{subsec:pathwise_lipschitz}. In (a)--(b), the fixed correct
    answer is \texttt{1345}; \emph{Char ``1''}, \emph{Char ``3''},
    \emph{Char ``4''}, and \emph{Char ``5''} denote the score-gradient traces
    for these correct digits at the first through fourth output positions,
    respectively.}
    \label{fig:experiment}
\end{figure*}
\vspace{-0.5em}
\section{Experiments}
\label{sec:experiments}
\vspace{-0.4em}
\subsection{Demonstration Evidence for Task Identification}
\label{subsec:experiment_general}
Section~\ref{subsec:pathwise_lipschitz} predicts that ICL performance under
prompt shift is governed by the demonstration
evidence rate $\alpha$. Here we give a motivational example
of the corresponding intuition: demonstrations that better distinguish the target task
from plausible competitors should produce more reliable task identification and higher ICL accuracy.

We use Sport Identification as the representative example here.
\textbf{(Setup)} Given a country, the task is to output its associated sport label. We construct
two types of demonstrations: \textit{identifying} demonstrations, where the sport
is a distinctive national or traditional sport that strongly pins down a unique
latent rule, and \textit{ambiguous} demonstrations, where the sport is related to
the country but can be explained by multiple plausible latent rules, thereby
providing weaker evidence about which rule the prompt instantiates. We sample
$k \in \{2,4,8\}$ demonstrations from the pool and evaluate on a
fixed test set. \textbf{(Results)}
As shown in Table~\ref{tab:demo_choice_acc}, the identifying pool consistently
yields higher accuracy than the ambiguous pool, which is consistent with
the prediction that informative demonstrations provide a larger evidence rate,
leading to stronger task identification and better
generalization. See Appendix~\ref{app:demo} for details, and see Appendix~\ref{app:controlled_evidence_rate}
for a controlled validation.

\begin{table}[t]
    \centering
    \caption{Test accuracy ($\uparrow$) of ICL under ambiguous and
    identifying demonstrations ($k=8$). 
    }
    \label{tab:demo_choice_acc} 
\small
\begin{tabular}{lcc}
    \toprule
    Demonstration & Qwen3-30B-A3B & Qwen3-235B-A22B \\
    \midrule
    Ambiguous / \textbf{Identifying}   & $0.164 \ensuremath{\pm}0.039 / \textbf{0.559 \ensuremath{\pm} 0.074}$           & $0.202 \ensuremath{\pm}0.037 / \textbf{0.602 \ensuremath{\pm} 0.043}$           \\
    \bottomrule
\end{tabular}
\vspace{-0.5em}
\end{table}
\vspace{-0.5em}
\vspace{-0.4em}
\subsection{Prompt Sensitivity vs. Demonstration Count}
\label{subsec:experiment_instruction} In Section~\ref{sec:instruction},
Theorems~\ref{thm:decay_1_5} and~\ref{thm:repeated_unsupported}, together with
Section~\ref{subsec:mixed_unsupported}, predict different scaling for fixed and
varying instruction shifts. We design the following experiment to test the
corresponding distinction.

\textbf{(Setup)} We use a pretrained \texttt{Qwen3-235B-A22B} model on a
three-digit addition task with a fixed query and target answer. We vary only
the operator pattern in the demonstrations: the \textit{addition-only} regime
uses $+$ throughout; the \textit{consistent-mismatch} regime uses the same
$\times$ symbol throughout while retaining sum outputs; and the
\textit{varying-mismatch} regime independently selects $\times$, $/$, or $-$
for each demonstration. These regimes provide an aligned baseline and empirical
analogues of consistent and varying instruction mismatch. Because the exact conditional
support of a pretrained LLM is not observable, the experiment tests the effect
of consistent versus varying instruction mismatch rather than claiming that
these symbols have literal zero pretraining probability. \textbf{(Results)} We
evaluate the model under the three prompt formats with 1 to 50 demonstrations. We
visualize the discrete gradient, as the demonstration count changes, of the
model-assigned log probability for each correct output digit
in Figures~\ref{fig:instruction_consistent_main}
and~\ref{fig:instruction_inconsistent_main}. In the addition-only regime, the
gradient remains near zero throughout. Under a consistent mismatch, sensitivity
is initially high but gradually decreases as the demonstrations identify
addition. Under varying mismatches, the gradients remain irregular and exhibit
large spikes even at high demonstration counts. This persistent sensitivity is
qualitatively consistent with the theoretical distinction between fixed and
varying instruction shifts. See
Appendix~\ref{app:experiment_instruction_variation} for details.

\vspace{-0.5em}
\subsection{Task Decomposition Quality in Controlled Agent Execution}
\label{subsec:experiment_cot}
\vspace{-0.5em}
We instantiate the multi-step execution setting of
Section~\ref{subsec:cot_main} with an arithmetic agent that follows a fixed
plan. The agent maintains an explicit state $x^{(k)}$ and updates it at step
$k$ by applying the prescribed operation $f_{t_k}$. We intervene on whether the operations in the plan are supported,
while holding the target task and execution horizon fixed. The experiment
therefore illustrates the subtask-identification mechanisms summarized by
Theorem~\ref{thm:cot_main}: the ICL view favors execution when the
decomposition consists of supported
subtasks with controlled step-wise sensitivity. A decomposition containing
unsupported subtasks does not receive the same guarantee.

\textbf{(Setup)} We perform two-stage LoRA training for
\texttt{Llama3.2-3B}. The base model already performs one- to five-digit
addition and simple multiplication reliably, and the first stage adds 20K
six-digit addition samples. We treat seven-digit addition as the composed
target task. For the
\textit{In-Distribution Agent Plan}, the agent executes six-digit addition,
one-digit addition, multiplication by 10, and aggregation as its component
operations, and we add 10K step-level training examples for each required
operation. The \textit{Out-of-Distribution Agent Plan} instead requires the
agent to perform intermediate seven-digit additions. These operations receive
no additional training and cannot be executed reliably in isolation. We
evaluate the same model under both plans.
\textbf{(Evaluation)} At every execution step, we record whether the state
transition is correct and the log probability assigned to the correct output.
Variation in the latter across steps is a qualitative diagnostic of step-wise
stability. Since adjacent steps generally implement different operations, this
quantity is not a direct estimate of the within-step path constant in
Section~\ref{subsec:pathwise_lipschitz}.
\textbf{(Results)} Figure~\ref{fig:chain} shows that the in-distribution
agent plan has a smoother step-wise log-probability profile. The
out-of-distribution plan exhibits larger fluctuations and performs worse than
direct prediction without agent execution, illustrating the cost of including
a poorly supported operation in the plan. Additional results are provided in
Appendix~\ref{app:cot_effect}.


\vspace{-0.5em}
\section{Conclusion}
\label{sec:conclusion}
\vspace{-0.5em}
In this work, we separate two forms of generalization in in-context learning (ICL)
beyond pretraining support. At the instruction level, supported instructions
yield exponential convergence, and the continued reference predictor preserves
this rate for a single or repeated unsupported instruction. Mixed
prompts also converge when their total
continuation burden grows more slowly than the demonstration evidence.
At the task level, the
Bayesian latent-task reference for
direct ICL assigns zero posterior mass to a target outside its supported task
set. Within the same reference model, applying the ICL perspective step by
step explains how an agent may execute such a long-horizon target without
selecting it as a single latent task, provided that a plan decomposes it into
supported, identifiable subtasks. The resulting, standard composition bound
accumulates the errors of the execution steps. Together, the instruction
results and this agent interpretation clarify when more demonstrations, a
different prompt format, or a task decomposition may extend model behavior
beyond the tasks directly represented in pretraining.

%% file: NeurIPS2026/appendix/list_notation.tex
\label{app:notation}
This appendix summarizes the notation used in the main text. Symbols that are
local to a single proof are introduced where they first appear.

\paragraph{Sequences, tasks, and data.}
\begin{itemize}[label=\raisebox{0.25ex}{\tiny$\bullet$},leftmargin=1.4em]
    \item $\gV$: the finite token vocabulary; $\mathcal S$ is the finite
    space of individual inputs, outputs, and intermediate states.

    \item $\gT$: the finite set of latent tasks supported by pretraining;
    $\pi(t)$ is the pretraining prior on $t\in\gT$.

    \item $t^*$: the target task in Section~\ref{sec:instruction}, where
    $t^*\in\gT$; $t^\dagger$ is the target task in
    Section~\ref{sec:cot}, where $t^\dagger\notin\gT$ and
    $\pi(t^\dagger)=0$.

    \item $(x,y)$: an input--output pair; $f_t$ is the task-specific mapping,
    so that $y=f_t(x)$ under a deterministic task $t$.

    \item $\xquery$ and $\yquery$: the query input and its output,
    respectively.

    \item $P(\cdot)$ and $p(\cdot)$: a distribution and its probability mass
    or density function. In particular, $p(t\mid\vz)$ and
    $p(\yquery\mid\vz)$ denote the latent-task posterior and posterior
    predictive distribution.

    \item $M(y\mid\vz)$: the output probability assigned by the actual
    pretrained model. The Bayesian quantities $p$ and $\bar p$ below are its
    theoretical reference predictors.

    \item $\dpre$: the support of the pretraining prompt distribution across
    all demonstration counts.
\end{itemize}

\paragraph{Prompts and instructions.}
\begin{itemize}[label=\raisebox{0.25ex}{\tiny$\bullet$},leftmargin=1.4em]
    \item $\vz$ or $\vz_N$: an ICL prompt containing $N$ demonstrations and
    one query. The seven prompt formats are defined in
    Section~\ref{subsec:instruction_setup}.

    \item $\vi$: an instruction sequence; $\widetilde\vi$ denotes an
    instruction unsupported for the target task, and $\vi_0$ denotes a
    supported anchor instruction. The notation $\boldsymbol\vi$ collects all
    instruction variables in a prompt when a format contains more than one.

    \item $\dicl$: the supported instruction domain used in the supported
    instruction results of Section~\ref{sec:instruction}.

    \item $L_i$: the fixed maximum instruction length, and
    $\gV^{\le L_i}:=\bigcup_{m=0}^{L_i}\gV^m$.

    \item $e:\gV\cup\{\langle\mathrm{pad}\rangle\}\to\mathbb R^d$: the
    token embedding map, with
    $e(\langle\mathrm{pad}\rangle)=0$.

    \item $E(\vi)\in\mathbb R^{L_i\times d}$: the right-padded embedding
    matrix of instruction $\vi$ defined in
    Section~\ref{subsec:instruction_setup}.

    \item $E_s=(1-s)E(\vi_0)+sE(\widetilde\vi)$, $s\in[0,1]$: the straight
    path in the padded instruction-embedding space from a supported anchor to
    an unsupported instruction.

    \item $r_t(\vi):=\log\!\bigl(p(\vi\mid t)/p(\vi\mid t^*)\bigr)$: the
    task-relative instruction score on supported instructions. The notation
    $r_t(U)$, $U\in\mathbb R^{L_i\times d}$, denotes its fixed model-induced
    extension to instruction-embedding matrices for the continuous-path
    analysis.

    \item $\bar p$: the posterior or posterior predictive induced by a fixed
    analytic continuation at a zero-probability prompt. Ordinary $p$ is used
    when the relevant conditional is determined directly on the support.
\end{itemize}

\paragraph{Instruction-level convergence.}
\begin{itemize}[label=\raisebox{0.25ex}{\tiny$\bullet$},leftmargin=1.4em]
    \item $\alpha>0$: a demonstration-evidence rate separating the target
    task from its competitors; $\beta>0$ denotes a generic
    $N$-independent prefactor in an exponential error bound.

    \item $\varepsilon_M$: a finite model-approximation or transport error
    between the actual model $M$ and the theoretical reference predictor. For
    a shared unsupported instruction, $\varepsilon_{M,N}(\widetilde\vi)$ is
    the $N$-dependent error along the tied shared-template path, while
    $\overline\varepsilon_M(\widetilde\vi)$ denotes its optional uniform
    upper bound. The pathwise definitions are given in
    Appendix~\ref{app:model_reference_transport}.

    \item $\mathcal L_{\mathrm{path}}$: the pathwise Lipschitz sensitivity of
    the continued instruction log-likelihood ratios, defined in
    \eqref{eq:path_lipschitz_main}.

    \item $\|\cdot\|_F$ denotes the Frobenius norm. In particular,
    $\|E(\widetilde\vi)-E(\vi_0)\|_F$ is the distance from the supported
    anchor to the unsupported instruction. Appendix~\ref{app:padding} also
    denotes this distance by $\Delta_{\mathrm{inst}}$.

    \item $U_N\subseteq[N]$: the indices of demonstrations containing
    unsupported instructions in the mixed setting; $\Omega_n$ is the finite
    continuation cost associated with the $n$-th such instruction.
\end{itemize}

\paragraph{Agent execution.}
\begin{itemize}[label=\raisebox{0.25ex}{\tiny$\bullet$},leftmargin=1.4em]
    \item $K$: the fixed number of execution steps; $t_0,\ldots,t_{K-1}\in\gT$
    are the supported subtasks whose composition realizes
    $t^\dagger$.

    \item $x_n^{(k)}$ and $x^{(k)}$: the states after step $k$ in
    demonstration $n$ and in the query execution, respectively. In particular,
    $x^{(0)}=\xquery$ and $x^{(k+1)}=f_{t_k}(x^{(k)})$.

    \item $\vz_N^{(k)}(x)$: the step-$k$ ICL prompt evaluated at state $x$;
    $\vz_N$ denotes the full agent-execution prompt.

    \item $\alpha_k,\beta_k>0$: the exponential convergence constants for
    supported subtask $t_k$.

\end{itemize}

\paragraph{Padding.}
\begin{itemize}[label=\raisebox{0.25ex}{\tiny$\bullet$},leftmargin=1.4em]
    \item $\operatorname{Pad}_M(\vz)$: the position-preserving extension of
    prompt $\vz$ to a fixed length $M$ using
    $\langle\mathrm{pad}\rangle$ tokens.

    \item $m$: the number of added padding positions;
    $\Delta_{\mathrm{att}}$ is the attention-logit gap between a selected
    informative position and the most competitive visible padding position.

    \item $\varepsilon_{\mathrm{pad}}$: the model-output log-probability
    perturbation caused by padding under the conditions of
    Proposition~\ref{prop:padding_transformer_output}.

    \item $\delta_{k,\mathrm{pad}}(N)$: the conditional-probability discrepancy
    between the original and padded step-$k$ execution prompts. It is zero when the
    added positions are strictly future positions removed by the causal mask.
\end{itemize}

%% file: NeurIPS2026/appendix/related_work.tex
\paragraph{Existing theories rely on strong assumptions.}
A substantial body of theoretical work seeks to understand the mechanisms
underlying ICL by analyzing how transformers implement learning procedures over
context. At the mechanistic level, studies identify specific architectural
patterns such as induction heads and function vectors—that enable example reuse
and pattern matching across the context \citep{Olsson2022, Elhage2021, todd2024}.
Complementary algorithmic analyses show that, under linear or kernelized assumptions,
ICL can be interpreted as implicitly gradient descent or kernel regression in
activation space \citep{Akyurek2023, vonOswald2023, Ahn2023, Mahankali2023}, or
as a form of task selection induced by the prior over pretraining tasks \citep{Bai2023transformers}.
While these mechanistic and algorithmic perspectives have significantly advanced
our understanding of how ICL is computed, they typically rely on restrictive architectural
assumptions, such as linear attention, single-layer or attention-only transformers,
or single-head attention—and on idealized data distributions such as random
generalized linear regression, which abstract away properties that are critical in
practice \citep{vonOswald2023, Mahankali2023, JMLR:v25:23-1042}. A complementary
line of work adopts a Bayesian or statistical inference perspective,
interpreting ICL as approximate posterior inference over latent tasks
conditioned on in-context demonstrations \citep{Xie2022, Wang2023(b), Panwar2024}.
This framing provides a principled account of how generalization improves with
more demonstrations, and has been extended to study the effects of noise,
misspecification, and pretraining bias on ICL behavior \citep{Gong2025towards}. However,
existing analyses within this framework uniformly assume a fixed and canonical
context format, treating demonstrations as drawn from a coherent, well-specified
data-generating process. As a result, they do not characterize how ICL behaves
when the format of the context itself varies—a limitation that our work directly
addresses.

\paragraph{Practical factors affect ICL generalization.}
Beyond mechanistic and statistical accounts, a large body of empirical work
investigates how practical design choices in prompt construction affect ICL
performance. A first line of work focuses on the role of demonstrations themselves:
studies show that the number of demonstrations, their selection strategy, and
the correctness of their labels each substantially influence downstream
performance \citep{Agarwal2024, Liu2022, su2023selective}. Notably, \citet{Min2022}
find that label correctness in demonstrations matters less than expected, suggesting
that ICL generalizes by retrieving a latent task structure rather than learning directly
from individual examples. A second line of work examines chain-of-thought
prompting, showing that including intermediate reasoning steps in demonstrations
can dramatically improve performance on complex tasks \citep{Wei2022, kojima2022, nye2021}.
Subsequent theoretical work explains CoT's benefit in terms of expressivity: CoT
enables deeper computation without additional layers \citep{Feng2023}, expands
the model's recognition capacity for complex problem classes \citep{Merrill2024},
or supports more serial computation than standard forward passes allow \citep{Li2024}.
However, these expressivity-based accounts do not characterize how CoT-structured
demonstrations affect ICL generalization as the number of reasoning steps grows,
which is a gap our work addresses through a multi-step error propagation
analysis. A third line of work studies sensitivity to prompt format and template
design, finding that surface-level variations in instruction wording, demonstration
ordering, and label presentation can lead to large performance swings \citep{ye2024investigating, Zhao2021, Webson2022}.
Despite the breadth of these empirical findings, they remain largely descriptive:
existing work does not provide a theoretical criterion for when format variation
preserves successful generalization and when it fundamentally disrupts
consistent task inference. Our work provides such a criterion by formally distinguishing
prompt formats under which the posterior predictive converges exponentially with
more demonstrations from those under which convergence necessarily fails.

\paragraph{Mechanistic explanations of ICL.}
A complementary line of work tries to explain ICL at the level of internal
representations and computational mechanisms, examining how specific architectural
components give rise to in-context behavior. At the representational level, studies
identify induction heads and function vectors as key circuits that enable pattern
matching and example reuse across the context window \citep{Olsson2022, Elhage2021, todd2024},
and related work shows that multi-head softmax attention can implement regression
behaviors in simplified settings \citep{He2025incontext, Li2025provable}. At the
algorithmic level, several studies argue that transformers implicitly perform
gradient descent or kernel regression over in-context examples \citep{Akyurek2023, vonOswald2023, Ahn2023},
and related perspectives characterize ICL as algorithm selection or prior-induced
regularization across tasks \citep{Bai2023transformers, Lu2025transformer}. A
separate line of work examines whether standard pretraining objectives can
produce the representational capacity assumed by these accounts, establishing conditions
under which pretrained transformers exhibit ICL-like behavior \citep{kimandsuzuki2024transformers, kim2024transformers}.
While these mechanistic and algorithmic explanations have significantly advanced
our understanding of the computational substrate of ICL, they characterize how
ICL is implemented within a fixed architecture and format but do not address when
and why ICL generalizes across varying prompt formats, nor how practical design
choices such as demonstration count, CoT structure, or instruction variation
affect the quality of task inference. Our work is therefore complementary to this
line, operating at the level of generalization theory rather than mechanistic
realization.

%% file: appendix_entropy.tex
\label{app:probability_models}

This appendix explains how the latent-task probabilities used in the analysis
can be related to a Transformer evaluated on continuous instruction embeddings. The
argument has three logically distinct parts. Population cross-entropy
identifies the Bayes predictor on the support of the pretraining distribution;
an analytic Transformer scoring map selects one particular extension away from
that support; and a separate model-transport term measures the discrepancy
between this reference and the actual pretrained model. In particular, the
off-support extension is supplied by the model architecture and is not claimed
to be statistically identified by pretraining data of zero probability.

\subsection{What population cross-entropy identifies}
\label{app:cross_entropy_bayes}

Under the latent-task framework of Section~\ref{subsec:instruction_setup},
pretraining learns a conditional predictor $q(\cdot\mid\vz)$ by minimizing
population log-loss
\begin{equation}
    \mathcal R(q)
    :=\mathbb E_{(\vz,y)\sim P}
      \bigl[-\log q(y\mid\vz)\bigr].
    \label{eq:population_ce}
\end{equation}
For every predictor whose probabilities are positive on the support of $P$,
\begin{equation}
    \mathcal R(q)
    =\mathcal R(p)
     +\mathbb E_{\vz}
       \operatorname{KL}\!\left(
          p(\cdot\mid\vz)\,\|\,q(\cdot\mid\vz)
       \right).
    \label{eq:ce_kl_decomposition}
\end{equation}
Consequently, the population minimizer equals the Bayes posterior predictive
almost surely on the pretraining-prompt support:
\begin{equation}
    q^*(y\mid\vz)
    =p(y\mid\vz)
    =\sum_{t\in\gT}p(y\mid\xquery,t)p(t\mid\vz),
    \qquad P_{\vz}\text{-a.s.}
    \label{eq:posterior_predictive}
\end{equation}
Equation~\eqref{eq:ce_kl_decomposition} also makes the limitation precise:
population risk places no restriction on $q(\cdot\mid\vz)$ when
$P_{\vz}(\vz)=0$. Thus cross-entropy optimality by itself neither defines nor
identifies a predictor at an unsupported prompt.

For finite data, finite model capacity, or imperfect optimization, exact
equality in \eqref{eq:posterior_predictive} need not hold. This discrepancy is
kept explicit through the supported-model gap $r_0$ in
\eqref{eq:app_supported_model_gap}; it is not absorbed into the analytic
continuation assumption.

\subsection{A Transformer-induced reference extension}
\label{app:transformer_reference_extension}

We distinguish the following objects throughout the paper:
\begin{enumerate}[label=(\roman*),leftmargin=*]
    \item $p$ is the latent-task data-generating distribution. Its symbolic
    instruction probabilities $p(\vi\mid t)$ have their ordinary statistical
    meaning on the pretraining support.
    \item $p_{\mathrm{ref}}$ is a hybrid Bayesian reference predictor. It
    agrees with the Bayes predictor on supported prompts and, on unsupported
    instruction embeddings, inserts fixed Transformer-induced instruction log-odds into
    the latent-task posterior while retaining the theoretical demonstration,
    prior, query, and task-conditional output factors.
    \item $M$ is the actual pretrained model. It need not equal
    $p_{\mathrm{ref}}$; their discrepancy is controlled by
    $\varepsilon_M$ in Appendix~\ref{app:model_reference_transport}.
\end{enumerate}

Only instruction log-likelihood ratios, rather than a Lebesgue density over
the entire instruction-embedding space, are required by the proofs. It is therefore both
sufficient and conceptually cleaner to extend these ratios directly. Let a
fixed Transformer-induced scoring map produce task scores
\begin{equation}
    a_t(U),\qquad t\in\gT,
    \qquad U\in\mathbb R^{L_i\times d},
    \label{eq:reference_task_scores}
\end{equation}
and define the reference instruction log-odds relative to $t^*$ by
\begin{equation}
    r_t(U)
    :=a_t(U)-a_{t^*}(U)+c_t,
    \qquad t\ne t^*,
    \label{eq:reference_instruction_log_odds}
\end{equation}
where $c_t$ is independent of $U$. Equivalently, if the scores define
$q_{\mathrm{ref}}(t\mid U)=\operatorname{softmax}(a(U))_t$, then
\begin{equation}
    \log\frac{q_{\mathrm{ref}}(t\mid U)}
                   {q_{\mathrm{ref}}(t^*\mid U)}
    =a_t(U)-a_{t^*}(U).
    \label{eq:softmax_score_odds}
\end{equation}
The softmax normalizer cancels, so the extended log-odds inherit the regularity
of the task scores directly.

The required connection to the latent-task probabilities is a supported
calibration condition: on every supported symbolic instruction used by the
analysis,
\begin{equation}
    r_t(E(\vi))
    =\log\frac{p(\vi\mid t)}{p(\vi\mid t^*)}.
    \label{eq:reference_supported_calibration}
\end{equation}
For example, if $q_{\mathrm{ref}}(t\mid E(\vi))=p(t\mid\vi)$ on the
supported domain, Bayes' rule gives
\begin{equation}
    \log\frac{p(\vi\mid t)}{p(\vi\mid t^*)}
    =\log\frac{p(t\mid\vi)}{p(t^*\mid\vi)}
     -\log\frac{\pi(t)}{\pi(t^*)},
    \label{eq:bayes_instruction_odds}
\end{equation}
so \eqref{eq:reference_supported_calibration} follows by taking
$c_t=-\log(\pi(t)/\pi(t^*))$. More generally,
\eqref{eq:reference_supported_calibration} is the factor-level calibration
condition associated with the latent-task factorization used in the proofs.
Output-level cross-entropy optimality in
\eqref{eq:posterior_predictive} motivates this idealized calibration but does
not, by itself, identify a particular latent decomposition; the latter is part
of the reference-model specification.

The Transformer-induced scoring map is not claimed to be statistically
identified by next-token cross-entropy. It is a concrete architecture-induced
reference extension used to instantiate and quantitatively control the
otherwise arbitrary finite off-support perturbation. Supported calibration is
stated explicitly in \eqref{eq:reference_supported_calibration} as a separate
condition.

At an unsupported $U$, Equation~\eqref{eq:reference_instruction_log_odds}
defines a model-selected value. We do not interpret it as the probability of
observing a continuous embedding under the original discrete pretraining
distribution. This distinction resolves the apparent conflict between
$p(\widetilde\vi\mid t^*)=0$ and the fact that a Transformer nevertheless
returns well-defined scores when fed $E(\widetilde\vi)$ or an interpolated
instruction embedding.

Thus $r_t(U)$, rather than either individual factor
$p(U\mid t)$, is the continuous object used in the analysis. It agrees with
the ordinary likelihood ratio on the supported symbolic points by
\eqref{eq:reference_supported_calibration} and is defined by the scoring map
off that support. For example, in the shared-template
setting the continuation-induced task posterior is determined by the odds
\begin{equation}
    \frac{p_{\mathrm{ref}}(t\mid\vz_N(U))}
         {p_{\mathrm{ref}}(t^*\mid\vz_N(U))}
    :=B_t\exp\!\left(r_t(U)+\lambda_t\right),
    \qquad t\ne t^*,
    \label{eq:reference_continued_task_odds}
\end{equation}
followed by normalization over the finite task set. Here $B_t$ is the
demonstration likelihood ratio and $\lambda_t$ contains the prior and query
factors, as defined in \eqref{eq:def_Bt} and \eqref{eq:chain_g_lambda}. The
associated reference output predictor is
\begin{equation}
    p_{\mathrm{ref}}(y\mid\vz_N(U))
    :=\sum_{t\in\gT}p(y\mid\xquery,t)
                         p_{\mathrm{ref}}(t\mid\vz_N(U)).
    \label{eq:reference_continued_predictive}
\end{equation}
Thus the Transformer supplies exactly the single off-support factor needed by
the Bayesian concentration proof; the supported demonstration, prior, and
query factors retain their original probabilistic meaning.

\subsection{Analyticity of standard Transformer components}
\label{app:analytic_transformer_components}

Fix the sequence length, positional convention, attention mask, and
computational graph, and regard the instruction embedding $U$ as the only
continuous input. Affine maps, matrix products, residual additions, and fixed
positional embeddings are analytic. The exponential, hyperbolic tangent,
error function, exact GELU, and SiLU are real analytic. Hence attention
softmax and an output softmax are real analytic at every real input, because
their normalizing denominators are strictly positive on the real domain.

Layer normalization with a fixed $\epsilon>0$ has the form
\begin{equation}
    \operatorname{LN}(x)
    =\gamma\odot
      \frac{x-\mu(x)}{\sqrt{\sigma^2(x)+\epsilon}}+\beta.
    \label{eq:analytic_layer_norm}
\end{equation}
Since $\sigma^2(x)+\epsilon\ge\epsilon>0$ for real $x$, it is real analytic
on the entire real domain. The same argument applies to RMS normalization
with a strictly positive stabilizer. A causal mask is harmless when it is
fixed: masked entries are omitted from the corresponding finite softmax,
rather than treated as a variable arithmetic input.

\begin{proposition}[Analytic Transformer scoring map]
\label{prop:analytic_reference_transformer}
Suppose that the fixed Transformer scoring map in
\eqref{eq:reference_task_scores} has finite depth and width and is constructed
from the analytic operations above. Suppose also that every normalization
uses a strictly positive stabilizer and that the sequence length, masks, and
computational graph remain fixed. Then each map
$U\mapsto a_t(U)$, and hence each
$U\mapsto r_t(U)$, is real analytic on
$\mathbb R^{L_i\times d}$.
\end{proposition}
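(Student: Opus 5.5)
The proof is a closure argument: real-analytic maps are closed under the operations used to build the network, and the network is a finite composition of them. I will regard the scoring map as a finite computational graph whose only variable input is $U\in\mathbb R^{L_i\times d}$. Every other input is treated as a constant: the demonstration tokens, fixed positional embeddings, masks and parameters.

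The argument proceeds by induction along a topological order of this graph. The claim at each node is that its output is a real-analytic function of $U$ on all of $\mathbb R^{L_i\times d}$. The base case is the input layer. The instruction block contributes $U$ itself, which is linear in $U$. The other positions contribute constant embeddings, possibly plus fixed positional vectors. Both are analytic.

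For the inductive step I will use four standard closure facts:
\begin{itemize}[leftmargin=*]
\item sums, products and compositions of real-analytic maps are real analytic;
\item $x\mapsto 1/x$ is analytic on $(0,\infty)$;
\item $x\mapsto\sqrt{x}$ is analytic on $(0,\infty)$;
\item $\exp$, $\tanh$, $\operatorname{erf}$, and hence exact GELU $x\Phi(x)$ and SiLU $x\,\sigma(x)$, are entire on $\mathbb R$.
\end{itemize}

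Each layer type then reduces to these facts as follows.
\begin{itemize}[leftmargin=*]
\item Affine maps, matrix products and residual additions are polynomial in their analytic inputs, so they are analytic.
\item An attention row computes $\exp(\ell_j)/\sum_{j'\in S}\exp(\ell_{j'})$ over the fixed unmasked index set $S$. Here the $\ell_j$ are analytic logits, and the denominator is a finite sum of positive terms, hence strictly positive for every real input. The reciprocal is therefore taken only on $(0,\infty)$, and the row is analytic. The causal mask only fixes $S$, so it introduces no non-analytic selection.
\item For layer normalization, $\mu$ and $\sigma^2$ are polynomials in the input, and $\sigma^2+\epsilon\ge\epsilon>0$. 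Hence $(\sigma^2+\epsilon)^{-1/2}$ is a composition of analytic maps on $(0,\infty)$, and \eqref{eq:analytic_layer_norm} is analytic. RMS normalization is handled the same way using its positive stabilizer.
\end{itemize}

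Since depth and width are finite, the induction terminates after finitely many steps. So every coordinate of the final logits, and in particular each $a_t(U)$, is real analytic on the whole space. It follows that $r_t(U)=a_t(U)-a_{t^*}(U)+c_t$ from \eqref{eq:reference_instruction_log_odds} is analytic, as a difference of analytic maps plus a constant. If the scores are instead read off as $\log q_{\mathrm{ref}}$, the same conclusion holds through \eqref{eq:softmax_score_odds}: the normalizer cancels, and in any case $\log$ of the positive softmax output is analytic.

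The main obstacle is establishing analyticity globally on $\mathbb R^{L_i\times d}$, not merely near the embeddings of supported instructions. Each local fact is routine. The real work is checking that no operation is ever evaluated outside its domain of analyticity for any real $U$. This is exactly why strictly positive denominators, the fixed stabilizer $\epsilon>0$, and a fixed (not input-dependent) mask are required. I will state explicitly that non-analytic components such as ReLU, top-$k$ routing, or data-dependent truncation are excluded by hypothesis.
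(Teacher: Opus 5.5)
Your proposal is correct and follows essentially the same route as the paper: an induction over the finitely many layers, using closure of real-analytic maps under sums, products, compositions, and quotients with nonzero real denominators (positive softmax normalizers and $\sigma^2+\epsilon\ge\epsilon>0$), followed by reading off $r_t$ from \eqref{eq:reference_instruction_log_odds}. One small slip: $\tanh$ and SiLU are not entire (they have poles off the real axis, e.g.\ $\tanh$ at $\pm i\pi/2$ and SiLU at $\pm i\pi$), but they are real analytic on all of $\mathbb R$ because $\cosh>0$ and $1+e^{-x}>0$ there, which is all your argument uses; those complex poles are exactly the singularities the paper later notes can shrink the continuation radii.
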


\begin{proof}
Each hidden state is obtained from the preceding hidden states by finitely many
sums, products, quotients with nonzero real denominators, and compositions of
real-analytic functions. These operations preserve real analyticity. An
induction over the finitely many layers proves that every hidden state and
output score is real analytic in $U$. Equation
\eqref{eq:reference_instruction_log_odds} then gives the result for the
reference log-odds.
\end{proof}

Proposition~\ref{prop:analytic_reference_transformer} is an architectural
statement, not an assumption that follows from data.

\subsection{From architectural analyticity to the finite continuation chain}
\label{app:analytic_transformer_chain}

Let
\begin{equation}
    E_s=(1-s)E(\vi_0)+sE(\widetilde\vi),
    \qquad s\in[0,1],
    \label{eq:probability_model_instruction_path}
\end{equation}
be the path used in Theorem~\ref{thm:repeated_unsupported}. The following result
shows that the finite-chain hypothesis is automatic for the scoring-map-induced
extension above.

\begin{proposition}[Analytic Transformer implies a finite continuation chain]
\label{prop:transformer_implies_finite_chain}
Under Proposition~\ref{prop:analytic_reference_transformer}, for every
$t\ne t^*$ the function
\begin{equation}
    s\longmapsto r_t(E_s)
    \label{eq:pathwise_reference_log_odds}
\end{equation}
is real analytic on an open real neighborhood of $[0,1]$ and admits a finite
compatible chain of holomorphic continuations of the form required by
Assumption~\ref{ass:chain_local}. The chain is
independent of the number $N$ of demonstrations.
\end{proposition}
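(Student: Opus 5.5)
We prove Proposition~\ref{prop:transformer_implies_finite_chain} by composing the analytic scoring map with the affine path and then invoking compactness of $[0,1]$.

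\textbf{Step 1: real analyticity along the path.} The path $s\mapsto E_s=E(\vi_0)+s\bigl(E(\widetilde\vi)-E(\vi_0)\bigr)$ is affine. It is therefore real analytic as a map $\mathbb R\to\mathbb R^{L_i\times d}$, and it is defined for all real $s$, not only on $[0,1]$. Proposition~\ref{prop:analytic_reference_transformer} shows that $U\mapsto r_t(U)$ is real analytic on all of $\mathbb R^{L_i\times d}$. Composition preserves real analyticity, so $g_t(s):=r_t(E_s)$ is real analytic on $\mathbb R$, and in particular on any open neighborhood of $[0,1]$.

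\textbf{Step 2: local holomorphic extensions.} At each $s_0\in\mathbb R$, the function $g_t$ equals a convergent power series on some interval $(s_0-\rho(s_0),s_0+\rho(s_0))$ with $\rho(s_0)>0$. The same series converges on the complex disc $D(s_0,\rho(s_0))$ and defines a holomorphic function $h_{t,s_0}$ there that agrees with $g_t$ on the real interval.

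\textbf{Step 3: a finite compatible chain.} The discs $D(s,\rho(s)/2)$, $s\in[0,1]$, cover the compact set $[0,1]$. Extract a finite subcover with centers $0=s_0<s_1<\cdots<s_J=1$, ordered so that consecutive discs overlap on a real interval. This gives a chain $(D_j,h_j)_{j=0}^J$.

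Compatibility requires $h_j=h_{j+1}$ on $D_j\cap D_{j+1}$. Both functions agree with $g_t$ on the nonempty real interval in the overlap. The intersection of two discs is convex, hence connected. The identity theorem then gives $h_j=h_{j+1}$ on the whole intersection.

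Since $\gT$ is finite, we can take a common refinement over all $t\neq t^*$ and obtain one chain valid for every competing task, or simply keep one chain per task. The endpoint value $h_J(1)=r_t(E(\widetilde\vi))$ is the continued log-odds used in Theorem~\ref{thm:repeated_unsupported}.

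\textbf{Step 4: independence from $N$.} The only input to the construction is $r_t(U)$ evaluated along $E_s$, that is, the instruction embedding alone. Demonstrations enter the reference posterior only through the separate factors $B_t$ and $\lambda_t$ in \eqref{eq:reference_continued_task_odds}. The chain, its length $J$, and its radii therefore do not depend on $N$. This relies on the scoring map in \eqref{eq:reference_task_scores} taking only $U$ as input, with the sequence length held fixed, which is exactly how it was set up.

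\textbf{Main obstacle.} The delicate point is matching the precise form demanded by Assumption~\ref{ass:chain_local}. That assumption is stated later and may ask for quantitative control, such as uniform bounds of $|h_j|$ on the discs or a minimum overlap radius. If so, I would shrink every disc to a compact closed subdisc strictly inside its domain of convergence. Continuity of each $h_j$ on that compact set gives finite sup bounds. Lipschitz constants along $[0,1]$ then follow from Cauchy estimates, which also control $\mathcal L_{\mathrm{path}}$.
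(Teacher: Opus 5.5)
Your argument is the paper's proof: compose the analytic scoring map with the affine path, complexify the real-analytic germs, use compactness of $[0,1]$, glue by the identity theorem, and note that nothing depends on $N$. One step does not yet produce what Assumption~\ref{ass:chain_local} asks for: the extraction of the chain in Step~3.

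\textbf{The gap in Step~3.} A finite subcover need not contain disks centered at $0$ and $1$; you can simply adjoin them. The more serious problem is that ordering the centers so that consecutive disks overlap is strictly weaker than the required $d_\ell=s_{\ell+1}-s_\ell<r_\ell$, i.e.\ that each next center lies inside the current disk. This condition is exactly what Lemma~\ref{lmm:chain_one_step} uses through $q_\ell=d_\ell/\rho_\ell<1$. It can fail for a subcover even with full convergence radii. For the real-analytic function $g(s)=1/\bigl((s+0.09)^2+0.0019\bigr)$, the radii at $0$ and $0.15$ are $0.1$ and about $0.244$. The half-radius intervals at these points therefore overlap, yet $0.15>0.1$.

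\textbf{The repair.} It is the refinement the paper performs (``a sufficiently fine increasing sequence of real centers and disks inside this subcover'').
\begin{enumerate}
\item Take $\rho(s):=\min_{t\neq t^*}\rho_t(s)$, so the disks are common to all competitors as the assumption's quantifiers require. Your alternative of one chain per task does not fit it.
\item Let $D(\sigma_i,\rho(\sigma_i)/2)$, $i=1,\dots,m$, be your finite subcover, and set $\delta:=\min_i\rho(\sigma_i)/2>0$.
\item Each $s\in[0,1]$ lies in some $D(\sigma_i,\rho(\sigma_i)/2)$, hence $D(s,\delta)\subset D(\sigma_i,\rho(\sigma_i))$. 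The Taylor series at $\sigma_i$ then restricts to a holomorphic function on $D(s,\delta)$ that equals $g_t$ on its real trace.
\item Equally spaced centers $s_\ell=\ell/L$ with $1/L<\delta$ and $r_\ell:=\delta$ give $d_\ell<r_\ell$, with both endpoints included.
\end{enumerate}

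\textbf{Compatibility.} It is required on every nonempty $D_\ell\cap D_m$, not only for consecutive disks. Your identity-theorem argument still covers this. Two intersecting disks centered on $\mathbb R$ meet in a convex, conjugation-symmetric open set, which contains a real interval on which both functions equal $g_t$.

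\textbf{Your ``main obstacle''.} The uniform bounds you worry about are not part of Assumption~\ref{ass:chain_local}. The constants $M_\ell$ are derived from it in Lemma~\ref{lmm:chain_cauchy} by shrinking to working radii $\rho_\ell\in(d_\ell,r_\ell)$, exactly as you propose. The quantitative condition you actually need to verify is $d_\ell<r_\ell$.
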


\begin{proof}
The map $s\mapsto E_s$ is affine and
$U\mapsto r_t(U)$ is real analytic, so their composition in
\eqref{eq:pathwise_reference_log_odds} is real analytic on an open
neighborhood of every point of $[0,1]$. Every real-analytic germ has a local
holomorphic complexification. These complexifications agree wherever their
domains overlap along the real interval, by uniqueness of analytic
continuation. The resulting complex neighborhoods form an open cover of the
compact interval $[0,1]$. Compactness gives a finite subcover; choosing a
sufficiently fine increasing sequence of real centers
$0=s_0<\cdots<s_L=1$ and disks inside this subcover yields radii $r_\ell$ with
$s_{\ell+1}-s_\ell<r_\ell$ and compatible holomorphic functions on the disks,
exactly as in Assumption~\ref{ass:chain_local}. The function, path, and finite
cover concern a single shared instruction template and do not depend on $N$.
\end{proof}

Complexifications of softmax, SiLU, and normalization may have zeros, poles,
or branch points away from the real domain. This does not create an additional
existence assumption in Proposition~\ref{prop:transformer_implies_finite_chain}:
at every real point of the path, the real-analytic computation supplies some
complex neighborhood free of such singularities, and compactness reduces
these neighborhoods to a finite chain. Nearby complex singularities do,
however, shrink the admissible radii and can enlarge the Cauchy constants
$M_\ell$ and the cumulative continuation cost $\Omega$ defined in
Appendix~\ref{app:repeated_unsupported_chain}. Thus architectural analyticity
guarantees existence of the chain, whereas quantitative separation from the
complex singular set governs the strength of the resulting bound.

For an abstract off-support log-odds extension not parameterized by such a
Transformer scoring map, none of the preceding architectural argument is available; in
that case the finite-continuation condition remains a substantive regularity
hypothesis. The main theorem requires only finite, $N$-independent endpoint
log-odds for the chosen extension; Proposition~\ref{prop:transformer_implies_finite_chain}
supplies a concrete model-induced construction and quantitative control of
those endpoint values.

\subsection{The actual model and the transport error}
\label{app:probability_model_transport_discussion}

The construction does not assume that the actual pretrained model $M$ equals
the idealized reference $p_{\mathrm{ref}}$. On supported prompts their initial
discrepancy is $r_0$ from \eqref{eq:app_supported_model_gap}. Along a compact
prompt path $\gamma$, Appendix~\ref{app:model_reference_transport} gives
\begin{equation}
    \left|M(y\mid\gamma(1))
          -p_{\mathrm{ref}}(y\mid\gamma(1))\right|
    \le
    \underbrace{r_0+(L_M+L_{\mathrm{ref}})
                    \operatorname{len}(\gamma)}_{
                    \varepsilon_M(\gamma)}.
    \label{eq:probability_model_transport_recall}
\end{equation}
The analytic continuation controls the reference term in the main theorems;
$\varepsilon_M$ records the logically separate approximation and transport
error for the actual model. Analyticity guarantees that the displayed
Lipschitz constants are finite on a fixed compact path, but does not guarantee
that they, $r_0$, or $\varepsilon_M$ are small.
Appendix~\ref{app:uniform_local_transport} gives an additional quantitative
result: supported-anchor calibration and length-independent network stability
imply the local certificate $\bar r+(C_M+2G)d$, uniformly over the specified
length family. Appendix~\ref{app:anchor_loss_coverage} relates calibration to
population excess loss under an explicit coverage condition, and
Appendix~\ref{app:residual_variation_transport} gives a sharper certificate
using variation of the model--reference residual. These sufficient conditions
are separate from the existence of analytic continuation.

When the reference predictor can be evaluated, this discrepancy can also be
studied empirically. For a grid $0=u_0<\cdots<u_K=1$, define
\begin{equation}
    \widehat\varepsilon_M
    :=\max_{0\le k\le K}\sup_y
       \left|M(y\mid\gamma(u_k))
             -p_{\mathrm{ref}}(y\mid\gamma(u_k))\right|.
    \label{eq:empirical_model_gap}
\end{equation}
If the largest path-segment length
$\max_k\operatorname{len}(\gamma|_{[u_k,u_{k+1}]})$ is $\delta$ and the two
predictors have pathwise Lipschitz constants $L_M$ and
$L_{\mathrm{ref}}$, then every path point is within path length at most
$\delta$ of a grid point and
\begin{equation}
    \sup_{s\in[0,1]}\sup_y
       \left|M(y\mid\gamma(s))
             -p_{\mathrm{ref}}(y\mid\gamma(s))\right|
    \le \widehat\varepsilon_M
       +(L_M+L_{\mathrm{ref}})\delta.
    \label{eq:grid_model_gap_certificate}
\end{equation}
Without computable Lipschitz bounds, the grid experiment supplies evidence
rather than a uniform certificate. Without access to an evaluable reference
predictor, $\varepsilon_M$ remains a theoretical approximation term and cannot
be identified from ordinary test accuracy alone.

\subsection{Empirical validation of the model--reference approximation}
\label{app:empirical_transport_validation}

We complement the preceding argument with a controlled experiment in which the
latent-task probabilities and the hybrid reference predictor are evaluable.
The purpose is not to test analyticity itself, which follows from the
architectural conditions of Proposition~\ref{prop:analytic_reference_transformer},
but to test the substantive quantitative premise behind the model-level bounds:
whether the actual Transformer remains close to the reference predictor along
the continuation path, and whether this discrepancy stays controlled as the
number of demonstrations increases.

\paragraph{Controlled setting.}
We use a finite latent-task benchmark with three modular-arithmetic tasks
(addition, subtraction, and multiplication). The task prior, supported
task-conditional instruction probabilities, demonstration likelihoods, and
task-conditional output distributions are known, so the factors $B_t$ and
$\lambda_t$ in \eqref{eq:reference_continued_task_odds} and the resulting
$p_{\mathrm{ref}}$ can be computed directly. A finite Transformer with analytic
components is trained by cross-entropy using supported instructions only, and
an analytic task-scoring map supplies $r_t$. We repeat the
experiment for five independently initialized models.

For a supported anchor $\vi_0$ and an instruction endpoint
$\widetilde\vi$, we evaluate 51 equally spaced points on
\begin{equation}
    E_s=(1-s)E(\vi_0)+sE(\widetilde\vi),
    \qquad s\in[0,1],
    \label{eq:empirical_transport_path}
\end{equation}
and use $N\in\{1,2,4,8,16,32,64\}$ demonstrations. To avoid confusion with
the working radii $\rho_\ell$ used in the continuation proof, denote normalized
endpoint distance by
\begin{equation}
    \delta_{\mathrm{emb}}
    :=\frac{\|E(\widetilde\vi)-E(\vi_0)\|_F}
    {\text{typical separation between supported instruction embeddings}}.
    \label{eq:normalized_embedding_distance}
\end{equation}
For each $N$, define the endpoint and pathwise model--reference gaps
\begin{align}
    d_N^{\mathrm{end}}
    &:=\sup_y\left|M(y\mid\gamma_N(1))
                  -p_{\mathrm{ref}}(y\mid\gamma_N(1))\right|,\label{eq:empirical_endpoint_gap}\\
    d_N^{\mathrm{path}}
    &:=\max_{s\in\{0,.02,\ldots,1\}}\sup_y
       \left|M(y\mid\gamma_N(s))
             -p_{\mathrm{ref}}(y\mid\gamma_N(s))\right|.
    \label{eq:empirical_path_gap}
\end{align}
Thus $d_N^{\mathrm{path}}$ is an empirical effective transport error, rather
than the analytic upper bound in
\eqref{eq:probability_model_transport_recall}.

\paragraph{Local-path discrepancies over the tested lengths.}
Table~\ref{tab:local_transport_validation} reports the complete trajectories
for two local paths. At $\delta_{\mathrm{emb}}=0.25$, the pathwise gap remains
between $0.058$ and $0.125$ over a 64-fold change in demonstration count. It is
$0.099$ at $N=64$, approximately $20\%$ below its value at $N=1$. Moreover,
the measured supported-anchor gap accounts for most of this discrepancy: across
the displayed values of $N$, the additional pathwise gap
$d_N^{\mathrm{path}}-r_0$ is only $0.007$--$0.037$. This provides evidence
of a small additional measured error on these short paths. Here the reported
$r_0$ is an empirical anchor counterpart, not an evaluation of the supremum
over the full supported domain in \eqref{eq:app_supported_model_gap}.

At $\delta_{\mathrm{emb}}=0.50$, the pathwise gap remains between $0.139$ and
$0.316$ and again ends below its $N=1$ value. Neither trajectory is monotone,
as expected for finite prompts and independently sampled demonstrations, but
neither exhibits systematic amplification with $N$. These are precisely the
local paths for which the model-level theorem is most informative: the
Bayesian term decreases exponentially while the empirical transport term
remains controlled.

\begin{table}[t]
    \centering
    \caption{\textbf{Model--reference discrepancy on local paths.} Mean
    $\pm$ standard deviation over five independently trained models. The
    empirical supported-anchor gap (denoted $r_0$ in this table) is common
    to the endpoint-distance conditions.}
    \label{tab:local_transport_validation}
    \small
    \setlength{\tabcolsep}{5pt}
    \begin{tabular}{c c c c}
        \toprule
        $N$ & $r_0$ &
        $d_N^{\mathrm{path}}$ ($\delta_{\mathrm{emb}}=.25$) &
        $d_N^{\mathrm{path}}$ ($\delta_{\mathrm{emb}}=.50$) \\
        \midrule
         1 & $0.088\pm0.046$ & $0.125\pm0.081$ & $0.253\pm0.170$ \\
         2 & $0.056\pm0.078$ & $0.069\pm0.087$ & $0.146\pm0.151$ \\
         4 & $0.064\pm0.082$ & $0.071\pm0.080$ & $0.139\pm0.113$ \\
         8 & $0.066\pm0.059$ & $0.099\pm0.084$ & $0.316\pm0.175$ \\
        16 & $0.035\pm0.024$ & $0.058\pm0.038$ & $0.182\pm0.127$ \\
        32 & $0.090\pm0.102$ & $0.115\pm0.127$ & $0.206\pm0.179$ \\
        64 & $0.071\pm0.045$ & $0.099\pm0.086$ & $0.192\pm0.157$ \\
        \bottomrule
    \end{tabular}
\end{table}

\paragraph{Distance controls the error floor.}
Table~\ref{tab:distance_transport_validation} expands the endpoint distance
from one quarter to four times the typical supported-instruction separation.
The comparison reveals a favorable and interpretable structure. For the two
local paths, the gap at $N=64$ is smaller than at $N=1$. At larger distances,
the gap increases between these two demonstration counts, but remains bounded
and the increase is modest relative to the change caused by moving the
endpoint farther from the supported anchor. For example, at
$\delta_{\mathrm{emb}}=4$, the gap changes from $0.525$ to $0.562$ as $N$
increases from $1$ to $64$. Across distances, the dominant pattern is thus a
distance-dependent error floor, as represented explicitly by
$\varepsilon_{M,N}(\widetilde\vi)$ in the theorem, rather than systematic
amplification with the number of demonstrations.

\begin{table}[t]
    \centering
    \caption{\textbf{Dependence on normalized endpoint distance.} Pathwise
    model--reference gap at the smallest and largest tested demonstration
    counts. Mean $\pm$ standard deviation over five independently trained
    models; the last column is the ratio of the two means.}
    \label{tab:distance_transport_validation}
    \small
    \begin{tabular}{c c c c}
        \toprule
        $\delta_{\mathrm{emb}}$ & $d_1^{\mathrm{path}}$ &
        $d_{64}^{\mathrm{path}}$ & $d_{64}^{\mathrm{path}}/d_1^{\mathrm{path}}$ \\
        \midrule
        $0.25$ & $0.125\pm0.081$ & $0.099\pm0.086$ & $0.799$ \\
        $0.50$ & $0.253\pm0.170$ & $0.192\pm0.157$ & $0.757$ \\
        $1.00$ & $0.323\pm0.270$ & $0.459\pm0.189$ & $1.421$ \\
        $2.00$ & $0.413\pm0.229$ & $0.543\pm0.081$ & $1.315$ \\
        $4.00$ & $0.525\pm0.057$ & $0.562\pm0.057$ & $1.072$ \\
        \bottomrule
    \end{tabular}
\end{table}

The endpoint and pathwise gaps are also nearly identical throughout the two
local conditions and for $\delta_{\mathrm{emb}}=1$ except at the smallest
demonstration count. Hence the maximum discrepancy is generally attained near
the unsupported endpoint, rather than at an unstable interior point of the
interpolation. This behavior is consistent with the finite-chain picture: the
path remains regular, while moving the endpoint farther from supported
instructions increases the distance-dependent difficulty of model transport.

\paragraph{What the experiment validates.}
The measured anchor and path-grid discrepancies support accurate local
tracking in this controlled setting over the tested demonstration counts.
Larger distances generally produce larger discrepancies. These are means and
standard deviations over five trained models, not worst-case bounds over
models, demonstration sequences, or the continuum of path points. Small
$d_N^{\mathrm{path}}$ does not establish that the analytic upper certificate
$\varepsilon_{M,N}(\widetilde\vi)$ is small: the latter uses separate
sensitivity bounds that the experiment does not certify. Nor does the tested
length grid establish an almost-sure statement for all sufficiently large
$N$ along one infinite demonstration sequence.

Proposition~\ref{prop:uniform_local_transport} identifies a sufficient regime
in which the analytic certificate is uniformly small, but its network
constants and calibration premises have not been certified for these trained
models. A continuous-path certificate additionally requires a valid
Lipschitz envelope in \eqref{eq:grid_model_gap_certificate}, or a certified
residual-variation bound from
Proposition~\ref{prop:residual_variation_transport}. Extending the diagnostic
to real models also requires an evaluable reference; task accuracy alone
does not identify the discrepancy. Thus the experiment provides finite-range
evidence compatible with the sufficient conditions, rather than verification
of their uniform hypotheses for general LLMs.

In summary, the setting makes two different commitments. Supported
calibration connects the reference scores to the latent-task probabilities at
prompts that pretraining can identify. Analytic Transformer architecture then
selects a well-defined, finite-chain extension of those scores along a
instruction-embedding path. The first commitment is statistical and support-restricted;
the second is architectural and off-support. Their separation is essential:
analyticity provides a principled extension, but does not turn zero-probability
prompts into observations from the pretraining distribution.

%% file: NeurIPS2026/appendix/minimal_modified_app_instruction.tex
%
%

\label{app:instruction}
\subsection{From the reference predictor to the pretrained model}
\label{app:model_reference_transport}

Let $M(y\mid\vz)$ denote the output probability of the actual pretrained
model, and let $p_{\mathrm{ref}}(y\mid\vz)$ denote the theoretical reference:
it equals the Bayesian predictor $p$ on supported prompts and the fixed
continued predictor $\bar p$ on unsupported prompts. Define
\begin{equation}
    \label{eq:app_supported_model_gap}
    r_0:=\sup_{\vz\in\dpre}\sup_y
    \left|M(y\mid\vz)-p(y\mid\vz)\right|.
\end{equation}
Consider a compact instruction-embedding path $\gamma$ from a supported prompt
$\gamma(0)$ to a test prompt $\gamma(1)$. Suppose $M(y\mid\gamma(s))$ and
$p_{\mathrm{ref}}(y\mid\gamma(s))$ are Lipschitz along this path, uniformly in
$y$, with constants $L_M$ and $L_{\mathrm{ref}}$, and let
$\operatorname{len}(\gamma)$ be its length. These constants are finite, for
example, if both functions are real analytic on a neighborhood of the compact
path. Define the model-transport error
\begin{equation}
    \label{eq:app_model_transport_error}
    \varepsilon_M(\gamma)
    :=r_0+(L_M+L_{\mathrm{ref}})\operatorname{len}(\gamma).
\end{equation}

\begin{lemma}[Transport from the supported domain]
\label{lmm:model_reference_transport}
Under the preceding conditions,
\begin{equation}
    \label{eq:app_model_reference_transport}
    \left|M(y\mid\gamma(1))-p_{\mathrm{ref}}(y\mid\gamma(1))\right|
    \le \varepsilon_M(\gamma).
\end{equation}
For a supported test prompt, the constant path gives
$\varepsilon_M=r_0$.
\end{lemma}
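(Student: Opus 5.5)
The plan is a triangle inequality around the supported starting point $\gamma(0)$, combined with the two pathwise Lipschitz bounds. Fix $y$. We write
\begin{align*}
M(y\mid\gamma(1))-p_{\mathrm{ref}}(y\mid\gamma(1))
&=\bigl[M(y\mid\gamma(1))-M(y\mid\gamma(0))\bigr]
+\bigl[M(y\mid\gamma(0))-p_{\mathrm{ref}}(y\mid\gamma(0))\bigr]\\
&\quad+\bigl[p_{\mathrm{ref}}(y\mid\gamma(0))-p_{\mathrm{ref}}(y\mid\gamma(1))\bigr],
\end{align*}
and bound the three brackets separately.

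\emph{Middle bracket.} Since $\gamma(0)$ is a supported prompt, $\gamma(0)\in\dpre$. By definition of the reference, $p_{\mathrm{ref}}(\cdot\mid\gamma(0))=p(\cdot\mid\gamma(0))$. Hence the middle term is at most $r_0$ by \eqref{eq:app_supported_model_gap}, uniformly in $y$.

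\emph{Outer brackets.} Here I use that $M(y\mid\gamma(s))$ and $p_{\mathrm{ref}}(y\mid\gamma(s))$ are Lipschitz along the path with constants $L_M$ and $L_{\mathrm{ref}}$, uniformly in $y$. I read this as Lipschitz with respect to arc length, or equivalently with $\gamma$ parametrized by arc length. Then each outer bracket is bounded by its constant times the distance along the path from $\gamma(0)$ to $\gamma(1)$, which is at most $\operatorname{len}(\gamma)$. This gives the contributions $L_M\operatorname{len}(\gamma)$ and $L_{\mathrm{ref}}\operatorname{len}(\gamma)$.

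\emph{Main obstacle.} The only delicate point is this interpretation of ``Lipschitz along the path''. If the constants were instead stated with respect to the parameter $s\in[0,1]$, the bound would read $L_M+L_{\mathrm{ref}}$ rather than involve $\operatorname{len}(\gamma)$. I would therefore take the arc-length convention, or equivalently for a rectifiable $\gamma$ the condition $|g(\gamma(s))-g(\gamma(s'))|\le L\,\operatorname{len}(\gamma|_{[s,s']})$. The triangle inequality then closes the argument.

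\emph{Conclusion.} Summing the three bounds gives $\varepsilon_M(\gamma)$ as defined in \eqref{eq:app_model_transport_error}. For the final claim, if the test prompt is supported, take the constant path $\gamma\equiv\gamma(1)$. Then $\operatorname{len}(\gamma)=0$, both outer brackets vanish, and the bound reduces to $r_0$.
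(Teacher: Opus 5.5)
Your proposal is correct and is essentially the paper's own argument: insert and subtract both predictors at the supported endpoint $\gamma(0)$, bound the anchor discrepancy by $r_0$ using $p_{\mathrm{ref}}=p$ there, and bound the two path increments by $L_M\operatorname{len}(\gamma)$ and $L_{\mathrm{ref}}\operatorname{len}(\gamma)$; the constant path then gives $r_0$. Your concern about the Lipschitz convention does not affect the result: an arc-length reading works, and so does a Lipschitz bound in the ambient norm (the form used later for the tied path $E_s$), because $\|\gamma(1)-\gamma(0)\|\le\operatorname{len}(\gamma)$.
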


\begin{proof}
Insert and subtract the two functions at the supported endpoint:
\begin{align}
&\left|M(y\mid\gamma(1))-p_{\mathrm{ref}}(y\mid\gamma(1))\right| \\
&\quad\le
\left|M(y\mid\gamma(0))-p(y\mid\gamma(0))\right|
+\left|M(y\mid\gamma(1))-M(y\mid\gamma(0))\right| \\
&\qquad
+\left|p_{\mathrm{ref}}(y\mid\gamma(1))
-p_{\mathrm{ref}}(y\mid\gamma(0))\right| \\
&\quad\le r_0+(L_M+L_{\mathrm{ref}})\operatorname{len}(\gamma).
\end{align}
\end{proof}

For the shared-template setting, fix the instruction-coordinate path
\[
E_s=(1-s)E(\vi_0)+sE(\widetilde\vi),
\qquad
\gamma_N(s):=\vz_N(E_s).
\]
The copies of the repeated instruction in $\gamma_N(s)$ are tied to the same
coordinate $E_s$. Let $L_{M,N}$ and $L_{\mathrm{ref},N}$ satisfy, for all
$s,s'\in[0,1]$,
\begin{align}
\sup_y\left|M(y\mid\vz_N(E_s))-M(y\mid\vz_N(E_{s'}))\right|
&\le L_{M,N}\|E_s-E_{s'}\|_F, \\
\sup_y\left|p_{\mathrm{ref}}(y\mid\vz_N(E_s))
-p_{\mathrm{ref}}(y\mid\vz_N(E_{s'}))\right|
&\le L_{\mathrm{ref},N}\|E_s-E_{s'}\|_F.
\end{align}
Define
\begin{equation}
\label{eq:shared_anchor_model_gap}
r_{0,N}:=\sup_y\left|M(y\mid\vz_N(\vi_0))-p(y\mid\vz_N(\vi_0))\right|
\end{equation}
and the shared-template transport error
\begin{equation}
\label{eq:shared_model_transport_error}
\varepsilon_{M,N}(\widetilde\vi)
:=r_{0,N}
+(L_{M,N}+L_{\mathrm{ref},N})
\|E(\widetilde\vi)-E(\vi_0)\|_F.
\end{equation}
The repeated copies therefore do not introduce an artificial
$\sqrt{N+1}$ factor into the path length; any dependence on their number is
recorded in the pullback Lipschitz constants.

\begin{corollary}[Shared-template transport at fixed demonstration count]
\label{cor:shared_model_transport}
For every fixed $N$,
\begin{equation}
\label{eq:shared_model_reference_transport}
\sup_y\left|M(y\mid\vz_N(\widetilde\vi))
-\bar p(y\mid\vz_N(\widetilde\vi))\right|
\le\varepsilon_{M,N}(\widetilde\vi).
\end{equation}
\end{corollary}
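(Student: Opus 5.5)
The plan is to mirror the proof of Lemma~\ref{lmm:model_reference_transport}, replacing the generic path $\gamma$ by the tied shared-template path $\gamma_N(s)=\vz_N(E_s)$. The reference $p_{\mathrm{ref}}$ at $s=1$ is by definition the fixed continued predictor $\bar p$, since $\vz_N(\widetilde\vi)$ is unsupported. At $s=0$ the prompt $\vz_N(\vi_0)$ is supported, so $p_{\mathrm{ref}}$ there coincides with the ordinary Bayes predictor $p$. For this identification I would use the supported calibration condition \eqref{eq:reference_supported_calibration}: the continued log-odds $r_t(E_0)$ agree with $\log(p(\vi_0\mid t)/p(\vi_0\mid t^*))$, so the continued posterior odds \eqref{eq:reference_continued_task_odds} reduce to the Bayes odds at the anchor.

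Fix $N$ and $y$, and insert and subtract at the anchor:
\begin{align*}
\left|M(y\mid\vz_N(\widetilde\vi))-\bar p(y\mid\vz_N(\widetilde\vi))\right|
&\le \left|M(y\mid\vz_N(\vi_0))-p(y\mid\vz_N(\vi_0))\right|
+\left|M(y\mid\vz_N(E_1))-M(y\mid\vz_N(E_0))\right|\\
&\quad+\left|p_{\mathrm{ref}}(y\mid\vz_N(E_1))-p_{\mathrm{ref}}(y\mid\vz_N(E_0))\right|.
\end{align*}
The first term is at most $r_{0,N}$ by \eqref{eq:shared_anchor_model_gap}. For the second and third terms, I would apply the assumed pullback Lipschitz bounds with $s=1$, $s'=0$. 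This gives $L_{M,N}\|E_1-E_0\|_F$ and $L_{\mathrm{ref},N}\|E_1-E_0\|_F$, where $E_1-E_0=E(\widetilde\vi)-E(\vi_0)$. Summing and taking the supremum over $y$ yields exactly $\varepsilon_{M,N}(\widetilde\vi)$ in \eqref{eq:shared_model_transport_error}.

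Only two points need care. First, the Lipschitz constants are stated in the single tied coordinate $E_s$ rather than the full prompt embedding. Because of this, no factor from the $N+1$ repeated copies appears; that dependence is absorbed into $L_{M,N},L_{\mathrm{ref},N}$, and I would simply remark on it. Second, and this is the main subtlety, the anchor identification $p_{\mathrm{ref}}(\cdot\mid\vz_N(E_0))=p(\cdot\mid\vz_N(\vi_0))$ must hold. It requires that the continuation be anchored at the supported value, which is exactly what calibration at $\vi_0$ provides. If the anchor gap were instead defined relative to $p_{\mathrm{ref}}$, this step would be trivial.

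Because the argument uses only the single fixed $N$, no almost-sure or large-$N$ reasoning is needed. Finiteness of the constants follows, for instance, from real analyticity on a neighborhood of the compact segment, as in Proposition~\ref{prop:analytic_reference_transformer}.
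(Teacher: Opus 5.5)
Your proposal is correct and follows the paper's own proof: the insert-and-subtract argument of Lemma~\ref{lmm:model_reference_transport} applied along the tied path $\gamma_N(s)=\vz_N(E_s)$, using $p_{\mathrm{ref}}=p$ at the supported anchor and measuring length in the single tied coordinate $E_s$. Your appeal to supported calibration only spells out why the continued reference agrees with $p$ at $s=0$, a fact the paper treats as part of the definition of $p_{\mathrm{ref}}$.
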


\begin{proof}
Apply the triangle-inequality argument of
Lemma~\ref{lmm:model_reference_transport} to $\gamma_N$, using
$p_{\mathrm{ref}}=p$ at the supported anchor and the tied instruction
coordinate to measure the path length.
\end{proof}

For the mixed setting, $\varepsilon_M(\vz_N)$ is the quantity in
\eqref{eq:app_model_transport_error} on a path
that changes the unsupported instruction slots successively. It is finite for
each fixed prompt but need not be uniform in $N$ or converge to zero. Combining
Lemma~\ref{lmm:model_reference_transport} with the continuation bounds proved
below gives the four model-level bounds stated in the main text.

\input{NeurIPS2026/appendix/appendix_uniform_transport}

\subsection{Complete prompt formats}
\label{app:prompt_formats}

We expand all seven rows of Table~\ref{tab:instruction_formats} below. The
first axis is \emph{task-conditional support}. An instruction may be familiar
globally yet unsupported for the target task: support requires
$p(\instruction\mid t^*)>0$, meaning that pretraining paired that rendering
with $t^*$ with positive probability. Unsupported instructions satisfy
$p(\tilde{\instruction}\mid t^*)=0$. This is not a judgment of semantic
correctness. A dialectal expression or private schema label can express the
user's intended task perfectly while remaining unsupported by the pretraining
distribution.

The second axis is \emph{placement}. A single instruction appears once above
the prompt; a repeated instruction is one realization copied into every
demonstration and the query; varying instructions are separately generated for
the individual slots. Formats~1, 3, and~5 draw their instructions from
$P_{\instruction\mid t^*}$ and are supported by construction. Formats~2, 4,
and~6 use renderings outside that distribution's support. Format~7 begins with
a repeated supported template and corrupts only a subset of its occurrences.
These cases correspond respectively to ordinary user or system messages,
record-level schemas, retrieval or multi-annotator pipelines, and partially
corrupted data.

\begin{enumerate}[leftmargin=*, topsep=0pt, itemsep=3pt]
    \item \textbf{Single supported instruction.} A single supported instruction
        $\instruction$ appears once at the start of the prompt and applies to
        all demonstrations and the query.
        \[
            \vz = (\instruction, (x_{1},y_{1}), \ldots, (x_{N},y_{N}), \xquery).
        \]
        \emph{Concrete example.} A customer-support system places
        \emph{Classify each ticket as billing, technical, or account} once
        above several labeled tickets. This conventional instruction has been
        paired with ticket classification during pretraining and is modeled as
        a draw from $P_{\instruction\mid t^*}$.

    \item \textbf{Single unsupported instruction.} A single instruction
        $\tilde{\instruction}$ outside the task-conditional support appears
        once at the start of the prompt. The absent case is recovered by
        setting $\tilde{\instruction}=\langle\text{pad}\rangle$.
        \[
            \vz = (\tilde{\instruction}, (x_{1},y_{1}), \ldots, (x_{N},y_{N}), \xquery
            ),
        \]
        where $\tilde{\instruction}$ is an unsupported instruction or a virtual token
        $\langle\text{pad}\rangle$ .

        \emph{Concrete example.} An Indian-English user writes
        \emph{Please revert with a reply}, intending \emph{revert} to mean
        ``reply,'' and provides demonstrations mapping incoming emails to reply
        drafts. The wording is correct for the user, but if pretraining paired
        \emph{revert} with operations such as ``undo'' and never with reply
        drafting, then $p(\tilde{\instruction}\mid t^*)=0$ in our idealized
        model.

    \item \textbf{Repeated supported instruction.} One supported instruction
        $\instruction$ is copied into every demonstration and the query, so the
        model encounters the same realization $N+1$ times.
        \[
            \vz = ((\instruction,x_{1},y_{1}), \ldots, (\instruction,x_{N},y_{N})
            , (\instruction,\xquery)).
        \]
        \emph{Concrete example.} A JSON dataset includes
        \texttt{"task": "sentiment"} in every demonstration record and in
        the query record. The standard schema key is supported for sentiment
        classification, and all displayed occurrences are copies of one shared
        template rather than independent instruction samples.

    \item \textbf{Repeated unsupported instruction.} One unsupported (or pad)
        instruction $\tilde{\instruction}$ is copied into every demonstration
        and the query.
        \[
            \vz = ((\tilde{\instruction},x_{1},y_{1}), \ldots, (\tilde{\instruction},x_{N},y_{N})
            , (\tilde{\instruction},\xquery)).
        \]
        \emph{Concrete example.} A company repeatedly inserts
        \texttt{"operation": "Atlas"} into records for its internal
        ticket-routing service. The name was borrowed from an unrelated
        open-source project and was never paired with the company's current
        routing behavior during pretraining. Although the label is meaningful
        inside the company, it is unsupported for $t^*$ and incurs only one
        shared instruction factor because the same label is copied throughout.

    \item \textbf{Varying supported instructions.} Each demonstration is paired
        with an independently sampled supported rendering $\instruction_n$, and
        the query carries another independently sampled rendering $\iquery$.
        \[
            \vz = ((\instruction_1,x_{1},y_{1}), \ldots, (\instruction_{N},x_{N}
            ,y_{N}), (\iquery,\xquery)).
        \]
        \emph{Concrete example.} A retrieval system pools sentiment examples
        written by different annotators: \emph{classify the sentiment},
        \emph{positive or negative?}, and \emph{identify the review's tone}.
        These are distinct but familiar renderings sampled from
        $P_{\instruction\mid t^*}$, so every slot is supported even though the
        surface wording varies.

    \item \textbf{Varying unsupported instructions.} Each demonstration is
        paired with a distinct unsupported rendering
        $\tilde{\instruction}_n$, and the query carries an analogous
        $\tilde{\iquery}$. Each slot can therefore introduce a separate
        off-support cost.
        \[
            \vz = ((\tilde{\instruction_1},x_{1},y_{1}), \ldots, (\tilde{\instruction_{N}},x_{N}
            ,y_{N}), (\tilde{\iquery},\xquery)).
        \]
        \emph{Concrete example.} A multi-tenant retrieval pipeline combines
        ticket-routing demonstrations from companies that use private labels
        such as \texttt{triage\_acme}, \texttt{route\_7G}, and
        \texttt{dispatch\_private}. Each tenant intends its label to denote the
        same routing task, but none of the private labels was paired with that
        task during pretraining. Unlike format~4, the prompt introduces a new
        unsupported label in each slot.

    \item \textbf{Mixed supported and unsupported instructions.} Most slots
        retain one shared supported instruction $\instruction$, while a subset
        contains distinct unsupported corruptions $\tilde{\instruction}_n$:
        \[
            \vz=\bigl((\instruction_n,x_n,y_n)_{n=1}^{N},
            (\iquery,\xquery)\bigr),
            \qquad
            \instruction_n\in\{\instruction,\tilde{\instruction}_n\}.
        \]
        \emph{Concrete example.} A sentiment dataset normally repeats the
        standard field \texttt{sentiment}, but manual entry or OCR produces
        distinct typos such as \texttt{sentimnet} and \texttt{sentimemt} in a
        few records. The unchanged records retain a supported anchor, whereas
        each typo is treated as an unsupported corruption. This differs from
        format~6, in which every slot carries a distinct unsupported rendering.
\end{enumerate}

\subsection{Preliminaries}

We begin by expanding the posterior distribution over tasks using Bayes' rule. For
any task $t\in\gT$, conditioning on the full prompt
$\vz=(\vi,\{(x_{n},y_{n}) \}_{n=1}^{N},\xquery)$ yields
\begin{equation}
    \label{eq:posterior_bayes_full}p(t\mid \vz) = \frac{\pi(t)\,p(\vz\mid t)}{\sum_{s\in\gT}\pi(s)\,p(\vz\mid
    s)}.
\end{equation}
For instructions that lie in the support of the pretraining distribution, we
use different but compatible factorizations for the repeated and varying formats.
In formats~1 and~3, the instruction is sampled once and, in format~3, its displayed
occurrences are deterministic copies of the same shared realization. Hence
\begin{equation}
    \label{eq:likelihood_factorization_shared}
    p(\vz\mid t)=p(\vi\mid t)\,p(\xquery\mid t)
    \prod_{n=1}^{N}p(x_n,y_n\mid t).
\end{equation}
In format~5, the paraphrases $\vi_1,\ldots,\vi_N,\iquery$ are independently
sampled from the task-conditional supported instruction law. Conditional on the
latent task, they are also independent of the demonstration pairs and query
input, so
\begin{equation}
    \label{eq:likelihood_factorization_varying}
    p(\vz\mid t)=p(\iquery\mid t)\,p(\xquery\mid t)
    \prod_{n=1}^{N}p(\vi_n\mid t)p(x_n,y_n\mid t).
\end{equation}
To treat all three formats uniformly, define
\begin{equation}
    \label{eq:def_Wn}
    W_n:=
    \begin{cases}
        (x_n,y_n), & \text{formats~1 and~3},\\
        (\vi_n,x_n,y_n), & \text{format~5},
    \end{cases}
    \qquad
    \vi_\circ:=
    \begin{cases}
        \vi, & \text{formats~1 and~3},\\
        \iquery, & \text{format~5}.
    \end{cases}
\end{equation}
For format~5 we therefore have
$p(W_n\mid t)=p(\vi_n\mid t)p(x_n,y_n\mid t)$. In all three cases,
\begin{equation}
    \label{eq:likelihood_factorization}
    p(\vz\mid t)=p(\vi_\circ\mid t)\,p(\xquery\mid t)
    \prod_{n=1}^{N}p(W_n\mid t).
\end{equation}
For notational simplicity, below we write $\vi$ for the single
non-demonstration instruction variable $\vi_\circ$.

Substituting \eqref{eq:likelihood_factorization} into
\eqref{eq:posterior_bayes_full}, we obtain
\begin{equation}
    \label{eq:posterior_expanded}
    p(t\mid \vz)=
    \frac{\pi(t)p(\vi\mid t)p(\xquery\mid t)
    \prod_{n=1}^{N}p(W_n\mid t)}
    {\sum_{s\in\gT}\pi(s)p(\vi\mid s)p(\xquery\mid s)
    \prod_{n=1}^{N}p(W_n\mid s)}.
\end{equation}

To make the dependence on the number of demonstrations explicit, we compare each
task $t$ against the true task $t^{*}$. For any $t\neq t^{*}$,
\begin{equation}
    \label{eq:posterior_odds_raw}
    \frac{p(t\mid \vz)}{p(t^{*}\mid \vz)}
    =\frac{\pi(t)}{\pi(t^{*})}
    \frac{p(\vi\mid t)}{p(\vi\mid t^{*})}
    \frac{p(\xquery\mid t)}{p(\xquery\mid t^{*})}
    \frac{\prod_{n=1}^{N}p(W_n\mid t)}
         {\prod_{n=1}^{N}p(W_n\mid t^{*})}.
\end{equation}

We define the task-evidence likelihood ratio
\begin{equation}
    \label{eq:def_Bt}
    B_t:=\frac{\prod_{n=1}^{N}p(W_n\mid t)}
               {\prod_{n=1}^{N}p(W_n\mid t^{*})},
\end{equation}
and the non-demonstration ratio
\begin{equation}
    \label{eq:def_At}
    A_t(\vi):=\frac{\pi(t)}{\pi(t^{*})}
    \frac{p(\vi\mid t)}{p(\vi\mid t^{*})}
    \frac{p(\xquery\mid t)}{p(\xquery\mid t^{*})}.
\end{equation}
Thus
\begin{equation}
    \label{eq:posterior_odds_representation}
    \frac{p(t\mid \vz)}{p(t^{*}\mid \vz)}=A_t(\vi)B_t,
    \qquad t\neq t^{*},
\end{equation}
and
\begin{equation}
    \label{eq:posterior_representation}
    p(t^{*}\mid \vz)=\frac{1}{1+\sum_{s\neq t^*}A_s(\vi)B_s},
    \qquad
    p(t\mid \vz)=\frac{A_t(\vi)B_t}
    {1+\sum_{s\neq t^*}A_s(\vi)B_s},\quad t\neq t^{*}.
\end{equation}
\noindent
Then we present two lemmas.
\begin{lemma}[Exponential Decay of Task-Evidence Likelihood Ratio]
    \label{lmm:Bt_exp_decay} Defining
    \[
        \alpha := \tfrac12\min_{t\neq t^*}\kappa_{t}>0,
    \]
    there exists $N_{0}<\infty$ such that almost surely, for all $N\ge N_{0}$
    and all $t\neq t^{*}$,
    \[
        B_{t}\le e^{-\alpha N}.
    \]
\end{lemma}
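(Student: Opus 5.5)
The plan is to apply the strong law of large numbers to the log-likelihood ratio. Here $\kappa_t$ is the per-demonstration KL gap
\[
\kappa_t:=\kl\!\left(P_{W\mid t^*}\,\Vert\,P_{W\mid t}\right)>0 ,
\]
where $W$ is the demonstration unit of \eqref{eq:def_Wn}. In formats~1 and~3 this is the demonstration KL of Assumption~\ref{assp:identifiability}. In format~5, $W=(\vi_n,x_n,y_n)$ with $p(W\mid t)=p(\vi_n\mid t)p(x_n,y_n\mid t)$, so $\kappa_t$ is the joint KL in \eqref{eq:main_joint_identifiability}. Its positivity is supplied by the same assumption, with direction $t_1=t^*$, $t_2=t$.

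First, write
\[
\log B_t=-\sum_{n=1}^N \ell_t(W_n),\qquad \ell_t(w):=\log\frac{p(w\mid t^*)}{p(w\mid t)} .
\]
Under the generating mechanism, $W_1,W_2,\dots$ are i.i.d.\ from $P_{W\mid t^*}$ along the single infinite demonstration sequence. The integrability clause of Assumption~\ref{assp:identifiability} gives $\mathbb E_{t^*}|\ell_t(W)|<\infty$, and $\mathbb E_{t^*}\ell_t(W)=\kappa_t$. The strong law of large numbers then gives $\frac1N\sum_{n\le N}\ell_t(W_n)\to\kappa_t$ almost surely.

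One caveat arises when $p(w\mid t)=0$ for some $w$ with $p(w\mid t^*)>0$. Then $\ell_t=+\infty$ on a set of positive probability. In that case the integrability assumption excludes it as stated; alternatively, one can allow $\ell_t\in(-\infty,+\infty]$ and note that $B_t=0$ once such a $W_n$ appears, which only helps the bound. I would remark on this explicitly. Since $p(W\mid t^*)>0$ almost surely, $\ell_t>-\infty$, so there is no issue in the other direction.

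Next, fix $t\neq t^*$. On the almost-sure event, there is a finite random $N_t$ such that for $N\ge N_t$,
\[
\frac1N\sum_{n\le N}\ell_t(W_n)\ge \tfrac12\kappa_t\ge\alpha ,
\]
which gives $B_t\le e^{-\alpha N}$. Because $\gT$ is finite, the intersection of the $|\gT|-1$ almost-sure events is still almost sure. Setting $N_0:=\max_{t\ne t^*}N_t$ gives the bound simultaneously for all competitors. Here $N_0$ is finite almost surely (random), which is exactly the meaning of ``almost surely, for all $N\ge N_0$''.

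The main obstacle is only bookkeeping rather than analysis. One must verify that in format~5 the i.i.d.\ structure and integrability hold for the joint triple. This follows from the factorization \eqref{eq:likelihood_factorization_varying} and the format-5 integrability clause. One must also be careful that the KL direction matches the sampling law $P_{\cdot\mid t^*}$. No rate (Chernoff) bound is needed, since the statement is asymptotic almost-sure with the factor $\tfrac12$ giving slack.
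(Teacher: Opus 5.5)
Your proposal is correct and follows essentially the same route as the paper: the strong law of large numbers applied to the per-unit log-likelihood ratio of $W_n$ (demonstration pairs in formats~1 and~3, instruction--demonstration triples in format~5) gives a random threshold $N_t$ beyond which $B_t\le e^{-(\kappa_t/2)N}$, and finiteness of $\gT$ lets you take $N_0=\max_{t\ne t^*}N_t$. Your remarks that $N_0$ is random rather than deterministic, and on the case $\ell_t=+\infty$, are accurate refinements that the paper leaves implicit.
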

\begin{proof}
    Fix any $t\neq t^{*}$. Assume the corresponding log-likelihood ratio is integrable. By the strong law of large numbers and Assumption~\ref{assp:identifiability},
    \begin{equation}
        \label{eq:SLLN_KL}\frac{1}{N}\sum_{n=1}^{N}\log \frac{p(W_n\mid t)}{p(W_n\mid t^{*})}
        ~\xrightarrow{\mathrm{a.s.}}~
        -\kl\!\Big(P_{W\mid t^*}\,\Vert\,P_{W\mid t}\Big)=:-\kappa_t,
    \end{equation}
    where $\kappa_t>0$. For formats~1 and~3 this is the demonstration KL divergence; for format~5 it is the joint KL divergence of the independently sampled instruction--demonstration triples $W_n=(\vi_n,x_n,y_n)$. Hence, almost surely, there exists $N_{t}<\infty$ such
    that for all $N\ge N_{t}$,
    \begin{equation}
        \label{eq:Bt_exp_decay_single}B_{t}\le \exp\!\big(-( \kappa_{t}/2)\, N\big
        ).
    \end{equation}
    Since $\gT$ is finite, letting
    \begin{equation}
        \label{eq:alpha_def}\alpha := \frac{1}{2}\min_{t\neq t^*}\kappa_{t}\;>\;0
        ,
    \end{equation}
    we have (after enlarging $N_{0}:=\max_{t\neq t^*}N_{t}$) that almost surely,
    for all $N\ge N_{0}$ and all $t\neq t^{*}$,
    \begin{equation}
        \label{eq:Bt_exp_decay_uniform}B_{t}\le e^{-\alpha N}.
    \end{equation}
\end{proof}

\begin{lemma}[Uniform Boundedness of Non-Demonstration Ratio]
    \label{lmm:At_uniform} There exists a finite constant $A_{\max}$ such that
    \[
        \sup_{\vi\in\dicl}\max_{t\neq t^*}A_{t}(\vi) \le A_{\max}< \infty.
    \]
\end{lemma}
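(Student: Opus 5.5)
The plan is to bound each of the three factors in the definition \eqref{eq:def_At} of $A_t(\vi)$ separately, uniformly over $\vi\in\dicl$ and $t\neq t^*$. Then $A_{\max}$ is the product of the three bounds. The argument is purely combinatorial: it rests on the finiteness of $\gT$, $\mathcal S$, and the instruction space $\gV^{\le L_i}$ fixed in Section~\ref{subsec:instruction_setup}, together with strict positivity of the target-task denominators on the relevant supports. For format~5 the relevant instruction is $\vi_\circ=\iquery$, and for formats~1 and~3 it is the shared $\vi$. In every case it is a single supported instruction, so one argument covers all three formats.

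\emph{Prior factor.} Every $t\in\gT$ has $\pi(t)>0$ and $\gT$ is finite. Hence $\pi(t)/\pi(t^*)\le 1/\pi(t^*)<\infty$.

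\emph{Query factor.} The query is generated under the target task, so almost surely $\xquery$ lies in $\{x\in\mathcal S: p(x\mid t^*)>0\}$. Because $\mathcal S$ is finite, this set is finite and
\[
c_X:=\min_{x:\,p(x\mid t^*)>0}p(x\mid t^*)>0 .
\]
It follows that $p(\xquery\mid t)/p(\xquery\mid t^*)\le 1/c_X$.

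\emph{Instruction factor.} The supported domain $\dicl$ consists of instructions with $p(\vi\mid t^*)>0$, and it is contained in the finite set $\gV^{\le L_i}$. Therefore
\[
c_I:=\min_{\vi\in\dicl}p(\vi\mid t^*)>0,
\]
and $p(\vi\mid t)/p(\vi\mid t^*)\le 1/c_I$.

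Combining the three bounds gives
\[
A_{\max}:=\frac{1}{\pi(t^*)\,c_X\,c_I}<\infty ,
\]
which dominates $A_t(\vi)$ for all $t\neq t^*$ and all $\vi\in\dicl$. Taking the maximum over the finitely many $t$ is harmless.

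The main point to check is that the denominators are bounded away from zero uniformly, rather than merely positive pointwise. That is exactly where finiteness of $\gV^{\le L_i}$ and $\mathcal S$ is used. If $\dicl$ were allowed to be infinite (for instance, unbounded instruction length), the infimum of $p(\vi\mid t^*)$ could be zero. In that case I would instead impose a bounded instruction-likelihood-ratio condition on $\dicl$ as an explicit hypothesis. That hypothesis would be the natural supported analogue of the finite continuation constant used later for unsupported instructions.
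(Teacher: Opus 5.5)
Your proof is correct and follows the paper's argument. Both rest on $\dicl\subseteq\gV^{\le L_i}$ being finite with $p(\vi\mid t^*)>0$ on it, which bounds the instruction ratio uniformly, and both handle the prior and query factors through finiteness of $\gT$. The only difference is cosmetic: you bound each factor crudely using $\pi(t),p(\cdot\mid t)\le 1$, and you make the query factor uniform over all $x$ with $p(x\mid t^*)>0$ via finiteness of $\mathcal S$. The paper instead keeps $\xquery$ fixed, takes $A_{\max}$ as the maximum of the exact product, and implicitly assumes $p(\xquery\mid t^*)>0$, which your almost-sure remark makes explicit.
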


\begin{proof}
    Recall that
    \[
        A_t(\vi)=\frac{\pi(t)}{\pi(t^{*})}
        \frac{p(\vi\mid t)}{p(\vi\mid t^{*})}
        \frac{p(\xquery\mid t)}{p(\xquery\mid t^{*})}.
    \]
    The vocabulary is finite and instruction length is bounded by $L_i$, so
    the supported instruction set $\dicl\subseteq\gV^{\le L_i}$ is finite. Moreover,
    $p(\vi\mid t^*)>0$ for every $\vi\in\dicl$ by the definition of support. Therefore each
    ratio $p(\vi\mid t)/p(\vi\mid t^*)$ is finite, and its maximum over
    $\dicl$ is finite. Since $\gT$ is finite and $\xquery$ is fixed,
    \[
        A_{\max}:=
        \max_{t\neq t^{*}}\frac{\pi(t)}{\pi(t^{*})}
        \frac{p(\xquery\mid t)}{p(\xquery\mid t^{*})}
        \max_{\vi\in\dicl}\frac{p(\vi\mid t)}{p(\vi\mid t^{*})}
        <\infty.
    \]
    This proves the claim without a differentiability or gradient assumption.
\end{proof}

\subsection{Proof of Theorem~\ref{thm:decay_1_5}}
\label{app:supported_instruction_proof}
We first recall the statement of the theorem for convenience.
\begin{theorem}[Formal model bound with supported instructions]
    \label{thm:decay_1_5_app} Consider prompt formats whose instructions lie in the support of the pretraining distribution, including formats~1, 3, and~5. Under the generation mechanism in Section~\ref{subsec:instruction_setup} and
    Assumption~\ref{assp:identifiability}, there exist constants $\alpha
    ,\beta>0$ such that, almost surely, for all sufficiently large numbers
    of demonstrations $N$,
    \begin{align}
        \bigl| p(\yquery \mid \vz) - p(\yquery \mid \xquery, t^{*}) \bigr| \le \beta \exp(-\alpha N).
    \end{align}
    Moreover, with $r_0$ defined in
    \eqref{eq:app_supported_model_gap} and $\varepsilon_M:=r_0$,
    \begin{equation}
        \label{eq:supported_model_bound_app}
        \bigl|M(\yquery\mid\vz)-p(\yquery\mid\xquery,t^*)\bigr|
        \le \beta e^{-\alpha N}+\varepsilon_M.
    \end{equation}
    Consequently, for any two supported instruction realizations $\boldsymbol\vi,\boldsymbol\vi'$ to which the same format-specific assumptions apply (a single shared realization in formats~1 and~3, or two independently sampled paraphrase sequences in format~5), we have
    \begin{align}
        \bigl| p(\yquery \mid \vz(\boldsymbol\vi)) - p(\yquery \mid \vz(\boldsymbol\vi')) \bigr| \le 2\beta \exp(-\alpha N).
    \end{align}
\end{theorem}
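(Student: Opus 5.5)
The plan is to work from the posterior representation \eqref{eq:posterior_representation} and write the posterior predictive as a mixture,
\[
p(\yquery\mid\vz)=\sum_{t\in\gT}p(\yquery\mid\xquery,t)\,p(t\mid\vz).
\]
Since $\sum_t p(t\mid\vz)=1$, this gives
\[
p(\yquery\mid\vz)-p(\yquery\mid\xquery,t^*)
=\sum_{t\neq t^*}p(t\mid\vz)\bigl(p(\yquery\mid\xquery,t)-p(\yquery\mid\xquery,t^*)\bigr).
\]
Each bracket has absolute value at most $1$, so the error is bounded by the competitor mass
\[
\sum_{t\neq t^*}p(t\mid\vz)=1-p(t^*\mid\vz)\le\sum_{t\neq t^*}A_t(\vi)B_t.
\]
The last inequality follows from \eqref{eq:posterior_representation}, because the denominator is at least $1$.

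Next I would bound the two factors separately. Lemma~\ref{lmm:At_uniform} gives $A_t(\vi)\le A_{\max}$, uniformly over the finite supported instruction set $\dicl$. For format~5, $\vi$ here means $\iquery$, and $\dicl$ must contain its support; this holds because the paraphrases are drawn from $P_{\instruction\mid t^*}$ and the instruction set is finite. Lemma~\ref{lmm:Bt_exp_decay} gives, almost surely for $N\ge N_0$, the bound $B_t\le e^{-\alpha N}$ with $\alpha=\tfrac12\min_{t\ne t^*}\kappa_t$. In formats~1 and~3, $\kappa_t$ is the demonstration KL divergence. In format~5 it is the joint KL divergence of $W_n$, which is positive by \eqref{eq:main_joint_identifiability} and integrable by Assumption~\ref{assp:identifiability}. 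Combining the two bounds gives the reference claim with
\[
\beta:=(|\gT|-1)A_{\max}.
\]
This $\beta$ is independent of $N$, and we may take it positive by enlarging it if needed.

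For the model bound, the prompt $\vz$ is supported, so $\vz\in\dpre$. The triangle inequality with the definition \eqref{eq:app_supported_model_gap}, which is Lemma~\ref{lmm:model_reference_transport} applied with the constant path, gives
\[
|M-p(\yquery\mid\vz)|\le r_0 .
\]
Adding this to the reference bound yields \eqref{eq:supported_model_bound_app}. For the final consequence, I would apply the reference bound to $\vz(\boldsymbol\vi)$ and to $\vz(\boldsymbol\vi')$ and use the triangle inequality through $p(\yquery\mid\xquery,t^*)$.

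The main obstacle is the almost-sure quantifier in the two-instruction comparison. It requires the same constants and a common $N_0$ for both realizations. The constant $A_{\max}$ is uniform by Lemma~\ref{lmm:At_uniform}. In formats~1 and~3, $B_t$ does not depend on $\vi$, so $N_0$ is shared automatically. In format~5, $B_t$ depends on the paraphrase sequence, so I would intersect the two probability-one events. That yields $N_0:=\max(N_0(\boldsymbol\vi),N_0(\boldsymbol\vi'))$, and the factor $2\beta$ follows.
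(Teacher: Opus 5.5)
Your proposal is correct and follows the paper's proof essentially step for step. The shared steps are the mixture decomposition of the posterior predictive, the bound by the competitor mass $\sum_{t\neq t^*}A_t(\vi)B_t$, Lemmas~\ref{lmm:Bt_exp_decay} and~\ref{lmm:At_uniform} with $\beta=(|\gT|-1)A_{\max}$, the constant-path case of Lemma~\ref{lmm:model_reference_transport} for the $r_0$ term, and the triangle inequality for the two-realization comparison. Your extra remarks are details the paper leaves implicit, and both are sound: intersecting the two probability-one events in format~5 to obtain a common threshold, and enlarging $\beta$ so that it is strictly positive.
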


We now proceed with the proof.
\begin{proof}
    By definition of the posterior predictive,
    \begin{equation}
        \label{eq:pp_def}p(\yquery\mid \vz) ~=~ \sum_{t\in\gT}p(\yquery\mid \xquery
        ,t)\, p(t\mid \vz).
    \end{equation}
    Subtracting $p(\yquery\mid \xquery,t^{*})$ and using
    $\sum_{t}p(t\mid \vz)=1$ yields
    \begin{align}
        \label{eq:pp_diff_expand}p(\yquery\mid \vz) - p(\yquery\mid \xquery,t^{*}) & = \sum_{t\neq t^*}\big(p(\yquery\mid \xquery,t)-p(\yquery\mid \xquery,t^{*})\big)\, p(t\mid \vz).
    \end{align}
    Since $0\le p(\yquery\mid \xquery,t)\le 1$, we have $|p(\yquery\mid \xquery,t
    )-p(\yquery\mid \xquery,t^{*})|\le 1$ for all $t$, hence
    \begin{equation}
        \label{eq:pp_diff_bound_by_posterior_mass}\big|p(\yquery\mid \vz) - p(\yquery
        \mid \xquery,t^{*})\big| ~\le~ \sum_{t\neq t^*}p(t\mid \vz).
    \end{equation}
    It remains to upper bound the posterior mass on wrong tasks. From Equation~\ref{eq:posterior_representation},
    for any $t\neq t^{*}$,
    \begin{equation}
        p(t\mid \vz) ~=~ \frac{A_{t}(\vi)B_{t}}{1+\sum_{s\neq t^*}A_{s}(\vi)B_{s}}
        ~\le~ A_{t}(\vi)B_{t}.
    \end{equation}
    Summing over $t\neq t^{*}$ and applying Lemma~\ref{lmm:Bt_exp_decay} and ~\ref{lmm:At_uniform},
    we obtain that almost surely for all $N\ge N_{0}$,
    \begin{equation}
        \label{eq:wrong_mass_exp_decay}\sum_{t\neq t^*}p(t\mid \vz) ~\le~ \sum_{t\neq
        t^*}A_{t}(\vi)B_{t}~\le~ (|\gT|-1)\,A_{\max}\, e^{-\alpha N}.
    \end{equation}
    Combining Equation~\ref{eq:pp_diff_bound_by_posterior_mass},~\ref{eq:wrong_mass_exp_decay}
    proves the first statement with $\beta := (|\gT|-1)A_{\max}$:
    \[
        \big|p(\yquery\mid \vz)-p(\yquery\mid \xquery,t^{*})\big| \le \beta e^{-\alpha
        N}.
    \]

    For the equivalent statement comparing two supported instruction realizations $\boldsymbol\vi,\boldsymbol\vi'$ covered by the statement, we use the triangle inequality:
    \begin{align}
         & \big|p(\yquery\mid \vz(\boldsymbol\vi)) - p(\yquery\mid \vz(\boldsymbol\vi'))\big|                                                                             \\
         & \le \big|p(\yquery\mid \vz(\boldsymbol\vi)) - p(\yquery\mid \xquery,t^{*})\big| + \big|p(\yquery\mid \vz(\boldsymbol\vi')) - p(\yquery\mid \xquery,t^{*})\big| \\
         & \le 2\beta e^{-\alpha N},
    \end{align}
    Finally, a supported prompt is the constant-path case of
    Lemma~\ref{lmm:model_reference_transport}, so
    $|M(\yquery\mid\vz)-p(\yquery\mid\vz)|\le r_0$. Combining this inequality
    with the first bound by the triangle inequality proves
    \eqref{eq:supported_model_bound_app} and completes the proof.
\end{proof}

\subsection{Extension to an Unsupported Shared Instruction}
\label{app:repeated_unsupported_chain}

We consider prompt format~4, in which the same instruction is prepended to all
demonstrations and to the query. Let
\begin{equation}
    \label{eq:chain_repeated_prompt}
    \vz_N(\vi):=\bigl((\vi,x_1,y_1),\ldots,(\vi,x_N,y_N),(\vi,\xquery)\bigr).
\end{equation}

\subsubsection{The shared-instruction likelihood}

Although the instruction appears $N+1$ times in the surface form of
\eqref{eq:chain_repeated_prompt}, these occurrences are copies of a single
template chosen once for the whole prompt, and not $N+1$ independent draws. We
therefore model the instruction through a latent template variable.

\begin{assumption}[Shared-instruction factorization]
\label{ass:chain_factorization}
There is a latent template $\tau$, drawn once per prompt from a task-conditional
law $P_{\tau\mid t}$. Conditional on $\tau=\vi$, every displayed instruction
slot is a deterministic copy of the same realization $\vi$, while the
demonstrations and query are conditionally independent of $\tau$ given the task.
Thus, for every $t\in\gT$
\begin{equation}
    \label{eq:chain_repeated_factorization}
    p(\vz_N(\vi)\mid t)
    =
    p(\vi\mid t)\,\prod_{n=1}^{N}p(x_n,y_n\mid t)\;p(\xquery\mid t).
\end{equation}
\end{assumption}

\noindent
The instruction likelihood enters \eqref{eq:chain_repeated_factorization}
\emph{once}. This is the substantive modelling choice of this subsection: seeing
one template repeated $N+1$ times is one observation about the task, not $N+1$
of them. Consequently, for every $t\neq t^{*}$,
\begin{equation}
    \label{eq:chain_repeated_odds}
    \frac{p(t\mid\vz_N(\vi))}{p(t^{*}\mid\vz_N(\vi))}
    =B_t\exp\bigl(r_t(\vi)+\lambda_t\bigr),
\end{equation}
where
\begin{equation}
    \label{eq:chain_g_lambda}
    r_t(\vi):=\log\frac{p(\vi\mid t)}{p(\vi\mid t^{*})},
    \qquad
    \lambda_t:=\log\!\left(\frac{\pi(t)}{\pi(t^{*})}\frac{p(\xquery\mid t)}{p(\xquery\mid t^{*})}\right),
\end{equation}
and $B_t$ is the demonstration likelihood ratio of \eqref{eq:def_Bt}.

\subsubsection{Minimal endpoint condition and the role of analyticity}

The exponential mechanism does not itself require analytic continuation.  Let
a chosen off-support reference extension assign each competitor a finite
endpoint log-odds $c_t:=\bar r_t(\widetilde\vi)\in\mathbb R$, independent of
$N$, and define its endpoint odds by
\begin{equation}
    \label{eq:fixed_endpoint_odds}
    \bar R_{t,N}:=B_t\exp(c_t+\lambda_t).
\end{equation}
Then Lemma~\ref{lmm:Bt_exp_decay} immediately gives, almost surely for all
sufficiently large $N$,
\begin{equation}
    \label{eq:fixed_endpoint_robustness}
    \sum_{t\ne t^*}\bar R_{t,N}
    \le (|\gT|-1)\max_{t\ne t^*}e^{c_t+\lambda_t}e^{-\alpha N}.
\end{equation}
Since the predictive deviation is at most the total competing posterior odds,
the reference predictor converges at the same exponential rate.  Thus the
minimal substantive statement for a shared instruction is robustness to an
arbitrary fixed finite off-support odds perturbation: demonstration log-evidence
grows linearly in $N$, whereas the instruction contributes only an
$N$-independent constant.

The fixed-endpoint argument above establishes exponential convergence
conditional on arbitrary finite constants $c_t$.  The analytic-continuation
theorem below does not improve the exponent $\alpha$, which is determined by
the demonstration likelihood ratios.  Its additional content is to replace
the unstructured endpoint constants by a Transformer-induced continuation from
a supported anchor and to derive the explicit path-dependent bound
$c_t=\phi_t(1)\le\gamma_0+\Omega$.  Consequently, the unknown factor
$\max_t e^{c_t}$ in \eqref{eq:fixed_endpoint_robustness} is controlled by
$e^{\gamma_0+\Omega}$.  The Cauchy--Taylor chain further yields local
propagation bounds, a comparison bound between two unsupported endpoints, and
additive continuation costs used in the mixed and varying-instruction
analyses.  Thus, the proof burden of the analytic theorem lies in constructing
and quantitatively controlling the off-support perturbation, rather than in
establishing a different asymptotic domination mechanism.  Analyticity does
not make the endpoint semantics statistically identified by zero-probability
pretraining data.

Let $\vi_0$ be a supported anchor and $\widetilde\vi$ an unsupported instruction.
Write
\begin{equation}
    \label{eq:chain_instruction_path}
    E_s:=(1-s)E(\vi_0)+sE(\widetilde\vi),\quad s\in[0,1],
    \qquad
    \Delta:=\|E(\widetilde\vi)-E(\vi_0)\|_F ,
\end{equation}
in the fixed-length instruction-embedding space of
Section~\ref{subsec:instruction_setup}, so $s=0$ is supported and $s=1$ is the
unsupported endpoint. Since the Bayes
conditional is not determined at a zero-probability prompt, the object studied
below is the posterior predictive induced by a fixed chain of local Taylor
expansions of the supported log-odds along \eqref{eq:chain_instruction_path}.

\begin{assumption}[Supported anchor]
\label{ass:chain_anchor}
The anchor is a supported rendering of the target task with $p(\vi_0\mid t)>0$ for every $t\in\gT$. Among admissible supported renderings one may choose the most typical anchor, for example $\vi_0\in\arg\max_{\vi\in\dicl}p(\vi\mid t^{*})$ (such as ``$+$'' among supported addition renderings). The proof below uses only support and finiteness. We write
\begin{equation}
    \label{eq:chain_gamma0}
    \gamma_0:=\max_{t\neq t^{*}}r_t(\vi_0)<\infty ,
\end{equation}
which satisfies $\gamma_0\le0$ whenever the selected anchor is task-favoring, i.e.\
$p(\vi_0\mid t)\le p(\vi_0\mid t^{*})$ for every $t\neq t^{*}$.
\end{assumption}

\noindent
Under the shared reading $\gamma_0$ enters only additively in a constant, so no
sign condition on it is required.

\subsubsection{The chain hypothesis, and the constants it induces}
 
The hypothesis below specifies a finite chain of overlapping complex disks
along which the log-odds continue holomorphically. The constants used in the
proof are defined from this chain in Lemma~\ref{lmm:chain_cauchy}.

For clarity, the new symbol $\phi_{t,\ell}$ denotes the local analytic
continuation of the supported pathwise log-odds
$s\mapsto r_t(E_s)$ on the $\ell$-th disk of
the chain. The index $t$
specifies the competing task and $\ell$ specifies the local neighborhood.
Agreement on overlapping disks makes these local functions pieces of one
continued function along the whole path; we denote that resulting function by
$\phi_t$.
 
\begin{assumption}[Finite chain of analytic continuations]
\label{ass:chain_local}
There exist an integer $L<\infty$ and centers
\begin{equation}
    \label{eq:chain_centers}
    0=s_0<s_1<\cdots<s_L=1
\end{equation}
with associated open complex disks $D_\ell:=\{\zeta\in\mathbb{C}:|\zeta-s_\ell|<r_\ell\}$,
$\ell=0,\ldots,L-1$, all independent of $N$, such that
\begin{equation}
    \label{eq:chain_step_inside}
    d_\ell:=s_{\ell+1}-s_\ell<r_\ell,
    \qquad \ell=0,\ldots,L-1,
\end{equation}
and such that for every $t\neq t^{*}$ there are holomorphic functions
$\phi_{t,\ell}:D_\ell\to\mathbb{C}$ with
\begin{enumerate}
    \item[(i)] \textup{(compatibility)} $\phi_{t,\ell}=\phi_{t,m}$ on
    $D_\ell\cap D_m$ whenever this intersection is nonempty;
    \item[(ii)] \textup{(anchoring)}
    $\phi_{t,0}(s)=r_t(E_s)$ on a nontrivial
    instruction-embedding interval adjacent to $s=0$ on which the fixed extension is
    defined.
\end{enumerate}
We write $\phi_t$ for the function these germs define on $[0,1]$, and
$\phi_t(1)$ for the value at $s_L=1$, which lies in $D_{L-1}$ by
\eqref{eq:chain_step_inside}.
\end{assumption}
 
\noindent
Only $D_0$ concerns supported instructions; the remaining $L-1$ disks are the
part of the hypothesis that describes continuation outside the support.
 
\begin{lemma}[Constants induced by the chain]
\label{lmm:chain_cauchy}
Fix working radii $\rho_\ell\in(d_\ell,r_\ell)$ and set
\begin{equation}
    \label{eq:chain_M_def}
    M_\ell
    :=
    \max_{t\neq t^*}\;\max_{|\zeta-s_\ell|=\rho_\ell}
    \bigl|\phi_t(\zeta)-\phi_t(s_\ell)\bigr|,
    \qquad
    q_\ell:=\frac{d_\ell}{\rho_\ell}\in(0,1).
\end{equation}
Then, under Assumption~\ref{ass:chain_local}:
\begin{enumerate}
    \item[(a)] every $M_\ell$ is finite, and $M_\ell,\rho_\ell,q_\ell$ are
    independent of $N$;
    \item[(b)] for every $t\neq t^{*}$, every $\ell$ and every $k\ge1$,
    \begin{equation}
        \label{eq:chain_germ_bound}
        \bigl|\phi_t^{(k)}(s_\ell)\bigr|
        \;\le\;
        \frac{M_\ell\,k!}{\rho_\ell^{\,k}} ;
    \end{equation}
    \item[(c)] every $\phi_t$ has real Taylor coefficients at every $s_\ell$, and
    $\phi_t(s)\in\mathbb{R}$ for all $s\in[0,1]$.
\end{enumerate}
\end{lemma}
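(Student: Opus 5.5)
The plan is to work disk by disk in the chain of Assumption~\ref{ass:chain_local}, using three ingredients: the compatibility of the local germs, the closed-circle version of Cauchy's estimates, and the real anchoring on $D_0$. The continued function $\phi_t$ is well defined on $\bigcup_\ell D_\ell$ because the $\phi_{t,\ell}$ agree on overlaps, by property~(i). I will therefore write $\phi_t$ on each closed disk $\overline{B}(s_\ell,\rho_\ell)$. These closed disks lie inside $D_\ell$ because $\rho_\ell<r_\ell$.

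For part~(a), the circle $|\zeta-s_\ell|=\rho_\ell$ is compact and contained in $D_\ell$. The function $\phi_{t,\ell}$ is continuous there, so $|\phi_t(\zeta)-\phi_t(s_\ell)|$ attains a finite maximum on it. Taking the maximum over the finitely many $t\ne t^*$ (recall $\gT$ is finite) gives $M_\ell<\infty$. The inequalities $d_\ell<\rho_\ell$ give $q_\ell\in(0,1)$. Independence of $N$ is bookkeeping: the centers, radii, disks and functions $\phi_{t,\ell}$ in Assumption~\ref{ass:chain_local} are all $N$-independent, and $\rho_\ell$ is chosen from them. So nothing in $M_\ell,\rho_\ell,q_\ell$ refers to the demonstrations.

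For part~(b), apply Cauchy's integral formula for derivatives to $g:=\phi_t-\phi_t(s_\ell)$ on the circle of radius $\rho_\ell$. For $k\ge1$ we have $g^{(k)}=\phi_t^{(k)}$, and
\[
\phi_t^{(k)}(s_\ell)=\frac{k!}{2\pi i}\oint_{|\zeta-s_\ell|=\rho_\ell}\frac{g(\zeta)}{(\zeta-s_\ell)^{k+1}}\,d\zeta .
\]
Bounding $|g|\le M_\ell$ on the circle then yields $|\phi_t^{(k)}(s_\ell)|\le M_\ell k!/\rho_\ell^k$. Subtracting the constant $\phi_t(s_\ell)$ before estimating is what makes $M_\ell$ an oscillation rather than an absolute bound. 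It is harmless because only derivatives of order at least one are estimated.

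Part~(c) is the only step needing care, and I would prove it by induction along the chain. \emph{Base case.} By anchoring~(ii), $\phi_{t,0}(s)=r_t(E_s)$ on a real interval $I$ adjacent to $0$, and this function is real-valued there. Consider $h(\zeta):=\overline{\phi_{t,0}(\bar\zeta)}$. This is holomorphic on $D_0$, since the disk is symmetric about the real axis, and it agrees with $\phi_{t,0}$ on $I$. The identity theorem, applicable because $I$ has accumulation points and $D_0$ is connected, gives $h=\phi_{t,0}$ on $D_0$. Hence $\phi_{t,0}$ is real on $D_0\cap\mathbb R$, and all its Taylor coefficients at $s_0$ are real.

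\emph{Inductive step.} Suppose $\phi_t$ is real on $D_\ell\cap\mathbb R$. Since $d_\ell<r_\ell$, the point $s_{\ell+1}$ lies in $D_\ell$, so $D_\ell\cap D_{\ell+1}$ contains a nontrivial real interval around $s_{\ell+1}$. On that interval $\phi_{t,\ell+1}=\phi_{t,\ell}$ is real, by compatibility. The same reflection argument then makes $\phi_{t,\ell+1}$ real on $D_{\ell+1}\cap\mathbb R$, with real Taylor coefficients at $s_{\ell+1}$.

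Finally, $[0,1]\subseteq\bigcup_\ell [s_\ell,s_{\ell+1}]$ and each segment $[s_\ell,s_{\ell+1}]$ lies in $D_\ell$. Therefore $\phi_t(s)\in\mathbb R$ for all $s\in[0,1]$, including the endpoint value $\phi_t(1)$. The main obstacle is making sure the overlaps contain real intervals, so that the identity theorem applies. This is guaranteed precisely by the step-inside condition \eqref{eq:chain_step_inside}.
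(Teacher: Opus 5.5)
Your proposal is correct and follows the paper's proof essentially step for step. Part (a) uses compactness of the circle inside $D_\ell$ together with finiteness of the task set, part (b) applies the Cauchy estimate to $\phi_t-\phi_t(s_\ell)$, and part (c) runs an induction along the chain through compatibility on overlaps. The only difference is cosmetic and lies in (c): you propagate realness with the reflection $\overline{\phi_{t,\ell}(\bar\zeta)}$ and the identity theorem, whereas the paper observes that term-by-term differentiation of a real power series at the real point $s_{\ell+1}$ gives real Taylor coefficients there.
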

 
\begin{proof}
(a) Since $\rho_\ell<r_\ell$, the circle $|\zeta-s_\ell|=\rho_\ell$ is a compact
subset of $D_\ell$, on which the holomorphic---hence continuous---function
$\phi_t$ is bounded. The maximum over the finite set $\gT$ of finitely many
finite numbers is finite. None of $D_\ell$, $\rho_\ell$, $\phi_t$ depends on $N$,
by Assumption~\ref{ass:chain_local} and because $r_t$ in
\eqref{eq:chain_g_lambda} does not.
 
(b) Apply the Cauchy estimates on the circle of radius $\rho_\ell$ to the
holomorphic function $\psi_t:=\phi_t-\phi_t(s_\ell)$, which satisfies
$\max_{|\zeta-s_\ell|=\rho_\ell}|\psi_t|=M_\ell$ by \eqref{eq:chain_M_def} and
$\psi_t^{(k)}=\phi_t^{(k)}$ for $k\ge1$.
 
(c) By (ii) the germ $\phi_{t,0}$ agrees with the real-valued function
$s\mapsto r_t(E_s)$ on a nontrivial real
interval, so all its derivatives at
$s_0$ are real. If $\phi_t$ has real coefficients at $s_\ell$, then by (i) its
coefficients at the real point $s_{\ell+1}$ are obtained by term-by-term
differentiation of a real power series, hence are real. Induction on $\ell$
gives the claim, and a power series with real coefficients is real on the real
axis inside its disk of convergence.
\end{proof}
 
\noindent
Inequality \eqref{eq:chain_germ_bound} is the quantitative form of analyticity at
$s_\ell$: by Cauchy--Hadamard it says the Taylor series there has radius of
convergence at least $\rho_\ell$ with coefficients bounded by
$M_\ell\rho_\ell^{-k}$. It is a consequence of Assumption~\ref{ass:chain_local},
not an additional hypothesis.
\subsubsection{One Taylor step, and the cancellation of $k!$}

The next lemma is the sole mechanism of the proof. It converts the local
coefficient bound \eqref{eq:chain_germ_bound} at $s_\ell$ into a bound on the
\emph{increment} of $\phi_t$ across the link. The factorial $k!$ in
\eqref{eq:chain_germ_bound} is exactly cancelled by the factorial in the Taylor
coefficients, leaving a geometric series of ratio $q_\ell<1$.

\begin{lemma}[Taylor transfer across one link]
\label{lmm:chain_one_step}
Under Assumption~\ref{ass:chain_local}, for every $t\neq t^{*}$ and every
$\ell=0,\ldots,L-1$:
\begin{enumerate}
    \item[(a)] the increment across the link satisfies
    \begin{equation}
        \label{eq:chain_eta}
        \eta_\ell:=\max_{t\neq t^*}\bigl|\phi_t(s_{\ell+1})-\phi_t(s_\ell)\bigr|
        \;\le\;
        \frac{M_\ell\,q_\ell}{1-q_\ell};
    \end{equation}
    \item[(b)] more generally, for $0\le u<1$ and $|s-s_\ell|\le u\rho_\ell$,
    \begin{equation}
        \label{eq:chain_local_osc}
        \bigl|\phi_t(s)-\phi_t(s_\ell)\bigr|\le\frac{M_\ell\,u}{1-u};
    \end{equation}
    \item[(c)] the germ at $s_{\ell+1}$ inherited from $s_\ell$ obeys
    \begin{equation}
        \label{eq:chain_germ_propagation}
        \bigl|\phi_t^{(k)}(s_{\ell+1})\bigr|
        \;\le\;
        \frac{M_\ell}{1-q_\ell}\cdot\frac{k!}{\bigl(\rho_\ell-d_\ell\bigr)^{k}},
        \qquad k\ge1 .
    \end{equation}
\end{enumerate}
\end{lemma}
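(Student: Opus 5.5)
The plan is to expand $\phi_t$ in its Taylor series at the real center $s_\ell$ and insert the coefficient bound of Lemma~\ref{lmm:chain_cauchy}(b). For every $s$ with $|s-s_\ell|<\rho_\ell<r_\ell$, the holomorphic germ $\phi_{t,\ell}$ on $D_\ell$ gives
\[
\phi_t(s)-\phi_t(s_\ell)=\sum_{k\ge1}\frac{\phi_t^{(k)}(s_\ell)}{k!}(s-s_\ell)^k .
\]
By compatibility (i), this series represents the single continued function $\phi_t$ on $D_\ell$. Bounding each coefficient by $M_\ell k!/\rho_\ell^k$ cancels the factorial, as announced before the statement. This leaves the geometric majorant $\sum_{k\ge1}M_\ell\,(|s-s_\ell|/\rho_\ell)^k$.

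For part (b), take $|s-s_\ell|\le u\rho_\ell$ with $0\le u<1$. The majorant is then at most $M_\ell\sum_{k\ge1}u^k=M_\ell u/(1-u)$, which is exactly \eqref{eq:chain_local_osc}. Part (a) is the special case $s=s_{\ell+1}$. Here $|s_{\ell+1}-s_\ell|=d_\ell=q_\ell\rho_\ell$, and $q_\ell<1$ because the working radius was chosen with $\rho_\ell>d_\ell$. This gives the bound $M_\ell q_\ell/(1-q_\ell)$ for each $t\ne t^*$. Since $M_\ell$ was already defined as a maximum over competitors, taking the maximum over the finitely many $t\neq t^*$ yields the bound on $\eta_\ell$.

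For part (c), I would use the Cauchy estimate at the new center rather than differentiating the series term by term. The closed disk $\{|\zeta-s_{\ell+1}|\le\rho_\ell-d_\ell\}$ lies inside $\{|\zeta-s_\ell|\le\rho_\ell\}\subset D_\ell$, because $|\zeta-s_\ell|\le d_\ell+(\rho_\ell-d_\ell)=\rho_\ell$. On that closed disk, the maximum modulus principle applied to the holomorphic function $\psi_t=\phi_t-\phi_t(s_\ell)$ gives $|\psi_t|\le M_\ell$. Next, apply the Cauchy estimates at $s_{\ell+1}$ with radius $\rho_\ell-d_\ell$ to $\psi_t$; its derivatives of order $k\ge1$ coincide with those of $\phi_t$. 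The result is $|\phi_t^{(k)}(s_{\ell+1})|\le M_\ell k!/(\rho_\ell-d_\ell)^k$. This is already stronger than \eqref{eq:chain_germ_propagation}, since $1/(1-q_\ell)\ge1$.

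If one prefers to stay with the stated form, the alternative is to differentiate the majorant series $k$ times and evaluate at distance $d_\ell$. That route produces the factor $M_\ell\rho_\ell^{-k}\sum_{j\ge k}\binom{j}{k}k!\,q_\ell^{j-k}=M_\ell k!\,\rho_\ell^{-k}(1-q_\ell)^{-(k+1)}$. Since $\rho_\ell^{-k}(1-q_\ell)^{-k}=(\rho_\ell-d_\ell)^{-k}$, this is exactly the stated bound.

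The only delicate point is justifying that the Taylor expansion at $s_\ell$ really represents the continued function $\phi_t$ up to radius $\rho_\ell$, and reaches $s_{\ell+1}$. This follows from holomorphy of $\phi_{t,\ell}$ on all of $D_\ell$, from compatibility (i), and from \eqref{eq:chain_step_inside}. Reality of the values, from Lemma~\ref{lmm:chain_cauchy}(c), is not even needed for the modulus bounds.
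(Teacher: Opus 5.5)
Your proof is correct. Parts (a) and (b) follow the paper's argument exactly: you expand at $s_\ell$, insert the coefficient bound of Lemma~\ref{lmm:chain_cauchy}(b) so that the $k!$ cancels, sum the geometric series, and specialize to $u=q_\ell$.

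For (c), your primary route is genuinely different from the paper's. The paper differentiates the Taylor series at $s_\ell$ term by term, inserts the majorant $M_\ell k!\rho_\ell^{-k}$, and sums $\sum_{m\ge0}\binom{m+k}{k}q_\ell^m=(1-q_\ell)^{-(k+1)}$; this is the route you give as your fallback. Your main argument instead uses the maximum modulus principle to get $|\phi_t-\phi_t(s_\ell)|\le M_\ell$ on the whole closed disk $|\zeta-s_\ell|\le\rho_\ell$. You then observe that the disk of radius $\rho_\ell-d_\ell$ about $s_{\ell+1}$ lies inside it, and apply Cauchy's estimate at the new center.

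Your route is shorter and yields $M_\ell k!/(\rho_\ell-d_\ell)^k$, which is sharper by the factor $1-q_\ell$. In the terms of Remark~\ref{rem:chain_fresh_information}, the inherited germ keeps prefactor $M_\ell$ rather than degrading to $M_\ell/(1-q_\ell)$. The radius still shrinks by exactly $d_\ell$, so the remark's qualitative conclusion is unchanged.

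What the paper's route buys is that it uses only the coefficient majorant at $s_\ell$. It therefore applies to any power series with $|a_k|\le M_\ell\rho_\ell^{-k}$, and such a series need not be bounded by $M_\ell$, or at all, on the disk of radius $\rho_\ell$. That is the natural bookkeeping if only germ data is carried from center to center. Your route needs the sup bound on the circle, which is available here because $M_\ell$ is defined that way and $\rho_\ell<r_\ell$.
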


\begin{proof}
(b) Since the Taylor series of $\phi_t$ at $s_\ell$ converges for
$|s-s_\ell|<\rho_\ell$ by \eqref{eq:chain_germ_bound},
\begin{equation}
    \label{eq:chain_cancellation}
    \bigl|\phi_t(s)-\phi_t(s_\ell)\bigr|
    \le
    \sum_{k\ge1}\frac{\bigl|\phi_t^{(k)}(s_\ell)\bigr|}{k!}\,|s-s_\ell|^{k}
    \le
    \sum_{k\ge1}\frac{1}{k!}\cdot\frac{M_\ell k!}{\rho_\ell^{\,k}}\,(u\rho_\ell)^{k}
    =M_\ell\sum_{k\ge1}u^{k}
    =\frac{M_\ell u}{1-u}.
\end{equation}
The middle step is the cancellation of $k!$ against $1/k!$: without it the series
would be $\sum_k k!\,u^{k}$, which diverges for every $u>0$.

(a) Apply (b) with $u=q_\ell$, legitimate because $d_\ell=q_\ell\rho_\ell$ and
$q_\ell<1$, and take the maximum over the finite set $\gT$.

(c) Differentiating the Taylor series of $\phi_t$ at $s_\ell$ term by term $k$
times and evaluating at $s_{\ell+1}$,
\[
    \phi_t^{(k)}(s_{\ell+1})
    =\sum_{j\ge k}\frac{\phi_t^{(j)}(s_\ell)}{(j-k)!}\,d_\ell^{\,j-k},
\]
so by \eqref{eq:chain_germ_bound} and $\tfrac{(m+k)!}{m!}=k!\binom{m+k}{k}$,
\[
    \bigl|\phi_t^{(k)}(s_{\ell+1})\bigr|
    \le\frac{M_\ell}{\rho_\ell^{\,k}}\sum_{m\ge0}\frac{(m+k)!}{m!}q_\ell^{\,m}
    =\frac{M_\ell\,k!}{\rho_\ell^{\,k}}\,(1-q_\ell)^{-(k+1)},
\]
using $\sum_{m\ge0}\binom{m+k}{k}q^{m}=(1-q)^{-(k+1)}$. Rearranging with
$\rho_\ell(1-q_\ell)=\rho_\ell-d_\ell$ gives \eqref{eq:chain_germ_propagation}.
\end{proof}

\begin{remark}[The chain hypothesis is not vacuous, and not redundant]
\label{rem:chain_fresh_information}
Part~(c) shows that a germ at $s_\ell$ automatically induces a germ at
$s_{\ell+1}$ with degraded constants
$\bigl(M_\ell/(1-q_\ell),\,\rho_\ell-d_\ell\bigr)$: the radius shrinks by exactly
the step length, which is the classical behaviour of Weierstrass continuation.
Iterating from $s_0$ alone would therefore reach only $s<\rho_0$, recovering the
single-germ case. Assumption~\ref{ass:chain_local} is binding precisely when
some center supplies constants better than those propagated by (c), i.e.\ when
the log-odds regain analytic room further along the path. This is what allows
$L>1$ to travel a distance exceeding $\rho_0$.
\end{remark}
\begin{lemma}[Additive propagation along the chain]
\label{lmm:chain_telescope}
Define the \emph{cumulative continuation cost}
\begin{equation}
    \label{eq:chain_Omega}
    \Omega:=\sum_{\ell=0}^{L-1}\frac{M_\ell\,q_\ell}{1-q_\ell} .
\end{equation}
Under Assumption~\ref{ass:chain_local}, $\Omega<\infty$ automatically, since the
chain is finite and every $M_\ell$ is finite and every $q_\ell<1$. Moreover,
under Assumption~\ref{ass:chain_anchor}, for every $t\neq t^{*}$ and every
$\ell=0,\ldots,L$,
\begin{equation}
    \label{eq:chain_telescope}
    \phi_t(s_\ell)\;\le\;\gamma_0+\sum_{j=0}^{\ell-1}\eta_j\;\le\;\gamma_0+\Omega .
    \qquad\text{In particular}\qquad
    \phi_t(1)\le\gamma_0+\Omega=:\Omega_0 .
\end{equation}
\end{lemma}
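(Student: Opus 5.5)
The plan is a telescoping argument along the chain centers, combined with the one-link increment bound of Lemma~\ref{lmm:chain_one_step}(a). The first claim is $\Omega<\infty$. It follows at once from Lemma~\ref{lmm:chain_cauchy}(a) and Assumption~\ref{ass:chain_local}: $L$ is finite, each $M_\ell$ is finite, and each $q_\ell=d_\ell/\rho_\ell\in(0,1)$, so each summand $M_\ell q_\ell/(1-q_\ell)$ is a finite nonnegative number. None of these quantities depends on $N$, so neither does $\Omega$.

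For the main inequality, fix $t\neq t^*$ and first treat the base case $\ell=0$. At $s_0=0$ we have $E_0=E(\vi_0)$. By the anchoring condition (ii), $\phi_t(s_0)=r_t(E(\vi_0))$, and by the supported calibration this equals the symbolic log-odds $r_t(\vi_0)$. Lemma~\ref{lmm:chain_cauchy}(c) guarantees that $\phi_t$ is real on $[0,1]$, so the inequalities make sense. Hence $\phi_t(s_0)\le\max_{t'\neq t^*}r_{t'}(\vi_0)=\gamma_0$, which is finite by Assumption~\ref{ass:chain_anchor} because $p(\vi_0\mid t)>0$ for all $t$.

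For general $\ell$, write the telescoping identity
\[
\phi_t(s_\ell)=\phi_t(s_0)+\sum_{j=0}^{\ell-1}\bigl(\phi_t(s_{j+1})-\phi_t(s_j)\bigr).
\]
Each increment is bounded in absolute value by $\eta_j$, by its definition as a maximum over competitors. Using this upper bound on each real increment, together with the base case, gives $\phi_t(s_\ell)\le\gamma_0+\sum_{j<\ell}\eta_j$. Lemma~\ref{lmm:chain_one_step}(a) then gives $\eta_j\le M_jq_j/(1-q_j)$. Since every term is nonnegative, the partial sum is at most the full sum $\Omega$, and taking $\ell=L$ with $s_L=1$ yields $\phi_t(1)\le\gamma_0+\Omega$.

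The only delicate point is that the value $\phi_t(s_{j+1})$ used in the $j$-th increment must be the same value that serves as the center value on the next disk. Otherwise the telescoping sum would combine values of different local germs. This is exactly what the compatibility condition (i) provides. Since $d_j<r_j$, the point $s_{j+1}$ lies in $D_j\cap D_{j+1}$, so $\phi_{t,j}(s_{j+1})=\phi_{t,j+1}(s_{j+1})$. The single function $\phi_t$ is therefore well defined at every center, and Lemma~\ref{lmm:chain_one_step}(a), which uses the Taylor series on $D_j$, bounds precisely the difference of consecutive center values.
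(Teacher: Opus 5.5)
Your proposal is correct and matches the paper's proof: you telescope $\phi_t(s_\ell)$ from the anchor value $\phi_t(s_0)=r_t(\vi_0)\le\gamma_0$, bound each real increment by $\eta_j\le M_jq_j/(1-q_j)$ via Lemma~\ref{lmm:chain_one_step}(a), and use nonnegativity of the summands to pass from the partial sums to $\Omega$. Your remark that compatibility~(i) makes consecutive germs agree at the centers is left implicit in the paper's use of a single function $\phi_t$, and you handle it correctly; the only adjustment is that for $j=L-1$ there is no disk $D_L$, so the endpoint $s_L=1$ is simply evaluated in $D_{L-1}$.
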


\begin{proof}
By Lemma~\ref{lmm:chain_cauchy}(c) all quantities are real. Telescoping,
\[
    \phi_t(s_\ell)
    =\phi_t(s_0)+\sum_{j=0}^{\ell-1}\bigl(\phi_t(s_{j+1})-\phi_t(s_j)\bigr)
    \le r_t(\vi_0)+\sum_{j=0}^{\ell-1}\eta_j
    \le\gamma_0+\sum_{j=0}^{\ell-1}\eta_j,
\]
and $\eta_j\le M_jq_j/(1-q_j)$ by Lemma~\ref{lmm:chain_one_step}(a). Since every
term is nonnegative, the partial sums are bounded by $\Omega$.
\end{proof}

\noindent
The propagation is \emph{additive} because the chain acts on the log-odds. Had
the same chain been run on the odds themselves, or on the posterior predictive,
each link would have contributed a multiplicative factor and $L$ links would
produce an $L$-th power; this is the structural reason the present argument
succeeds where amplification-based transfers do not.

\subsubsection{Main result}

Define the continued odds, odds mass and predictive deviation by
\begin{equation}
    \label{eq:chain_R_sigma_h}
    R_{t,N}(s):=B_t\exp\bigl(\phi_t(s)+\lambda_t\bigr),
    \qquad
    \sigma_N(s):=\sum_{t\neq t^*}R_{t,N}(s),
\end{equation}
\begin{equation}
    \label{eq:chain_h}
    h_N(s):=\sum_{t\neq t^*}b_t\,\frac{R_{t,N}(s)}{1+\sigma_N(s)},
    \qquad
    b_t:=p(\yquery\mid\xquery,t)-p(\yquery\mid\xquery,t^{*}),\quad|b_t|\le1 .
\end{equation}
At every $s$ in the anchored initial instruction-embedding interval,
$h_N(s)=p(\yquery\mid\vz_N(E_s))-p(\yquery\mid\xquery,t^{*})$
by \eqref{eq:chain_repeated_odds}; at $s=1$ it is the value induced by the chain.

\begin{theorem}[Formal model bound for an unsupported shared instruction]
\label{thm:chain_unsupported_convergence}
Let Assumptions~\ref{assp:identifiability}, \ref{ass:chain_factorization},
\ref{ass:chain_anchor} and~\ref{ass:chain_local} hold. Then almost surely, for
all sufficiently large $N$,
\begin{equation}
    \label{eq:chain_main_bound}
    \bigl|h_N(1)\bigr|
    \;\le\;
    \widetilde\beta\,e^{-\alpha N},
    \qquad
    \widetilde\beta:=\beta_\circ e^{\Omega_0},
    \quad
    \beta_\circ:=(|\gT|-1)\max_{t\neq t^*}e^{\lambda_t},
\end{equation}
with $\alpha=\tfrac12\min_{t\neq t^{*}}\kappa_t$ as in
Lemma~\ref{lmm:Bt_exp_decay}. Equivalently, the posterior predictive induced by
the chain satisfies
\begin{equation}
    \label{eq:chain_predictive_convergence}
    \bigl|\bar p(\yquery\mid\vz_N(\widetilde\vi))-p(\yquery\mid\xquery,t^{*})\bigr|
    \le\widetilde\beta e^{-\alpha N},
\end{equation}
and, for the actual pretrained model,
\begin{equation}
    \label{eq:chain_model_convergence}
    \bigl|M(\yquery\mid\vz_N(\widetilde\vi))
    -p(\yquery\mid\xquery,t^{*})\bigr|
    \le\widetilde\beta e^{-\alpha N}
    +\varepsilon_{M,N}(\widetilde\vi),
\end{equation}
where $\varepsilon_{M,N}(\widetilde\vi)$ is defined by
\eqref{eq:shared_model_transport_error}, and for
any two unsupported instructions admitting chains with costs
$\Omega_0,\Omega_0'$,
\begin{equation}
    \label{eq:chain_sensitivity}
    \bigl|\bar p(\yquery\mid\vz_N(\widetilde\vi))-\bar p(\yquery\mid\vz_N(\widetilde\vi\,'))\bigr|
    \le 2\beta_\circ e^{\max\{\Omega_0,\Omega_0'\}}e^{-\alpha N}.
\end{equation}
\end{theorem}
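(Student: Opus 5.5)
The plan is to reduce everything to the fixed-endpoint domination mechanism of \eqref{eq:fixed_endpoint_robustness}, with the endpoint constants supplied by Lemma~\ref{lmm:chain_telescope}.

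\textbf{Step 1: bound the predictive deviation by the competing odds.} By definition \eqref{eq:chain_h} and $|b_t|\le1$,
\[
|h_N(1)|\le\sum_{t\ne t^*}\frac{R_{t,N}(1)}{1+\sigma_N(1)}\le\sigma_N(1).
\]
Lemma~\ref{lmm:chain_cauchy}(c) guarantees that $\phi_t(1)$ is real, so $R_{t,N}(1)>0$ and $1+\sigma_N(1)\ge1$. Moreover, $\phi_t(1)\le\Omega_0$ by Lemma~\ref{lmm:chain_telescope}, which gives
\[
R_{t,N}(1)\le B_t e^{\lambda_t}e^{\Omega_0}.
\]

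\textbf{Step 2: invoke the exponential decay of the demonstration evidence.} Lemma~\ref{lmm:Bt_exp_decay} applies verbatim: $B_t$ depends only on the demonstrations, and Assumption~\ref{ass:chain_factorization} makes the instruction factor enter only once. Hence, almost surely for $N\ge N_0$,
\[
\sigma_N(1)\le(|\gT|-1)\max_{t\ne t^*}e^{\lambda_t}\,e^{\Omega_0}e^{-\alpha N}=\widetilde\beta e^{-\alpha N}.
\]
This is \eqref{eq:chain_main_bound}.

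\textbf{Step 3: identify $h_N(1)$ with the reference predictor.} We have $\bar p(\yquery\mid\vz_N(\widetilde\vi))-p(\yquery\mid\xquery,t^*)=h_N(1)$, because $\bar p$ is defined through the continued odds \eqref{eq:reference_continued_task_odds} with $r_t$ replaced by $\phi_t$. This identification is essentially definitional. Writing $\bar p=\sum_t p(\yquery\mid\xquery,t)\bar p(t\mid\cdot)$ and subtracting $p(\yquery\mid\xquery,t^*)$ yields exactly $\sum_{t\neq t^*} b_t\bar p(t\mid\cdot)$, with $\bar p(t\mid\cdot)=R_{t,N}(1)/(1+\sigma_N(1))$. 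This gives \eqref{eq:chain_predictive_convergence}.

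\textbf{Step 4: transfer to the actual model.} For \eqref{eq:chain_model_convergence}, apply the triangle inequality through $\bar p$ and use Corollary~\ref{cor:shared_model_transport} at the fixed $N$. The almost-sure statement is unaffected because the corollary is deterministic for each $N$.

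\textbf{Step 5: compare two unsupported endpoints.} For \eqref{eq:chain_sensitivity}, apply Step~3 to each endpoint separately, using the same demonstration sequence so that the same $N_0$ works. The triangle inequality through $p(\yquery\mid\xquery,t^*)$ then gives
\[
\beta_\circ\bigl(e^{\Omega_0}+e^{\Omega_0'}\bigr)e^{-\alpha N}\le2\beta_\circ e^{\max\{\Omega_0,\Omega_0'\}}e^{-\alpha N}.
\]

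\textbf{Main obstacle.} The only delicate point is making Step~3 rigorous: $h_N(1)$ must be the value of the continued predictive and not merely a formal expression. The issue is that the anchoring in Assumption~\ref{ass:chain_local}(ii) holds only near $s=0$. I would handle this by taking the chain-defined $\phi_t(1)$ as the definition of the continued log-odds at the endpoint, which is consistent with Proposition~\ref{prop:transformer_implies_finite_chain} via uniqueness of analytic continuation. I would also check that $N_0$ from Lemma~\ref{lmm:Bt_exp_decay} is independent of the instruction, which holds because $B_t$ does not involve $\vi$.
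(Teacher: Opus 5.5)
Your proposal is correct and follows the paper's proof essentially step for step. You bound $|h_N(1)|$ by $\sigma_N(1)$ using $|b_t|\le1$ and $1+\sigma_N(1)\ge1$, control each $R_{t,N}(1)$ via $\phi_t(1)\le\Omega_0$ (Lemma~\ref{lmm:chain_telescope}, real by Lemma~\ref{lmm:chain_cauchy}(c)) together with $B_t\le e^{-\alpha N}$ (Lemma~\ref{lmm:Bt_exp_decay}), and then obtain the model bound from Corollary~\ref{cor:shared_model_transport} and the two-endpoint bound from the triangle inequality, exactly as the paper does; your extra remarks (that identifying $h_N(1)$ with $\bar p-p$ is definitional, and that the threshold $N_0$ is common to both endpoints because $B_t$ does not involve the instruction) are accurate and only make explicit what the paper leaves implicit.
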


\begin{proof}
By Lemma~\ref{lmm:Bt_exp_decay}, almost surely for all sufficiently large $N$ we
have $B_t\le e^{-\alpha N}$ for every $t\neq t^{*}$. By
Lemma~\ref{lmm:chain_telescope}, $\phi_t(1)\le\Omega_0$, and by
Lemma~\ref{lmm:chain_cauchy}(c) this is a real inequality. Hence
\begin{equation}
    \label{eq:chain_final_odds}
    R_{t,N}(1)
    =B_t\exp\bigl(\phi_t(1)+\lambda_t\bigr)
    \le e^{\lambda_t}e^{\Omega_0}e^{-\alpha N}.
\end{equation}
Summing over the $|\gT|-1$ competitors,
$\sigma_N(1)\le\beta_\circ e^{\Omega_0}e^{-\alpha N}$. Since every $R_{t,N}(1)$
is real and nonnegative we have $1+\sigma_N(1)\ge1$, so by \eqref{eq:chain_h}
and $|b_t|\le1$,
\[
    \bigl|h_N(1)\bigr|\le\frac{\sigma_N(1)}{1+\sigma_N(1)}\le\sigma_N(1)
    \le\beta_\circ e^{\Omega_0}e^{-\alpha N},
\]
which is \eqref{eq:chain_main_bound}. Statement \eqref{eq:chain_sensitivity}
follows by the triangle inequality as in Theorem~\ref{thm:decay_1_5_app}.
Combining \eqref{eq:chain_predictive_convergence} with
Corollary~\ref{cor:shared_model_transport} gives
\eqref{eq:chain_model_convergence}.
\end{proof}

\begin{assumption}[Uniform model--reference tracking along a shared-template path]
\label{assp:uniform_shared_tracking}
For the fixed unsupported instruction $\widetilde\vi$, there exist $N_0$ and
a finite constant $\overline\varepsilon_M(\widetilde\vi)\ge0$ such that,
almost surely,
\begin{equation}
\label{eq:uniform_shared_tracking}
\sup_{N\ge N_0}\varepsilon_{M,N}(\widetilde\vi)
\le\overline\varepsilon_M(\widetilde\vi).
\end{equation}
\end{assumption}

\begin{corollary}[Uniform shared-template model tracking]
\label{cor:chain_uniform_model_tracking}
If, in addition, Assumption~\ref{assp:uniform_shared_tracking} holds, then
almost surely for all sufficiently large $N$,
\begin{equation}
\label{eq:chain_uniform_model_bound}
\bigl|M(\yquery\mid\vz_N(\widetilde\vi))
-p(\yquery\mid\xquery,t^{*})\bigr|
\le\widetilde\beta e^{-\alpha N}
+\overline\varepsilon_M(\widetilde\vi),
\end{equation}
and consequently
\[
\limsup_{N\to\infty}
\bigl|M(\yquery\mid\vz_N(\widetilde\vi))
-p(\yquery\mid\xquery,t^{*})\bigr|
\le\overline\varepsilon_M(\widetilde\vi).
\]
If, more strongly, $\varepsilon_{M,N}(\widetilde\vi)\to0$, then
$M(\yquery\mid\vz_N(\widetilde\vi))
\to p(\yquery\mid\xquery,t^{*})$ almost surely.
\end{corollary}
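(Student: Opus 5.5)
The plan is to combine the model-level bound \eqref{eq:chain_model_convergence} of Theorem~\ref{thm:chain_unsupported_convergence} with the uniform tracking hypothesis, and then pass to limits.

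First, let $\Omega_1$ be the almost-sure event on which the conclusion of Theorem~\ref{thm:chain_unsupported_convergence} holds. This event comes from Lemma~\ref{lmm:Bt_exp_decay}, i.e. the strong law of large numbers applied to the finitely many log-likelihood ratios. On $\Omega_1$ there is a random $N_1<\infty$ such that
\[
\bigl|M(\yquery\mid\vz_N(\widetilde\vi))-p(\yquery\mid\xquery,t^{*})\bigr|
\le\widetilde\beta e^{-\alpha N}+\varepsilon_{M,N}(\widetilde\vi)
\]
for all $N\ge N_1$.

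Second, let $\Omega_2$ be the almost-sure event from Assumption~\ref{assp:uniform_shared_tracking}. On $\Omega_2$ we have $\varepsilon_{M,N}(\widetilde\vi)\le\overline\varepsilon_M(\widetilde\vi)$ for all $N\ge N_0$. On $\Omega_1\cap\Omega_2$, which still has probability one, take $N\ge\max\{N_0,N_1\}$ and substitute this bound into the display above. This gives \eqref{eq:chain_uniform_model_bound}.

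For the limsup statement, note that $\widetilde\beta=\beta_\circ e^{\Omega_0}$ is finite and independent of $N$. This uses Lemma~\ref{lmm:chain_telescope}, which gives $\Omega_0<\infty$, together with finiteness of the $\lambda_t$. Since also $\alpha>0$, the term $\widetilde\beta e^{-\alpha N}$ tends to $0$. Taking $\limsup$ in \eqref{eq:chain_uniform_model_bound} therefore leaves $\overline\varepsilon_M(\widetilde\vi)$.

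For the final claim, assume $\varepsilon_{M,N}(\widetilde\vi)\to0$ almost surely. Then apply the limit directly to \eqref{eq:chain_model_convergence} on the intersection of the two full-measure events. The right-hand side tends to $0$, which gives the almost-sure convergence of $M(\yquery\mid\vz_N(\widetilde\vi))$ to $p(\yquery\mid\xquery,t^{*})$. Uniform boundedness is not needed for this step.

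There is no real obstacle here. The only point needing care is the bookkeeping of the almost-sure events and thresholds: $N_0$ in the assumption is deterministic while $N_1$ is random, so we work on the intersection of countably many (here, two) probability-one events and use the pointwise maximum of the thresholds.
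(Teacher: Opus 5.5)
Your proposal is correct and follows the paper's own argument: substitute the uniform bound of Assumption~\ref{assp:uniform_shared_tracking} into \eqref{eq:chain_model_convergence}, take the limit superior, and for the final claim let both terms on the right-hand side vanish. Your intersection of the two probability-one events and use of the maximum of the two thresholds is bookkeeping that the paper leaves implicit.
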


\begin{proof}
Substitute the uniform bound from
Assumption~\ref{assp:uniform_shared_tracking} into
\eqref{eq:chain_model_convergence} and take the limit superior.  The final
claim follows from the same bound when both terms on its right-hand side
vanish.
\end{proof}

\noindent
Proposition~\ref{prop:uniform_local_transport} supplies sufficient conditions
for Assumption~\ref{assp:uniform_shared_tracking}, with the explicit choice
$\overline\varepsilon_M=\bar r+(C_M+2G)d$.
Proposition~\ref{prop:anchor_loss_coverage} further relates $\bar r$ to
population excess log loss on a covered anchor family. Architectural
analyticity alone supplies neither small calibration error nor a small
transport constant. For a finite context window these results apply only to
admissible $N$ and do not imply an infinite-length limit.

\noindent
Two features distinguish \eqref{eq:chain_main_bound} from what the
independent-occurrence reading would give. First, the decay rate is the
undegraded $\alpha$ of Theorem~\ref{thm:decay_1_5_app}: passing to an unsupported
instruction costs a constant factor, not a slower exponential. Second, no upper
bound on $\Omega$ is required, so there is no fixed admissible distance; the cost
of travelling further is paid once, in the constant.

\begin{corollary}[Certified distance under a uniform path-cost bound]
\label{cor:chain_reach}
Consider a collection of unsupported endpoints that share the supported anchor
$\vi_0$ and the fixed query contribution $\lambda_t$. For an endpoint
$\widetilde\vi$, let
$\Delta(\widetilde\vi):=\|E(\widetilde\vi)-E(\vi_0)\|_F$, and write
$\Omega(\widetilde\vi)$ for the cumulative cost of its continuation chain.
Suppose there is a constant $\mathcal L_{\mathrm{path}}>0$, independent of $N$ and of
the endpoint, such that
\begin{equation}
\label{eq:chain_uniform_distance_cost}
\Omega(\widetilde\vi)
\le \mathcal L_{\mathrm{path}}\Delta(\widetilde\vi)
\end{equation}
throughout this collection. Set $\beta_0:=\beta_\circ e^{\gamma_0}$. Then
\begin{equation}
\label{eq:chain_distance_error}
|h_N(1)|
\le
\beta_0\exp\!\left(-\alpha N
+\mathcal L_{\mathrm{path}}\Delta(\widetilde\vi)\right).
\end{equation}
Consequently, for any $\epsilon\in(0,1)$, $|h_N(1)|\le\epsilon$ whenever
\begin{equation}
\label{eq:chain_reach}
\Delta(\widetilde\vi)
\le
\frac{\alpha N-\log(\beta_0/\epsilon)}{\mathcal L_{\mathrm{path}}}.
\end{equation}
Thus, within an endpoint family satisfying the same uniform path-cost bound,
the certified distance grows linearly with $N$.
\end{corollary}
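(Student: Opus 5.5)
The plan is to feed the uniform path-cost bound \eqref{eq:chain_uniform_distance_cost} into the proof of Theorem~\ref{thm:chain_unsupported_convergence}, and then invert the resulting exponential bound.

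\textbf{Step 1 (bound on $|h_N(1)|$).} Fix an endpoint $\widetilde\vi$ in the collection. Lemma~\ref{lmm:chain_telescope} gives $\phi_t(1)\le\gamma_0+\Omega(\widetilde\vi)$ for every $t\ne t^*$. By \eqref{eq:chain_uniform_distance_cost} this is at most $\gamma_0+\mathcal L_{\mathrm{path}}\Delta(\widetilde\vi)$. The quantity $\gamma_0$ depends only on the shared anchor $\vi_0$, and $\lambda_t$ is common to the whole collection by hypothesis. Lemma~\ref{lmm:Bt_exp_decay} gives, almost surely for all $N\ge N_0$, the bound $B_t\le e^{-\alpha N}$. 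Here $N_0$ depends only on the demonstration sequence and not on the endpoint, because $B_t$ involves no instruction factor.

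Exactly as in \eqref{eq:chain_final_odds}, these facts give
\[
R_{t,N}(1)\le e^{\lambda_t}e^{\gamma_0}e^{\mathcal L_{\mathrm{path}}\Delta(\widetilde\vi)}e^{-\alpha N}.
\]
Summing over the $|\gT|-1$ competitors bounds $\sigma_N(1)$ by $\beta_\circ e^{\gamma_0}\exp(-\alpha N+\mathcal L_{\mathrm{path}}\Delta)$. The estimate $|h_N(1)|\le\sigma_N(1)/(1+\sigma_N(1))\le\sigma_N(1)$, which uses $|b_t|\le1$ and the realness and nonnegativity of $R_{t,N}(1)$ from Lemma~\ref{lmm:chain_cauchy}(c), then yields \eqref{eq:chain_distance_error} with $\beta_0=\beta_\circ e^{\gamma_0}$.

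\textbf{Step 2 (the distance condition).} Take logarithms: the right-hand side of \eqref{eq:chain_distance_error} is at most $\epsilon$ iff
\[
\log\beta_0-\alpha N+\mathcal L_{\mathrm{path}}\Delta\le\log\epsilon .
\]
Because $\mathcal L_{\mathrm{path}}>0$, this rearranges to \eqref{eq:chain_reach}. The right-hand side of \eqref{eq:chain_reach} is affine in $N$ with slope $\alpha/\mathcal L_{\mathrm{path}}$, which gives the linear growth of the certified distance.

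\textbf{Main obstacle.} The delicate point is to make the quantifiers explicit: the bound should hold almost surely, for all sufficiently large $N$, simultaneously for every endpoint in the collection. I would handle this by noting that the only random event used is the one from Lemma~\ref{lmm:Bt_exp_decay}. That event and its threshold $N_0$ are endpoint-independent, since the instruction enters the odds only through deterministic constants. The remaining check is that the constant $\gamma_0+\Omega$ from Lemma~\ref{lmm:chain_telescope} is uniform over the family once $\Omega$ is replaced by $\mathcal L_{\mathrm{path}}\Delta$; this holds because $\gamma_0$ depends only on the common anchor.
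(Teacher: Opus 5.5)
Your proposal is correct and follows the paper's proof. You reuse the bound $|h_N(1)|\le\beta_\circ e^{\gamma_0+\Omega(\widetilde\vi)}e^{-\alpha N}$ from the proof of Theorem~\ref{thm:chain_unsupported_convergence}, substitute the uniform path-cost bound, and solve the resulting inequality for $\Delta(\widetilde\vi)$; your added remark that the almost-sure threshold $N_0$ and the constant $\gamma_0$ are endpoint-independent usefully makes explicit a quantifier that the paper leaves implicit.
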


\begin{proof}
The proof of Theorem~\ref{thm:chain_unsupported_convergence} gives
$|h_N(1)|\le\beta_\circ e^{\gamma_0+\Omega(\widetilde\vi)}e^{-\alpha N}$.
Applying \eqref{eq:chain_uniform_distance_cost} proves
\eqref{eq:chain_distance_error}. Solving
$\beta_0e^{-\alpha N+\mathcal L_{\mathrm{path}}\Delta(\widetilde\vi)}\le\epsilon$
for $\Delta(\widetilde\vi)$ gives \eqref{eq:chain_reach}.
\end{proof}

%
%

\subsection{Mixed Supported and Unsupported Instructions}
\label{app:mixed_supported_unsupported}

We next consider a prompt in which most demonstrations display the same
supported instruction while a few display corrupted, unsupported surface forms,
for example misspellings. Write
\begin{equation}
    \label{eq:mixed_prompt}
    \vz_N:=\bigl((\vi_1,x_1,y_1),\ldots,(\vi_N,x_N,y_N),(\vi_q,\xquery)\bigr),
\end{equation}
let $S_N\subseteq[N]$ denote the supported instruction indices and
$U_N:=[N]\setminus S_N$ the unsupported ones, and set
\begin{equation}
    \label{eq:mixed_m_r}
    m_N:=|S_N|,\qquad r_N:=|U_N|=N-m_N .
\end{equation}
All supported slots display the same instruction, which we denote $\vi_S$; the
unsupported slots display arbitrary distinct values $\{\vi_n\}_{n\in U_N}$.

\begin{assumption}[Shared supported template, non-shared corruptions]
\label{ass:mixed_factorization}
The $m_N$ supported occurrences are renderings of one template chosen once for
the whole prompt and therefore contribute a single supported factor
$p(\vi_S\mid t)$, exactly as in Assumption~\ref{ass:chain_factorization};
when $m_N=0$, this factor is omitted. Each
unsupported occurrence is treated as a separate instruction slot. On supported
values the corresponding slot likelihoods factor independently of the
demonstration pairs given the task; at unsupported realized values their
log-likelihood ratios are defined by the fixed continuation chains introduced
below.
\end{assumption}

\noindent
This is the natural mixture of the shared reading for repeated supported copies
and the independent-observation reading for distinct corruptions. The latter do
not have literal Bayes likelihood ratios at zero-probability values; the object
below is therefore a continuation-induced posterior, denoted by $\bar p$.

For $t\neq t^{*}$ define
\begin{equation}
    \label{eq:mixed_lor_terms}
    \ell_{t,n}:=\log\frac{p(x_n,y_n\mid t)}{p(x_n,y_n\mid t^{*})},
    \qquad
    r_t(\vi):=\log\frac{p(\vi\mid t)}{p(\vi\mid t^{*})}
\end{equation}
for supported $\vi$. For every unsupported slot $n\in U_N$, let
$\bar r_{t,n}(\vi_n)$ be the endpoint value of its specified finite analytic
continuation chain. For the query define
\begin{equation}
    \label{eq:mixed_lambda}
    \bar\lambda_{t,q}:=
    \log\!\left(\frac{\pi(t)}{\pi(t^{*})}
    \frac{p(\xquery\mid t)}{p(\xquery\mid t^{*})}\right)
    +
    \begin{cases}
        r_t(\vi_q), & \vi_q\text{ supported},\\
        \bar r_{t,q}(\vi_q), & \vi_q\text{ unsupported},
    \end{cases}
\end{equation}
where $\bar r_{t,q}$ is obtained from a fixed finite chain in the unsupported
case. We then define the continuation-induced odds by
\begin{equation}
    \label{eq:mixed_exact_odds}
    \log \bar R_{t,N}
    :=\bar\lambda_{t,q}
     +\sum_{n=1}^{N}\ell_{t,n}
     +r_t(\vi_S)
     +\sum_{n\in U_N}\bar r_{t,n}(\vi_n),
\end{equation}
and normalize them as
\begin{equation}
    \label{eq:mixed_continued_posterior}
    \bar p(t^{*}\mid\vz_N):=\frac{1}{1+\sum_{s\neq t^{*}}\bar R_{s,N}},
    \qquad
    \bar p(t\mid\vz_N):=\frac{\bar R_{t,N}}
    {1+\sum_{s\neq t^{*}}\bar R_{s,N}}.
\end{equation}
The induced posterior predictive is
\begin{equation}
    \label{eq:mixed_continued_predictive}
    \bar p(\yquery\mid\vz_N)
    :=\sum_{t\in\gT}p(\yquery\mid\xquery,t)\,\bar p(t\mid\vz_N).
\end{equation}
The shared supported contribution $r_t(\vi_S)$ is a single
$N$-independent term; each distinct unsupported occurrence contributes one
continued log-ratio.

\begin{assumption}[Supported template and local continuation of the corruptions]
\label{ass:mixed_typos}
For every $t\neq t^{*}$ the shared supported template satisfies
\begin{equation}
    \label{eq:mixed_supported_nonpositive}
    r_t(\vi_S)\le 0 ,
\end{equation}
and for every $n\in U_N$ there exist a supported anchor $\vi_n^{0}$ with
$r_t(\vi_n^{0})\le0$ and a finite chain of analytic continuations connecting
$\vi_n^{0}$ to $\vi_n$, in the sense of Assumption~\ref{ass:chain_local}.
\end{assumption}

\noindent
Condition \eqref{eq:mixed_supported_nonpositive} is the anchor condition
\eqref{eq:chain_gamma0} applied to the shared template, and is a single
inequality rather than one per slot. The natural choice of anchor for a
misspelling is the uncorrupted rendering itself, $\vi_n^{0}=\vi_S$, in which case the
continuation distance $\|\vi_n-\vi_S\|_2$ is the size of the typo.

\begin{lemma}[Continuation cost of a corrupted occurrence]
\label{lmm:mixed_typo_bound}
Under Assumption~\ref{ass:mixed_typos}, let $\Omega_{t,n}$ denote the cumulative
continuation cost \eqref{eq:chain_Omega} of the chain reaching $\vi_n$ for the
task pair $(t,t^{*})$, and set
\begin{equation}
    \label{eq:mixed_Omega_n}
    \Omega_n:=\max_{t\neq t^{*}}\Omega_{t,n}<\infty .
\end{equation}
Then, for every $n\in U_N$ and every $t\neq t^{*}$,
\begin{equation}
    \label{eq:mixed_typo_bound}
    \bar r_{t,n}(\vi_n)\;\le\;\Omega_{t,n}\;\le\;\Omega_n .
\end{equation}
\end{lemma}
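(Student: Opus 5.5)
The plan is to reduce the statement to the telescoping bound of Lemma~\ref{lmm:chain_telescope}, applied once for each corrupted slot $n\in U_N$ and each competitor $t\neq t^*$. By Assumption~\ref{ass:mixed_typos}, slot $n$ comes with a supported anchor $\vi_n^0$ and a finite chain of analytic continuations in the sense of Assumption~\ref{ass:chain_local}. The chain runs along the straight path $E_s=(1-s)E(\vi_n^0)+sE(\vi_n)$, and the continued endpoint value is $\bar r_{t,n}(\vi_n)=\phi_t(1)$ for that chain.

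The first step is to check that the chain-induced constants are available. Lemma~\ref{lmm:chain_cauchy} applies verbatim to this chain, with the anchor $\vi_0$ replaced by $\vi_n^0$. It gives finite Cauchy constants $M_\ell$, ratios $q_\ell<1$, and real values of $\phi_t$ on $[0,1]$. Consequently the cost $\Omega_{t,n}=\sum_\ell M_\ell q_\ell/(1-q_\ell)$ is finite. Here I read $M_\ell$ for the pair $(t,t^*)$ alone, without the maximum over competitors; this can only reduce the constants.

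The second step is the telescoping estimate. Lemma~\ref{lmm:chain_one_step}(a) bounds each link increment by $M_\ell q_\ell/(1-q_\ell)$. Anchoring (ii) gives $\phi_t(0)=r_t(\vi_n^0)$, and Assumption~\ref{ass:mixed_typos} gives $r_t(\vi_n^0)\le0$, so the anchor term plays the role of $\gamma_0\le 0$. Telescoping then yields
\[
\bar r_{t,n}(\vi_n)=\phi_t(1)\le r_t(\vi_n^0)+\Omega_{t,n}\le\Omega_{t,n}.
\]
The final inequality $\Omega_{t,n}\le\Omega_n$ follows from the definition of $\Omega_n$ as a maximum over the finite set $\gT\setminus\{t^*\}$. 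That finiteness also shows $\Omega_n<\infty$.

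The only real subtlety is bookkeeping rather than analysis. Lemma~\ref{lmm:chain_telescope} was stated with $\gamma_0=\max_t r_t(\vi_0)$ and with $M_\ell$ maximized over competitors. Here the cost is indexed per task pair, so I would restate the telescoping for a single $t$. The anchor sign condition is what removes the $\gamma_0$ term, which is why no additive anchor constant appears in \eqref{eq:mixed_typo_bound}. I would also note that each slot may use its own chain, and that all of these chains are fixed and independent of $N$.
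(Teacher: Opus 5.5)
Your proposal is correct and matches the paper's proof. Both apply Lemma~\ref{lmm:chain_telescope} to the chain from $\vi_n^{0}$ to $\vi_n$, drop the anchor term using $r_t(\vi_n^{0})\le 0$ from Assumption~\ref{ass:mixed_typos}, and obtain finiteness from Lemma~\ref{lmm:chain_cauchy}(a) together with the finite maximum over $\gT$. Your per-pair reading of $M_\ell$ is a harmless refinement of the paper's competitor-maximized constants with $\gamma_0=\max_{t\neq t^*}r_t(\vi_n^{0})\le 0$, and the bound holds under either reading.
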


\begin{proof}
Apply Lemma~\ref{lmm:chain_telescope} to the chain connecting $\vi_n^{0}$ to
$\vi_n$: it gives $\bar r_{t,n}(\vi_n)\le\gamma_0+\Omega_{t,n}$ with
$\gamma_0=\max_{t\neq t^*}r_t(\vi_n^{0})\le0$ by
Assumption~\ref{ass:mixed_typos}. Finiteness of $\Omega_{t,n}$ is
Lemma~\ref{lmm:chain_cauchy}(a), and the maximum over the finite set $\gT$ is
finite.
\end{proof}

The following theorem separates the task information carried by the examples
from the continuation burden created by the corrupted occurrences.

\begin{theorem}[Formal mixed supported/unsupported instruction bound]
\label{thm:mixed_typos}
Let Assumptions~\ref{ass:mixed_factorization} and~\ref{ass:mixed_typos} hold.
Suppose that, almost surely for all sufficiently large $N$, there is a sequence
$a_N\ge0$ such that simultaneously for all $t\neq t^{*}$,
\begin{equation}
    \label{eq:mixed_demo_contraction_general}
    \sum_{n=1}^{N}\ell_{t,n}\le-a_N .
\end{equation}
Then, almost surely for all sufficiently large $N$,
\begin{equation}
    \label{eq:mixed_odds_bound_general}
    \frac{\bar p(t\mid\vz_N)}{\bar p(t^{*}\mid\vz_N)}
    \le e^{\bar\lambda_{t,q}}\exp\!\left(-a_N+\sum_{n\in U_N}\Omega_n\right),
    \qquad t\neq t^{*},
\end{equation}
and consequently, with $\beta_q:=(|\gT|-1)\max_{t\neq t^{*}}e^{\bar\lambda_{t,q}}$,
\begin{equation}
    \label{eq:mixed_predictive_bound_general}
    \bigl|\bar p(\yquery\mid\vz_N)-p(\yquery\mid\xquery,t^{*})\bigr|
    \le \beta_q\exp\!\left(-a_N+\sum_{n\in U_N}\Omega_n\right).
\end{equation}
For the actual pretrained model,
\begin{equation}
    \label{eq:mixed_model_bound_general}
    \bigl|M(\yquery\mid\vz_N)-p(\yquery\mid\xquery,t^{*})\bigr|
    \le \beta_q\exp\!\left(-a_N+\sum_{n\in U_N}\Omega_n\right)
    +\varepsilon_M(\vz_N),
\end{equation}
where $\varepsilon_M(\vz_N)$ is the pathwise transport error in
\eqref{eq:app_model_transport_error}.
In particular, continued posterior predictive convergence follows whenever
\begin{equation}
    \label{eq:mixed_general_condition}
    a_N-\sum_{n\in U_N}\Omega_n\longrightarrow+\infty .
\end{equation}
\end{theorem}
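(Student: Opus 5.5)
The plan is to work directly with the log-odds representation \eqref{eq:mixed_exact_odds} and bound each of its four components. For a fixed competitor $t\neq t^*$, the ratio $\bar p(t\mid\vz_N)/\bar p(t^*\mid\vz_N)$ equals $\bar R_{t,N}$ by the normalization \eqref{eq:mixed_continued_posterior}. We therefore take logarithms and bound the terms one at a time:
\begin{itemize}
\item $\bar\lambda_{t,q}$ is kept as is.
\item The demonstration sum $\sum_{n}\ell_{t,n}$ is at most $-a_N$ by hypothesis \eqref{eq:mixed_demo_contraction_general}, which holds almost surely for all large $N$ and simultaneously in $t$.
\item The shared supported factor $r_t(\vi_S)$ is at most $0$ by \eqref{eq:mixed_supported_nonpositive}; when $m_N=0$ this term is absent.
\item Each corrupted slot satisfies $\bar r_{t,n}(\vi_n)\le\Omega_n$ by Lemma~\ref{lmm:mixed_typo_bound}.
\end{itemize}
Summing these gives $\log\bar R_{t,N}\le\bar\lambda_{t,q}-a_N+\sum_{n\in U_N}\Omega_n$. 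Exponentiating yields \eqref{eq:mixed_odds_bound_general}. All quantities are real by Lemma~\ref{lmm:chain_cauchy}(c), so the inequalities are legitimate.

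For the predictive bound, we repeat the argument of Theorem~\ref{thm:decay_1_5_app} with the continued posterior in place of the Bayes posterior. Writing $b_t$ as in \eqref{eq:chain_h}, the definition \eqref{eq:mixed_continued_predictive} together with $\sum_t\bar p(t\mid\vz_N)=1$ gives
\[
\bar p(\yquery\mid\vz_N)-p(\yquery\mid\xquery,t^*)=\sum_{t\neq t^*}b_t\,\frac{\bar R_{t,N}}{1+\sum_s\bar R_{s,N}}.
\]
Since $|b_t|\le1$ and each $\bar R_{s,N}\ge0$, the absolute value is at most $\sum_{t\neq t^*}\bar R_{t,N}$. Bounding each term by the odds bound, and each $e^{\bar\lambda_{t,q}}$ by its maximum over the $|\gT|-1$ competitors, produces $\beta_q$ and hence \eqref{eq:mixed_predictive_bound_general}.

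The model-level bound \eqref{eq:mixed_model_bound_general} follows by the triangle inequality: we insert $\bar p(\yquery\mid\vz_N)$ and apply Lemma~\ref{lmm:model_reference_transport} to the path that changes the unsupported slots successively, which gives $|M-\bar p|\le\varepsilon_M(\vz_N)$. Finally, if \eqref{eq:mixed_general_condition} holds, the exponent tends to $-\infty$. Provided $\beta_q$ stays bounded, the right-hand side of \eqref{eq:mixed_predictive_bound_general} then tends to zero, which gives the convergence claim.

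The main obstacle is the query factor $\bar\lambda_{t,q}$. If $\vi_q$ is unsupported, it carries its own continued term $\bar r_{t,q}$, which is finite for a fixed chain, so $\beta_q$ is a finite constant. However, $\beta_q$ could depend on $N$ if the query instruction is allowed to vary with $N$. I would state explicitly that the query instruction and its chain are fixed, so that $\beta_q$ is $N$-independent. For the convergence claim, I would require $\beta_q$ to be bounded, or equivalently absorb $\bar\lambda_{t,q}$ into the divergence condition \eqref{eq:mixed_general_condition}. A secondary check is that Lemma~\ref{lmm:mixed_typo_bound} is applied with anchors whose $\gamma_0\le0$, so that no anchor constant appears per slot.
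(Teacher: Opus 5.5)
Your proposal is correct and follows the paper's proof essentially step for step. You bound the terms of \eqref{eq:mixed_exact_odds} using the demonstration contraction, $r_t(\vi_S)\le0$ and Lemma~\ref{lmm:mixed_typo_bound}; you then bound the predictive deviation by the total competing odds, where your use of $1+\sum_{s\neq t^*}\bar R_{s,N}\ge1$ is the paper's $\bar p(t^*\mid\vz_N)\le1$; and you finish with the triangle inequality and Lemma~\ref{lmm:model_reference_transport}. Your caveat that the convergence claim needs $\bar\lambda_{t,q}$, hence $\beta_q$, to be $N$-independent is apt and matches the paper's implicit fixed-query assumption; note only that your normalizing sum $\sum_s\bar R_{s,N}$ should run over $s\neq t^*$.
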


\begin{proof}
From \eqref{eq:mixed_exact_odds}, \eqref{eq:mixed_demo_contraction_general},
\eqref{eq:mixed_supported_nonpositive} and Lemma~\ref{lmm:mixed_typo_bound},
\[
    \log\frac{\bar p(t\mid\vz_N)}{\bar p(t^{*}\mid\vz_N)}
    \;\le\;\bar\lambda_{t,q}-a_N+\underbrace{r_t(\vi_S)}_{\le\,0}
      +\sum_{n\in U_N}\Omega_n
    \;\le\;\bar\lambda_{t,q}-a_N+\sum_{n\in U_N}\Omega_n ,
\]
which is \eqref{eq:mixed_odds_bound_general}. Since $\bar p(t^{*}\mid\vz_N)\le1$, we have $\bar p(t\mid\vz_N)\le \bar p(t\mid\vz_N)/\bar p(t^{*}\mid\vz_N)$; summing over the
$|\gT|-1$ competing tasks bounds the continued posterior mass outside $t^{*}$, and
\eqref{eq:mixed_predictive_bound_general} follows by repeating the calculation
of \eqref{eq:pp_diff_bound_by_posterior_mass} with $\bar p$ in place of $p$.
The triangle inequality and Lemma~\ref{lmm:model_reference_transport} then give
\eqref{eq:mixed_model_bound_general}.
\end{proof}

\begin{remark}[Consistency with Assumption~\ref{ass:chain_factorization}]
\label{rem:mixed_consistency}
Under Assumption~\ref{ass:mixed_factorization} the $m_N$ supported occurrences
contribute the single term $r_t(\vi_S)$ of \eqref{eq:mixed_exact_odds}, so
Lemma~\ref{lmm:At_uniform} applies to them with an $N$-independent constant.
Only the continued corrupted-slot contributions accumulate, and they do so at the rate $\sum_{n\in U_N}\Omega_n$
rather than at rate $N$. Setting $U_N=\varnothing$ recovers the shared-supported case of Theorem~\ref{thm:decay_1_5_app}, and setting $S_N=\varnothing$ with a single
repeated corruption recovers Theorem~\ref{thm:chain_unsupported_convergence}.
\end{remark}

We now specialize Theorem~\ref{thm:mixed_typos} to the setting used in the main
text: the surface instruction may be unsupported, but its associated pair
$(x_n,y_n)$ is still generated by the true task and remains informative. In
this case Lemma~\ref{lmm:Bt_exp_decay} gives $a_N=\alpha N$ directly.

\begin{corollary}[Evidence-preserving typos]
\label{cor:mixed_typos_full_evidence}
Under the hypotheses of Theorem~\ref{thm:mixed_typos}, suppose
$\sum_{n=1}^{N}\ell_{t,n}\le-\alpha N$ almost surely eventually, uniformly over
$t\neq t^{*}$. Then
\begin{equation}
    \label{eq:mixed_full_evidence_bound}
    \bigl|\bar p(\yquery\mid\vz_N)-p(\yquery\mid\xquery,t^{*})\bigr|
    \le\beta_q\exp\!\left(-\alpha N+\sum_{n\in U_N}\Omega_n\right),
\end{equation}
so convergence holds whenever
\begin{equation}
    \label{eq:mixed_average_cost_condition}
    \limsup_{N\to\infty}\frac1N\sum_{n\in U_N}\Omega_n<\alpha .
\end{equation}
If the corruption costs are uniformly bounded, $\Omega_n\le\Omega$, then
\begin{equation}
    \label{eq:mixed_uniform_cost_bound}
    \bigl|\bar p(\yquery\mid\vz_N)-p(\yquery\mid\xquery,t^{*})\bigr|
    \le\beta_q\exp\!\left(-N\Bigl[\alpha-\Bigl(1-\tfrac{m_N}{N}\Bigr)\Omega\Bigr]\right).
\end{equation}
\end{corollary}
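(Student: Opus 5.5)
The plan is to read Corollary~\ref{cor:mixed_typos_full_evidence} directly off Theorem~\ref{thm:mixed_typos} by taking the evidence sequence to be $a_N:=\alpha N$. The hypothesis of the corollary is that $\sum_{n=1}^N\ell_{t,n}\le-\alpha N$ holds almost surely for all sufficiently large $N$, uniformly over $t\neq t^*$. This is exactly condition \eqref{eq:mixed_demo_contraction_general} with $a_N=\alpha N\ge0$. When the pairs $(x_n,y_n)$ are generated by $t^*$, this hypothesis is supplied by Lemma~\ref{lmm:Bt_exp_decay}, since $\log B_t=\sum_n\ell_{t,n}$. Substituting $a_N=\alpha N$ into \eqref{eq:mixed_predictive_bound_general} gives \eqref{eq:mixed_full_evidence_bound} immediately, with the same constant $\beta_q$. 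The constant $\beta_q$ does not depend on $N$, because $\bar\lambda_{t,q}$ involves only the prior, the query input and the single query-instruction factor. The last of these is either a supported ratio or a fixed chain endpoint $\bar r_{t,q}$.

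For the convergence criterion \eqref{eq:mixed_average_cost_condition}, I would set $c:=\limsup_N N^{-1}\sum_{n\in U_N}\Omega_n<\alpha$ and choose $\delta:=(\alpha-c)/2>0$. For all sufficiently large $N$ we then have $\sum_{n\in U_N}\Omega_n\le(c+\delta)N$. Hence the exponent satisfies $-\alpha N+\sum_{n\in U_N}\Omega_n\le-\delta N\to-\infty$, and the right-hand side of \eqref{eq:mixed_full_evidence_bound} is at most $\beta_q e^{-\delta N}\to0$. This is the main-text reading of \eqref{eq:mixed_general_condition}: we intersect two almost-sure "eventually" events, namely the evidence bound and the deterministic tail of the limsup, and conclude almost-sure convergence of the continued predictor.

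For the uniform-cost bound \eqref{eq:mixed_uniform_cost_bound}, I would use $\Omega_n\le\Omega$ for every $n\in U_N$ and $|U_N|=r_N=N-m_N$ from \eqref{eq:mixed_m_r}. These give $\sum_{n\in U_N}\Omega_n\le(N-m_N)\Omega=N(1-m_N/N)\Omega$. Substituting into \eqref{eq:mixed_full_evidence_bound} and factoring out $N$ gives the displayed exponent $-N[\alpha-(1-m_N/N)\Omega]$.

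I do not expect a genuine obstacle, since every step is a substitution into an already proven bound. The one point requiring care is the query term. If $\vi_q$ is unsupported, $\bar\lambda_{t,q}$ contains a continued endpoint value that must be finite and independent of $N$. That holds because its chain is fixed, as in Lemma~\ref{lmm:chain_cauchy}(a), so $\beta_q$ is a legitimate constant. I would also note that the set $U_N$ may vary with $N$, but the bounds are pointwise in $N$, so this causes no difficulty.
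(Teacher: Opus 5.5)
Your proposal is correct and follows the paper's proof exactly: substitute $a_N=\alpha N$ into Theorem~\ref{thm:mixed_typos}, and for the last display bound $\sum_{n\in U_N}\Omega_n\le r_N\Omega=(N-m_N)\Omega$. Your choice $\delta=(\alpha-c)/2$, which turns the $\limsup$ condition into an explicit $e^{-\delta N}$ rate, spells out a step that the paper leaves implicit.
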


\begin{proof}
Apply Theorem~\ref{thm:mixed_typos} with $a_N=\alpha N$, and use
$\sum_{n\in U_N}\Omega_n\le r_N\Omega=(N-m_N)\Omega$ for the last display.
\end{proof}

\begin{remark}[Asymptotic regimes]
\label{rem:mixed_regimes}
Under Corollary~\ref{cor:mixed_typos_full_evidence}:
\begin{enumerate}
    \item if $r_N=O(1)$, a fixed number of typos changes only the prefactor,
    $\bigl|\bar p(\yquery\mid\vz_N)-p(\yquery\mid\xquery,t^{*})\bigr|\le\beta_qe^{O(1)}e^{-\alpha N}$;
    \item if $r_N=o(N)$ and $\sup_n\Omega_n<\infty$, then
    $-\alpha N+\sum_{n\in U_N}\Omega_n=-\alpha N+o(N)$, so for every
    $\varepsilon>0$ the deviation is eventually $O(e^{-(\alpha-\varepsilon)N})$;
    \item if $r_N/N\to\rho\in[0,1]$ and
    $r_N^{-1}\sum_{n\in U_N}\Omega_n\to\overline\Omega$, the effective exponent is
    $\alpha_{\mathrm{eff}}=\alpha-\rho\overline\Omega$. Thus
    $\rho\overline\Omega<\alpha$ gives exponential convergence, equality gives no
    decay from the present bound, and $\rho\overline\Omega>\alpha$ is not covered.
\end{enumerate}
\end{remark}

\subsection{Varying unsupported instructions}
\label{app:format6_varying_unsupported}

Format~6 is the special case of the preceding framework in which every
per-demonstration instruction is a distinct unsupported realization. Thus
$S_N=\varnothing$, $U_N=[N]$, and the prompt is
\begin{equation}
    \label{eq:format6_prompt}
    \vz_N^{(6)}
    :=\bigl((\widetilde\vi_1,x_1,y_1),\ldots,
    (\widetilde\vi_N,x_N,y_N),(\widetilde\vi_q,\xquery)\bigr).
\end{equation}
For each $n$, let $\bar r_{t,n}(\widetilde\vi_n)$ be the endpoint value supplied
by a fixed finite analytic-continuation chain from a supported anchor, and absorb
the single query-instruction continuation into $\bar\lambda_{t,q}$. The
continuation-induced odds therefore satisfy the exact identity
\begin{equation}
    \label{eq:format6_exact_odds}
    \log\frac{\bar p(t\mid\vz_N^{(6)})}
                  {\bar p(t^{*}\mid\vz_N^{(6)})}
    =\bar\lambda_{t,q}
     +\sum_{n=1}^{N}\ell_{t,n}
     +\sum_{n=1}^{N}\bar r_{t,n}(\widetilde\vi_n),
    \qquad t\neq t^{*}.
\end{equation}
Unlike format~4, the continuation burden is paid once for every distinct
instruction realization and can therefore grow linearly with $N$.

\begin{corollary}[Conditional bound for format~6]
\label{cor:format6_conditional_bound}
Suppose that the demonstration pairs remain informative, so that almost surely,
for all sufficiently large $N$ and all $t\neq t^{*}$,
\begin{equation}
    \label{eq:format6_demo_decay}
    \sum_{n=1}^{N}\ell_{t,n}\le-\alpha N,
\end{equation}
and suppose every unsupported instruction admits a finite continuation chain
with cost $\Omega_n$ as in Lemma~\ref{lmm:mixed_typo_bound}. Then
\begin{equation}
    \label{eq:format6_conditional_predictive}
    \bigl|\bar p(\yquery\mid\vz_N^{(6)})
       -p(\yquery\mid\xquery,t^{*})\bigr|
    \le
    \beta_q\exp\!\left(-\alpha N+\sum_{n=1}^{N}\Omega_n\right).
\end{equation}
Consequently, convergence is guaranteed if
\begin{equation}
    \label{eq:format6_sufficient_condition}
    \alpha N-\sum_{n=1}^{N}\Omega_n\longrightarrow+\infty,
\end{equation}
or, more concretely, if
\begin{equation}
    \label{eq:format6_average_cost_condition}
    \limsup_{N\to\infty}\frac1N\sum_{n=1}^{N}\Omega_n<\alpha.
\end{equation}
The finite-chain assumption alone only gives $\Omega_n<\infty$ separately for
each $n$ and does not imply either condition.
\end{corollary}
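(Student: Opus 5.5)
The plan is to specialize Theorem~\ref{thm:mixed_typos} to $S_N=\varnothing$, $U_N=[N]$, rather than prove anything new. Concretely, I would take $a_N:=\alpha N$ in \eqref{eq:mixed_demo_contraction_general}. This choice is legitimate because hypothesis \eqref{eq:format6_demo_decay} says exactly that, almost surely for all sufficiently large $N$, $\sum_{n=1}^N\ell_{t,n}\le-\alpha N$ simultaneously for all $t\neq t^*$. Since $S_N=\varnothing$, Assumption~\ref{ass:mixed_factorization} omits the shared supported factor, so the term $r_t(\vi_S)$ in \eqref{eq:mixed_exact_odds} is absent. The sign condition \eqref{eq:mixed_supported_nonpositive} is then vacuous. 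With these identifications, \eqref{eq:mixed_exact_odds} reduces to the exact identity \eqref{eq:format6_exact_odds}, with the query-instruction continuation absorbed into $\bar\lambda_{t,q}$ as in \eqref{eq:mixed_lambda}.

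The next step bounds each slot contribution. Lemma~\ref{lmm:mixed_typo_bound} gives $\bar r_{t,n}(\widetilde\vi_n)\le\Omega_n$ for every $t\neq t^*$, because each chain starts from a supported anchor with $r_t(\vi_n^0)\le0$. Substituting this and \eqref{eq:format6_demo_decay} into \eqref{eq:format6_exact_odds} yields
\[
\log\frac{\bar p(t\mid\vz_N^{(6)})}{\bar p(t^*\mid\vz_N^{(6)})}\le\bar\lambda_{t,q}-\alpha N+\sum_{n=1}^N\Omega_n .
\]
I would then use $\bar p(t\mid\cdot)\le\bar p(t\mid\cdot)/\bar p(t^*\mid\cdot)$ and sum over the $|\gT|-1$ competitors. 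Finally, I would apply the predictive-deviation bound \eqref{eq:pp_diff_bound_by_posterior_mass} with $\bar p$ in place of $p$. This gives \eqref{eq:format6_conditional_predictive} with $\beta_q=(|\gT|-1)\max_{t\neq t^*}e^{\bar\lambda_{t,q}}$, which is independent of $N$ since the query is fixed.

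The two convergence statements then follow. If \eqref{eq:format6_sufficient_condition} holds, the exponent tends to $-\infty$, so the right-hand side tends to zero. If \eqref{eq:format6_average_cost_condition} holds, I would choose $\delta>0$ with $\frac1N\sum_{n=1}^N\Omega_n\le\alpha-\delta$ eventually. The exponent is then at most $-\delta N$, which gives exponential decay and in particular implies \eqref{eq:format6_sufficient_condition}.

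The final sentence of the corollary, that finiteness of the chains alone does not imply either condition, is the only part not routine. I would justify it by noting that Lemma~\ref{lmm:chain_telescope} controls each $\Omega_n$ separately and says nothing about their average. A sequence of endpoints $\widetilde\vi_n$ whose costs satisfy $\Omega_n\ge\alpha$ for all $n$ is compatible with every per-slot chain being finite, and for such a sequence the bound gives no decay. I expect the main obstacle to be this point, namely keeping the statement honest that this shows only that the \emph{bound} fails, not that convergence fails. It suffices to remark that the average-cost conditions are additional hypotheses and do not follow from Assumption~\ref{ass:chain_local}.
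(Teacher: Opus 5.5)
Your proposal is correct and follows the paper's proof: specialize Theorem~\ref{thm:mixed_typos} with $S_N=\varnothing$, $U_N=[N]$ and $a_N=\alpha N$, then observe that either stated condition sends the exponent to $-\infty$. You re-derive inline the odds bound that the theorem already packages, which is harmless; your justification of the final sentence (per-slot finiteness is compatible with, e.g., $\Omega_n\ge\alpha$ for all $n$) goes slightly beyond the paper, which states that sentence without argument.
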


\begin{proof}
Set $S_N=\varnothing$ and $U_N=[N]$ in
Theorem~\ref{thm:mixed_typos}, and take $a_N=\alpha N$ using
\eqref{eq:format6_demo_decay}. This gives
\eqref{eq:format6_conditional_predictive}. Conditions
\eqref{eq:format6_sufficient_condition} and
\eqref{eq:format6_average_cost_condition} make its right-hand side tend to zero.
\end{proof}

The preceding corollary gives only a sufficient condition. The next result shows
that this limitation is intrinsic: under the assumptions used so far, the
continued instruction evidence may cancel or dominate the demonstration
evidence.

\begin{theorem}[No unconditional convergence guarantee for format~6]
\label{thm:format6_no_unconditional_convergence}
Assume the query contribution $\bar\lambda_{t,q}$ is fixed, the demonstrations
are i.i.d. from $P_{X,Y\mid t^{*}}$, and the log-likelihood ratios are
integrable. For every $t\neq t^{*}$ define the average continued instruction
pressure
\begin{equation}
    \label{eq:format6_instruction_pressure}
    c_{t,N}:=\frac1N\sum_{n=1}^{N}
    \bar r_{t,n}(\widetilde\vi_n).
\end{equation}
Then almost surely
\begin{equation}
    \label{eq:format6_asymptotic_odds}
    \frac1N\log\frac{\bar p(t\mid\vz_N^{(6)})}
                         {\bar p(t^{*}\mid\vz_N^{(6)})}
    =c_{t,N}-\kappa_t+o(1),
    \qquad t\neq t^{*},
\end{equation}
where
$\kappa_t=\kl(P_{X,Y\mid t^{*}}\Vert P_{X,Y\mid t})>0$.
Consequently:
\begin{enumerate}
    \item[(i)] if
    \begin{equation}
        \label{eq:format6_positive_regime}
        \limsup_{N\to\infty}c_{t,N}<\kappa_t
        \qquad\text{for every }t\neq t^{*},
    \end{equation}
    then $\bar p(t^{*}\mid\vz_N^{(6)})\to1$ exponentially fast and the
    continued posterior predictive converges to
    $p(\yquery\mid\xquery,t^{*})$;

    \item[(ii)] if $\gT=\{t^{*},t^{\dagger}\}$ and
    \begin{equation}
        \label{eq:format6_negative_regime}
        \liminf_{N\to\infty}c_{t^{\dagger},N}
        >\kappa_{t^{\dagger}},
    \end{equation}
    then
    \begin{equation}
        \label{eq:format6_wrong_concentration}
        \bar p(t^{*}\mid\vz_N^{(6)})\longrightarrow0,
        \qquad
        \bar p(t^{\dagger}\mid\vz_N^{(6)})\longrightarrow1.
    \end{equation}
    Hence, whenever
    $p(\yquery\mid\xquery,t^{\dagger})
      \neq p(\yquery\mid\xquery,t^{*})$,
    \begin{equation}
        \label{eq:format6_predictive_failure}
        \bigl|\bar p(\yquery\mid\vz_N^{(6)})
        -p(\yquery\mid\xquery,t^{*})\bigr|
        \longrightarrow
        \bigl|p(\yquery\mid\xquery,t^{\dagger})
        -p(\yquery\mid\xquery,t^{*})\bigr|>0.
    \end{equation}
\end{enumerate}
Therefore the finite analytic-continuation assumptions by themselves do not
imply convergence for format~6.
\end{theorem}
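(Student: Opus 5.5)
The plan is to begin from the exact log-odds identity \eqref{eq:format6_exact_odds} and divide by $N$. With $\ell_{t,n}$ the demonstration log-likelihood ratios, this gives
\[
\frac1N\log\frac{\bar p(t\mid\vz_N^{(6)})}{\bar p(t^{*}\mid\vz_N^{(6)})}
=\frac{\bar\lambda_{t,q}}{N}+\frac1N\sum_{n=1}^{N}\ell_{t,n}+c_{t,N}.
\]
Since $\bar\lambda_{t,q}$ is fixed and finite, the first term is $o(1)$. The demonstrations are i.i.d.\ from $P_{X,Y\mid t^*}$ with integrable log-likelihood ratios, so the strong law of large numbers, exactly as in \eqref{eq:SLLN_KL} of Lemma~\ref{lmm:Bt_exp_decay}, gives $\frac1N\sum_n\ell_{t,n}\to-\kappa_t$ almost surely. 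Assumption~\ref{assp:identifiability} guarantees $\kappa_t>0$. Intersecting the finitely many almost-sure events over $t\neq t^*$ yields \eqref{eq:format6_asymptotic_odds} simultaneously for all competitors. The term $c_{t,N}$ is deterministic given the chosen instructions, so no probabilistic control of it is needed.

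For (i), I would set $\delta:=\min_{t\ne t^*}(\kappa_t-\limsup_N c_{t,N})>0$, using finiteness of $\gT$. By \eqref{eq:format6_asymptotic_odds}, almost surely for large $N$ the continued odds satisfy $\bar p(t\mid\vz_N^{(6)})/\bar p(t^*\mid\vz_N^{(6)})\le e^{-\delta N/2}$ for every $t\ne t^*$. From the normalization \eqref{eq:mixed_continued_posterior}, the competing mass is then at most $(|\gT|-1)e^{-\delta N/2}$, so $\bar p(t^*\mid\vz_N^{(6)})\to1$ exponentially. The predictive claim follows by repeating \eqref{eq:pp_diff_expand}--\eqref{eq:pp_diff_bound_by_posterior_mass} with $\bar p$ in place of $p$, via \eqref{eq:mixed_continued_predictive} and $|b_t|\le1$.

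For (ii), with the two tasks $t^*,t^\dagger$ and $\delta:=\liminf_N c_{t^\dagger,N}-\kappa_{t^\dagger}>0$, the same identity gives odds $\bar R_{t^\dagger,N}\ge e^{\delta N/2}\to\infty$ almost surely. Normalization then gives $\bar p(t^*\mid\vz_N^{(6)})=1/(1+\bar R_{t^\dagger,N})\to0$ and $\bar p(t^\dagger\mid\vz_N^{(6)})\to1$. The predictive is the two-term mixture
\[
\bar p(\yquery\mid\vz_N^{(6)})=p(\yquery\mid\xquery,t^*)\,\bar p(t^*\mid\vz_N^{(6)})+p(\yquery\mid\xquery,t^\dagger)\,\bar p(t^\dagger\mid\vz_N^{(6)}),
\]
so it converges to $p(\yquery\mid\xquery,t^\dagger)$, which gives \eqref{eq:format6_predictive_failure}.

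The main obstacle is the closing sentence: showing that (ii) is actually attainable under the finite-chain assumptions, so that the bad regime is not vacuous. I would construct it explicitly. Take any fixed finite-chain extension in which some unsupported endpoint $\widetilde\vi$ has continued value $\bar r_{t^\dagger}(\widetilde\vi)=c>\kappa_{t^\dagger}$. One way is to modify an analytic scoring map by adding an affine function of $s$ that vanishes near the anchor region, or more simply to choose an entire function on the path with value $c$ at $s=1$. Each per-slot chain is then finite, with $\Omega_n<\infty$, by Proposition~\ref{prop:transformer_implies_finite_chain}. Using a family of distinct unsupported renderings $\widetilde\vi_n$ each with continued value at least $c$ gives $c_{t^\dagger,N}\ge c$. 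Since Assumption~\ref{ass:chain_local} places no bound on endpoint values, this shows that the hypotheses alone cannot force convergence.
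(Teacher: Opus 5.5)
Your proof of \eqref{eq:format6_asymptotic_odds}, (i) and (ii) is correct and is essentially the paper's. You divide the exact identity \eqref{eq:format6_exact_odds} by $N$, drop $\bar\lambda_{t,q}/N$, and apply the strong law to the demonstration log-ratios. You then sum the competitor odds for (i), or use the two-task normalization and the two-term predictive mixture for (ii).

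The paper does not construct a witness for the closing sentence; it simply reads it off from (ii). Your construction is therefore an addition, but its first recipe fails as written: an affine function of $s$ that vanishes on an interval adjacent to the anchor is identically zero. Use instead a term that vanishes only at $s=0$, such as $Cs$, so that the calibrated anchor value $r_{t^\dagger}(\vi_0)$ is preserved. Your entire-function alternative has the same requirement: it must take this value at $s=0$, not just the value $c$ at $s=1$.

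For an extension defined on all of $\mathbb R^{L_i\times d}$, you can instead add $A$ times the polynomial $q$ from Appendix~\ref{app:uniform_local_transport}, one such term per unsupported rendering. This pushes the endpoint values above $\kappa_{t^\dagger}$ while keeping every supported embedding calibrated.

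The continuation chain for the modified extension is inherited from the unmodified map's chain, because adding an entire function keeps each germ holomorphic on the same disks. Proposition~\ref{prop:transformer_implies_finite_chain} alone does not cover the modified map, which need not be Transformer-induced. This is harmless, since the closing claim concerns only the chain assumptions.
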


\begin{proof}
Divide \eqref{eq:format6_exact_odds} by $N$. Since the query term is fixed,
$\bar\lambda_{t,q}/N\to0$. By the strong law of large numbers,
\begin{equation}
    \label{eq:format6_demo_slln}
    \frac1N\sum_{n=1}^{N}\ell_{t,n}
    \xrightarrow{\mathrm{a.s.}}-\kappa_t.
\end{equation}
Together with \eqref{eq:format6_instruction_pressure}, this proves
\eqref{eq:format6_asymptotic_odds}.

Under \eqref{eq:format6_positive_regime}, finiteness of $\gT$ gives a
$\delta>0$ such that, almost surely for all sufficiently large $N$,
\begin{equation}
    \label{eq:format6_odds_decay}
    \log\frac{\bar p(t\mid\vz_N^{(6)})}
                  {\bar p(t^{*}\mid\vz_N^{(6)})}
    \le-\delta N
    \qquad\text{for all }t\neq t^{*}.
\end{equation}
Summing the odds over the finite competing-task set yields
$1-\bar p(t^{*}\mid\vz_N^{(6)})=O(e^{-\delta N})$, and the posterior-predictive
claim follows exactly as in \eqref{eq:pp_diff_bound_by_posterior_mass}.

Under \eqref{eq:format6_negative_regime},
\eqref{eq:format6_asymptotic_odds} gives
\begin{equation}
    \label{eq:format6_odds_growth}
    \log\frac{\bar p(t^{\dagger}\mid\vz_N^{(6)})}
                  {\bar p(t^{*}\mid\vz_N^{(6)})}
    \longrightarrow+\infty
\end{equation}
linearly in $N$. In the two-task case this implies
\eqref{eq:format6_wrong_concentration}; substituting it into the definition
\eqref{eq:mixed_continued_predictive} gives
\eqref{eq:format6_predictive_failure}.
\end{proof}

%% file: NeurIPS2026/appendix/appendix_uniform_transport.tex
\subsubsection{Quantitative conditions for uniform local transport}
\label{app:uniform_local_transport}

The transport inequality is an upper certificate, not the actual discrepancy.
Write
\[
 D_N:=\sup_{s\in[0,1]}\sup_y
 |M(y\mid\vz_N(E_s))-p_{\mathrm{ref}}(y\mid\vz_N(E_s))|.
\]
Then $D_N\le\varepsilon_{M,N}$, but a small measured $D_N$ does not imply a
small value of the particular upper bound in
\eqref{eq:shared_model_transport_error}. Even when the two predictors agree
everywhere on a path, their separate Lipschitz constants can both be large.
We give sufficient conditions for the upper bound itself to be uniform and
locally small, followed by a sharper certificate based on their difference.

\paragraph{Why calibration and analyticity alone do not suffice.}
For finitely many distinct supported embeddings $a_1,\ldots,a_m$ and an
endpoint $b$ different from all of them, the polynomial
\[
 q(E):=\prod_{j=1}^m\frac{\|E-a_j\|_F^2}{\|b-a_j\|_F^2}
\]
vanishes on every $a_j$ and equals one at $b$. Adding $Aq(E)$ to one output
logit of an analytic predictor preserves its predictions at the supported
embeddings but can change its prediction at $b$ arbitrarily strongly as
$|A|$ increases. This demonstrates the insufficiency of finite symbolic
calibration and real analyticity alone; it does not assert exact representation
of this polynomial by a particular fixed Transformer. Quantitative stability
or further restrictions on the learned predictor are needed.

\paragraph{Length-independent stability in a tokenwise norm.}
For token representations $H=(h_1,\ldots,h_m)$, set
$\|H\|_{\infty,2}:=\max_j\|h_j\|_2$. Comparisons below are between prompts
of the same length and with the same mask; only the tied instruction changes.
The bounds will be independent of that length.

\begin{lemma}[Stability of an attention head]
\label{lmm:tokenwise_attention_stability}
Let $A_i(H)=\sum_{j\in I_i}a_{ij}(H)W_Vh_j$, where $I_i$ is a fixed nonempty
mask and $a_{ij}$ is the softmax of
$\langle W_Qh_i,W_Kh_j\rangle/\sqrt{d_k}$ over $I_i$.
If $\|H\|_{\infty,2},\|H'\|_{\infty,2}\le B$, and
$q=\|W_Q\|_2$, $k=\|W_K\|_2$, $v=\|W_V\|_2$, then
\begin{equation}
\label{eq:tokenwise_attention_stability}
 \|A(H)-A(H')\|_{\infty,2}
 \le v\left(1+\frac{4qkB^2}{\sqrt{d_k}}\right)
       \|H-H'\|_{\infty,2}.
\end{equation}
The bound is independent of the number of tokens and of $|I_i|$.
\end{lemma}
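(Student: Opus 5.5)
We split the difference by inserting a mixed term:
\[
A_i(H)-A_i(H')=\sum_{j\in I_i}a_{ij}(H)\,W_V(h_j-h'_j)
+\sum_{j\in I_i}\bigl(a_{ij}(H)-a_{ij}(H')\bigr)W_Vh'_j .
\]
The first term is a convex combination, because the softmax weights are nonnegative and sum to one over the fixed mask. Its norm is therefore at most $v\|H-H'\|_{\infty,2}$, and no factor of $|I_i|$ enters. This produces the leading ``$1$'' in \eqref{eq:tokenwise_attention_stability}.

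For the second term, we use that softmax weight differences can be measured in $\ell_1$. Write $g_{ij}=\langle W_Qh_i,W_Kh_j\rangle/\sqrt{d_k}$ and $g'_{ij}$ for its value at $H'$. The softmax map on the simplex is $1$-Lipschitz from $\ell_\infty$ logits to the $\ell_1$ half-norm; more precisely,
\[
\sum_j|a_{ij}-a'_{ij}|\le 2\max_j|g_{ij}-g'_{ij}|.
\]
To prove this, we interpolate the logits linearly, $g(\theta)=g'+\theta(g-g')$. The derivative of $a_j(\theta)$ is $a_j\bigl(\delta_j-\sum_k a_k\delta_k\bigr)$ with $\delta=g-g'$. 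Hence
\[
\sum_j|\dot a_j|\le \sum_j a_j|\delta_j|+\Bigl|\sum_k a_k\delta_k\Bigr|\le 2\|\delta\|_\infty .
\]
Integrating over $\theta\in[0,1]$ gives the claim. This is again length-independent, since only convex-combination averages appear. Then
\[
\Bigl\|\sum_j(a_{ij}-a'_{ij})W_Vh'_j\Bigr\|_2\le vB\sum_j|a_{ij}-a'_{ij}|\le 2vB\max_j|g_{ij}-g'_{ij}|.
\]

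It remains to bound the logit perturbation. We have
\[
g_{ij}-g'_{ij}=\bigl(\langle W_Q(h_i-h'_i),W_Kh_j\rangle+\langle W_Qh'_i,W_K(h_j-h'_j)\rangle\bigr)/\sqrt{d_k}.
\]
Each inner product is at most $qkB\|H-H'\|_{\infty,2}$, so
\[
|g_{ij}-g'_{ij}|\le 2qkB\|H-H'\|_{\infty,2}/\sqrt{d_k}.
\]
Combining gives $2vB\cdot 2qkB/\sqrt{d_k}=4vqkB^2/\sqrt{d_k}$ times $\|H-H'\|_{\infty,2}$. Adding the first term and taking the maximum over $i$ yields \eqref{eq:tokenwise_attention_stability}.

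The main care point is the softmax $\ell_1$ step, because it must avoid any dependence on $|I_i|$. A naive Jacobian operator-norm bound would introduce dimension factors. The interpolation argument above avoids them, because every quantity is an average under the current weights $a(\theta)$. Every other step is a direct application of Cauchy--Schwarz and the operator norms $q,k,v$. The mask is fixed, so both $a$ and $a'$ are softmaxes over the same index set.
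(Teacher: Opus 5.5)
Your proposal is correct and follows essentially the same argument as the paper. The paper likewise splits the difference into a value change bounded by $v\|H-H'\|_{\infty,2}$ and a weight change bounded by $vB\|a_i(H)-a_i(H')\|_1$, bounds each logit difference by $2qkB\|H-H'\|_{\infty,2}/\sqrt{d_k}$ via a mixed inner product, and integrates the softmax-Jacobian estimate $\|\cdot\|_1\le 2\|w\|_\infty$ along the logit segment.
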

\begin{proof}
Put $\delta=\|H-H'\|_{\infty,2}$. Adding and subtracting a mixed inner
product bounds each logit difference by $2qkB\delta/\sqrt{d_k}$.
For a softmax vector $a$, its Jacobian maps $w$ to
$(a_j(w_j-\sum_l a_lw_l))_j$, whose $\ell_1$ norm is at most
$2\|w\|_\infty$. Integration along the segment of logits gives
$\|a_i(H)-a_i(H')\|_1\le4qkB\delta/\sqrt{d_k}$.
The change in values contributes at most $v\delta$, and the change in
weights at most $vB\|a_i(H)-a_i(H')\|_1$. Take the maximum over $i$.
A fixed additive position bias cancels in the logit difference.
\end{proof}

\begin{assumption}[Uniform architectural stability on the path family]
\label{ass:uniform_architectural_stability}
Let $\mathcal N$ be the set of admissible demonstration counts. The model
has fixed depth and head counts. On all paths $\vz_N(E_s)$, $N\in\mathcal N$,
and all teacher-forced output prefixes of length at most a fixed $H_\star$:
\begin{enumerate}[leftmargin=*]
\item Inputs to each attention head have a common token-norm bound $B_\ell$
independent of $N$, $s$, and the prefix. Projection norms are bounded
independently of $N$. Heads have the form of
Lemma~\ref{lmm:tokenwise_attention_stability}, or obey its estimate with
uniform constants; bounded affine biases may be included in those constants.
\item Tokenwise normalization and feed-forward maps, output projections,
position-processing maps, and residual and multi-head combination maps are
Lipschitz on the relevant domains with constants independent of $N$.
The input representation satisfies
$\|H_N(E)-H_N(E')\|_{\infty,2}\le C_{\mathrm{in}}\|E-E'\|_F$
on the path with $C_{\mathrm{in}}$ independent of $N$.
\item Outputs use fixed decoding probabilities and have length at most
$H_\star$, including any termination token. The generation kernel is defined
at every teacher-forced prefix used above.
\end{enumerate}
\end{assumption}

The input inequality holds with $C_{\mathrm{in}}=1$ for direct insertion of
tied embedding copies and unchanged additive positions. A positive fixed
normalization stabilizer and bounded normalization parameters give finite
tokenwise normalization constants. These observations do not establish all
the other uniform bounds for every architecture or length regime. The
resulting constants need not be numerically small, and greedy decoding is
not a Lipschitz probability map.

\begin{proposition}[Uniform model sensitivity along tied paths]
\label{prop:uniform_model_sensitivity}
Under Assumption~\ref{ass:uniform_architectural_stability}, there is a finite
$C_M$, independent of $N\in\mathcal N$, such that
\[
 \sup_y|M(y\mid\vz_N(E_s))-M(y\mid\vz_N(E_{s'}))|
 \le C_M\|E_s-E_{s'}\|_F.
\]
\end{proposition}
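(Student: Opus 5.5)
The plan is to propagate a tokenwise perturbation bound layer by layer through the network, and then convert the final logit perturbation into a bound on output probabilities.

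Fix $N\in\mathcal N$, $s,s'\in[0,1]$ and a teacher-forced output prefix of length at most $H_\star$. Write $H^{(0)}=H_N(E_s)$ and $H'^{(0)}=H_N(E_{s'})$ for the two input representations. These share the same length and mask; only the tied instruction coordinates differ. By item~2 of Assumption~\ref{ass:uniform_architectural_stability} we start from
\[
\|H^{(0)}-H'^{(0)}\|_{\infty,2}\le C_{\mathrm{in}}\|E_s-E_{s'}\|_F .
\]

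Then induct over the fixed finitely many layers. In each attention sublayer, apply Lemma~\ref{lmm:tokenwise_attention_stability} head by head, using the norm bound $B_\ell$ and the projection norms from item~1. This gives a per-head constant $v(1+4qkB_\ell^2/\sqrt{d_k})$ that does not depend on $N$. The multi-head combination, residual addition, tokenwise normalization and feed-forward maps are Lipschitz in $\|\cdot\|_{\infty,2}$ with $N$-independent constants (item~2). Because these maps act tokenwise or combine finitely many heads, each sublayer multiplies the tokenwise discrepancy by a constant $c_\ell$ independent of $N$. After the last layer,
\[
\|H^{(\mathrm{last})}-H'^{(\mathrm{last})}\|_{\infty,2}\le \Bigl(\prod_\ell c_\ell\Bigr) C_{\mathrm{in}}\|E_s-E_{s'}\|_F .
\]
The key point is that the $\|\cdot\|_{\infty,2}$ norm never sums over positions. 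This is exactly why the lemma's bound, and hence the whole chain, is independent of length.

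Next, the output projection at the readout position is Lipschitz, so each next-token logit vector moves by at most a constant times this discrepancy. The output softmax satisfies $\|\operatorname{softmax}(u)-\operatorname{softmax}(u')\|_1\le 2\|u-u'\|_\infty$, by the Jacobian estimate already used in the proof of Lemma~\ref{lmm:tokenwise_attention_stability}. Hence each next-token conditional moves by at most a constant $c_{\mathrm{tok}}\|E_s-E_{s'}\|_F$, uniformly over all admissible prefixes.

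Finally, by item~3 an output $y$ has length at most $H_\star$ and is produced by fixed decoding probabilities. Write $M(y\mid\vz)=\prod_{h\le|y|}M(y_h\mid\vz,y_{<h})$ as a product of at most $H_\star$ factors, each in $[0,1]$. A product of factors in $[0,1]$ changes by at most the sum of the changes of its factors (telescoping). This gives the claim with $C_M:=H_\star c_{\mathrm{tok}}$, which is finite and independent of $N$.

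I expect the main obstacle to be the layer induction: every constant must genuinely be length-independent, in particular the norm bounds $B_\ell$ at every layer and the residual and multi-head combination. This is precisely what Assumption~\ref{ass:uniform_architectural_stability} supplies, so the step reduces to careful bookkeeping rather than new estimates.
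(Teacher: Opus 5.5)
Your proposal is correct and follows essentially the same route as the paper. The paper also propagates the tokenwise $\|\cdot\|_{\infty,2}$ discrepancy layer by layer, using Lemma~\ref{lmm:tokenwise_attention_stability} and the uniform constants of Assumption~\ref{ass:uniform_architectural_stability}, to obtain an $N$-independent logit constant $C_{\mathrm{log}}$; it then passes to next-token probabilities via the softmax Jacobian estimate and telescopes over at most $H_\star$ autoregressive factors in $[0,1]$, giving $C_M=2H_\star C_{\mathrm{log}}$, which is your $H_\star c_{\mathrm{tok}}$.
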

\begin{proof}
Apply Lemma~\ref{lmm:tokenwise_attention_stability} to each head. A residual
map $H\mapsto H+F(H)$ has constant at most $1+L_F$. Multi-head concatenation
has constant at most the square root of the sum of squared head constants,
followed by the output-projection norm. Composing these with the remaining
uniform bounds gives a next-token logit constant $C_{\mathrm{log}}$
independent of $N$ and the forced prefix. The softmax estimate above gives
a probability-coordinate constant at most $2C_{\mathrm{log}}$.
For a fixed block $y$, compare its autoregressive factors at the same forced
prefixes. The difference of two products of numbers in $[0,1]$ is at most
the sum of the factor differences. Thus $C_M=2H_\star C_{\mathrm{log}}$
suffices.
\end{proof}

\begin{lemma}[Uniform reference sensitivity]
\label{lmm:uniform_reference_sensitivity}
Let $d=\|E(\widetilde\vi)-E(\vi_0)\|_F$ and suppose
$|\frac{d}{ds}r_t(E_s)|\le Gd$ for every competing task on the fixed path.
Suppose the continued task weights are
\begin{align*}
 w_{t,N}(s)&=\frac{c_{t,N}\exp(r_t(E_s))}
 {1+\sum_{u\ne t^*}c_{u,N}\exp(r_u(E_s))},\\
 w_{t^*,N}(s)&=\frac{1}{1+\sum_{u\ne t^*}c_{u,N}\exp(r_u(E_s))},
\end{align*}
where $c_{t,N}$ are nonnegative and finite and do not depend on $s$.
Then $L_{\mathrm{ref},N}\le2G$ is an admissible choice for every $N$.
A finite $G$ exists when the finitely many continued scores are continuously
differentiable on a neighborhood of the fixed compact path.
\end{lemma}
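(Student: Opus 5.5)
The plan is to bound the $s$-derivative of the predictive $p_{\mathrm{ref}}(y\mid\vz_N(E_s))=\sum_{t}p(y\mid\xquery,t)\,w_{t,N}(s)$ uniformly in $N$, and then integrate along the straight path. Since $\|E_s-E_{s'}\|_F=|s-s'|\,d$, it suffices to show
\[
\sup_y\Bigl|\tfrac{d}{ds}\,p_{\mathrm{ref}}(y\mid\vz_N(E_s))\Bigr|\le 2Gd
\]
for every $s$. The mean value theorem then gives an increment of at most $2Gd|s-s'|=2G\|E_s-E_{s'}\|_F$. This makes $L_{\mathrm{ref},N}\le 2G$ admissible, and the bound cannot depend on $N$ because the $c_{t,N}$ are constant in $s$.

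\textbf{Softmax reparametrization.} Write the weights as a softmax over $\gT$ with logits $u_{t^*}(s)=0$ and $u_t(s)=\log c_{t,N}+r_t(E_s)$ for $t\ne t^*$. Tasks with $c_{t,N}=0$ have zero weight for all $s$ and can be dropped. This is what removes the $N$-dependence: the $c_{t,N}$ affect only the weights, never the logit derivatives $u_t'(s)=\frac{d}{ds}r_t(E_s)$. Those derivatives satisfy $|u_t'|\le Gd$ for competitors, and $u_{t^*}'=0$.

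\textbf{Jacobian bound.} Using the softmax Jacobian identity already used in Lemma~\ref{lmm:tokenwise_attention_stability},
\[
w_t'=w_t\Bigl(u_t'-\sum_{v}w_vu_v'\Bigr),
\]
so $\sum_t|w_t'|\le 2\max_t|u_t'|\le 2Gd$. Because $|p(y\mid\xquery,t)|\le1$, it follows that $|\frac{d}{ds}p_{\mathrm{ref}}|\le\sum_t|w_t'|\le 2Gd$, uniformly in $y$. (One could sharpen to $Gd$ using $\sum_t w_t'=0$ and centering the output probabilities, but the statement only asks for $2G$.)

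\textbf{Existence of $G$.} If the finitely many continued scores $r_t$ are $C^1$ on a neighborhood of the compact path, then the chain rule gives
\[
\frac{d}{ds}r_t(E_s)=\bigl\langle\nabla r_t(E_s),\,E(\widetilde\vi)-E(\vi_0)\bigr\rangle_F,
\]
whose absolute value is at most $\sup_{\text{path}}\|\nabla r_t\|_F\cdot d$. Taking $G:=\max_t\sup_{\text{path}}\|\nabla r_t\|_F$, which is finite by continuity on a compact set, supplies the required constant.

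I expect no real obstacle. The only point needing care is to keep the $N$-dependent constants $c_{t,N}$ out of the derivative bound, and the softmax reparametrization handles this cleanly.
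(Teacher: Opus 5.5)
Your proposal is correct and follows essentially the paper's route. You differentiate the softmax-normalized weights in $s$, observe that the $s$-independent $c_{t,N}$ never enter the logit derivatives, and integrate along the straight path using $\|E_s-E_{s'}\|_F=d|s-s'|$. The only difference is cosmetic: the paper writes the derivative in centered form, $\sum_{t\ne t^*}w_{t,N}(a_t-a_N)\frac{d}{ds}r_t(E_s)$, which yields the constant $G$ before it retains $2G$. You instead use the $\ell_1$ softmax-Jacobian bound to obtain $2G$ directly, and you correctly note the same sharpening.
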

\begin{proof}
Put $a_t=p(y\mid\xquery,t)\in[0,1]$ and $a_N(s)=\sum_tw_{t,N}(s)a_t$.
Differentiating the normalized weights yields
\[
 a_N'(s)=\sum_{t\ne t^*}w_{t,N}(s)(a_t-a_N(s))
                 \frac{d}{ds}r_t(E_s).
\]
Thus $|a_N'(s)|\le Gd\sum_{t\ne t^*}w_{t,N}(s)\le Gd$.
Integration and $\|E_s-E_{s'}\|_F=d|s-s'|$ prove even the constant $G$;
we retain $2G$ as a common upper bound. For $d=0$ the path is constant.
For $d>0$, compactness and the finite task set bound the directional
derivatives divided by $d$.
\end{proof}

\begin{proposition}[Uniform local transport certificate]
\label{prop:uniform_local_transport}
Under Proposition~\ref{prop:uniform_model_sensitivity} and
Lemma~\ref{lmm:uniform_reference_sensitivity}, suppose the supported anchors
satisfy $\sup_{N\in\mathcal N}r_{0,N}\le\bar r$. Admissible constants in
\eqref{eq:shared_model_transport_error} can then be chosen so that
\begin{equation}
\label{eq:uniform_local_transport_certificate}
 \sup_{N\in\mathcal N}D_N
 \le\sup_{N\in\mathcal N}\varepsilon_{M,N}(\widetilde\vi)
 \le\bar r+(C_M+2G)d.
\end{equation}
For $\epsilon>\bar r$ and $C_M+2G>0$, a path with
$d\le(\epsilon-\bar r)/(C_M+2G)$ has a certificate at most $\epsilon$.
If $C_M+2G=0$, its certificate is at most $\bar r$ at every admissible distance.
\end{proposition}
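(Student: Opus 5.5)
The plan is to verify the two inequalities in \eqref{eq:uniform_local_transport_certificate} separately and then read off the distance statement by solving a linear inequality.

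For the first inequality, $D_N\le\varepsilon_{M,N}(\widetilde\vi)$, I will redo the triangle-inequality argument of Lemma~\ref{lmm:model_reference_transport} at an arbitrary interior point $s$, not only at $s=1$. At the anchor $s=0$ we have $p_{\mathrm{ref}}=p$, so the discrepancy there is at most $r_{0,N}$. Moving along the tied path to $E_s$ adds at most $(L_{M,N}+L_{\mathrm{ref},N})\|E_s-E_0\|_F$. Since $\|E_s-E_0\|_F=sd\le d$, the discrepancy at every $s$ is at most $\varepsilon_{M,N}(\widetilde\vi)$. Taking the supremum over $s$ and $y$ gives $D_N\le\varepsilon_{M,N}$ for each $N$, and then the supremum over $N\in\mathcal N$.

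For the second inequality, I fix the admissible constants:
\begin{itemize}
\item $L_{M,N}:=C_M$, which is valid for every $N\in\mathcal N$ by Proposition~\ref{prop:uniform_model_sensitivity}; this is where length independence enters;
\item $L_{\mathrm{ref},N}:=2G$, which is valid by Lemma~\ref{lmm:uniform_reference_sensitivity}.
\end{itemize}
Here I must check that the continued task weights from \eqref{eq:reference_continued_task_odds} have the form assumed in that lemma. They do, with $c_{t,N}=B_te^{\lambda_t}$: these are nonnegative, finite and independent of $s$, because the demonstrations and query are held fixed while only the tied instruction varies. Substituting both constants together with $r_{0,N}\le\bar r$ into \eqref{eq:shared_model_transport_error} gives $\varepsilon_{M,N}\le\bar r+(C_M+2G)d$ uniformly in $N$.

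The closing claims are direct. If $C_M+2G>0$ and $d\le(\epsilon-\bar r)/(C_M+2G)$, the certificate is at most $\bar r+(\epsilon-\bar r)=\epsilon$. If $C_M+2G=0$, the certificate reduces to $\bar r$ at every distance.

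The main obstacle is the matching step in the second inequality. The Lipschitz constants in \eqref{eq:shared_model_transport_error} are defined with respect to $\|E_s-E_{s'}\|_F$, while Lemma~\ref{lmm:uniform_reference_sensitivity} bounds derivatives in $s$. These must be reconciled using $\|E_s-E_{s'}\|_F=d|s-s'|$, and the degenerate case $d=0$ must be handled separately as a constant path.
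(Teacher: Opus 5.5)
Your proposal is correct and matches the paper's proof. The paper applies the fixed-$N$ triangle-inequality transport argument to every initial segment $[0,s]$ of the tied path to get $D_N\le\varepsilon_{M,N}(\widetilde\vi)$, substitutes $L_{M,N}\le C_M$, $L_{\mathrm{ref},N}\le 2G$ and $r_{0,N}\le\bar r$, and takes the supremum over $N\in\mathcal N$, exactly as you do.

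The reconciliation you flag as the main obstacle is already handled inside Lemma~\ref{lmm:uniform_reference_sensitivity}. That lemma's conclusion is stated directly as an admissible constant with respect to $\|E_s-E_{s'}\|_F$, using $\|E_s-E_{s'}\|_F=d|s-s'|$ and treating $d=0$ as a constant path, so nothing further is needed at this level.
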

\begin{proof}
Apply the fixed-$N$ transport argument to every initial path segment.
Substitute $L_{M,N}\le C_M$ and $L_{\mathrm{ref},N}\le2G$, then take
the supremum over $N\in\mathcal N$.
\end{proof}

The proposition is deterministic conditional on the demonstration family.
If its hypotheses hold on a probability-one event for prefixes of one
infinite demonstration sequence, the conclusion is almost sure and uniform.
For a fixed context window, $\mathcal N$ contains only lengths that fit;
no $N\to\infty$ claim is then made. A common radius for a neighborhood of
endpoints requires common $C_M,G$ over that neighborhood. The proposition
does not certify that the experimental endpoints fall inside a numerically
useful radius.

\subsubsection{Supported calibration from population loss and coverage}
\label{app:anchor_loss_coverage}

\begin{proposition}[Calibration on a covered anchor family]
\label{prop:anchor_loss_coverage}
Let $Q$ be a distribution on supported prompts with true conditional $p$,
and use natural logarithms. Define its population excess log loss by
\[
 \Delta:=\mathbb E_{Z\sim Q}
 D_{\mathrm{KL}}(p(\cdot\mid Z)\Vert M(\cdot\mid Z)).
\]
For a finite anchor family $\mathcal A$ with $Q(z)\ge q_{\min}>0$ for
all $z\in\mathcal A$,
\begin{equation}
\label{eq:anchor_loss_coverage}
 \sup_{z\in\mathcal A,y}|M(y\mid z)-p(y\mid z)|
 \le\min\{1,\sqrt{\Delta/(2q_{\min})}\}.
\end{equation}
If all anchors in Proposition~\ref{prop:uniform_local_transport} belong to
$\mathcal A$, its bound holds with
$\bar r=\min\{1,\sqrt{\Delta/(2q_{\min})}\}$.
\end{proposition}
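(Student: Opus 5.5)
The plan is to reduce the uniform bound to a bound on total variation at each anchor, and then to control that total variation through Pinsker's inequality and the coverage lower bound. Fix an anchor $z\in\mathcal A$. For any $y$,
\[
|M(y\mid z)-p(y\mid z)|\le \sup_{A}\bigl|M(A\mid z)-p(A\mid z)\bigr|
=\operatorname{TV}\bigl(p(\cdot\mid z),M(\cdot\mid z)\bigr),
\]
where the supremum runs over events $A$ of outputs. Taking the singleton $A=\{y\}$ justifies the inequality. The trivial bound $\le 1$ holds because both quantities are probabilities. Pinsker's inequality in natural logarithms then gives
\[
\operatorname{TV}\le\sqrt{\tfrac12 D_{\mathrm{KL}}\bigl(p(\cdot\mid z)\Vert M(\cdot\mid z)\bigr)}.
\]

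The second step isolates the single anchor's KL from the population average. Since every KL term is nonnegative, and $Q$ is discrete on the finite anchor family with mass at least $q_{\min}$ at $z$, we have
\[
\Delta=\mathbb E_{Z\sim Q}D_{\mathrm{KL}}\bigl(p(\cdot\mid Z)\Vert M(\cdot\mid Z)\bigr)\ge Q(z)\,D_{\mathrm{KL}}\bigl(p(\cdot\mid z)\Vert M(\cdot\mid z)\bigr)\ge q_{\min}\,D_{\mathrm{KL}}\bigl(p(\cdot\mid z)\Vert M(\cdot\mid z)\bigr).
\]
It follows that $D_{\mathrm{KL}}(p(\cdot\mid z)\Vert M(\cdot\mid z))\le \Delta/q_{\min}$. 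Substituting into Pinsker yields $|M(y\mid z)-p(y\mid z)|\le\sqrt{\Delta/(2q_{\min})}$. Combining this with the trivial bound and taking the supremum over $z\in\mathcal A$ and $y$ gives \eqref{eq:anchor_loss_coverage}.

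The only delicate point is measure-theoretic. If $Q$ has atoms at the anchors, the inequality $\Delta\ge Q(z)D_{\mathrm{KL}}(\cdot)$ is immediate. I will read the hypothesis $Q(z)\ge q_{\min}$ as exactly such point masses, which is natural because prompts are finite token sequences. A further case is that the KL is infinite, which happens when $M$ assigns zero mass to some $y$ in the support of $p$. Then $\Delta=\infty$ and the claimed bound reduces to the trivial one, so it still holds.

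For the final claim, suppose every supported anchor $\vz_N(\vi_0)$, $N\in\mathcal N$, used in Proposition~\ref{prop:uniform_local_transport} lies in $\mathcal A$. Since on supported prompts the reference equals $p$, the bound just proved gives
\[
r_{0,N}=\sup_y|M(y\mid\vz_N(\vi_0))-p(y\mid\vz_N(\vi_0))|\le\min\bigl\{1,\sqrt{\Delta/(2q_{\min})}\bigr\}
\]
uniformly in $N$. This value is therefore an admissible choice of $\bar r$ in that proposition.

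I expect no step to be a real obstacle. The main care is keeping the conventions straight: the direction of KL, with $p$ first as in Pinsker, and the natural-log constant $1/2$.
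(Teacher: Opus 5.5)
Your proposal is correct and follows the same route as the paper: nonnegativity of the KL terms gives $Q(z)\,D_{\mathrm{KL}}(p(\cdot\mid z)\Vert M(\cdot\mid z))\le\Delta$, Pinsker's inequality is applied at each anchor, and the coordinate difference is bounded by total variation. The one difference is that the paper proves Pinsker for completeness (reducing to a Bernoulli pair via the log-sum inequality), whereas you cite it; your extra remarks on the infinite-KL case and on the admissible choice of $\bar r$ are also sound.
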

\begin{proof}
Nonnegativity gives
$Q(z)D_{\mathrm{KL}}(p(\cdot\mid z)\Vert M(\cdot\mid z))\le\Delta$.
For completeness, for discrete probability distributions $P,R$ let
$A=\{y:P(y)\ge R(y)\}$. The log-sum inequality gives
$D_{\mathrm{KL}}(P\Vert R)\ge
D_{\mathrm{KL}}(\operatorname{Ber}(P(A))\Vert\operatorname{Ber}(R(A)))$.
For fixed $b\in(0,1)$ the binary divergence as a function of $a$ has second
derivative $1/[a(1-a)]\ge4$ and vanishes with zero first derivative at $a=b$.
It is at least $2(a-b)^2$; boundary cases follow by limits. Since
$P(A)-R(A)=\|P-R\|_{\mathrm{TV}}$, this proves Pinsker's inequality.
Apply it at each anchor and bound a coordinate difference by total variation.
\end{proof}

Here $\Delta$ is excess \emph{population} loss, not empirical training loss.
A learning-theoretic analysis would also have to control estimation,
approximation, and optimization error. Coverage is substantive: a probability
distribution cannot assign a common positive mass to infinitely many distinct
prompts. This finite-family corollary therefore does not establish uniform
calibration over all lengths from one average pretraining-loss bound.

For a distributional alternative, suppose a target anchor law $R_N$ is
dominated by $Q$ with $dR_N/dQ\le C$ over the chosen lengths. Pinsker's
inequality and Cauchy--Schwarz give
\[
 \mathbb E_{R_N}\sup_y|M(y\mid Z)-p(y\mid Z)|\le\sqrt{C\Delta/2}.
\]
With a common path-stability bound and path length at most $d$, expected
off-support discrepancy is at most $\sqrt{C\Delta/2}+(C_M+2G)d$.
This is an expectation guarantee for each $N$, not an almost-sure bound on
the supremum over $N$. The density-ratio premise must itself be justified.

\subsubsection{A sharper certificate based on residual variation}
\label{app:residual_variation_transport}

\begin{proposition}[Residual-variation transport]
\label{prop:residual_variation_transport}
Suppose $g_N(s,y):=M(y\mid\vz_N(E_s))-p_{\mathrm{ref}}(y\mid\vz_N(E_s))$
is absolutely continuous in $s$ for each output, with an integrable common
envelope for its derivatives. Define
\begin{equation}
\label{eq:residual_variation_certificate}
 \varepsilon^{\mathrm{res}}_{M,N}
 :=r_{0,N}+\int_0^1\sup_y|\partial_s g_N(s,y)|\,ds.
\end{equation}
Then $D_N\le\varepsilon^{\mathrm{res}}_{M,N}$. With the same Lipschitz
constants as in \eqref{eq:shared_model_transport_error},
$\varepsilon^{\mathrm{res}}_{M,N}\le\varepsilon_{M,N}$.
\end{proposition}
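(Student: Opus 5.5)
The plan is a fundamental-theorem-of-calculus argument for the residual $g_N$ along the tied path, followed by a comparison of the derivative envelope with the two Lipschitz constants. Fix $N$ and an output $y$. Since $g_N(\cdot,y)$ is absolutely continuous on $[0,1]$, for every $s\in[0,1]$ we can write
\[
g_N(s,y)=g_N(0,y)+\int_0^s\partial_u g_N(u,y)\,du .
\]
At $s=0$ the prompt $\vz_N(E_0)=\vz_N(\vi_0)$ is supported, so $p_{\mathrm{ref}}=p$ there. Hence $|g_N(0,y)|\le r_{0,N}$ by \eqref{eq:shared_anchor_model_gap}.

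For the integral term, I would bound $|\partial_u g_N(u,y)|\le\sup_{y'}|\partial_u g_N(u,y')|$ and extend the integration range from $[0,s]$ to $[0,1]$. The integrable common envelope guarantees that this function of $u$ has a finite integral. Measurability of the pointwise supremum is the one technical point: it is immediate when the output set is finite or countable, and otherwise one replaces the supremum by the measurable envelope, which only enlarges the bound. Taking the supremum over $y$ and then over $s$ yields $D_N\le\varepsilon^{\mathrm{res}}_{M,N}$.

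For the comparison with \eqref{eq:shared_model_transport_error}, I would use the Lipschitz bounds along the path. Since $\|E_s-E_{s'}\|_F=\Delta|s-s'|$, the two Lipschitz conditions imply that $M(y\mid\vz_N(E_s))$ is Lipschitz in $s$ with constant $L_{M,N}\Delta$, and $p_{\mathrm{ref}}(y\mid\vz_N(E_s))$ with constant $L_{\mathrm{ref},N}\Delta$, uniformly in $y$. Because $g_N$ is absolutely continuous, its derivative exists almost everywhere and equals the limit of difference quotients. Each such quotient is bounded by $(L_{M,N}+L_{\mathrm{ref},N})\Delta$ via the triangle inequality. Therefore $\sup_y|\partial_s g_N(s,y)|\le (L_{M,N}+L_{\mathrm{ref},N})\Delta$ for almost every $s$. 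Integrating over $[0,1]$ and adding $r_{0,N}$ gives exactly $\varepsilon_{M,N}(\widetilde\vi)$.

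The main obstacle is the almost-everywhere differentiability step when the output set is uncountable. There the bound must hold off a single null set for all $y$ at once. I would obtain this by using the common envelope and interpreting $\sup_y|\partial_s g_N|$ as its essential bound, so that the uniform Lipschitz estimate transfers directly. This matches the paper's finite-vocabulary setting, where the issue disappears.
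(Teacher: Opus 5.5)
Your proof is correct and follows the paper's argument: apply the fundamental theorem of calculus to the absolutely continuous residual, bound $|g_N(0,y)|\le r_{0,N}$ at the supported anchor, extend the integral to $[0,1]$, and bound the derivative almost everywhere by $(L_{M,N}+L_{\mathrm{ref},N})\Delta$ using the two path Lipschitz constants. Bounding the difference quotients of $g_N$ directly, rather than differentiating $M$ and $p_{\mathrm{ref}}$ separately, is a slightly cleaner route to that last bound, and your measurability caveat does not arise in the paper's finite-output setting.
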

\begin{proof}
The fundamental theorem of calculus bounds $|g_N(s,y)|$ by
$|g_N(0,y)|+\int_0^s|\partial_u g_N(u,y)|\,du$. Take suprema and extend
the integral to $[0,1]$. The individual derivative bounds are $L_{M,N}d$
and $L_{\mathrm{ref},N}d$ almost everywhere, proving the second inequality.
\end{proof}

Every endpoint model bound obtained by transport remains valid with
$\varepsilon^{\mathrm{res}}_{M,N}$ replacing the coarser certificate.
A uniform bound on these residual certificates gives a uniform model bound
without asserting a uniform bound on the original certificates. If the
predictors agree throughout the path, the residual certificate is zero.
Path-grid errors or sampled derivatives alone do not certify the integral;
an envelope or a controlled discretization error is still required.

%% file: NeurIPS2026/appendix/appendix_cot_modified.tex
\label{app:cot_main_proof}

This appendix formalizes the two claims in
Section~\ref{subsec:cot_main}: the Bayesian latent-task reference for direct ICL
cannot assign posterior mass to a zero-prior target task, whereas its
fixed-length, explicitly composed reference can converge by invoking
pretraining-supported subtasks. This is not an impossibility result for an
arbitrary Transformer. A guarantee for extracted step prompts does not by
itself imply the same guarantee for a full-context autoregressive rollout.
We quantify this distinction by a rollout discrepancy, prove the actual-model
bound without exact rollout compatibility, and recover exact compatibility
as a special case. The reference-rollout claim is stated separately.

\subsection{Reduction of an autoregressive CoT rollout to supported steps}

Fix an integer $K$ that does not depend on $N$. For every
$k=0,\ldots,K-1$, fix a dependency set
$J_k\subseteq\{0,\ldots,k\}$ with $k\in J_k$, independent of $N$, and a fixed
history map $\operatorname{enc}_k:\mathcal S^{|J_k|}\to\mathcal S$ of the
selected states into one step input.
For any trajectory, write
\begin{equation}
    \label{eq:cot_composition_app}
    h^{(k)}:=\operatorname{enc}_k\!\left((x^{(j)})_{j\in J_k}\right),
    \qquad
    x^{(k+1)}:=f_{t_k}(h^{(k)}),
    \qquad t_0,\ldots,t_{K-1}\in\gT.
\end{equation}
This finite-history recursion defines the target map by
$f_{t^\dagger}(x):=x^{(K)}$ when $x^{(0)}=x$. The special case
$J_k=\{k\}$ recovers ordinary composition of one-step maps. For a query
$\xquery$, define its correct intermediate states and histories recursively by
\begin{equation}
    \label{eq:cot_correct_states_app}
    x_q^{(0)}:=\xquery,
    \qquad
    h_q^{(k)}:=\operatorname{enc}_k\!\left((x_q^{(j)})_{j\in J_k}\right),
    \qquad
    x_q^{(k+1)}:=f_{t_k}(h_q^{(k)}),
    \quad k=0,\ldots,K-1.
\end{equation}
Thus $x_q^{(K)}=f_{t^\dagger}(\xquery)$.
Let $\mathcal H^{(k)}$ denote the set of encoded histories $h_q^{(k)}$
reachable at step $k$ as the initial query ranges over the finite state space
$\mathcal S$.

The full prompt $\vz_N$ contains the demonstration trajectories
$(x_n^{(0)},\ldots,x_n^{(K)})$. For each step $k$, let
\begin{equation}
    \label{eq:cot_step_prompt_app}
    h_n^{(k)}:=\operatorname{enc}_k\!\left((x_n^{(j)})_{j\in J_k}\right),
    \qquad
    \vz_N^{(k)}(h)
    :=\Bigl(\vi^{(k)},
       \bigl(h_n^{(k)},x_n^{(k+1)}\bigr)_{n=1}^{N},h\Bigr)
\end{equation}
be the corresponding single-step ICL prompt.

\begin{assumption}[Supported extracted-step domains]
\label{ass:cot_supported_step_domains}
Each $\vi^{(k)}$ is supported for $t_k$, and every correct step prompt in
\eqref{eq:cot_step_prompt_app} lies in the corresponding pretraining-supported
step domain. Any use of a padded model evaluation must separately justify the
local-model bound below and retain its padding discrepancy.
\end{assumption}

\begin{assumption}[Exact finite-history rollout compatibility for a specified predictor]
\label{ass:cot_rollout_compatibility}
For a specified predictor $R$ (either $p$ or $M$), every
$k=0,\ldots,K-1$, every output $u$, and every correct prefix
$x_q^{(0:k)}$, the next-token-block conditional of the full CoT rollout agrees
with the conditional induced by the extracted single-step prompt:
\begin{equation}
    \label{eq:cot_rollout_compatibility}
    R\!\left(
      \widehat x_q^{(k+1)}=u
      \mid \vz_N,
      \widehat x_q^{(0:k)}=x_q^{(0:k)}
    \right)
    =R\!\left(u\mid\vz_N^{(k)}(h_q^{(k)})\right).
\end{equation}
An assertion for $R=p$ is distinct from one for $R=M$; neither implies the
other. Conditionals are understood as generation kernels evaluated at the
specified prefix, including prefixes with zero rollout probability.
\end{assumption}

Assumption~\ref{ass:cot_rollout_compatibility} is a sufficient condition for
transferring a step-prompt bound with no compatibility correction. It is a
substantive finite-history conditional-sufficiency
assumption: on the correct prefix, the history--next-state demonstrations and
the selected history $h_q^{(k)}$ contain all information from the full CoT
prompt that is relevant to the next step. The states selected by $J_k$ are not
being treated as padding. Rather, under the deterministic subtask model the
correct process satisfies
\begin{equation}
    \label{eq:cot_markov_state_app}
    p\bigl(x_q^{(k+1)}\mid x_q^{(0:k)},t_k\bigr)
    =p\bigl(x_q^{(k+1)}\mid h_q^{(k)},t_k\bigr),
\end{equation}
because $x_q^{(k+1)}=f_{t_k}(h_q^{(k)})$. Thus the selected finite history,
rather than the current state alone, summarizes the correct query prefix for
the target transition. Equation
\eqref{eq:cot_rollout_compatibility} requires the model conditional to respect
the same sufficiency property. This does not follow from virtual padding:
padding accounts only for the future query positions that have not yet been
generated. States outside $J_k$ may still appear in the full prompt, but the
assumption requires that they provide no additional information about the next
step once $h_q^{(k)}$ is given. If this finite-history sufficiency fails, then
the equality in
\eqref{eq:cot_rollout_compatibility} can fail and an additional approximation
error must be included in the final bound. In particular, deterministic
sufficiency for the target transition does not establish sufficiency for a
learned model's conditional. Enlarging $J_k$ to include the complete query
history also does not remove dependence on other demonstrations, absolute
positions, or prompt formatting; the enlarged step domain must still satisfy
supportedness and identifiability.

\paragraph{Quantifying the failure of exact compatibility.}
Write $u_k^*:=f_{t_k}(h_q^{(k)})$ and define
\begin{align}
 Q^M_{k,N}(u\mid x_q^{(0:k)})
 &:=M(\widehat x_q^{(k+1)}=u\mid\vz_N,
                  \widehat x_q^{(0:k)}=x_q^{(0:k)}),\\
 P^M_{k,N}(u\mid h_q^{(k)})
 &:=M(u\mid\vz_N^{(k)}(h_q^{(k)})).
\end{align}
Both kernels are distributions over the same next-state token blocks,
including a failure outcome for malformed or nonterminating outputs if
needed. The full kernel includes all actually retained context. For fixed
demonstrations, take suprema over correct query prefixes and define
\begin{align}
 \rho_{k,N}
 &:=\sup_{x_q^{(0:k)}}
 \|Q^M_{k,N}(\cdot\mid x_q^{(0:k)})
       -P^M_{k,N}(\cdot\mid h_q^{(k)})\|_{\mathrm{TV}},
 \label{eq:cot_rollout_discrepancy}\\
 \rho^+_{k,N}
 &:=\sup_{x_q^{(0:k)}}
 [P^M_{k,N}(u_k^*\mid h_q^{(k)})
       -Q^M_{k,N}(u_k^*\mid x_q^{(0:k)})]_+.
 \label{eq:cot_one_sided_discrepancy}
\end{align}
Here $[a]_+=\max\{a,0\}$ and
$\|P-Q\|_{\mathrm{TV}}=\frac12\sum_u|P(u)-Q(u)|$.
In particular $0\le\rho^+_{k,N}\le\rho_{k,N}\le1$.
The one-sided discrepancy measures only a decrease in correct-transition
probability caused by the full context. Additional context that improves that
probability does not incur a penalty. Neither quantity is assumed small by
definition. The actual-model theorem below uses $\rho^+_{k,N}$ and therefore
also holds with the more conservative $\rho_{k,N}$.

\paragraph{Virtual padding for the step-$k$ length mismatch.}
At step $k$, the current input may be shorter than its pretraining reference for
two closely related reasons. First, the suffix
$x_q^{(k+1)},\ldots,x_q^{(K)}$ has not yet been generated. Second, the user's
prompt may simply be shorter than the longer contexts commonly observed during
pretraining. These are the same dimensional-alignment problem: in both cases we
choose the pretraining reference length $M_k$ and pad the shorter step prompt to
that length. The only difference is where the padding tokens are inserted. Let
\begin{equation}
    \label{eq:cot_virtual_padding_app}
    \operatorname{Pad}_{M_k}\!\left(\vz_N^{(k)}(h)\right)
\end{equation}
denote the resulting prompt after inserting $m_k$ virtual padding tokens while
leaving all informative tokens unchanged and preserving their relative order.
The padded step prompt and the pretraining prompt then belong to the same
fixed-length ambient space.

When the padding occupies only future positions, a causal mask gives those
positions zero attention weight, so the padding has no effect on the current
conditional. For padding inserted elsewhere to match a longer pretraining
context, or in an abstract fixed-length attention model without an exact mask,
the effect is only approximately zero. Lemma~\ref{lmm:pad_output_general}
controls the attention perturbation by the aggregate padding mass, of order
\begin{equation}
    \label{eq:cot_padding_mass_app}
    m_k e^{-\Delta_{\mathrm{att},k}},
\end{equation}
where $\Delta_{\mathrm{att},k}$ is the attention-logit gap between an informative token and
the most competitive padding token. Thus a large amount of weakly masked
padding can still be non-negligible.

To record this approximation explicitly, define the padding discrepancy
\begin{equation}
    \label{eq:cot_padding_discrepancy_app}
    \delta_{k,\mathrm{pad}}(N)
    :=\sup_{h,u}\left|
      M\!\left(u\mid\vz_N^{(k)}(h)\right)
      -M\!\left(u\mid
        \operatorname{Pad}_{M_k}(\vz_N^{(k)}(h))\right)
    \right|.
\end{equation}
Under exact causal masking this discrepancy is zero for padding placed only in
future positions. Under the attention-stability conditions of
Appendix~\ref{app:padding_general}, it is small when the induced network-output
perturbation is small. Combining the padded reference bound with the
model-transport inequality in Section~\ref{sec:instruction}, the corresponding
unpadded model error is at most
\begin{equation}
    \label{eq:cot_step_with_padding_app}
    \sup_{h\in\mathcal H^{(k)}}
    \left[1-M\!\left(f_{t_k}(h)\mid\vz_N^{(k)}(h)\right)\right]
    \le \beta_ke^{-\alpha_kN}+\varepsilon_M^{(k)}(N)
    +\delta_{k,\mathrm{pad}}(N).
\end{equation}
Here $\varepsilon_M^{(k)}(N)$ is an upper bound, uniform over
$h\in\mathcal H^{(k)}$, on the transport certificates in
\eqref{eq:app_model_transport_error} for the step-$k$ prompts being compared
with the reference. For supported unpadded step prompts one may take $r_0$;
otherwise the supremum over the finitely many correct-history paths suffices
at fixed $N$. The reference bound must apply to the same local or padded
prompts used in this comparison; inserting padding alone does not establish
supportedness or reference invariance. A smaller residual-variation
certificate from Proposition~\ref{prop:residual_variation_transport} may be
used when its hypotheses hold uniformly over these histories.

Including the one-sided rollout discrepancy, the sequential argument below gives
\begin{equation}
    \label{eq:cot_total_with_padding_app}
    \text{final CoT error}
    \le
    \sum_{k=0}^{K-1}
    \left(
      \beta_ke^{-\alpha_kN}
      +\varepsilon_M^{(k)}(N)
      +\delta_{k,\mathrm{pad}}(N)
      +\rho^+_{k,N}
    \right).
\end{equation}
The formal theorem below retains both corrections explicitly. Padding controls
the local prompt versus its padded version; it does not control the full
prompt versus the extracted local prompt.

For deterministic subtasks, define the step error by
\begin{equation}
    \label{eq:cot_step_error_app}
    \epsilon_k(N)
    :=
    \sup_{h\in\mathcal H^{(k)}}
    \left[
      1-p\!\left(f_{t_k}(h)\mid\vz_N^{(k)}(h)\right)
    \right],
\end{equation}
where $\mathcal H^{(k)}$ is the set of encoded histories reachable at step
$k$ under the correct recursion. The desired next state for history $h$ is
explicitly $f_{t_k}(h)$.

\begin{lemma}[Uniform supported-step convergence]
\label{lmm:cot_uniform_step_convergence}
Suppose that, for every $k$, the extracted pairs
$\bigl(h_n^{(k)},x_n^{(k+1)}\bigr)_{n=1}^{N}$ follow the supported single-step
generation mechanism and satisfy Assumption~\ref{assp:identifiability} for
task $t_k$. Then there are constants $\alpha_k,\beta_k>0$, independent of $N$,
such that, almost surely for all sufficiently large $N$,
\begin{equation}
    \label{eq:cot_uniform_step_convergence_app}
    \epsilon_k(N)\le\beta_k e^{-\alpha_kN}.
\end{equation}
\end{lemma}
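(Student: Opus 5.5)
The plan is to apply the supported-instruction analysis of Theorem~\ref{thm:decay_1_5_app} to each step prompt $\vz_N^{(k)}(h)$. The key point is that $\mathcal H^{(k)}$ is finite, which upgrades a pointwise bound to a uniform one. For fixed $k$, the step prompt has the format-1 factorization \eqref{eq:likelihood_factorization_shared}: a single supported instruction $\vi^{(k)}$, $N$ i.i.d.\ pairs $W_n=(h_n^{(k)},x_n^{(k+1)})$ from $t_k$, and the query history $h$. Hence the posterior odds take the form \eqref{eq:posterior_odds_representation}, namely $A_t^{(k)}(\vi^{(k)},h)B_t^{(k)}$. Here $B_t^{(k)}$ depends only on the demonstrations and not on $h$.

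First I invoke Lemma~\ref{lmm:Bt_exp_decay} with target $t_k$. This gives $\alpha_k:=\tfrac12\min_{t\neq t_k}\kappa^{(k)}_t>0$ and, almost surely for $N\ge N_0^{(k)}$, the bound $B_t^{(k)}\le e^{-\alpha_k N}$ for all $t\ne t_k$. The threshold is uniform in $h$ because $B_t^{(k)}$ does not involve the query.

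Second, I bound the non-demonstration ratio uniformly over $h\in\mathcal H^{(k)}$, following Lemma~\ref{lmm:At_uniform}. The set $\mathcal H^{(k)}\subseteq\mathcal S$ is finite, $\vi^{(k)}$ is a fixed supported instruction (Assumption~\ref{ass:cot_supported_step_domains}), and the prior ratios are finite. It therefore suffices that $p(h\mid t_k)>0$ for each reachable $h$, so that each ratio $p(h\mid t)/p(h\mid t_k)$ is finite. This is where the supported-step-domain assumption is used. I then set $A^{(k)}_{\max}:=\max_{h\in\mathcal H^{(k)}}\max_{t\ne t_k}A^{(k)}_t(\vi^{(k)},h)<\infty$ and $\beta_k:=(|\gT|-1)A^{(k)}_{\max}$.

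Third, I translate posterior concentration into the deterministic step error. Since $f_{t_k}$ is deterministic, $p(f_{t_k}(h)\mid h,t_k)=1$, so
\[
1-p\bigl(f_{t_k}(h)\mid\vz_N^{(k)}(h)\bigr)=\sum_{t\ne t_k}p(t\mid\vz_N^{(k)}(h))\bigl(1-p(f_{t_k}(h)\mid h,t)\bigr)\le\sum_{t\ne t_k}p(t\mid\vz_N^{(k)}(h)).
\]
This is at most $\sum_{t\ne t_k}A_tB_t\le\beta_k e^{-\alpha_k N}$, as in \eqref{eq:wrong_mass_exp_decay}. Taking the supremum over the finite set $\mathcal H^{(k)}$ gives \eqref{eq:cot_uniform_step_convergence_app}.

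The main obstacle is the uniformity in the second step. It requires that every reachable history have positive probability under $t_k$, and that the almost-sure event not depend on the query. The latter is handled by the query-independence of $B_t^{(k)}$. The former relies on finiteness of $\mathcal S$ and on Assumption~\ref{ass:cot_supported_step_domains}; if some reachable $h$ had $p(h\mid t_k)=0$, the query ratio would be infinite and the argument would break. Finally, since $K$ is fixed, the step-wise almost-sure events can be intersected if a common threshold over all $k$ is later needed.
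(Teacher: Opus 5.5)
Your proof is correct and takes the same route as the paper. You apply the supported-instruction bound of Theorem~\ref{thm:decay_1_5_app} to the step-$k$ prompt with output $f_{t_k}(h)$, then use finiteness of $\mathcal H^{(k)}$ to make the bound uniform. The paper uses the theorem as a black box for each $h$ and takes $\min_h\alpha_{k,h}$, $\max_h\beta_{k,h}$, $\max_h N_{k,h}$ over finitely many almost-sure events. You instead open the proof and note that $B_t^{(k)}$, and hence the rate and the almost-sure threshold, do not depend on the query, so only the prefactor needs maximizing. You also state explicitly the positivity $p(h\mid t_k)>0$ for every reachable history, which the paper's appeal to Lemma~\ref{lmm:At_uniform} uses only implicitly.
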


\begin{proof}
Fix $k$ and $h\in\mathcal H^{(k)}$. Theorem~\ref{thm:decay_1_5}, applied to the
supported step task $t_k$ and the correct output $f_{t_k}(h)$, gives constants
$\alpha_{k,h},\beta_{k,h}>0$ and an almost-sure finite threshold $N_{k,h}$ such
that
\[
    1-p\!\left(f_{t_k}(h)\mid\vz_N^{(k)}(h)\right)
    \le \beta_{k,h}e^{-\alpha_{k,h}N}
\]
for all $N\ge N_{k,h}$. Because $J_k$ is finite and each state has bounded
length over the finite vocabulary $\gV$, the reachable encoded-history set
$\mathcal H^{(k)}$ is finite. We may therefore take
\[
    \alpha_k:=\min_{h\in\mathcal H^{(k)}}\alpha_{k,h}>0,
    \qquad
    \beta_k:=\max_{h\in\mathcal H^{(k)}}\beta_{k,h}<\infty,
    \qquad
    N_k:=\max_{h\in\mathcal H^{(k)}}N_{k,h}<\infty.
\]
The intersection of the finitely many probability-one events also has
probability one. Taking the supremum over $h\in\mathcal H^{(k)}$ proves
\eqref{eq:cot_uniform_step_convergence_app} for all $N\ge N_k$.
\end{proof}

\subsection{Formal rollout bound and its exact-compatibility specialization}

\begin{theorem}[Model bound with rollout discrepancy]
\label{thm:cot_main_app}
Let the fixed-length finite-history decomposition
\eqref{eq:cot_composition_app}, Assumption~\ref{ass:cot_supported_step_domains},
and the hypotheses of Lemma~\ref{lmm:cot_uniform_step_convergence} hold,
with the local-model bound \eqref{eq:cot_step_with_padding_app}.
If the theoretical rollout additionally satisfies
Assumption~\ref{ass:cot_rollout_compatibility} for $R=p$, then almost surely
for all sufficiently large $N$ and every output $y$,
\begin{equation}
    \label{eq:cot_main_bound_app}
    \left|
      p\!\left(y\mid\vz_N\right)
      -p(y\mid\xquery,t^\dagger)
    \right|
    \le
    \sum_{k=0}^{K-1}\beta_ke^{-\alpha_kN}
    \le
    K\beta_{\max}e^{-\alpha_{\min}N},
\end{equation}
where $\beta_{\max}:=\max_k\beta_k$ and
$\alpha_{\min}:=\min_k\alpha_k>0$.
For the actual pretrained model, define
\begin{equation}
    \label{eq:cot_formal_step_total_error}
    \eta_k(N):=\beta_ke^{-\alpha_kN}
    +\varepsilon_M^{(k)}(N)+\delta_{k,\mathrm{pad}}(N)+\rho^+_{k,N}.
\end{equation}
Without requiring exact compatibility for either $M$ or a theoretical full
rollout, the actual full autoregressive rollout satisfies, almost surely for
all sufficiently large $N$ and every $y$,
\begin{equation}
    \label{eq:cot_model_bound_app}
    \begin{aligned}
    \left|M\!\left(y\mid\vz_N\right)
    -p(y\mid\xquery,t^\dagger)\right|
    &\le 1-\prod_{k=0}^{K-1}(1-\min\{1,\eta_k(N)\})\\
    &\le\min\left\{1,\sum_{k=0}^{K-1}\eta_k(N)\right\}.
    \end{aligned}
\end{equation}
\end{theorem}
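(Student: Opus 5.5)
Both parts of Theorem~\ref{thm:cot_main_app} are proved by the same route: a sequential union bound along the correct trajectory. Since each $f_{t_k}$ is deterministic, $p(y\mid\xquery,t^\dagger)=\mathbf 1\{y=x_q^{(K)}\}$. For any rollout predictor $R$, define the event $G_k$ that the first $k$ generated blocks equal the correct prefix $x_q^{(1:k)}$, and write
\[
P_k:=R\bigl(\widehat x_q^{(k+1)}=x_q^{(k+1)}\mid\vz_N,\widehat x_q^{(0:k)}=x_q^{(0:k)}\bigr)
\]
for the correct-transition kernels. The chain rule gives $R(G_K)=\prod_{k}P_k$. Consider first the case $y=x_q^{(K)}$. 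Then $|R(y\mid\vz_N)-1|=1-R(y\mid\vz_N)\le 1-R(G_K)$, because $G_K$ implies the output equals $y$. If $y\ne x_q^{(K)}$, then $R(y\mid\vz_N)\le 1-R(G_K)$. So in both cases the error is at most $1-\prod_k P_k$, and it remains to lower bound each $P_k$.

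\textbf{Reference part.} Take $R=p$. Assumption~\ref{ass:cot_rollout_compatibility} with $R=p$ identifies $P_k$ with $p(f_{t_k}(h_q^{(k)})\mid\vz_N^{(k)}(h_q^{(k)}))$. Since $h_q^{(k)}\in\mathcal H^{(k)}$, this is at least $1-\epsilon_k(N)$, and Lemma~\ref{lmm:cot_uniform_step_convergence} gives $\epsilon_k(N)\le\beta_ke^{-\alpha_kN}$ eventually almost surely. We intersect the finitely many ($K$) almost-sure events and take the maximum of their thresholds. The elementary inequality $1-\prod_k(1-a_k)\le\sum_k a_k$ for $a_k\in[0,1]$ then yields the first inequality of \eqref{eq:cot_main_bound_app}. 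The second inequality is immediate from $\beta_k\le\beta_{\max}$ and $e^{-\alpha_kN}\le e^{-\alpha_{\min}N}$.

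\textbf{Model part.} Take $R=M$, now without any compatibility assumption. We peel off the discrepancies in turn. First, by the definition \eqref{eq:cot_one_sided_discrepancy},
\[
P_k=Q^M_{k,N}(u_k^*\mid x_q^{(0:k)})\ge P^M_{k,N}(u_k^*\mid h_q^{(k)})-\rho^+_{k,N}.
\]
Next, the local-model bound \eqref{eq:cot_step_with_padding_app} gives $P^M_{k,N}(u_k^*\mid h_q^{(k)})\ge 1-\beta_ke^{-\alpha_kN}-\varepsilon_M^{(k)}(N)-\delta_{k,\mathrm{pad}}(N)$. This bound is itself obtained by composing Lemma~\ref{lmm:cot_uniform_step_convergence} with Lemma~\ref{lmm:model_reference_transport} and the padding triangle inequality. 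Together these give $P_k\ge 1-\eta_k(N)$, and since $P_k\ge0$ trivially, $P_k\ge 1-\min\{1,\eta_k(N)\}$. Substituting into $1-\prod_kP_k$ gives the product bound in \eqref{eq:cot_model_bound_app}. The final bound follows from the same elementary inequality, together with the observation that the left-hand side never exceeds $1$.

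\textbf{Main obstacle.} The delicate point is justifying the chain-rule factorization over \emph{correct} prefixes when some prefixes may have zero probability under $M$. The generation-kernel convention in Assumption~\ref{ass:cot_rollout_compatibility} handles this: kernels are evaluated at prefixes regardless of their probability, and a product containing a zero factor only weakens the bound. A second point to check is that the supremum defining $\rho^+_{k,N}$ really ranges over the correct prefix actually realized, and that the histories $h_q^{(k)}$ lie in the set $\mathcal H^{(k)}$ over which $\varepsilon_M^{(k)}$ and $\delta_{k,\mathrm{pad}}$ are uniform. Both hold by the definitions, so the argument needs no control of off-trajectory behavior.
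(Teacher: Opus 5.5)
Your proposal is correct and follows essentially the paper's proof. Both arguments control, over correct prefixes, the probability that every transition is correct, use the one-sided discrepancy $\rho^+_{k,N}$ together with the local-model bound to lower-bound each model step, and finish with the two-case argument on $y=f_{t^\dagger}(\xquery)$. The only difference is cosmetic: for the reference part you use the chain-rule product and then $1-\prod_k(1-a_k)\le\sum_k a_k$, whereas the paper runs the additive recursion $\sP(F_k^c\mid\vz_N)\le\sP(F_{k-1}^c\mid\vz_N)+\epsilon_k(N)$ directly.
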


\begin{proof}
For the reference claim, assume compatibility only for $R=p$.
Let $\widehat x_q^{(0)}:=\xquery$ and let
$\widehat x_q^{(1)},\ldots,\widehat x_q^{(K)}$ be the states generated by the
full autoregressive CoT rollout. For each $k$, let
$\widehat h_q^{(k)}:=\operatorname{enc}_k((\widehat x_q^{(j)})_{j\in J_k})$.
Define
\begin{equation}
    \label{eq:cot_correct_event_app}
    E_k
    :=\left\{
      \widehat x_q^{(k+1)}
      =f_{t_k}(\widehat h_q^{(k)})
    \right\},
    \qquad
    F_k:=\bigcap_{j=0}^{k}E_j,
\end{equation}
with $F_{-1}$ equal to the whole sample space.

On $F_{k-1}$, induction using \eqref{eq:cot_correct_states_app} gives
$\widehat x_q^{(j)}=x_q^{(j)}$ for all $j\le k$, and in particular
$\widehat h_q^{(k)}=h_q^{(k)}\in\mathcal H^{(k)}$. Assumption~\ref{ass:cot_rollout_compatibility} and
\eqref{eq:cot_step_error_app} therefore imply
\begin{align}
    \sP(E_k^c\mid F_{k-1},\vz_N)
    &=1-p\!\left(
        f_{t_k}(h_q^{(k)})
        \mid\vz_N^{(k)}(h_q^{(k)})
      \right) \\
    &\le\epsilon_k(N).
    \label{eq:cot_conditional_step_failure_app}
\end{align}
No independence between the rollout steps is required. The conditional
statement is used only when $\sP(F_{k-1}\mid\vz_N)>0$; otherwise the joint
event $E_k^c\cap F_{k-1}$ has zero probability and the recursion below is
immediate.

Using $F_k^c=F_{k-1}^c\cup(E_k^c\cap F_{k-1})$ and the disjointness of this
union,
\begin{align}
    \sP(F_k^c\mid\vz_N)
    &\le
    \sP(F_{k-1}^c\mid\vz_N)
    +\sP(E_k^c\mid F_{k-1},\vz_N)
      \sP(F_{k-1}\mid\vz_N) \\
    &\le
    \sP(F_{k-1}^c\mid\vz_N)+\epsilon_k(N).
\end{align}
Iterating this inequality yields
\begin{equation}
    \label{eq:cot_rollout_failure_app}
    \sP(F_{K-1}^c\mid\vz_N)
    \le\sum_{k=0}^{K-1}\epsilon_k(N).
\end{equation}

On $F_{K-1}$, every step is correct and
\[
    \widehat x_q^{(K)}
    =f_{t^\dagger}(\xquery).
\]
Consequently,
\begin{equation}
    \label{eq:cot_correct_output_mass_app}
    p\!\left(
      f_{t^\dagger}(\xquery)\mid\vz_N
    \right)
    \ge1-\sum_{k=0}^{K-1}\epsilon_k(N).
\end{equation}
Because the target-task conditional is deterministic,
$p(y\mid\xquery,t^\dagger)=\mathbb{1}\{y=f_{t^\dagger}(\xquery)\}$. If
$y=f_{t^\dagger}(\xquery)$, equation
\eqref{eq:cot_correct_output_mass_app} bounds $1-p(y\mid\vz_N)$.
If $y\neq f_{t^\dagger}(\xquery)$, then
$p(y\mid\vz_N)$ is at most the failure probability in
\eqref{eq:cot_rollout_failure_app}. Thus, in both cases,
\begin{equation}
    \left|
      p(y\mid\vz_N)-p(y\mid\xquery,t^\dagger)
    \right|
    \le\sum_{k=0}^{K-1}\epsilon_k(N).
\end{equation}
Finally apply Lemma~\ref{lmm:cot_uniform_step_convergence} and use the finiteness
of $K$:
\[
    \sum_{k=0}^{K-1}\epsilon_k(N)
    \le\sum_{k=0}^{K-1}\beta_ke^{-\alpha_kN}
    \le K\beta_{\max}e^{-\alpha_{\min}N}.
\]
This proves \eqref{eq:cot_main_bound_app}.

For the model-level claim, start independently with states sampled from $M$.
On a correct prefix, \eqref{eq:cot_one_sided_discrepancy} gives
\[
 1-Q^M_{k,N}(u_k^*\mid x_q^{(0:k)})
 \le 1-P^M_{k,N}(u_k^*\mid h_q^{(k)})+\rho^+_{k,N}.
\]
Together with \eqref{eq:cot_step_with_padding_app}, this implies, whenever
$\sP_M(F_{k-1}\mid\vz_N)>0$,
\[
    \sP_M(E_k^c\mid F_{k-1},\vz_N)\le\min\{1,\eta_k(N)\}.
\]
Set $a_k=\min\{1,\eta_k(N)\}$. Multiplication of the conditional success
bounds yields $\sP_M(F_{K-1}\mid\vz_N)\ge\prod_k(1-a_k)$.
If a preceding success event has probability zero, the recursive inequality
$\sP_M(F_k\mid\vz_N)\ge(1-a_k)\sP_M(F_{k-1}\mid\vz_N)$ remains valid
without conditioning on a null event. The elementary inequality
$1-\prod_k(1-a_k)\le\sum_ka_k$ gives the additive bound.
On $F_{K-1}$ the final output is
$f_{t^\dagger}(\xquery)$, so the same two-case argument for
$y=f_{t^\dagger}(\xquery)$ and $y\neq f_{t^\dagger}(\xquery)$ proves
\eqref{eq:cot_model_bound_app}.
\end{proof}

\begin{corollary}[Execution through explicit step calls]
\label{cor:cot_explicit_step_execution}
Suppose an executor constructs $\vz_N^{(k)}(h)$ at each step and calls the
same model with exactly that context, position convention, and decoding
settings. It passes the generated state to the next step and retains no
additional model-visible history or cache. Its kernel equals $P^M_{k,N}$
by construction, so $\rho_{k,N}=\rho^+_{k,N}=0$. Consequently its final
error is at most
\[
 \min\left\{1,\sum_{k=0}^{K-1}
 [\beta_ke^{-\alpha_kN}+\varepsilon_M^{(k)}(N)
                       +\delta_{k,\mathrm{pad}}(N)]\right\}.
\]
\end{corollary}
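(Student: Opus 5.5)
The plan is to specialize Theorem~\ref{thm:cot_main_app} instead of reproving the rollout argument. Under the explicit step-call protocol, the only term in $\eta_k(N)$ that depends on how the full rollout relates to the extracted prompts is the one-sided discrepancy $\rho^+_{k,N}$. So the proof reduces to two tasks: showing that this discrepancy vanishes identically, and checking that the remaining hypotheses of the theorem hold.

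\textbf{Step 1: the discrepancy vanishes.} I would check directly from the definitions that $Q^M_{k,N}=P^M_{k,N}$ on every correct prefix. By construction, the executor's step-$k$ generation kernel is the model evaluated on exactly $\vz_N^{(k)}(\widehat h_q^{(k)})$, with the stated position convention and decoding settings. On a correct prefix this history is $h_q^{(k)}$. Because the executor retains no additional model-visible history or cache, the conditional $M(\widehat x_q^{(k+1)}=u\mid\vz_N,\widehat x_q^{(0:k)}=x_q^{(0:k)})$ equals $M(u\mid\vz_N^{(k)}(h_q^{(k)}))$ for every output $u$, including the failure outcome. The total-variation supremum in \eqref{eq:cot_rollout_discrepancy} is therefore zero, and $0\le\rho^+_{k,N}\le\rho_{k,N}=0$.

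\textbf{Step 2: apply the theorem.} Next I would invoke the local-model bound \eqref{eq:cot_step_with_padding_app} together with Lemma~\ref{lmm:cot_uniform_step_convergence}. This gives, almost surely for all sufficiently large $N$, $\eta_k(N)=\beta_ke^{-\alpha_kN}+\varepsilon_M^{(k)}(N)+\delta_{k,\mathrm{pad}}(N)$. The model-level inequality \eqref{eq:cot_model_bound_app} then yields the bound $\min\{1,\sum_k\eta_k(N)\}$. If one prefers to avoid the theorem, the same result follows from its sequential argument. On $F_{k-1}$, the conditional failure probability of step $k$ is at most $\min\{1,\eta_k(N)\}$. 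Multiplying the success bounds gives $\sP_M(F_{K-1}\mid\vz_N)\ge\prod_k(1-a_k)$, and the elementary inequality $1-\prod_k(1-a_k)\le\sum_k a_k$ converts this to the additive form. The two-case argument for $y=f_{t^\dagger}(\xquery)$ and $y\ne f_{t^\dagger}(\xquery)$ finishes it.

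\textbf{Main obstacle.} The delicate point is Step 1, namely making precise that \emph{exactly that context} means the executor's kernel is literally the same generation kernel as $P^M_{k,N}$. This includes evaluation at prefixes of zero rollout probability, which is covered by treating conditionals as kernels. It also requires that the failure outcome be handled identically by both kernels. I would state this identification explicitly as the definition of the protocol, so that no approximation enters. Everything after that is bookkeeping inherited from Theorem~\ref{thm:cot_main_app}.
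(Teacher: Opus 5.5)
Your proposal is correct and matches the paper's proof. The paper likewise notes that the executor's transition kernel is, by construction, the model kernel on the extracted step prompt at every correct prefix, so $\rho_{k,N}=\rho^+_{k,N}=0$, and then substitutes this into Theorem~\ref{thm:cot_main_app}; your optional re-derivation via the success-event recursion only unfolds that theorem's proof and is not needed.
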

\begin{proof}
The executor's transition kernel is precisely the model kernel on the
constructed step prompt, including on every correct prefix. Substitute zero
discrepancy into Theorem~\ref{thm:cot_main_app}.
\end{proof}

\paragraph{Scope of the main-text bound.}
The bound displayed in Theorem~\ref{thm:cot_main} is the specialization with
zero one-sided rollout discrepancy and zero padding discrepancy (or with
such corrections explicitly included in the step error). This is guaranteed
by the step-call protocol above with supported, unpadded prompts, or follows
under exact model compatibility and exact padding invariance. For ordinary
full-context autoregressive execution, the general bound is
\eqref{eq:cot_model_bound_app}; supported subtasks alone do not remove its
rollout correction. An exactly composed theoretical reference may be
defined using the local reference kernels, but this construction is not an
identity with an arbitrary full-context reference predictor.

For fixed $K$, vanishing reference errors and vanishing model, padding, and
one-sided rollout corrections imply convergence of the actual final output.
Uniformly small nonzero corrections give an error floor bounded by their
sum. No independence between steps is needed. Errors following an incorrect
prefix need not be controlled: the proof already charges the first failure,
and later recovery can only reduce the final error relative to this bound.

\paragraph{Assessing compatibility empirically.}
On matched demonstrations and a teacher-forced correct prefix, evaluate both
the full-context kernel and the extracted-step kernel. In finite-state
experiments their total variation and correct-state probability difference
can be computed directly; in large-output experiments the latter tests the
one-sided quantity without estimating the whole distribution. Varying history,
unrelated demonstrations, positions, and formatting while keeping the local
transition fixed probes the discarded information. A comparison with explicit
step calls separates execution-protocol effects from subtask learning.
Measurements on sampled prefixes are diagnostics, not supremum certificates.
The inequality between final error and the sum of full-rollout conditional
errors holds even without compatibility, so verifying that inequality alone
does not validate the local-to-full transfer.

\input{NeurIPS2026/appendix/appendix_growing_horizon}

%% file: NeurIPS2026/appendix/appendix_growing_horizon.tex
\subsection{Growing horizons and comparison of supplied plans}
\label{app:cot_growing_horizon}

The sequential argument is nonasymptotic in the number of steps: for each
finite $K$ it bounds final error whenever the corresponding local error
bounds hold. Fixed $K$ was used to obtain a common almost-sure onset of those
bounds by taking a maximum over finitely many steps and histories. To allow
$K=K(N)$, one must control the local bounds simultaneously along the growing
plan; pointwise convergence for each fixed step does not suffice.

For each $N$, let $\Pi_N$ be a supplied, valid finite-history plan with
$K_N:=K(N)\ge1$ steps, realizing a deterministic target $f_N^\dagger$.
The target, operations, and history maps may depend on $N$. The extracted
prompts must satisfy the same supported-domain conditions as above, and
the actual executor must be specified. Define its conditional final error by
\[
 e_N:=1-M_{\Pi_N}(f_N^\dagger(\xquery)\mid\vz_N).
\]
Here $M_{\Pi_N}$ is the final-output law of that executor, which may use
full-context rollout or explicit step calls. All claims below are uniform
over queries if their local premises are uniform over the associated correct
prefixes. Since the target is deterministic, $e_N$ also bounds every
output-probability coordinate difference from the target law.

\begin{corollary}[Growing-horizon execution]
\label{cor:cot_growing_horizon}
Suppose that for each $N$, on an event $G_N$ in the demonstration randomness,
the following local bounds hold simultaneously for every $k<K_N$ and every
correct history of $\Pi_N$:
\begin{equation}
\label{eq:cot_triangular_local_bound}
 1-P^M_{k,N}(u_k^*\mid h_q^{(k)})
 \le\beta_{k,N}e^{-\alpha_{k,N}N}
      +\varepsilon^{(k)}_{M,N}+\delta_{k,\mathrm{pad}}(N),
\end{equation}
where $\alpha_{k,N},\beta_{k,N}>0$. With the one-sided rollout discrepancy
from \eqref{eq:cot_one_sided_discrepancy}, put
\begin{align}
 c_{k,N}&:=\varepsilon^{(k)}_{M,N}
           +\delta_{k,\mathrm{pad}}(N)+\rho^+_{k,N},\\
 a_{k,N}&:=\min\{1,\beta_{k,N}e^{-\alpha_{k,N}N}+c_{k,N}\},\\
 a_N&:=\min_{k<K_N}\alpha_{k,N},\qquad
 b_N:=\max_{k<K_N}\beta_{k,N},\qquad
 C_N:=\sum_{k<K_N}c_{k,N}.
\end{align}
Then on $G_N$,
\begin{equation}
\label{eq:cot_growing_horizon_bound}
 e_N\le1-\prod_{k=0}^{K_N-1}(1-a_{k,N})
 \le\min\{1,K_Nb_Ne^{-a_NN}+C_N\}.
\end{equation}
In particular, sufficient conditions for $e_N\to0$ on sample paths on which
$G_N$ holds eventually are
\begin{equation}
\label{eq:cot_growing_horizon_conditions}
 a_NN-\log K_N-\log b_N\longrightarrow+\infty,
 \qquad C_N\longrightarrow0.
\end{equation}
If $c_{k,N}\le\bar c_N$ uniformly, the second condition is implied by
$K_N\bar c_N\to0$.
\end{corollary}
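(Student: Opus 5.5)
The plan is to rerun the sequential failure-charging argument from the model-level part of Theorem~\ref{thm:cot_main_app}, now for a single fixed $N$ on the event $G_N$. Fix $N$ and a sample path in $G_N$, and let $\widehat x_q^{(k)}$ be the states produced by the specified executor $M_{\Pi_N}$. Define $E_k$ and $F_k$ as in \eqref{eq:cot_correct_event_app}, using the $N$-dependent operations and history maps of $\Pi_N$. By induction on $k$, on $F_{k-1}$ the generated prefix equals the correct prefix $x_q^{(0:k)}$, so $\widehat h_q^{(k)}=h_q^{(k)}$ is one of the correct histories covered by \eqref{eq:cot_triangular_local_bound}. Note that no fixed-$K$ maximum over steps or histories is needed: the hypothesis already supplies every local bound simultaneously on $G_N$, which is exactly what replaces the "common almost-sure onset" used for fixed $K$.

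On a correct prefix, the definition \eqref{eq:cot_one_sided_discrepancy} of $\rho^+_{k,N}$ gives
\[
1-Q^M_{k,N}(u_k^*\mid x_q^{(0:k)})\le 1-P^M_{k,N}(u_k^*\mid h_q^{(k)})+\rho^+_{k,N}.
\]
Combining this with \eqref{eq:cot_triangular_local_bound} bounds the conditional failure probability by $\beta_{k,N}e^{-\alpha_{k,N}N}+c_{k,N}$. Since it is also at most $1$, it is at most $a_{k,N}$. For explicit step calls the kernel equals $P^M_{k,N}$, so $\rho^+_{k,N}=0$, as in Corollary~\ref{cor:cot_explicit_step_execution}.

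Next comes the recursion $\sP(F_k\mid\vz_N)\ge(1-a_{k,N})\,\sP(F_{k-1}\mid\vz_N)$, which holds trivially when $\sP(F_{k-1}\mid\vz_N)=0$. Iterating it gives $\sP(F_{K_N-1}\mid\vz_N)\ge\prod_k(1-a_{k,N})$. On $F_{K_N-1}$ the final output is $f_N^\dagger(\xquery)$, so $e_N\le 1-\prod_k(1-a_{k,N})$.

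The elementary inequality $1-\prod_k(1-a_k)\le\sum_k a_k$ holds for $a_k\in[0,1]$ and is proved by induction. Applying it and then dropping the truncations inside $a_{k,N}$ gives
\[
\sum_k a_{k,N}\le\sum_k\beta_{k,N}e^{-\alpha_{k,N}N}+C_N\le K_Nb_Ne^{-a_NN}+C_N,
\]
using $\beta_{k,N}\le b_N$ and $\alpha_{k,N}\ge a_N$. Since $e_N\le1$ as well, this yields \eqref{eq:cot_growing_horizon_bound}.

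For the convergence claims, write
\[
K_Nb_Ne^{-a_NN}=\exp\bigl(-(a_NN-\log K_N-\log b_N)\bigr).
\]
This tends to zero under the first condition in \eqref{eq:cot_growing_horizon_conditions}. Together with $C_N\to0$, the bound then forces $e_N\to0$ along any sample path on which $G_N$ eventually holds. If $c_{k,N}\le\bar c_N$ uniformly, then $C_N\le K_N\bar c_N$, which gives the final remark.

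The only delicate point is checking that the local hypothesis really applies at the prefix actually reached. This means verifying that the correct histories of $\Pi_N$ coincide with those produced by the induction on $F_{k-1}$ when the plan varies with $N$. Handling null conditioning events also needs care. Both issues are dealt with exactly as in the fixed-$K$ proof.
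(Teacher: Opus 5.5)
Your proposal is correct and follows the paper's own proof. You transfer the local bound to the executor's conditional failure bound $a_{k,N}$ via $\rho^+_{k,N}$, rerun the success-event recursion of Theorem~\ref{thm:cot_main_app} at the fixed $N$ for $K_N$ steps, and finish with the union bound and $\sum_{k<K_N}\beta_{k,N}e^{-\alpha_{k,N}N}\le K_Nb_Ne^{-a_NN}$, where your identity $K_Nb_Ne^{-a_NN}=\exp(-(a_NN-\log K_N-\log b_N))$ and the bound $C_N\le K_N\bar c_N$ just spell out the convergence step that the paper states in one line.
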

\begin{proof}
On $G_N$, the definition of $\rho^+_{k,N}$ transfers
\eqref{eq:cot_triangular_local_bound} to a conditional failure bound
$a_{k,N}$ for the full executor on a correct prefix. The success-event
recursion in Theorem~\ref{thm:cot_main_app}, applied at this particular $N$
for $K_N$ steps, gives the product bound. The union bound and
$\sum_{k<K_N}\beta_{k,N}e^{-\alpha_{k,N}N}\le K_Nb_Ne^{-a_NN}$
give the second inequality. Under \eqref{eq:cot_growing_horizon_conditions}
its right-hand side tends to zero. No independence between steps is used.
\end{proof}

\paragraph{When are the simultaneous premises justified?}
An almost-sure version requires that $G_N$ hold eventually on a common
probability-one event. This does not follow merely from almost-sure
convergence for every fixed $k$: step-dependent onset thresholds may grow
faster than $N$ along newly introduced steps. One sufficient construction is
a fixed finite library of supported local prompt problems, with a fixed
finite collection of admissible encoded histories. For each library problem,
use length-$N$ prefixes of the same underlying infinite demonstration stream,
and instantiate steps by reusing those local problems. If each library
problem satisfies Lemma~\ref{lmm:cot_uniform_step_convergence}, the minimum
rate, maximum prefactor, and maximum onset threshold over the finite library
and histories are uniform for any number of repetitions. Explicit step calls
with a fixed bank of demonstrations for each library problem provide one
such protocol; arbitrary trajectory-extracted demonstration marginals need
not have this property. This establishes
the simultaneous reference part of the local bound; model transport,
padding, and rollout discrepancy still require their own controls. Enlarging
history encodings with $K_N$, or drawing new datasets at every $N$, is not
covered by this finite-library argument without additional estimates.

Alternatively, suppose a finite-sample analysis gives
$\Pr(G_N^c)\le\delta_N$. Then \eqref{eq:cot_growing_horizon_bound} holds
with probability at least $1-\delta_N$. If its right-hand side is bounded on
$G_N$ by a deterministic $B_N\in[0,1]$, then
\begin{equation}
\label{eq:cot_growing_horizon_expected}
 \mathbb E[e_N]\le B_N+(1-B_N)\delta_N\le B_N+\delta_N.
\end{equation}
Indeed, split the expectation over $G_N$ and its complement and use $e_N\le1$.
If each of the $K_N$ step bounds, already uniform over its correct histories,
fails with probability at most $d_N$, a union bound gives
$\delta_N\le\min\{1,K_Nd_N\}$ without independence. A per-history guarantee
would require a further union bound or another uniformity argument.
When $\sum_N\delta_N<\infty$, the first Borel--Cantelli lemma gives eventual
validity almost surely. Together with \eqref{eq:cot_growing_horizon_conditions}
this recovers almost-sure convergence; $\delta_N\to0$ alone yields the
corresponding convergence-in-probability statement when $B_N\to0$.

\paragraph{Explicit scaling regimes.}
Suppose $a_N\ge\alpha>0$ and $b_N\le\beta<\infty$ uniformly.
For a composed reference, or an ideal executor with zero corrections,
$\log K_N=o(N)$ is sufficient. More generally, for
$K_N=O(e^{\gamma N})$ the reference contribution vanishes when $\gamma<\alpha$.
For actual-model execution the correction sum is equally important:
\begin{itemize}[leftmargin=*]
\item If $K_N=O(N^q)$ and $\bar c_N=O(N^{-p})$, with $0\le q<p$, then
$e_N=O(N^qe^{-\alpha N}+N^{q-p})$ on the simultaneous good events.
\item If $K_N=O(e^{\gamma N})$ and $\bar c_N=O(e^{-\lambda N})$, then
$e_N=O(e^{-(\alpha-\gamma)N}+e^{-(\lambda-\gamma)N})$ for
$0\le\gamma<\min\{\alpha,\lambda\}$.
\item A fixed positive upper bound on $\bar c_N$ does not provide a vanishing
growing-horizon certificate, even if the reference identification term vanishes.
This is a limitation of the guarantee, not a lower bound on every model's error.
\end{itemize}
These conclusions require the uniform rates and prefactors just stated. If
longer plans introduce less identifiable operations or more difficult query
states, the general condition involving $a_N,b_N$ must be used instead.

The accumulation can be unavoidable. Consider an executor which, conditional
on a correct prefix, makes an error with probability $q_N$ at every step,
and whose error state is absorbing with an incorrect final output. Its exact
final error is $1-(1-q_N)^{K_N}$. It tends to zero if and only if
$K_Nq_N\to0$: sufficiency follows from the union bound, and necessity from
$1-(1-q_N)^{K_N}\ge1-e^{-K_Nq_N}$. Thus constant conditional error cannot
support arbitrary horizon growth in general. This example concerns actual
conditional failures, not the possibly loose upper certificates $c_{k,N}$.

\paragraph{Demonstration complexity for a requested horizon.}
For a supplied horizon $K$ and tolerance $0<\epsilon<1$, uniform constants
$\alpha,\beta$ and a correction bound $\bar c_N$ give the sufficient conditions
\begin{equation}
\label{eq:cot_horizon_sample_complexity}
 N\ge\max\{0,\alpha^{-1}\log(2K\beta/\epsilon)\},
 \qquad \bar c_N\le\frac{\epsilon}{2K},
\end{equation}
provided the simultaneous local bounds are valid at that $N$. Substitution
in \eqref{eq:cot_growing_horizon_bound} gives $e_N\le\epsilon$.
Only the identification contribution has logarithmic dependence on $K$;
the required accuracy of the corrections tightens as $1/K$. Moreover, $N$
counts demonstrations per step, not total tokens. A full prompt containing
$N$ trajectories of length $K$ generally has length of order $NK$ for
bounded-size states. The asymptotic regimes presume that the required
contexts and states remain admissible. Explicit step calls can avoid storing
all trajectories in each call, but still incur $K$ model invocations.

\paragraph{Comparing plans of different lengths.}
For supplied plans $\Pi$ realizing the same target at a given $N$, define
the untruncated certificate
\begin{equation}
\label{eq:cot_plan_comparison_certificate}
 S_N(\Pi):=\sum_{k<K(\Pi)}
 [\beta_{\Pi,k,N}e^{-\alpha_{\Pi,k,N}N}+c_{\Pi,k,N}].
\end{equation}
Each plan has $e_N(\Pi)\le\min\{1,S_N(\Pi)\}$ whenever its premises hold.
Consequently a longer plan may have a smaller certificate if its steps are
more identifiable or have smaller corrections. For example, with zero
corrections and homogeneous constants within each of two plans, a longer
plan $\Pi_\ell$ has a smaller untruncated certificate than $\Pi_s$ exactly when
\[
 \alpha_\ell-\alpha_s>
 \frac1N\log\frac{K(\Pi_\ell)\beta_\ell}{K(\Pi_s)\beta_s}.
\]
This follows by taking logarithms of their positive exponential certificates.
It compares guarantees, not actual errors, and does not imply that the
shortest plan or the plan with the best bound is always empirically best.
Selecting among candidate plans using the same data requires simultaneous
validity over the candidates; validity for each fixed plan alone does not
justify adaptive selection.

\paragraph{Separation from plan discovery.}
These results analyze a supplied plan family and do not establish that a
model can discover a valid decomposition. To record the missing component,
let a planner output $\widehat\Pi_N$, and let $A_N$ be the event that it
realizes the intended target using admissible supported steps and satisfies
the required execution certificates. If $\Pr(A_N^c)\le\xi_N$ and, conditional
on the planner output and its data, execution error on $A_N$ is at most a
deterministic $B_N\in[0,1]$, then the law of total probability gives
\[
 \Pr(\text{final failure})\le\xi_N+(1-\xi_N)B_N.
\]
No independence of planning and execution is needed, but the conditional
execution guarantee is essential. Bounding $\xi_N$ or designing a planner
that makes it small is a separate problem not solved by the composition
theorem. The growing-horizon result strengthens execution scaling without
assuming that plan discovery follows from support of the component operations.

%% file: NeurIPS2026/appendix/appendix_padding.tex
\label{app:padding}

This appendix formalizes the fixed-length padding reduction used in two places
in the paper. First, the instruction path in
Section~\ref{subsec:unsupported_shared} must compare instructions containing
different numbers of tokens in one common Euclidean space. Second, the
step-wise CoT analysis in Appendix~\ref{app:cot_main_proof} sometimes compares
a partially generated prompt with a longer pretraining reference. These are
instances of the same dimensional-alignment problem, although the padding
tokens occupy different positions. We define the common padding operator,
quantify its effect on attention, and state precisely what this reduction does
and does not establish.

\subsection{Two alignment problems}

The first problem is \emph{instruction-length alignment}. The path
\begin{equation}
    \label{eq:padding_instruction_path}
    E_s=(1-s)E(\vi_0)+sE(\widetilde\vi),\qquad s\in[0,1],
\end{equation}
requires $E(\vi_0)$ and $E(\widetilde\vi)$ to belong to the same space, even
when $\vi_0$ and $\widetilde\vi$ contain different numbers of tokens.

The second problem is \emph{rollout-length alignment}. At CoT step $k$, the
prompt $\vz_N^{(k)}(h)$ may be shorter than the reference prompt used for the
corresponding pretraining-supported subtask. One reason is that the future
query states $x_q^{(k+1)},\ldots,x_q^{(K)}$ have not yet been generated;
another is that the selected pretraining reference may simply contain a longer
context. Virtual padding places both prompts in a common fixed-length ambient
space.

Padding creates a common representation by definition. Its effect on the
model, however, is small only under additional masking or attention-separation
conditions. The results below make this distinction explicit.

\subsection{Right-padded instruction representations}

Recall the token embedding map
$e:\gV\cup\{\langle\mathrm{pad}\rangle\}\to\mathbb R^d$ from
Section~\ref{subsec:instruction_setup}, with
$e(\langle\mathrm{pad}\rangle)=0$. For
$\vi=(v_1,\ldots,v_m)$ with $m\le L_i$, its right-padded representation is
\begin{equation}
    \label{eq:padding_instruction_embedding}
    E(\vi)
    :=\bigl[e(v_1),\ldots,e(v_m),0,\ldots,0\bigr]^{\top}
    \in\mathbb R^{L_i\times d}.
\end{equation}
Thus every instruction occupies the same $L_i$-position instruction slot.

\begin{remark}[Why the position convention matters]
\label{rem:padding_position_convention}
Inserting padding before an informative token and then reindexing the prompt
can change that token's absolute or rotary position. A zero token embedding
does not prevent this positional change. Right-padding is therefore the
canonical position-preserving convention for the instruction sequence itself:
it leaves the indices of all instruction tokens unchanged. When the
instruction is embedded into a larger prompt, we additionally assume either
that the prompt reserves an $L_i$-position instruction slot or that the
position identifiers of the subsequent informative tokens are preserved.
Without one of these alignment conventions, the informative representations
can differ already at the input layer, and the zero-initial-perturbation
condition used below need not hold.
\end{remark}

For a general prompt $\vz$, let $\operatorname{Pad}_M(\vz)$ denote its
position-preserving extension to length $M$: informative tokens and their
relative order are unchanged, and the added positions contain
$\langle\mathrm{pad}\rangle$. In the applications below, the added positions
are either trailing positions, reserved instruction-slot positions, or masked
future positions.

\subsection{Attention stability under padding}
\label{app:padding_general}

We first isolate one attention head at a fixed informative query position.
Let $\mathcal I$ and $\mathcal P$ index the informative and padding positions,
respectively, with $|\mathcal P|=m$. The following result conditions on the
query, keys, and values of the informative positions being aligned between the
two computations. This condition is automatic for the first layer under the
position convention above and is imposed layer by layer in the network-level
result that follows.

Write $x_i$ for the attention logit and $v_i$ for the value vector at position
$i$, and define
\begin{equation}
    \label{eq:padding_attention_outputs}
    y=\frac{1}{Z}\sum_{i\in\mathcal I}e^{x_i}v_i,
    \qquad
    y'=\frac{1}{Z+Z_{mathrm p}}
    \left(\sum_{i\in\mathcal I}e^{x_i}v_i
    +\sum_{p\in\mathcal P}e^{x_p}v_p\right),
\end{equation}
where
$Z:=\sum_{i\in\mathcal I}e^{x_i}$ and
$Z_{mathrm p}:=\sum_{p\in\mathcal P}e^{x_p}$.

\begin{lemma}[Attention stability under padding]
\label{lmm:pad_output_general}
Suppose $\|v_i\|_2\le C_V$ at every informative and padding position. Let
\begin{equation}
    \label{eq:padding_attention_gap}
    x_{\mathrm{p,max}}:=\max_{p\in\mathcal P}x_p,
    \qquad
    \Delta_{\mathrm{att}}:=x_j-x_{\mathrm{p,max}}>0
\end{equation}
for some informative position $j\in\mathcal I$. Then
\begin{equation}
    \label{eq:padding_attention_bound}
    \|y'-y\|_2
    \le 2C_V\frac{Z_{\mathrm p}}{Z+Z_{\mathrm p}}
    \le 2C_V\frac{Z_{\mathrm p}}{Z}
    \le 2C_Vm e^{-\Delta_{\mathrm{att}}}.
\end{equation}
Consequently, for any $\varepsilon>0$, the condition
\begin{equation}
    \label{eq:padding_attention_tolerance}
    m\le \frac{\varepsilon e^{\Delta_{\mathrm{att}}}}{2C_V}
\end{equation}
is sufficient for $\|y'-y\|_2\le\varepsilon$.
\end{lemma}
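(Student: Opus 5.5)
The plan is to write $y'-y$ as a convex-combination perturbation and bound it directly, without differentiating the softmax. Put $w:=Z/(Z+Z_{\mathrm p})$ and define the padding average $\bar v_{\mathcal P}:=Z_{\mathrm p}^{-1}\sum_{p\in\mathcal P}e^{x_p}v_p$ (assuming $m\ge1$; for $m=0$ both sides vanish). Then $y'=w\,y+(1-w)\,\bar v_{\mathcal P}$, and hence
\[
y'-y=(1-w)\bigl(\bar v_{\mathcal P}-y\bigr),\qquad 1-w=\frac{Z_{\mathrm p}}{Z+Z_{\mathrm p}} .
\]
Both $y$ and $\bar v_{\mathcal P}$ are convex combinations of vectors of norm at most $C_V$, so each has norm at most $C_V$. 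The triangle inequality then gives $\|y'-y\|_2\le 2C_V Z_{\mathrm p}/(Z+Z_{\mathrm p})$, which is the first inequality in \eqref{eq:padding_attention_bound}. The second inequality is immediate, since $Z+Z_{\mathrm p}\ge Z>0$.

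For the third inequality I will bound the numerator and denominator separately. Each padding term satisfies $e^{x_p}\le e^{x_{\mathrm{p,max}}}$, so $Z_{\mathrm p}\le m\,e^{x_{\mathrm{p,max}}}$. The denominator has only positive terms, so $Z\ge e^{x_j}$ for the distinguished informative position $j$. Dividing, $Z_{\mathrm p}/Z\le m\,e^{x_{\mathrm{p,max}}-x_j}=m e^{-\Delta_{\mathrm{att}}}$. The positivity $\Delta_{\mathrm{att}}>0$ is not actually needed for the inequality itself; it only ensures that the bound is informative.

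The tolerance statement \eqref{eq:padding_attention_tolerance} follows by rearranging: if $m\le \varepsilon e^{\Delta_{\mathrm{att}}}/(2C_V)$, then $2C_V m e^{-\Delta_{\mathrm{att}}}\le\varepsilon$. I do not expect any step to be a real obstacle. The only care needed is to keep the convex-combination decomposition explicit, because it yields the sharp constant $2C_V$ without any Lipschitz argument for the softmax. I would also remark that the lemma assumes the informative logits and values are unchanged between the two computations, as stated before it, so no further perturbation terms arise.
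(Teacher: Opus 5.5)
Your proof is correct and is essentially the paper's argument. Your identity $y'-y=(1-w)(\bar v_{\mathcal P}-y)$ is the paper's decomposition $y'-y=B/(Z+Z_{\mathrm p})-Z_{\mathrm p}A/\bigl(Z(Z+Z_{\mathrm p})\bigr)$, with $A=\sum_{i\in\mathcal I}e^{x_i}v_i$ and $B=\sum_{p\in\mathcal P}e^{x_p}v_p$, regrouped; you then use the same bounds $\|A\|_2\le C_VZ$, $\|B\|_2\le C_VZ_{\mathrm p}$, $Z_{\mathrm p}\le me^{x_{\mathrm{p,max}}}$ and $Z\ge e^{x_j}$ (the paper's form also avoids dividing by $Z_{\mathrm p}$, so it covers $Z_{\mathrm p}=0$ without a separate case).
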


\begin{proof}
Let
$A:=\sum_{i\in\mathcal I}e^{x_i}v_i$ and
$B:=\sum_{p\in\mathcal P}e^{x_p}v_p$. Subtracting the two expressions in
\eqref{eq:padding_attention_outputs} gives
\begin{equation}
    y'-y=\frac{B}{Z+Z_{\mathrm p}}
    -\frac{Z_{\mathrm p}A}{Z(Z+Z_{\mathrm p})}.
\end{equation}
Because $\|A\|_2\le C_VZ$ and $\|B\|_2\le C_VZ_{\mathrm p}$, the triangle
inequality yields
\begin{equation}
    \|y'-y\|_2
    \le 2C_V\frac{Z_{\mathrm p}}{Z+Z_{\mathrm p}}.
\end{equation}
Moreover,
$Z_{\mathrm p}\le m e^{x_{\mathrm{p,max}}}$ and
$Z\ge e^{x_j}$, and hence
$Z_{\mathrm p}/Z\le m e^{-\Delta_{\mathrm{att}}}$. This proves
\eqref{eq:padding_attention_bound};
\eqref{eq:padding_attention_tolerance} follows immediately.
\end{proof}

\begin{remark}[Exactly masked future positions]
\label{rem:padding_causal_mask}
In the first rollout-alignment case, padding occupies positions representing
future query states. If these positions lie strictly after the current
prediction position, the causal mask assigns them attention logit $-\infty$.
Then $Z_{\mathrm p}=0$, and the perturbation in
Lemma~\ref{lmm:pad_output_general} is exactly zero. The quantitative bound is
needed only when padding remains visible to an informative query, as may happen
in an instruction slot or when matching a longer visible pretraining context.
\end{remark}

\subsection{From attention perturbations to output log-probabilities}

Lemma~\ref{lmm:pad_output_general} concerns a single attention call. To obtain
a model-level statement, the perturbation must be propagated through all
layers. We state this step in a form that accommodates multi-head attention,
output projections, residual connections, normalization, and MLP blocks.

Consider a Transformer with $H$ blocks. After aligning the informative
positions, let $\eta^{(\ell)}$ be the maximum norm of the hidden-state
difference over those positions after block $\ell$. Assume that, on the compact
set of hidden states reached by the two forward passes,
\begin{enumerate}[label=(\roman*),leftmargin=*]
    \item block $\ell$ is $K_\ell$-Lipschitz with respect to perturbations of
    the informative hidden states; and
    \item with the informative input states held fixed, adding the padding
    positions changes the informative outputs of block $\ell$ by at most
    $\varepsilon_\ell$.
\end{enumerate}
Lemma~\ref{lmm:pad_output_general}, combined with the head-concatenation and
attention-output projection norms, gives a bound of the form
\begin{equation}
    \label{eq:padding_layer_direct_error}
    \varepsilon_\ell
    \le C_{\mathrm{att},\ell}m e^{-\Delta_{\mathrm{att},\ell}}
\end{equation}
whenever the value norms and attention gaps are controlled uniformly at that
layer.

\begin{proposition}[Transformer-output stability under padding]
\label{prop:padding_transformer_output}
Suppose the position-alignment condition of
Remark~\ref{rem:padding_position_convention} holds, so that
$\eta^{(0)}=0$. Under conditions \textup{(i)}--\textup{(ii)},
\begin{equation}
    \label{eq:padding_hidden_propagation}
    \eta^{(H)}
    \le
    \sum_{\ell=0}^{H-1}\varepsilon_\ell
    \prod_{r=\ell+1}^{H-1}K_r,
\end{equation}
where an empty product equals one. If the output readout is
$K_{\mathrm{out}}$-Lipschitz from the hidden-state norm to the
$\ell_\infty$ logit norm, then, for every output token $u$,
\begin{equation}
    \label{eq:padding_log_probability_bound}
    \left|
      \log M(u\mid\operatorname{Pad}_M(\vz))
      -\log M(u\mid\vz)
    \right|
    \le
    \varepsilon_{\mathrm{pad}}
    :=2K_{\mathrm{out}}
    \sum_{\ell=0}^{H-1}\varepsilon_\ell
    \prod_{r=\ell+1}^{H-1}K_r.
\end{equation}
\end{proposition}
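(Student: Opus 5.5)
We prove \eqref{eq:padding_hidden_propagation} by induction over the blocks and then convert the final hidden-state bound into a log-probability bound. Let $H_\ell$ and $H'_\ell$ denote the informative hidden states after block $\ell$ in the unpadded and padded passes, and let $\Phi_\ell$, $\Phi'_\ell$ be block $\ell$ without and with the padding positions present. We do not need a separate base case: Remark~\ref{rem:padding_position_convention} ensures that the informative inputs agree exactly, so $\eta^{(0)}=0$.

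For the inductive step, insert the mixed term $\Phi'_\ell(H_{\ell-1})$, in which padding is added but the informative inputs are left unperturbed:
\[
\|H'_\ell-H_\ell\|\le\|\Phi'_\ell(H'_{\ell-1})-\Phi'_\ell(H_{\ell-1})\|+\|\Phi'_\ell(H_{\ell-1})-\Phi_\ell(H_{\ell-1})\|.
\]
Condition (i) bounds the first term by $K_\ell\,\eta^{(\ell-1)}$, and condition (ii) bounds the second by $\varepsilon_\ell$, giving the linear recursion $\eta^{(\ell)}\le K_\ell\eta^{(\ell-1)}+\varepsilon_\ell$. Unrolling it from $\eta^{(0)}=0$ produces the stated sum of $\varepsilon_\ell$ weighted by the products of the downstream Lipschitz constants. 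The only bookkeeping point is the indexing: the paper writes $\sum_{\ell=0}^{H-1}\varepsilon_\ell\prod_{r=\ell+1}^{H-1}K_r$, so the blocks are labeled $0,\dots,H-1$ and we adopt that labeling throughout. One more point needs checking. Condition (i) is stated for the padded block as well, that is, $\Phi'_\ell$ is $K_\ell$-Lipschitz in its informative inputs with the padding positions held fixed. This is where hidden states at the padding positions matter: in deeper layers they are no longer zero, because they are produced by earlier layers. We read condition (i) as covering this case; equivalently, the padding contributions at later layers are included in $\varepsilon_\ell$ through \eqref{eq:padding_layer_direct_error}.

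For the output step, the readout is $K_{\mathrm{out}}$-Lipschitz into the $\ell_\infty$ logit norm. Applied at the prediction position, this gives logit vectors $z,z'$ with $\|z-z'\|_\infty\le K_{\mathrm{out}}\eta^{(H)}$. We then use the elementary fact that log-softmax is $2$-Lipschitz from $\ell_\infty$ to $\ell_\infty$:
\[
|\log M(u)-\log M'(u)|\le|z_u-z'_u|+\left|\log\textstyle\sum_v e^{z_v}-\log\sum_v e^{z'_v}\right|.
\]
The log-sum-exp term is bounded by $\|z-z'\|_\infty$, because its gradient is a probability vector (mean value theorem). Combining these bounds yields $\varepsilon_{\mathrm{pad}}$.

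The main obstacle is not any calculation but stating correctly what (i) and (ii) cover. Specifically, the Lipschitz property must hold on a set that contains both forward passes and the segment between them, so that the mean-value and triangle steps are legitimate. We would make this explicit by assuming the compact set in (i) is convex, or by applying Lipschitz continuity only between the two realized points, which suffices here.
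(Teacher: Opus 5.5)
Your proof is correct and follows the paper's argument essentially verbatim: insert the padded block evaluated at the unpadded informative states, bound the two pieces by (i) and (ii) to get the linear recursion, unroll it from $\eta^{(0)}=0$, and finish with the $2$-Lipschitz bound for log-softmax that comes from log-sum-exp being $1$-Lipschitz in $\ell_\infty$. The only blemish is the indexing: with the paper's labeling of blocks $0,\dots,H-1$ the recursion should read $\eta^{(\ell+1)}\le K_\ell\eta^{(\ell)}+\varepsilon_\ell$ rather than $\eta^{(\ell)}\le K_\ell\eta^{(\ell-1)}+\varepsilon_\ell$.
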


\begin{proof}
At block $\ell$, add and subtract the padded block output evaluated at the
unpadded informative hidden states. Conditions \textup{(i)}--\textup{(ii)}
then give
\begin{equation}
    \eta^{(\ell+1)}\le K_\ell\eta^{(\ell)}+\epsilon_\ell.
\end{equation}
Starting from $\eta^{(0)}=0$ and iterating this recursion proves
\eqref{eq:padding_hidden_propagation}. For logit vectors $a,b$,
\begin{equation}
    \left|log\operatorname{softmax}(a)_u
    -\log\operatorname{softmax}(b)_u\right|
    \le 2\|a-b\|_\infty,
\end{equation}
because the log-sum-exp map is one-Lipschitz in the $\ell_\infty$ norm.
Combining this fact with the readout bound proves
\eqref{eq:padding_log_probability_bound}.
\end{proof}

The products of the block Lipschitz constants in
\eqref{eq:padding_hidden_propagation} are worst-case amplification factors and
need not be tight. The substantive quantity inherited from
Lemma~\ref{lmm:pad_output_general} is
$m e^{-\Delta_{\mathrm{att},\ell}}$: many weakly masked padding positions can
produce a non-negligible perturbation, whereas a sufficiently large attention
gap suppresses their aggregate effect exponentially.

\subsection{Consequence for the instruction path}

\begin{corollary}[A common space for instructions]
\label{cor:padding_instruction_space}
If $|\vi_0|,|\widetilde\vi|\le L_i$, then
\begin{equation}
    \label{eq:padding_instruction_distance}
    \Delta_{\mathrm{inst}}
    :=\|E(\widetilde\vi)-E(\vi_0)\|_F
\end{equation}
is well defined, and the entire path in
\eqref{eq:padding_instruction_path} lies in
$\mathbb R^{L_i\times d}$. The number of padding positions in either endpoint
is at most $L_i$ and is therefore independent of the number $N$ of
demonstrations.
\end{corollary}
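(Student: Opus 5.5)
The statement to prove is Corollary~\ref{cor:padding_instruction_space}. It is essentially definitional, so I will verify it directly from the right-padding construction in \eqref{eq:padding_instruction_embedding}.

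\textbf{Step 1: both endpoints lie in the common space.} Since $|\vi_0|\le L_i$ and $|\widetilde\vi|\le L_i$, both instructions belong to $\gV^{\le L_i}$. The padded embedding map therefore sends each of them to a matrix with exactly $L_i$ rows of dimension $d$. The first rows are the token embeddings, and the remaining rows are $e(\langle\mathrm{pad}\rangle)=0$. Hence $E(\vi_0)$ and $E(\widetilde\vi)$ both lie in $\mathbb R^{L_i\times d}$.

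\textbf{Step 2: the distance and the path.} The difference $E(\widetilde\vi)-E(\vi_0)$ is a well-defined element of the same finite-dimensional space. Its Frobenius norm $\Delta_{\mathrm{inst}}$ is therefore a finite nonnegative real number. Because $\mathbb R^{L_i\times d}$ is a vector space, it is closed under convex combinations. So $E_s=(1-s)E(\vi_0)+sE(\widetilde\vi)$ lies in $\mathbb R^{L_i\times d}$ for every $s\in[0,1]$. Indeed, the whole path is the segment joining the two endpoints.

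\textbf{Step 3: the padding count is independent of $N$.} Instruction $\vi$ of token length $m$ receives exactly $L_i-m\le L_i$ padding rows. This count depends only on the instruction itself and on the fixed constant $L_i$. Neither quantity involves the demonstrations: under the shared-template factorization (Assumption~\ref{ass:chain_factorization}), the repeated copies are tied to the same coordinate $E_s$ rather than enlarging the instruction slot. So the number of padding positions is at most $L_i$, uniformly in $N$.

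\textbf{Main obstacle.} The only subtle point is that padding inside a larger prompt must not shift the positions of subsequent tokens. I would dispose of this by invoking the reserved-slot convention of Remark~\ref{rem:padding_position_convention}. This convention keeps the ambient space and the padding count fixed. The corollary itself makes no claim about how the model output changes under padding; that question is handled separately by Proposition~\ref{prop:padding_transformer_output}.
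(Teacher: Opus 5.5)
Your proposal is correct and matches the paper's reasoning. The paper gives no separate proof, since the corollary follows immediately from the right-padded definition \eqref{eq:padding_instruction_embedding}: both endpoints are $L_i\times d$ matrices, the segment $E_s$ stays in that vector space, and each endpoint carries $L_i-|\vi|\le L_i$ zero rows regardless of $N$.
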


The corollary supplies the dimensional fact required by
the finite-continuation condition; it does not itself prove the existence of
the analytic continuation assumed there. In the paper's formulation, the
continued instruction likelihood is defined directly on the padded instruction-embedding
space, so no extra padding term is required in the continuation bound. If an
implementation instead compares padded model evaluations with their unpadded
counterparts, Proposition~\ref{prop:padding_transformer_output} provides an
$N$-independent endpoint perturbation. Such a fixed perturbation changes only
the prefactor in Theorem~\ref{thm:repeated_unsupported}, not its exponential
rate in $N$.

\subsection{Consequence for the CoT rollout}

Recall from Appendix~\ref{app:cot_main_proof} that the padding discrepancy at
step $k$ is
\begin{equation}
    \label{eq:padding_cot_discrepancy_recall}
    \delta_{k,\mathrm{pad}}(N)
    :=\sup_{h,u}\left|
      M\!\left(u\mid\vz_N^{(k)}(h)\right)
      -M\!\left(u\mid
        \operatorname{Pad}_{M_k}(\vz_N^{(k)}(h))\right)
    \right|.
\end{equation}

\begin{corollary}[Padding contribution to the CoT bound]
\label{cor:padding_cot_bound}
Suppose Proposition~\ref{prop:padding_transformer_output} holds uniformly over
the encoded histories $h$ and outputs $u$ in
\eqref{eq:padding_cot_discrepancy_recall}. Then
\begin{equation}
    \label{eq:padding_cot_discrepancy_bound}
    \delta_{k,\mathrm{pad}}(N)
    \le \varepsilon_{\mathrm{pad}}^{(k)}(N),
\end{equation}
where $\varepsilon_{\mathrm{pad}}^{(k)}(N)$ is obtained from
\eqref{eq:padding_log_probability_bound} using the step-$k$ padding count and
layerwise attention gaps. Consequently, the CoT error obeys
\begin{equation}
    \label{eq:padding_cot_total_bound}
    \left|
      M(\yquery\mid\vz_N)
      -p(\yquery\mid\xquery,t^\dagger)
    \right|
    \le
    \sum_{k=0}^{K-1}
    \left(\beta_ke^{-\alpha_kN}
    +\varepsilon_M^{(k)}(N)
    +\delta_{k,\mathrm{pad}}(N)\right).
\end{equation}
If the added positions represent only masked future states, then
Remark~\ref{rem:padding_causal_mask} gives
$\delta_{k,\mathrm{pad}}(N)=0$. If the step prompt is itself supported, then
$\varepsilon_M^{(k)}(N)=r_0$, and
\eqref{eq:padding_cot_total_bound} reduces to the bound in
Theorem~\ref{thm:cot_main}.
\end{corollary}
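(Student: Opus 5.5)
The target is Corollary~\ref{cor:padding_cot_bound}. Its two parts are a direct bound on the padding discrepancy and an assembly of that bound into the CoT error bound. The approach is to instantiate the earlier results rather than prove anything new.

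\emph{Step 1: the discrepancy bound \eqref{eq:padding_cot_discrepancy_bound}.} Fix $k$, an encoded history $h\in\mathcal H^{(k)}$ and an output $u$. Apply Proposition~\ref{prop:padding_transformer_output} to the prompt $\vz=\vz_N^{(k)}(h)$, using the step-$k$ padding count $m_k$ and the layerwise constants of \eqref{eq:padding_layer_direct_error}. This gives
\[
\bigl|\log M(u\mid\operatorname{Pad}_{M_k}(\vz))-\log M(u\mid\vz)\bigr|\le\varepsilon_{\mathrm{pad}}^{(k)}(N).
\]
The statement is about probabilities, so I then pass from logs to probabilities. Since $M\in[0,1]$ and $x\mapsto e^x$ is $1$-Lipschitz on $(-\infty,0]$, the mean value theorem gives $|a-b|\le|\log a-\log b|$ for $a,b\in(0,1]$. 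If the probability is zero, the log bound is vacuous only when both sides vanish, and the inequality is trivial. Because the hypotheses are assumed uniform in $(h,u)$, taking the supremum in \eqref{eq:padding_cot_discrepancy_recall} proves \eqref{eq:padding_cot_discrepancy_bound}.

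\emph{Step 2: the total bound \eqref{eq:padding_cot_total_bound}.} Here I invoke the step-call specialization, Corollary~\ref{cor:cot_explicit_step_execution}. Under that protocol $\rho^+_{k,N}=0$, and the local bound \eqref{eq:cot_step_with_padding_app} holds with the same $\delta_{k,\mathrm{pad}}(N)$. The success-event recursion of Theorem~\ref{thm:cot_main_app} then gives final error at most $\min\{1,\sum_k[\beta_ke^{-\alpha_kN}+\varepsilon_M^{(k)}(N)+\delta_{k,\mathrm{pad}}(N)]\}$. Dropping the minimum yields \eqref{eq:padding_cot_total_bound}, which holds almost surely for all sufficiently large $N$, as in Lemma~\ref{lmm:cot_uniform_step_convergence}. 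For full-context rollout, I would either assume exact compatibility (Assumption~\ref{ass:cot_rollout_compatibility} for $M$), which also makes $\rho^+_{k,N}=0$, or note that this is the setting the main text adopts.

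\emph{Step 3: the special cases.} If the padding occupies only masked future positions, Remark~\ref{rem:padding_causal_mask} gives $Z_{\mathrm p}=0$ at every head. Hence every $\varepsilon_\ell=0$, $\varepsilon_{\mathrm{pad}}^{(k)}=0$, and therefore $\delta_{k,\mathrm{pad}}=0$; one can also see this directly, since the masked computations coincide. If the step prompt is supported, Lemma~\ref{lmm:model_reference_transport} with the constant path gives $\varepsilon_M^{(k)}(N)=r_0$. Substituting both gives \eqref{eq:cot_main_bound}.

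The main obstacle is bookkeeping rather than analysis: making sure the reference bound in \eqref{eq:cot_step_with_padding_app} applies to the same unpadded local prompts, so that padding enters only through $\delta_{k,\mathrm{pad}}$ and not also through the transport term. The main-text protocol should make this automatic, and I would state it explicitly as the interpretation of the hypothesis.
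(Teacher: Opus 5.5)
Your proposal is correct and follows the paper's proof. The same mean-value-theorem step $|a-b|\le|\log a-\log b|$ on $(0,1]$ turns the uniform log-probability bound of Proposition~\ref{prop:padding_transformer_output} into \eqref{eq:padding_cot_discrepancy_bound}; the total bound \eqref{eq:padding_cot_total_bound} then comes from adding the padding discrepancy and transport term to the supported-step error and rerunning the success-event (union-bound) recursion; and the two special cases follow from Remark~\ref{rem:padding_causal_mask} and the constant-path case of Lemma~\ref{lmm:model_reference_transport}. Routing the assembly explicitly through Corollary~\ref{cor:cot_explicit_step_execution} also makes visible a condition the paper's one-line proof leaves implicit: \eqref{eq:padding_cot_total_bound} needs $\rho^+_{k,N}=0$, via the step-call protocol or exact compatibility for $M$, to hold without a rollout correction.
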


\begin{proof}
For probabilities $a,b\in(0,1]$, the mean-value theorem applied to the
exponential function on $(-\infty,0]$ gives
$|a-b|\le|\log a-\log b|$. Hence
\eqref{eq:padding_log_probability_bound}, uniformly over $x$ and $u$, implies
\eqref{eq:padding_cot_discrepancy_bound}. Adding this discrepancy and the
model-transport bound from Section~\ref{sec:instruction} to the supported-step
error, and repeating the union-bound argument used in the proof
of Theorem~\ref{thm:cot_main} gives
\eqref{eq:padding_cot_total_bound}. The exactly masked case follows from
Remark~\ref{rem:padding_causal_mask}.
\end{proof}

For visible padding used to match a longer pretraining context, neither the
padding count nor the attention gap is automatically favorable; the additional
term in \eqref{eq:padding_cot_total_bound} must then be retained. In particular,
the fact that a prompt becomes longer as $N$ grows does not by itself prove
that the padded and unpadded conditionals become closer. The original
The padding-free form of Theorem~\ref{thm:cot_main} is recovered only when
padding is exactly masked or when $\delta_{k,\mathrm{pad}}(N)$ is otherwise
negligible at the desired rate.

%% file: NeurIPS2026/appendix/appendix_experiment.tex
\label{app:experiment}

In this section we provide detailed introduction of our experiments setup and additional
experimental results to supplement those in the main text.

\subsection{Pretraining Dependence}
\label{app:pretrain_error} To isolate the role of \emph{intrinsic ICL capability},
we construct three variants of \texttt{Llama3.2-3B} that differ only in the
amount and \emph{format} of additional pretraining on a six-digit addition corpus.
Throughout, we use the same evaluation protocol in the main text (6--10 digit addition).

\textbf{Base (Weak).} We use the original \texttt{Llama3.2-3B} checkpoint
without any additional training.

\textbf{Fine-tuned (Strong).} We randomly generate $20{,}000$ six-digit addition
instances and format each instance as a supervised instruction-following sample in
JSONL, each training example is serialized as
\[
    \texttt{Instruction: \ldots\ \ \ Input: \ldots\ \ \ Response: \ldots }
\]
and we fine-tune the model using LoRA on this dataset.

\textbf{Fine-tuned (Medium).} We use the \emph{same} $20{,}000$ generated
instances, but concatenate all serialized training examples into a single long text
sequence and fine-tune on this single sequence. This construction preserves the
token-level training signal while substantially reducing the effective diversity
and sampling of training contexts, leading to a weaker improvement in intrinsic ICL
capability compared to the sample-level training above.

Both \textbf{Strong} and \textbf{Medium} variants are trained with the same LoRA
configuration and optimizer settings. We apply LoRA to attention and MLP
projection modules. We train
a causal LM objective with labels equal to the input tokens (standard next-token
prediction over the serialized prompt). Unless otherwise stated, we use a maximum
sequence length of $512$ tokens with padding/truncation.

\subsection{Demonstration Choice}
\label{app:demo}

In this section, our goal is to isolate how the informativeness of demonstrations
affects in-context generalization, under the same prompt template, the same query
distribution, and the same number of demonstrations.

\paragraph{Tasks.}
We consider two retrieval-style tasks constructed as ``$(x, ?, y)$'' completion:
(i) \textbf{Sports}: $x$ is a country and $y$ is its designated sport label in
our benchmark; (ii) \textbf{People}: $x$ is a company and $y$ is a canonical person
name associated with that company in our benchmark (e.g., a founder). The model
is queried with an unseen $x_{\text{test}}$ and asked to predict the
corresponding $y_{\text{test}}$.

\paragraph{Ambiguous vs.\ identifying demonstration pools.}
For each task, we build two disjoint pools of demonstration pairs: \emph{ambiguous}
and \emph{identifying}. Both pools contain $10$ labeled pairs $(x,y)$. Intuitively,
the ambiguous pool contains examples that can be explained by multiple plausible
latent rules (hence providing weaker evidence about which rule the prompt instantiates),
whereas the identifying pool contains examples that more strongly pin down a unique
rule. Importantly, the \emph{test set} (query distribution) is kept the same
across the two conditions; only the demonstration pool differs.

\paragraph{Prompt template.}
Given a demonstration set $\{(x_{j}, y_{j})\}_{j=1}^{k}$, we format the prompt as
\[
    \texttt{Question: (}x_{1}, ?, y_{1}\texttt{), \ldots, (}x_{k}, ?, y_{k}\texttt
    {). }x_{\text{test}}, ?,
\]
and instruct the model to output a single answer token span in English (sport
name for Sports; person name for People).

\paragraph{Evaluation.}
For each task and each condition (ambiguous / identifying), we generate $100$
prompts per random seed by sampling $k\in\{2,4,8\}$ uniformly and sampling $k$
demonstrations uniformly without replacement from the corresponding pool. The query
pair $(x_{\text{test}}, y_{\text{test}})$ is sampled from a fixed test list with
ground-truth labels. Model outputs are parsed into JSON and normalized by
lowercasing and removing punctuation; accuracy is computed by exact match with the
normalized ground truth. We repeat the entire procedure for $5$ random seeds and
report mean $\pm$ standard deviation across seeds.

\paragraph{Results.}
Tables~\ref{tab:demo_choice_sports}--\ref{tab:demo_choice_people} summarize the
accuracy under the two demonstration pools as a function of $k$ on \texttt{Qwen-235B-A22B}
and \texttt{Qwen-30B-A3B}, respectively. Across both tasks, the identifying pool
consistently yields higher accuracy than the ambiguous pool, and the performance
gap generally increases with more demonstrations.

\begin{table}[htbp]
    \centering
    \caption{\textbf{Improved Demonstration Experiment.} Task: sports, repeated 5
    times with different random seeds.}
    \label{tab:demo_choice_sports} \resizebox{\linewidth}{!}{
    \begin{tabular}{ccccc}
        \toprule \textbf{Demo number} & \textbf{Ambiguous Acc (235B)} & \textbf{Identifying Acc (235B)} & \textbf{Ambiguous Acc (30B)} & \textbf{Identifying Acc (30B)} \\
        \midrule 2                    & $0.265 \pm 0.073$             & $0.421 \pm 0.080$               & $0.122 \pm 0.029$            & $0.176 \pm 0.062$              \\
        4                             & $0.267 \pm 0.084$             & $0.542 \pm 0.048$               & $0.151 \pm 0.078$            & $0.410 \pm 0.059$              \\
        8                             & $0.202 \pm 0.037$             & $0.602 \pm 0.043$               & $0.164 \pm 0.039$            & $0.559 \pm 0.074$              \\
        \bottomrule
    \end{tabular}
    }
\end{table}

\begin{table}[htbp]
    \centering
    \caption{\textbf{Improved Demonstration Experiment.} Task: People, repeated 5
    times with different random seeds.}
    \label{tab:demo_choice_people} \resizebox{\linewidth}{!}{
    \begin{tabular}{ccccc}
        \toprule \textbf{Demo number} & \textbf{Ambiguous Acc (235B)} & \textbf{Identifying Acc (235B)} & \textbf{Ambiguous Acc (30B)} & \textbf{Identifying Acc (30B)} \\
        \midrule 2                    & $0.713 \pm 0.074$             & $0.940 \pm 0.049$               & $0.735 \pm 0.083$            & $0.908 \pm 0.059$              \\
        4                             & $0.708 \pm 0.096$             & $0.931 \pm 0.027$               & $0.775 \pm 0.037$            & $0.950 \pm 0.058$              \\
        8                             & $0.648 \pm 0.008$             & $0.845 \pm 0.044$               & $0.820 \pm 0.027$            & $0.990 \pm 0.001$              \\
        \bottomrule
    \end{tabular}
    }
\end{table}

\FloatBarrier
\subsection{Controlled Validation of Demonstration Evidence Rates}
\label{app:controlled_evidence_rate}

The preceding natural-language experiments vary whether demonstrations identify
the intended task.  We complement them with a controlled modular-arithmetic
experiment in which the amount of identifying evidence is known exactly.  This
isolates the evidence-accumulation mechanism in
Assumption~\ref{assp:identifiability} and tests the exponential concentration
predicted by Theorem~\ref{thm:decay_1_5}.

\paragraph{Controlled task family.}
We use three latent tasks on $\mathbb Z_{17}$,
$t\in\{\mathrm{add},\mathrm{sub},\mathrm{mul}\}$, with addition as the true task.
For an input $x=(a,b)$, the three clean outputs are $a+b$, $a-b$, and $ab$
modulo $17$.  Outputs follow a symmetric noise model with probability $0.95$
on the task-consistent answer and $0.05/16$ on each other answer, and the task
prior is uniform.  The ambiguous distribution $q_{\rm amb}$ is concentrated on
$(0,0)$, where all three tasks agree.  The identifying distribution $q_{\rm id}$
is uniform over the 242 inputs for which the three outputs are pairwise distinct.
We continuously control evidence through
\begin{equation}
    q_\lambda=(1-\lambda)q_{\rm amb}+\lambda q_{\rm id},
    \qquad \lambda\in\{0,.25,.5,.75,1\}.
\end{equation}
For either competing task $t$, the population evidence rate is available in
closed form:
\begin{equation}
    \alpha_t(\lambda)
    =\mathbb E_{x\sim q_\lambda,\,y\sim p(\cdot\mid x,t^*)}
      \log\frac{p(y\mid x,t^*)}{p(y\mid x,t)}
    =5.4133106\,\lambda .
    \label{eq:controlled_alpha}
\end{equation}
Thus $\lambda$ changes only the frequency of identifying examples, giving a
direct experimental control over the rate appearing in the theory.

\paragraph{Reference evidence accumulation.}
For each $\lambda$ and
$N\in\{1,2,4,8,16,32,64\}$, we sample 2,000 independent prompts and measure the
cumulative log-likelihood ratio
$S_{N,t}=\sum_{n=1}^N\log[p(y_n\mid x_n,t^*)/p(y_n\mid x_n,t)]$.
Table~\ref{tab:controlled_evidence_slopes} compares the exact rate in
Equation~\eqref{eq:controlled_alpha} with the fitted slope of
$\mathbb E[S_{N,t}]$ against $N$.  The two competing tasks give nearly identical
slopes, every fit has $R^2>0.99999$, and all empirical slopes are within $0.8\%$
of the exact population value.  At $\lambda=0$, the cumulative evidence is
identically zero, as predicted.

\begin{table}[htbp]
    \centering
    \caption{\textbf{Controlled evidence-rate recovery.} Fitted slopes of
    cumulative evidence versus $N$; intervals are 95\% bootstrap confidence
    intervals over prompts.}
    \label{tab:controlled_evidence_slopes}
    \resizebox{0.92\linewidth}{!}{
    \begin{tabular}{cccc}
        \toprule
        $\lambda$ & Exact $\alpha(\lambda)$ & Subtraction competitor & Multiplication competitor \\
        \midrule
        $0.25$ & $1.3533$ & $1.3428\ [1.3298,1.3554]$ & $1.3432\ [1.3298,1.3551]$ \\
        $0.50$ & $2.7067$ & $2.7144\ [2.6981,2.7292]$ & $2.7134\ [2.6988,2.7315]$ \\
        $0.75$ & $4.0600$ & $4.0681\ [4.0533,4.0830]$ & $4.0679\ [4.0531,4.0847]$ \\
        $1.00$ & $5.4133$ & $5.4141\ [5.4070,5.4212]$ & $5.4127\ [5.4051,5.4198]$ \\
        \bottomrule
    \end{tabular}}
\end{table}

\paragraph{Posterior and predictive concentration.}
Using the exact Bayesian reference predictor, increasing either $N$ or
$\lambda$ sharply reduces task-posterior error.  Table~\ref{tab:controlled_reference_error}
shows that the ordering induced by the analytical evidence rate is reproduced
throughout the sweep.  For example, at $\lambda=0.25$ the posterior error falls
from $0.503$ at $N=1$ to $1.08\times10^{-8}$ at $N=64$, while larger evidence
rates reach the numerical floor earlier.  The corresponding predictive error
has the same ordering and concentration pattern.  This provides a direct
finite-sample realization of the theory's exponential evidence mechanism.

\begin{table}[htbp]
    \centering
    \caption{\textbf{Reference task-posterior error.} Mean posterior mass on
    the two competing tasks over 2,000 prompts.}
    \label{tab:controlled_reference_error}
    \resizebox{0.88\linewidth}{!}{
    \begin{tabular}{cccccccc}
        \toprule
        $\lambda$ & $N=1$ & $2$ & $4$ & $8$ & $16$ & $32$ & $64$ \\
        \midrule
        $0.25$ & $.5033$ & $.3742$ & $.2205$ & $.07523$ & $.01243$ & $2.61\!\times\!10^{-4}$ & $1.08\!\times\!10^{-8}$ \\
        $0.50$ & $.3541$ & $.1892$ & $.04979$ & $.003138$ & $3.40\!\times\!10^{-4}$ & $3.90\!\times\!10^{-16}$ & $<10^{-16}$ \\
        $0.75$ & $.1998$ & $.05201$ & $.005197$ & $2.52\!\times\!10^{-4}$ & $1.18\!\times\!10^{-13}$ & $<10^{-16}$ & $<10^{-16}$ \\
        $1.00$ & $.04103$ & $.006837$ & $2.56\!\times\!10^{-4}$ & $2.99\!\times\!10^{-13}$ & $<10^{-16}$ & $<10^{-16}$ & $<10^{-16}$ \\
        \bottomrule
    \end{tabular}}
\end{table}

\paragraph{Recovery by a learned Transformer.}
Finally, we train five independently initialized Transformers to infer the
latent task from the same demonstrations.  Each model has two layers, width 64,
four attention heads, GELU activations, and is trained on uniformly sampled
modular-arithmetic inputs; evaluation uses 200 fresh prompts per model,
$\lambda$, and $N$ (35,000 measurements in total).  As shown in
Table~\ref{tab:controlled_transformer_error}, the learned models recover the
same central dependence on demonstration evidence: error decreases strongly
with $N$, and larger $\lambda$ produces faster task identification.  The rank
correlation between the analytical rate $\alpha(\lambda)$ and negative model
error is $1.0$ for every $N\leq32$ and remains $0.9$ at $N=64$.  At full
identification ($\lambda=1$), mean error decreases from $0.4994$ to $0.0028$.

\begin{table}[htbp]
    \centering
    \caption{\textbf{Learned-Transformer task error.} Means over five model
    seeds and 200 evaluation prompts per model and condition.}
    \label{tab:controlled_transformer_error}
    \resizebox{0.86\linewidth}{!}{
    \begin{tabular}{cccccccc}
        \toprule
        $\lambda$ & $N=1$ & $2$ & $4$ & $8$ & $16$ & $32$ & $64$ \\
        \midrule
        $0.25$ & $.6598$ & $.6285$ & $.5576$ & $.4520$ & $.2868$ & $.1321$ & $.0483$ \\
        $0.50$ & $.6265$ & $.5662$ & $.4375$ & $.2531$ & $.1067$ & $.0293$ & $.0086$ \\
        $0.75$ & $.5625$ & $.4458$ & $.2952$ & $.1444$ & $.0485$ & $.0150$ & $.0093$ \\
        $1.00$ & $.4994$ & $.3471$ & $.1828$ & $.0630$ & $.0139$ & $.0031$ & $.0028$ \\
        \bottomrule
    \end{tabular}}
\end{table}

Taken together, the exact calculation, reference-predictor simulation, and
learned-model experiment consistently support the paper's central setting:
identifying demonstrations contribute a positive per-example evidence rate,
that evidence accumulates linearly with context length, and the resulting task
and predictive errors contract rapidly with $N$.  The controlled design also
shows that the advantage of identifying demonstrations in the natural-language
experiments is explained by information about the latent task rather than by
demonstration count alone.

\FloatBarrier
\subsection{Empirical Validation of Effective Instruction Sample Size}
\label{app:effective_instruction_sample_size}

The shared-template factorization used for repeated instructions treats one
instruction realization copied across several displayed positions as one
task-conditional draw.  By contrast, independently varying realizations may
provide separate pieces of instruction evidence.  We directly test whether a
Transformer can learn this distinction from the cross-position correlation
structure of its training distribution.  The experiment independently varies
the total number $N$ of displayed instruction positions and the number $K$ of
distinct realizations occupying those positions.  Each realization is copied
exactly $N/K$ times.  Thus $K=1$ is the shared-template endpoint and $K=N$ is
the independent-realization endpoint.

\paragraph{Tasks and paired measurement.}
We use the three latent tasks
\begin{equation}
    f_0(a,b)=(a+b)\bmod 17,\qquad
    f_1(a,b)=(a-b)\bmod 17,\qquad
    f_2(a,b)=ab\bmod 17.
\end{equation}
Every demonstration is $(0,0)\mapsto0$, which is equally compatible with all
three tasks.  Consequently, demonstrations provide no task-identifying
information, and the model must infer the task from the instruction
realizations alone.  Query inputs are selected so that the three tasks produce
distinct answers.

For each configuration we measure the paired task-margin shift
\begin{equation}
    C_\theta(N,K)
    :=m_\theta(z_{N,K}^{\rm mismatch})
      -m_\theta(z_{N,K}^{\rm anchor}),
    \qquad
    m_\theta(z)
    :=\log p_\theta(y_{\rm sub}\mid z)
      -\log p_\theta(y_{\rm add}\mid z).
    \label{eq:effective_sample_margin}
\end{equation}
The paired prompts have identical $N$, $K$, cluster identities, repeat counts,
cluster ordering, demonstrations, query, positions, attention mask, and
nuisance embedding coordinates.  They differ only in the task-likelihood
coordinates of the instructions: the anchor favors addition and the mismatch
favors subtraction.  This pairing isolates instruction-induced task pressure
from sequence length and demonstration content.

\paragraph{Instruction realizations.}
Each realization has four coordinates,
\begin{equation}
U_{c,j}=\left[
\log\frac{p(c\mid t=1)}{p(c\mid t=0)},
\log\frac{p(c\mid t=2)}{p(c\mid t=0)},
r\cos\theta_j,
r\sin\theta_j
\right],
\end{equation}
where $c\in\{0,1,2\}$ is the instruction class,
$j\in\{1,\ldots,64\}$ is its alias identity, $r=0.5$, and
$\theta_j=2\pi j/64$.  The first two coordinates contain task evidence; the
last two distinguish aliases but contain no task information.  We set
$p(c=t\mid t)=0.65$ and $p(c\ne t\mid t)=0.175$, so a matching realization
has single-sample log-likelihood ratio
$\log(0.65/0.175)\simeq1.312$.  All 64 aliases within a class have the same
task likelihood.

\paragraph{Training regimes.}
We train under three distributions with identical single-position instruction
marginals, latent-task distribution, query distribution, sequence-length
distribution, architecture, optimization, and total training-token count.
They differ only in cross-position correlation:
\begin{enumerate}[leftmargin=*,itemsep=2pt,topsep=2pt]
    \item \emph{Shared-template}: one realization $U\sim P(U\mid t)$ is sampled
    and copied to all $N$ positions, so the independent instruction sample size
    is one.
    \item \emph{Hierarchical}: $K\in\{1,2,4,\ldots,N\}$ is sampled, $K$
    realizations are drawn independently from $P(U\mid t)$, and each is copied
    $N/K$ times.  The exact instruction likelihood therefore contains $K$,
    rather than $N$, independent factors.
    \item \emph{Independent-occurrence}: all $N$ displayed realizations are
    sampled independently, giving the endpoint $K=N$ during training.
\end{enumerate}
The model is not given $K$, cluster labels, or cluster boundaries; it must
recover the effective sample structure from equality and variation among its
input embeddings.

\paragraph{Model and evaluation.}
The predictor is a two-layer pre-norm Transformer encoder with hidden width 64,
four attention heads, FFN width 128, GELU activations, and no dropout.  It is
trained for 1,600 AdamW steps with batch size 128, learning rate
$8\times10^{-4}$, and weight decay $10^{-4}$.  We train five initialization
seeds and output a 17-way query-answer distribution.  In the principal
experiment the target is the exact latent-task posterior mapped to the three
distinct query answers.  A separate experiment uses sampled hard answer labels.

We evaluate every $N\in\{1,2,4,8,16,32,64\}$ and every
$K\in\{1,2,4,\ldots,N\}$ dividing $N$, using 20 held-out query seeds and 16
realization-identity pools.  This yields 44,800 measurements per training
regime and 134,400 in total.  We fit
\begin{equation}
    C_\theta(N,K)=a+b_NN+b_KK
    \label{eq:effective_sample_regression}
\end{equation}
and compute 95\% confidence intervals using 2,000 model-seed/prompt-seed
cluster-bootstrap replicates.

\paragraph{Results.}
Table~\ref{tab:effective_sample_size} shows a clean separation between displayed
copies and distinct realizations.  For hierarchical training, the coefficient
of $K$ is strictly positive, whereas no residual dependence on $N$ is
statistically distinguishable after controlling for $K$.  The shared-template
control has no significant $K$ coefficient.  Moreover, the estimated
distinct-realization response is ordered with the amount of independence in
the training distribution:
\begin{equation}
    b_K^{\rm shared}
    <b_K^{\rm hierarchical}
    <b_K^{\rm occurrence}.
    \label{eq:effective_sample_ordering}
\end{equation}
The hierarchical point estimate is $10.8$ times the shared-control estimate,
and the confidence intervals for the hierarchical and shared coefficients do
not overlap.  Thus the same Transformer architecture learns different
effective instruction sample sizes solely from cross-position correlations.

\begin{table}[htbp]
    \centering
    \caption{\textbf{Effective instruction sample size.} Coefficients in
    $C_\theta(N,K)=a+b_NN+b_KK$ for posterior-distilled models.  Intervals are
    95\% cluster-bootstrap confidence intervals.}
    \label{tab:effective_sample_size}
    \resizebox{0.94\linewidth}{!}{
    \begin{tabular}{ccccc}
        \toprule
        Training regime & $b_N$ & 95\% CI & $b_K$ & 95\% CI \\
        \midrule
        Shared-template & $-0.000197$ & $[-0.000794,\ 0.000395]$
                        & $0.000030$ & $[-0.000002,\ 0.000062]$ \\
        Hierarchical & $0.001244$ & $[-0.001079,\ 0.003472]$
                     & $0.000328$ & $[0.000207,\ 0.000447]$ \\
        Independent-occurrence & $-0.001733$ & $[-0.007019,\ 0.003765]$
                               & $0.001230$ & $[0.000960,\ 0.001518]$ \\
        \bottomrule
    \end{tabular}}
\end{table}

The hard-label replication confirms that the central result is not an artifact
of posterior distillation.  Under hierarchical training it gives
\begin{equation}
    b_K^{\rm hierarchical}=0.000187,
    \qquad 95\%\ \mathrm{CI}=[0.000097,\ 0.000283],
\end{equation}
while the shared-template coefficient remains statistically indistinguishable
from zero,
$b_K^{\rm shared}=0.000053$ with 95\% CI
$[-0.000011,0.000116]$.  The fitted hierarchical $b_N$ is likewise not
distinguishable from zero.  A distance sweep further recovers the selective
$K$ response at both medium and full mismatch strength: at interpolation
$s=0.5$, $b_K=0.000116$ with 95\% CI $[0.000024,0.000213]$, and at $s=1$,
$b_K=0.000187$ with 95\% CI $[0.000097,0.000282]$.

These results empirically support the factorization used in the theory.  When
training presents repeated slots as deterministic copies of one shared draw,
the learned predictor treats them as a single effective instruction sample.
When training contains clustered independent realizations, task pressure grows
with the number of distinct draws rather than with the number of displayed
copies.  The experiment therefore validates the substantive distinction
between the shared-template and varying-realization settings: their different
scalings can be learned by a Transformer from correlation structure alone,
even when all single-position marginals and all non-instruction aspects of the
training distribution are held fixed.

\FloatBarrier
\subsection{Prompt Sensitivity vs. Demonstration Count}
\label{app:experiment_instruction_variation} We study how token-level numerical
predictions respond to changes in the in-context operator pattern while keeping
the underlying query answer fixed. We evaluate \texttt{Qwen3-30B-A3B} model and
\texttt{Qwen3-235B-A22B} model.

\paragraph{Prompt regimes.}
Each evaluation prompt consists of a fixed-format arithmetic query paired with a
fixed target answer (e.g., \texttt{1345}). We construct three regimes that
differ only in the operator pattern appearing in the demonstrations: (i) \emph{addition-only},
where all in-context examples use \texttt{+}; (ii) \emph{multiplication-only}, where
all examples use \texttt{$\times$}; and (iii) \emph{mixed-symbol}, where different
operators are interleaved across demonstrations. Across all regimes, the query problem
and its correct final answer remain identical.

\paragraph{Relation to theoretical support.}
Because the Qwen pretraining corpus and its latent task labels are unavailable,
we cannot determine whether any of these operator--task pairings has literal
zero conditional probability. Accordingly, this experiment tests consistent
versus varying instruction mismatch, rather than directly verifying
$p(\vi\mid t^*)=0$. The same fixed-cost mechanism remains relevant when
$p(\vi\mid t^*)>0$ but is extremely small: a finite corpus may then contain too
few such pairings for the model to learn the association reliably, making the
instruction effectively unsupported for the learned model, while its fixed
instruction pressure can still be overcome by accumulating demonstration
evidence. It is natural to expect prompts that use a multiplication symbol
while requiring addition outputs to be much rarer in pretraining than ordinary
aligned uses, but we do not treat this expectation as a verified property of
Qwen's training distribution.

\paragraph{Token-level score extraction.}
For each prompt, we extract token-level scores corresponding to the correct digits
of the target answer. Concretely, for a target digit string $a_{1}a_{2}\cdots a_{L}$,
we record, at each output position $\ell$, the model-assigned log-probability of
the correct digit $a_{\ell}$. These per-digit log-probabilities serve as a fine-grained
proxy for the model’s confidence in the correct numerical prediction at each position.
Specifically, in the figures, Char~$1 ,3,4 ,5$ denotes the score trace
associated with the $1,2,3,4$-th digit position of the fixed target answer
\texttt{1345}.

\paragraph{Findings.}
Figure~\ref{fig:add_logit} and~\ref{fig:add_logit_a3b} present the token-level score
trajectories for the \texttt{Qwen3-235B-A22B} and \texttt{Qwen3-30B-A3B} models,
respectively, under the three operator regimes. Corresponding score gradients are
shown in Figure~\ref{fig:add_logit_235b_gradient} and~\ref{fig:add_logit_a3b_gradient}.
In homogeneous-symbol regimes (addition-only or multiplication-only), per-digit score
trajectories are relatively smooth and exhibit small discrete gradients,
indicating stable symbolic behavior. In contrast, the mixed-symbol regime
produces larger variance and more frequent sign changes in the gradients, suggesting
increased instability when the operator distribution shifts within the context.
These patterns are consistent across model scales, although the larger \texttt{Qwen3-235B-A22B}
model exhibits systematically smaller gradient magnitudes, indicating improved robustness
to symbolic heterogeneity.


\begin{figure}[htbp]
    \centering
    \begin{subfigure}
        [t]{0.32\textwidth}
        \centering
        \includegraphics[width=\textwidth]{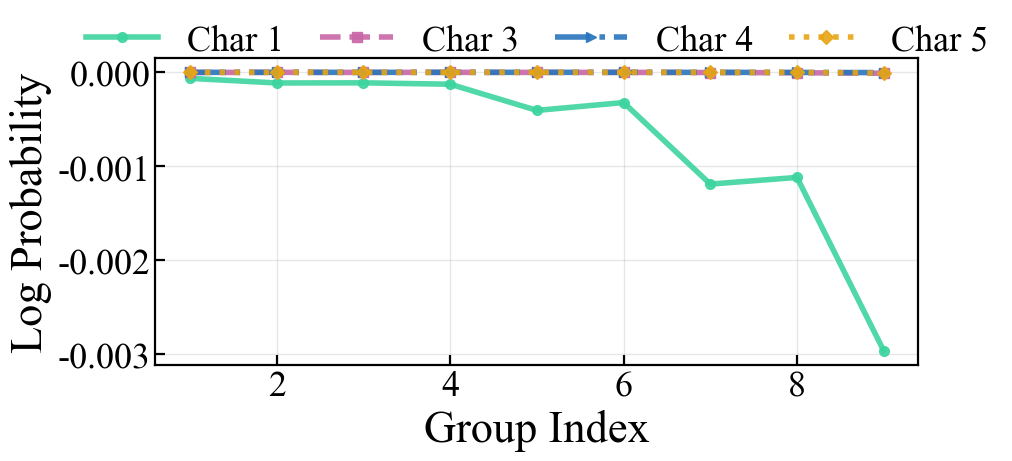}
        \caption{Equivalent Instructions}
        \label{fig:correct_logit_235b}
    \end{subfigure}
    \hfill
    \begin{subfigure}
        [t]{0.32\textwidth}
        \centering
        \includegraphics[width=\textwidth]{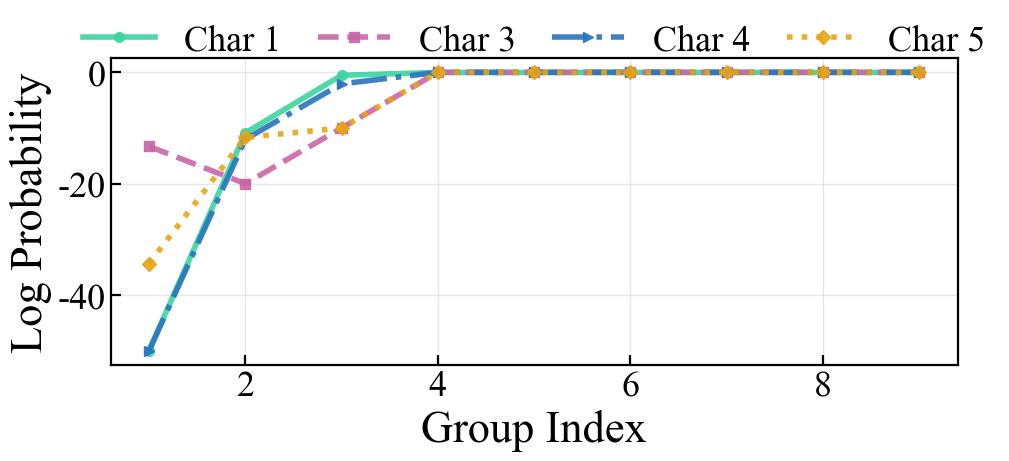}
        \caption{Consistent Instructions}
        \label{fig:consistent_logit_235b}
    \end{subfigure}
    \hfill
    \begin{subfigure}
        [t]{0.32\textwidth}
        \centering
        \includegraphics[width=\textwidth]{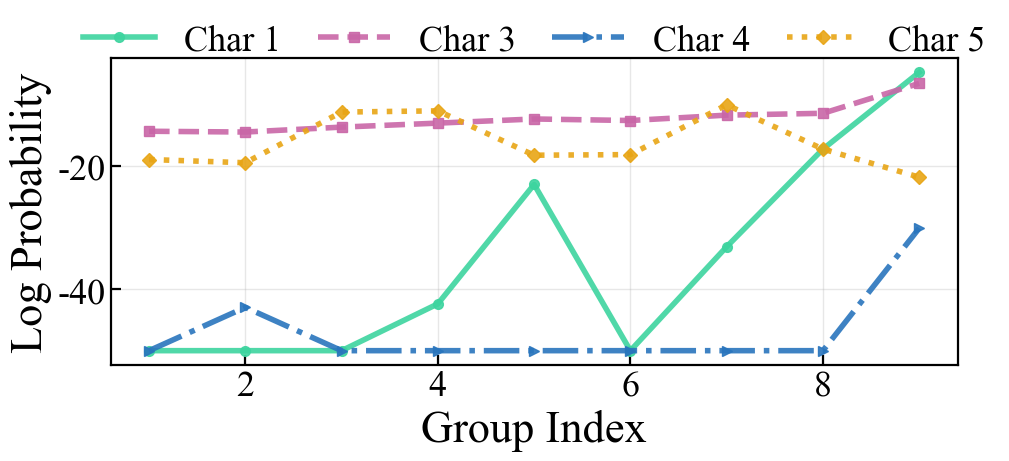}
        \caption{Inconsistent Instructions}
        \label{fig:inconsistent_logit_235b}
    \end{subfigure}
    \caption{\textbf{Posterior confidence under instruction variation.} (\texttt{Qwen-235B-A22B-Instruct})
    (a) Equivalent instructions yield stable, high-confidence predictions. (b)
    Incorrect but consistent instructions allow confidence to recover as more
    demonstrations are added. (c) Incorrect and inconsistent instructions lead to
    persistently low and unstable confidence despite increasing context length.}

    \label{fig:add_logit}
\end{figure}

\begin{figure}[htbp]
    \centering
    \includegraphics[width=0.32\textwidth]{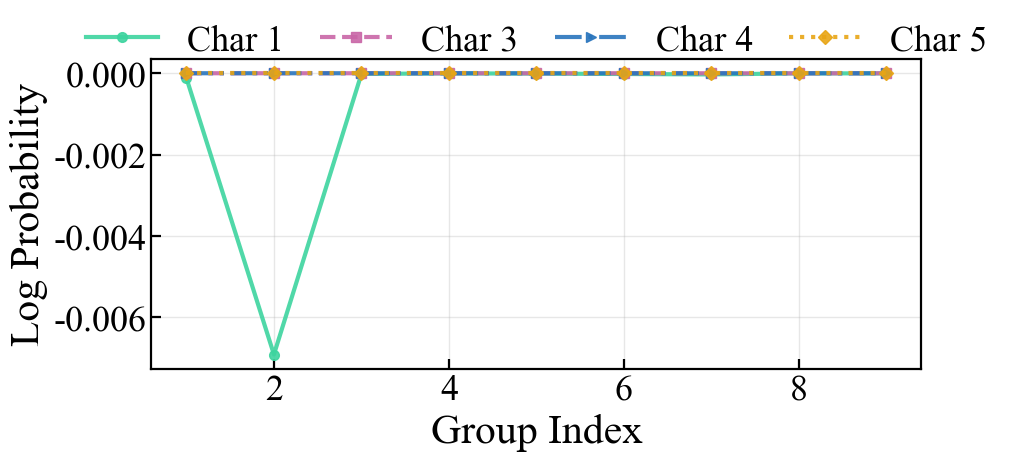}
    \includegraphics[width=0.32\textwidth]{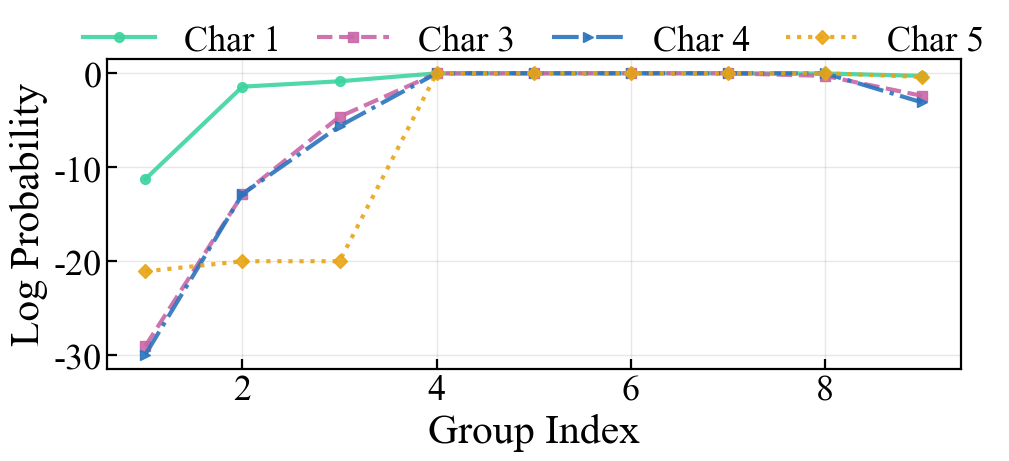}
    \includegraphics[width=0.32\textwidth]{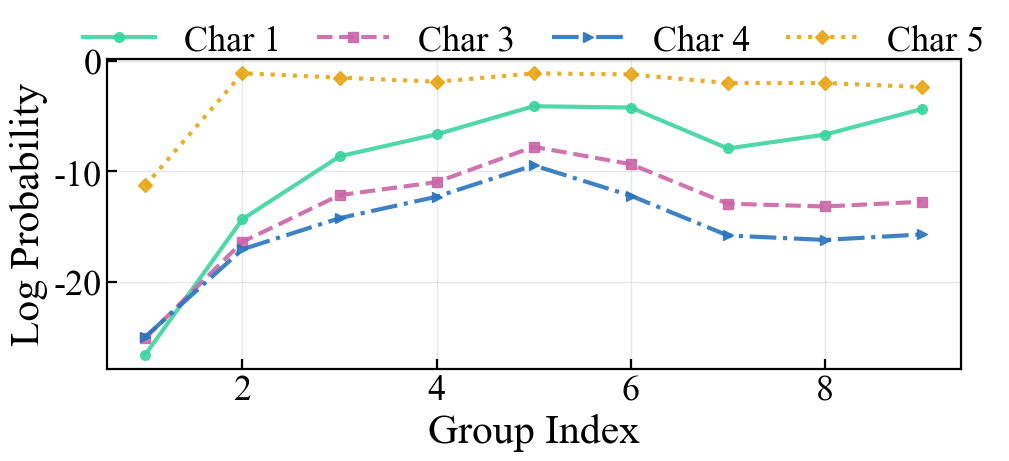}
    \caption{\textbf{Posterior confidence under instruction variation.} (\texttt{Qwen-30B-A3B-Instruct})
    (a) Equivalent instructions yield stable, high-confidence predictions. (b)
    Incorrect but consistent instructions allow confidence to recover as more
    demonstrations are added. (c) Incorrect and inconsistent instructions lead to
    persistently low and unstable confidence despite increasing context length.}
    \label{fig:add_logit_a3b}
\end{figure}

\begin{figure}[htbp]
    \centering
    \includegraphics[width=0.32\textwidth]{
        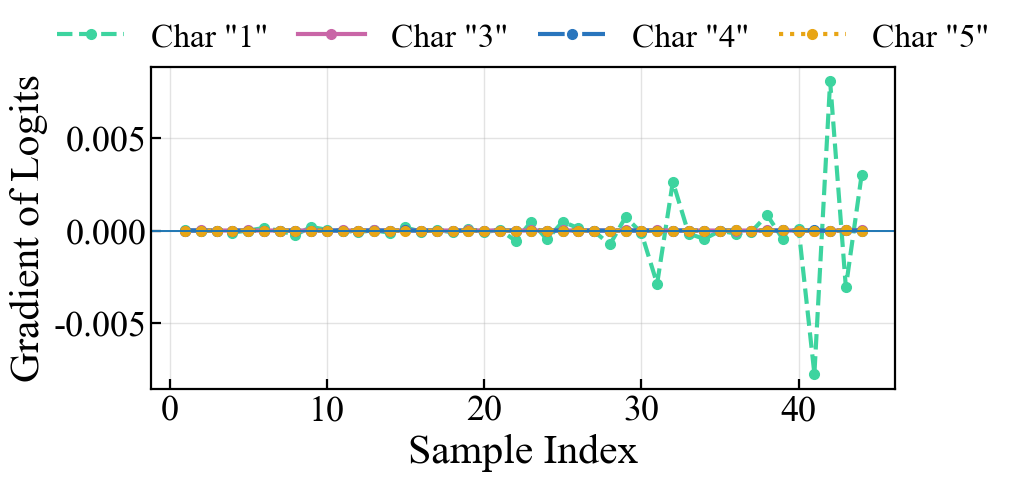
    }
    \includegraphics[width=0.32\textwidth]{
        NeurIPS2026/fig/235b_add_consistent_logits_gradient.png
    }
    \includegraphics[width=0.32\textwidth]{
        NeurIPS2026/fig/235b_add_inconsistent_logits_gradient.png
    }
    \caption{\textbf{Discrete gradients of log probability.} (\texttt{Qwen-235B-A22B-Instruct})
    (a) Equivalent instructions yield small and stable gradients. (b) Incorrect but
    consistent instructions show gradual stabilization. (c) Incorrect and inconsistent
    instructions produce large and irregular gradients.}
    \label{fig:add_logit_235b_gradient}
\end{figure}

\begin{figure}[htbp]
    \centering
    \includegraphics[width=0.32\textwidth]{
        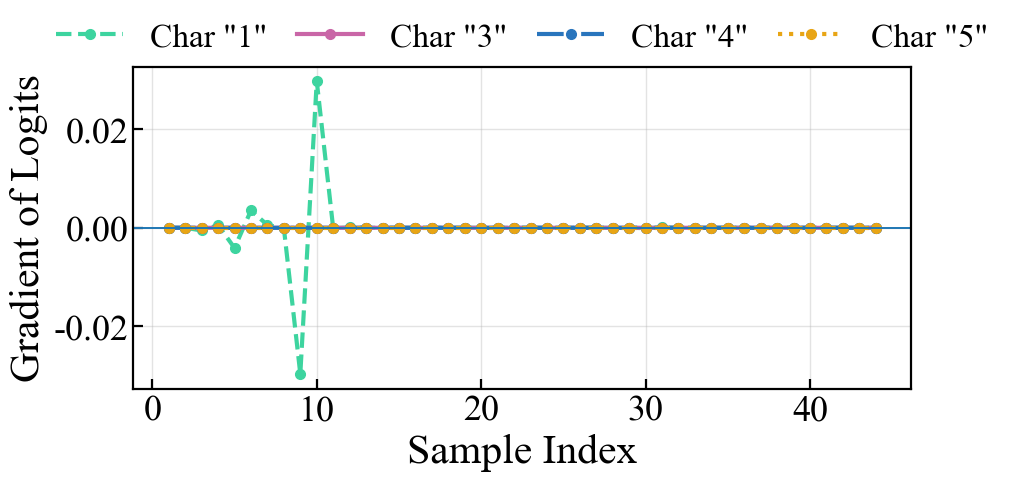
    }
    \includegraphics[width=0.32\textwidth]{
        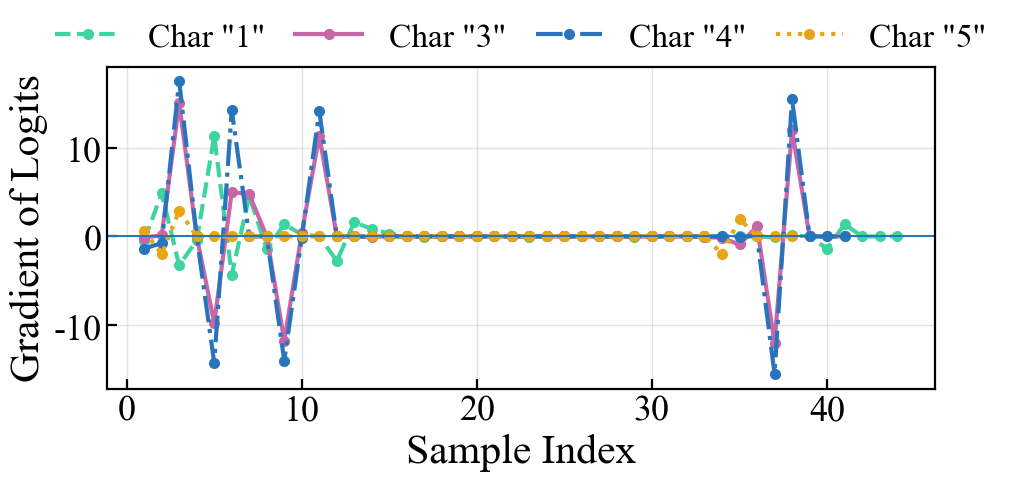
    }
    \includegraphics[width=0.32\textwidth]{
        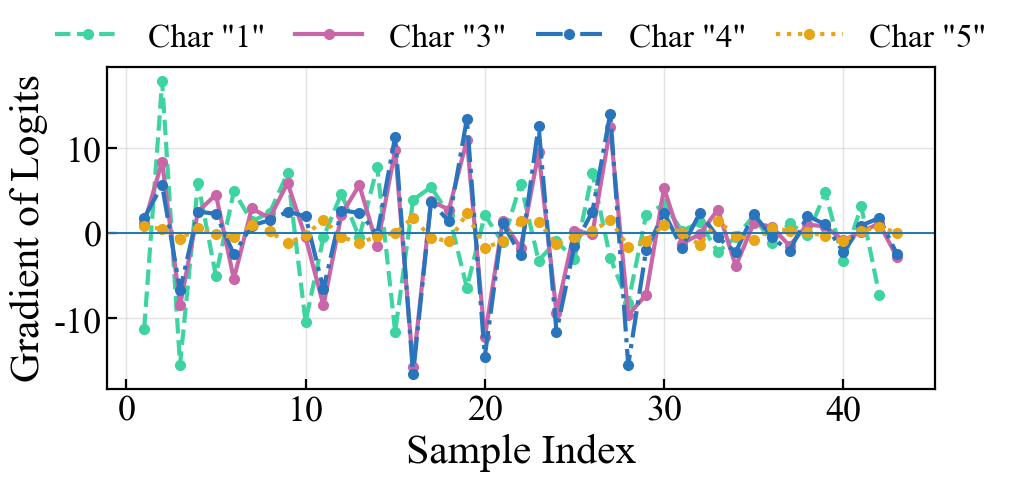
    }
    \caption{\textbf{Discrete gradients of log probability.} (\texttt{Qwen-30B-A3B-Instruct})
    (a) Equivalent instructions yield small and stable gradients. (b) Incorrect but
    consistent instructions show gradual stabilization.(c) Incorrect and inconsistent
    instructions produce large and irregular gradients.}
    \label{fig:add_logit_a3b_gradient}
\end{figure}

\subsection{Controlled Agent Execution through Explicit Intermediate States}
\label{app:cot_effect}
We study a fixed-plan arithmetic agent implemented with
\texttt{Llama3.2-3B}. At execution step $k$, the agent reads its current state
$x^{(k)}$, applies the operation prescribed by the plan, and emits the updated
state $x^{(k+1)}$. Because the language model serializes these states and
updates as an autoregressive text trace, the implementation uses
chain-of-thought (CoT)-style prompting. Here CoT is not a separate theoretical
object: it is the textual interface through which the agent exposes its state
trajectory and executes one transition at a time. This correspondence makes
intermediate states and step-level errors observable and lets us test the
agent-execution mechanism in Section~\ref{subsec:cot_main} and
Theorem~\ref{thm:cot_main}. We compare two agent plans for the same multi-digit
addition objective. They have the same correct terminal state but differ in
whether their component operations are supported by training.

\paragraph{In-distribution agent plan.}
In the \emph{in-distribution} plan, every state transition invokes a
well-defined arithmetic operation, such as digit-wise decomposition, carry
propagation, or aggregation. To ensure that the agent can execute these
operations in isolation, we generate $10{,}000$ training examples for
\emph{each} operation and fine-tune the model on the resulting step-level
datasets before evaluation. Each example is formatted as an
instruction--input--output triple, analogous to the pretraining setup in
Appendix~\ref{app:pretrain_error}.
\begin{quote}
    \small \textbf{Step 1:} $1234561 = 1234560 + 1$ \\
    \textbf{Step 2:} $1323212 = 1323210 + 2$ \\
    \textbf{Step 3:} $1234560 + 1323210 \rightarrow (123456 + 132321)\times 10$
    \\
    \textbf{Step 4:} $1 + 2$ \\
    \textbf{Step 5:} Combine results.
\end{quote}
Each step corresponds to a sub-computation the model can reliably perform in isolation.

\paragraph{Out-of-distribution agent plan.}
The \emph{out-of-distribution} plan reaches the same correct terminal state,
but it requires intermediate operations on which the model has \emph{not} been
explicitly trained. No additional fine-tuning is performed for these
operations; the agent must execute the prescribed state transitions from the
plan supplied in the prompt.
\begin{quote}
    \small \textbf{Step 1:} $1234561 = 1000000 + 234561$ \\
    \textbf{Step 2:} $1323212 = 1000000 + 323212$ \\
    \textbf{Step 3:} Add corresponding components. $1000000 + 1000000 = 2000000,
    234561 + 323212 = 557773$ \\
    \textbf{Step 4:} Combine results.
\end{quote}
Although the plan is algebraically correct, its intermediate transitions do
not correspond to well-supported arithmetic routines in the model's training
distribution and are therefore unreliable during execution.

\paragraph{Training Setup.}
All fine-tuning for the in-distribution agent plan uses the same LoRA configuration, optimizer,
and training hyperparameters as in Appendix~\ref{app:pretrain_error}. In particular, the
architecture, learning rate, number of epochs, batch size, and quantization
settings are kept fixed. This isolates the effect of step-level intrinsic capability
from other confounding factors.

\paragraph{Evaluation.}
We evaluate the step-wise log probabilities to make the empirical measurement
consistent with the pathwise perspective in
Section~\ref{subsec:pathwise_lipschitz}. Let
$f_s(\bm z)$ denote the model logits at execution step $s$, and let
$\ell_s(\bm z)=g(f_s(\bm z))$ be the corresponding cross-entropy loss on the
correct step output. If $f_s$ is $L_{f,s}$-Lipschitz and the cross-entropy map
$g$ is $L_g$-Lipschitz on the bounded logit region considered in the experiment,
then for two prompts $\bm z_1,\bm z_2$ along the same step-wise path,
\[
|\ell_s(\bm z_1)-\ell_s(\bm z_2)|
\le
L_g \|f_s(\bm z_1)-f_s(\bm z_2)\|
\le
L_g L_{f,s}\|\bm z_1-\bm z_2\|.
\]
Thus, if prompts are varied along a fixed step-wise path, gradients of the
correct-output log probability provide a proxy for the effective Lipschitz
sensitivity of that transition. Our plots instead compare different
execution steps, which generally implement different functions; their
variation is therefore a qualitative stability diagnostic rather than a
direct estimate of a within-step Lipschitz constant. Under the in-distribution
agent plan, each transition invokes an operation that is better represented
during training, so the log-probability profile is expected to vary more
smoothly across steps. In contrast, the out-of-distribution plan retains
difficult operations in the execution trajectory, leading to sharper
step-level variations and greater effective step-wise sensitivity.
\paragraph{Results.}
We report three types of measurements: (i) final accuracy, (ii) step-wise accuracy, and (iii) step-wise log-probabilities, which serve as a proxy for the effective Lipschitz behavior at each step.

\textbf{Final accuracy.}
Consistent with Figure~\ref{fig:chain}, the in-distribution (ID) agent plan
substantially improves performance over direct prediction, whereas the
out-of-distribution (OOD) plan degrades performance and is often worse than
executing no plan. This matches Theorem~\ref{thm:cot_main}: an agent can realize
an unsupported target when its planned operations are supported and
identifiable, whereas one poorly identified or poorly approximated transition
can dominate the execution trajectory.

\textbf{Step-wise accuracy.}
To further understand this behavior, we measure the correctness of every
state transition in the agent trajectory. Under the ID plan, early transitions
are executed almost perfectly and most errors occur only at final aggregation.
Under the OOD plan, accuracy degrades progressively, with each subsequent state
transition less reliable than the previous one.

This progressive degradation is consistent with the error-propagation structure
in Theorem~\ref{thm:cot_main}, whose final-error bound sums the identification
and model-transport errors of the individual steps. When intermediate steps are
unreliable, these contributions accumulate and lead to poor final performance.

\textbf{Step-wise Lipschitz behavior (via log-probabilities).}
To connect these observations with the pathwise theory, we examine the average
log-probability assigned to the correct token at each step. As explained above,
the cross-step profile diagnoses stability along the realized agent trajectory;
it does not by itself estimate a within-step pathwise Lipschitz constant.

In the ID setting, the log-probabilities remain stable across steps, providing
evidence that no individual supported subtask introduces a pronounced loss
spike. This is consistent with the favorable regime of
Theorem~\ref{thm:cot_main}, in which every step has a controlled identification
and model-transport error.

In contrast, in the OOD setting, step-wise accuracy decreases substantially and
the log-probability profile exhibits larger fluctuations. This indicates that
some intermediate computations are unreliable, exactly the failure mode
captured by the step-wise sum in Theorem~\ref{thm:cot_main}. When a substep is
poorly identified or has a large model-transport error, its contribution to
the sum can dominate the final result. The experiments are therefore
consistent with the theory's qualitative prediction that an unreliable step
can degrade the final answer; isolating the two error sources individually
would require controlled within-step interventions.
\begin{figure}
\centering
\includegraphics[width=0.45\textwidth]{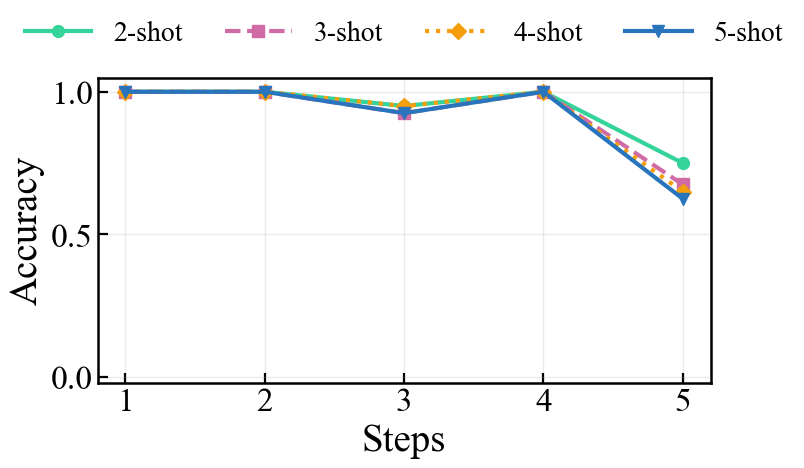}
\includegraphics[width=0.45\textwidth]{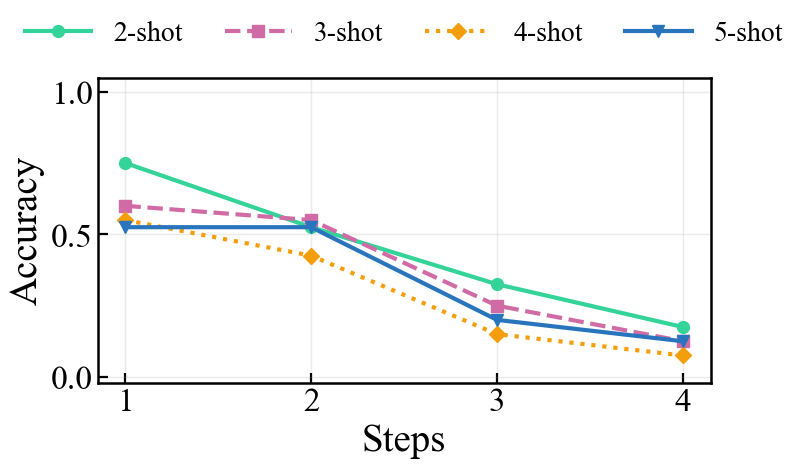}
\caption{Agent transition accuracy. Left: ID plan; right: OOD plan.}
\label{fig:cot_stepwise_acc}
\end{figure}

\begin{figure}
\centering
\includegraphics[width=0.45\textwidth]{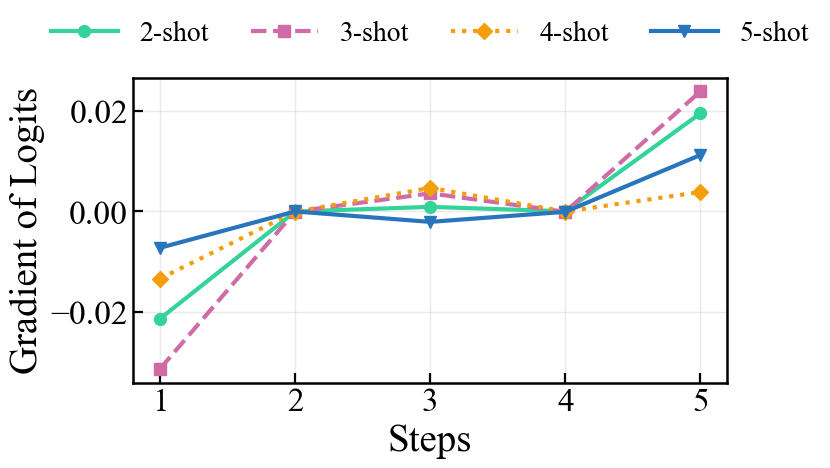}
\includegraphics[width=0.45\textwidth]{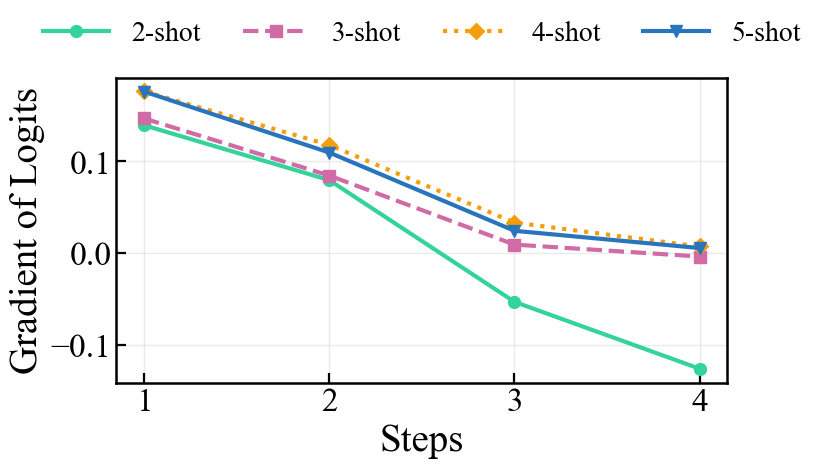}
\caption{Step-wise agent log-probability gradients. Left: ID plan; right: OOD plan.}
\label{fig:cot_logits_gradient}
\end{figure}

\FloatBarrier
\paragraph{Controlled validation of supported agent composition.}
We additionally conduct a controlled finite-task experiment that tests the
central mechanism of Theorem~\ref{thm:cot_main} directly. The agent follows an
explicit three-operation plan, and its intermediate states are fully
observable, so its conditional transition errors can be compared directly
with the theorem's agent-execution bound. All computations are performed modulo $17$. The training
distribution contains only the three supported tasks
\begin{equation}
    f_0(x)=x+1,\qquad f_1(x)=2x,\qquad f_2(x)=x-3,
    \label{eq:cot_controlled_supported_tasks}
\end{equation}
whereas the composed target
\begin{equation}
    f^\dagger(x)=2x-1=f_2\circ f_1\circ f_0(x)\pmod {17}
    \label{eq:cot_controlled_target}
\end{equation}
is never included as a direct training task. Demonstrations contain $8\%$
observation noise, so competing-task likelihood ratios have common support;
the query and task outputs remain deterministic, avoiding an artificial final
error floor.

We compare three execution procedures. \emph{Direct ICL} asks for
$f^\dagger(x)$ as one indivisible mapping. The \emph{supported agent} follows
the three-operation plan
\begin{equation}
    x\xrightarrow{x+1}x^{(1)}
     \xrightarrow{2x}x^{(2)}
     \xrightarrow{x-3}y.
    \label{eq:cot_controlled_good_chain}
\end{equation}
The equal-length \emph{OOD agent} instead follows
\begin{equation}
    x\xrightarrow{x+2}x^{(1)}
     \xrightarrow{2x-2}x^{(2)}
     \xrightarrow{x-3}y,
    \label{eq:cot_controlled_bad_chain}
\end{equation}
whose first two subtasks never occur during training, although their
composition still gives the same correct target $2x-1$. This equal-length
control separates support of the planned operations from the mere presence
or length of an explicit execution trajectory.

We train five independently initialized models and use $12$ prompt seeds. For
each model--prompt pair, we evaluate all $17$ query states at
$N\in\{1,2,4,8,16,32,64\}$. Each reported configuration therefore averages
over $60$ model--prompt measurements, with all query states included in each
measurement. The complete evaluation contains $840$ aggregate records.

\begin{table}[t]
    \centering
    \caption{\textbf{Controlled validation of supported agent composition.}
    Probabilistic error under direct prediction and stochastic agent rollout,
    together with the sum of the supported-agent conditional transition
    errors. Values are mean $\pm$ standard deviation over five models and
    twelve prompt seeds; every measurement includes all $17$ query states.}
    \label{tab:cot_controlled_validation}
    \small
    \setlength{\tabcolsep}{4pt}
    \begin{tabular}{c c c c c}
        \toprule
        $N$ & Direct error & Supported-agent rollout & OOD-agent rollout
            & $\sum_{k=0}^{2}q_k(N)$ \\
        \midrule
         1 & $0.964\pm0.023$ & $0.456\pm0.302$ & $0.877\pm0.103$ & $0.531\pm0.385$ \\
         2 & $0.973\pm0.020$ & $0.185\pm0.170$ & $0.865\pm0.098$ & $0.194\pm0.178$ \\
         4 & $0.974\pm0.021$ & $0.072\pm0.091$ & $0.877\pm0.113$ & $0.074\pm0.093$ \\
         8 & $0.966\pm0.022$ & $0.038\pm0.048$ & $0.900\pm0.078$ & $0.038\pm0.048$ \\
        16 & $0.963\pm0.023$ & $0.016\pm0.018$ & $0.904\pm0.054$ & $0.016\pm0.018$ \\
        32 & $0.966\pm0.022$ & $0.0088\pm0.0035$ & $0.892\pm0.061$ & $0.0088\pm0.0035$ \\
        64 & $0.965\pm0.021$ & $0.0078\pm0.0036$ & $0.898\pm0.042$ & $0.0079\pm0.0036$ \\
        \bottomrule
    \end{tabular}
\end{table}

The contrast is sharp. The supported-agent rollout error decreases from
$0.456$ to $0.0078$, a factor of approximately $58$, and its greedy rollout
error is zero for $N\ge32$. Direct ICL remains near $0.96$ error, consistent
with the fact that the held-out composed task cannot be selected directly from
the training-supported task set. The equal-length OOD agent remains between
$0.865$ and $0.904$ error. Thus the improvement cannot be explained merely by
emitting intermediate states or by plan length: it appears precisely when the
plan consists of supported, identifiable operations, as required by
Theorem~\ref{thm:cot_main}.

\paragraph{Step-wise error propagation.}
For the supported agent plan, let $F_{k-1}$ be the event that all preceding transitions
are correct and define the conditional step-error probability
\begin{equation}
    q_k(N):=P_M(E_k^c\mid F_{k-1}).
    \label{eq:cot_empirical_conditional_step_error}
\end{equation}
We compute these probabilities under the correct prefix and evaluate the exact
stochastic autoregressive rollout distribution, rather than relying only on
greedy decoding. The recursion used in the proof of
Theorem~\ref{thm:cot_main} gives
\begin{equation}
    e_{\mathrm{rollout}}(N)
    \le \sum_{k=0}^{2}q_k(N).
    \label{eq:cot_empirical_union_bound}
\end{equation}
All $840$ evaluated records satisfy this inequality. This checks error
accumulation for the rollout kernel actually evaluated; the inequality itself
does not require exact compatibility with an extracted local prompt.
The bound is non-vacuous and becomes nearly tight once the individual step errors
are small. Using unrounded means,
\begin{equation}
\begin{aligned}
N=4 &: \quad 0.0725 \le 0.0736,\\
N=16&: \quad 0.0161 \le 0.0162,\\
N=64&: \quad 0.00783 \le 0.00785.
\end{aligned}
\label{eq:cot_empirical_tight_examples}
\end{equation}
This tightness is expected in the low-error regime because simultaneous-error
corrections are higher order in the $q_k$. The experiment therefore shows that
the theorem's step-wise error decomposition is not merely a loose worst-case
accounting device in this controlled setting: the measured conditional errors
closely explain the final rollout error. It does not separately establish
that extracted-prompt error bounds transfer to a full-context kernel without
a correction. That requires matched evaluations of the two kernels in
\eqref{eq:cot_rollout_discrepancy} or
\eqref{eq:cot_one_sided_discrepancy}. For an executor that explicitly invokes
the extracted step prompts with no additional retained context, the
zero-discrepancy guarantee follows instead from
Corollary~\ref{cor:cot_explicit_step_execution}.

\paragraph{Actual-model tracking of the reference predictor.}
Finally, we evaluate the discrepancy between the Transformer and the Bayesian
reference predictor on the supported steps. The maximum step-wise
model--reference gap decreases from $0.582$ at $N=1$ to $0.027$ at $N=64$.
Hence the observed concentration is not confined to the idealized Bayesian
reference: as demonstration evidence accumulates, the actual Transformer also
tracks that reference increasingly closely. The theorem keeps this discrepancy
explicit through the terms $\varepsilon_M^{(k)}(N)$ and does not require them
to vanish; their empirical decrease is an additional favorable property of
the learned model in this experiment.

\subsection{Continuous Soft-Prompt Paths in a Pretrained Qwen3 Model}
\label{app:qwen_soft_prompt_interpolation}

We finally test the pathwise perspective of
Section~\ref{subsec:pathwise_lipschitz} directly in a pretrained open-weight
language model. Unlike the preceding discrete prompt comparisons, this
experiment bypasses tokenization after constructing the path: interpolated
instruction embeddings are injected into
\texttt{Qwen3-4B-Instruct-2507} through \texttt{inputs\_embeds}. The model is
evaluated in BF16 without quantization or parameter updates.

\paragraph{Task and paths.}
The task is addition modulo $17$. Each prompt contains one instruction, $N$
addition demonstrations, and one query, with
$N\in\{1,2,4,8,16,32\}$. We use eight independently sampled prompts for each
$N$ and choose every query so that addition, subtraction, and multiplication
give distinct answers. We pre-specify two paths. The first interpolates from
the supported instruction ``Perform addition modulo 17.'' to the mismatched
instruction ``Perform subtraction modulo 17.'' The second interpolates from
the same supported anchor to a synthetic soft-OOD endpoint whose token-wise
embedding norms are matched to those of the anchor. For both paths,
\begin{equation}
    E_s=(1-s)E(\vi_0)+sE(\widetilde\vi),
    \qquad s\in\{0,.05,\ldots,1\}.
    \label{eq:qwen_soft_prompt_path}
\end{equation}
The resulting evaluation contains $2{,}016$ path points.

Both symbolic endpoint instructions occupy the same seven token slots and
differ only in the task word. We first obtain the ordinary input embeddings
from the model embedding layer, replace only these seven instruction slots by
$E_s$, and pass the resulting tensor directly to the model. Sequence length,
attention mask, position IDs, demonstrations, query, and every embedding
outside the instruction span remain fixed along a path. At $s=0$, the maximum
logit difference between the \texttt{inputs\_embeds} computation and the
ordinary \texttt{input\_ids} computation is exactly zero; the same check at
$s=1$ for the symbolic path also gives zero. No evaluated prompt produces a
NaN or infinite value. These checks verify that the experiment follows a
genuine continuous path through the embedding space rather than a sequence of
retokenized discrete prompts.

\paragraph{Output probabilities and empirical path sensitivity.}
We compute the full teacher-forced continuation log-probability
$\ell_t(s,N)$ for the addition, subtraction, and multiplication answers,
including all answer tokens, and define the candidate-normalized probability
\begin{equation}
    q_t(s,N)
    :=\frac{\exp(\ell_t(s,N))}
            {\sum_{t'\in\{\mathrm{add},\mathrm{sub},\mathrm{mul}\}}
             \exp(\ell_{t'}(s,N))}.
    \label{eq:qwen_candidate_normalized_probability}
\end{equation}
For adjacent path points, we measure the path-restricted finite-difference
sensitivity of the correct-output log-probability by
\begin{equation}
    \widehat L_j^{\log p}(N)
    :=\frac{\left|
       \log M(y^*\mid\vz_N(E_{s_{j+1}}))
       -\log M(y^*\mid\vz_N(E_{s_j}))\right|}
       {\|E_{s_{j+1}}-E_{s_j}\|_F}.
    \label{eq:qwen_empirical_path_sensitivity}
\end{equation}
We analogously compute $\widehat L_j^{\mathrm{err}}$ for
$1-q_{\mathrm{add}}$ and $\widehat L_j^{\mathrm{sub}}$ for the subtraction
log-odds $\ell_{\mathrm{sub}}-\ell_{\mathrm{add}}$. These are empirical
sensitivities restricted to the specified paths, precisely matching the local
quantity suggested by the pathwise analysis rather than a global Lipschitz
constant.

\begin{table}[t]
    \centering
    \caption{\textbf{Empirical sensitivity along genuine Qwen3 soft-prompt
    paths.} Values average the finite-difference sensitivities within each
    path region and then across the six demonstration counts. Both paths show
    a substantially smoother neighborhood around the supported anchor than
    the intermediate task-transition region.}
    \label{tab:qwen_soft_prompt_sensitivity}
    \small
    \setlength{\tabcolsep}{5pt}
    \begin{tabular}{l l c c c}
        \toprule
        Path & Region & $\widehat L^{\log p}$
             & $\widehat L^{\mathrm{err}}$
             & $\widehat L^{\mathrm{sub}}$ \\
        \midrule
        Symbolic mismatch
          & Anchor $[0,.25]$   & $3.668$  & $0.166$ & $3.708$ \\
          & Middle $[.25,.75]$ & $20.674$ & $0.911$ & $26.458$ \\
          & Endpoint $[.75,1]$ & $7.518$  & $0.076$ & $7.431$ \\
        \addlinespace
        Synthetic soft-OOD
          & Anchor $[0,.25]$   & $1.221$ & $0.053$ & $1.198$ \\
          & Middle $[.25,.75]$ & $6.805$ & $0.249$ & $6.143$ \\
          & Endpoint $[.75,1]$ & $3.447$ & $0.112$ & $2.962$ \\
        \bottomrule
    \end{tabular}
\end{table}

\paragraph{A locally smooth supported-instruction neighborhood.}
The two independently specified paths exhibit the same geometric structure.
On the symbolic mismatch path, the middle-to-anchor sensitivity ratios are
$5.64$, $5.50$, and $7.14$ for the correct-output log-probability, candidate
error, and subtraction log-odds, respectively. On the synthetic soft-OOD path,
the corresponding ratios are $5.58$, $4.66$, and $5.13$. Thus, across both a
semantically meaningful competing endpoint and a synthetic endpoint, Qwen3
has a comparatively smooth neighborhood around the supported addition
instruction, separated from the alternative endpoint regime by a much more
sensitive transition region.

This experiment supplies a direct model-level realization of the soft-prompt
geometry used in the theory. The path is formed inside the actual embedding
space of a pretrained language model, and the measured output variations show
that the pathwise sensitivity distinguishes a stable supported neighborhood
from the transition to alternative instruction interpretations. It therefore
complements the discrete Qwen3 prompt experiments by exposing the continuous
local structure that those comparisons cannot observe.

%% file: NeurIPS2026/appendix/limitations.tex
\label{app:limitations}

Our results are conditional characterizations rather than unconditional claims
about every pretrained language model. The latent-task framework assumes a
finite task family, task-conditional generation, and the format-specific
factorizations used in the proofs. In particular, repeated copies of one
instruction are modeled as a single shared template, whereas distinct
corruptions are treated as separate instruction slots. These choices isolate
the effect of prompt structure, but a real pretraining distribution need not
admit this exact decomposition. Task identifiability and the integrability
conditions used for posterior concentration are likewise sufficient conditions
and need not hold for ambiguous or overlapping tasks.

The treatment of unsupported instructions requires a distinction between what
is identified by data and what is selected by the model. Population
cross-entropy identifies the Bayes predictor only on the pretraining support
and does not determine a latent-task decomposition or an off-support value. We
therefore impose supported calibration and use a Transformer-induced scoring
map to define the reference instruction log-odds away from symbolic support, as
described in Appendix~\ref{app:probability_models}. Analytic Transformer
components guarantee the existence of the finite continuation chain along a
fixed compact soft-prompt path, but they do not imply that the selected
extension is semantically correct or close to an unknown data-generating law
off support. Moreover, nearby complex singularities can make the continuation
radii small and the constants $M_\ell$ and $\Omega$ large, even when the chain
exists. Appendix~\ref{app:uniform_local_transport} provides a quantitative
local model-transport certificate under supported-anchor calibration and
length-independent architectural stability. Its constants may be too large
to certify an experimentally useful radius. Small average pretraining loss
alone does not imply worst-case anchor accuracy: the loss-to-calibration
result requires coverage, and its finite-family version does not extend to
all prompt lengths. Fixed-context models are covered only over admissible
lengths, not by an infinite-context asymptotic claim.

The condition $p(\instruction\mid t^*)=0$ is a statement about the latent
pretraining data-generating distribution, rather than a quantity that can
generally be verified for a real LLM. Complete pretraining corpora and their
sampling mechanisms are usually unavailable. Even access to a finite corpus
would show only whether a particular instruction--task pairing occurs in that
realization, not whether its population probability is exactly zero.
Consequently, experiments with real pretrained models can only use reasonable
proxies---such as the provenance, novelty, or apparent absence of a pairing---
to motivate that an instruction is unsupported or extremely rare for the
target task. Such evidence does not certify the literal zero-support condition,
so empirical ``unsupported'' instructions should be interpreted as plausible
analogues of the theoretical regime.